\definecolor{citecol}{HTML}{6F130C}
\definecolor{tableofcontent}{HTML}{1F4A83}
\definecolor{urlcol}{HTML}{2470D8}
\newcommand{\C}{\mathbb{C}}
\newcommand{\R}{\mathbb{R}}
\newcommand{\Z}{\mathbb{Z}}
\newcommand{\N}{\mathbb{N}}
\newcommand{\dom}{{\mathcal{D}}}   
\newcommand{\gph}{\mathcal{G}} 
\newcommand{\VG}{V}  
\newcommand{\EG}{E}  
\newcommand{\wG}{\boldsymbol{w}}  
\newcommand{\mG}{\boldsymbol{K}}  
\newcommand{\adjG}{A}
\newcommand{\vG}{v}  
\newcommand{\uG}{p}  
\newcommand{\eG}{e}  
\newcommand{\dG}{\boldsymbol{d}}  
\newcommand{\distG}{\boldsymbol{\rho}}  
\newcommand{\vol}{\mathrm{vol}}  
\newcommand{\gL}{\mathcal{L}}  
\newcommand{\tri}{\mathcal{T}} 
\newcommand{\pV}[1][\uG]{[#1]}
\newcommand{\wV}[1][j,\pV]{\omega_{#1}}
\newcommand{\vf}{\boldsymbol{f}}
\newcommand{\vg}{\boldsymbol{g}}
\newcommand{\vc}{\boldsymbol{c}}
\newcommand{\ve}{\boldsymbol{1}}
\newcommand{\scala}{ 
    \alpha
}
\newcommand{\scalb}{ 
    \beta
}
\newcommand{\mask}{ 
    h
}
\newcommand{\maska}{ 
    a
}
\newcommand{\maskb}[1][n]{ 
    b^{(#1)}
}
\newcommand{\cfra}[1][j,\uG]{ 
    \boldsymbol{\varphi}_{#1}
    }
\newcommand{\cfrb}[2][j,\uG]{ 
    \boldsymbol{\psi}_{#1}^{#2}
}
\newcommand{\ufrsys}[1][J_0]{ 
    \mathsf{UFS}_{#1}
}
\newcommand{\fra}[1][j,\pV]{ 
    \fike{\varphi}{#1}
}
\newcommand{\frb}[2][j,\pV]{ 
    \fike{\psi}{#1}^{#2}
}
\newcommand{\dfrsys}{ 
    \mathsf{DFS}
}
\newcommand{\AS}{ 
    \mathsf{AS}
}
\newcommand{\ipG}[1]{                
\left\langle #1 \right\rangle
}
\newcommand{\nrmG}[1]{               
\left\|#1\right\|
}
\newcommand{\FT}[1]{ 
    \widehat{#1}
}
\newcommand{\fft}[1][j]{\mathbf{F}_{#1}}
\newcommand{\analOp}{\mathbf{W}}
\newcommand{\synOp}{\mathbf{V}}
\newcommand{\Id}{\mathbf{I}}
\newcommand{\supp}{\mathrm{supp}}
\newcommand{\conj}[1]{ 
 \overline{#1}
}
\newcommand{\setsep}{:}
\newcommand{\Dirac}{\boldsymbol{\delta}}
\newcommand{\veps}{\varepsilon}
\newcommand{\bigO}{\mathcal{O}}
\newcommand{\spoc}{\mathrm{spoc}}
\newcommand{\flop}{\mathrm{flop}}
\begin{document}

\title{\vspace{-1cm}{Decimated Framelet System on Graphs and}\\{Fast $\gph$-Framelet Transforms}}

\author{\name Xuebin Zheng
\email \texttt{\rm xuebin.zheng@sydney.edu.au}\\
       \addr University of Sydney Business School\\
       University of Sydney
       \AND
\name  Bingxin Zhou\thanks{First corresponding author}
\email \texttt{\rm bzho3923@uni.sydney.edu.au}\\
       \addr University of Sydney Business School\\
       University of Sydney
       \AND
\name Yu Guang Wang\thanks{Second corresponding author} \email \texttt{\rm yuguang.wang@sjtu.edu.cn}\\
       \addr 
       Institute of Natural Sciences and
       School of Mathematical Sciences\\
       Shanghai Jiao Tong University\\
       \&
       Max Planck Institute for Mathematics in Sciences\\
       \& School of Mathematics and Statistics\\
       University of New South Wales
       \AND
       \name Xiaosheng Zhuang \email \texttt{\rm xzhuang7@cityu.edu.hk} \\
       \addr Department of Mathematics\\
       City University of Hong Kong}

\editor{Joan Bruna}

\maketitle

\begin{abstract}
Graph representation learning has many real-world applications, from self-driving LiDAR, 3D computer vision to drug repurposing, protein classification, social networks analysis. An adequate representation of graph data is vital to the learning performance of a statistical or machine learning model for graph-structured data. This paper proposes a novel multiscale representation system for graph data, called decimated framelets, which form a localized tight frame on the graph. The decimated framelet system allows storage of the graph data representation on a coarse-grained chain and processes the graph data at multi scales where at each scale, the data is stored on a subgraph.
Based on this, we establish decimated $\gph$-framelet transforms for the decomposition and reconstruction of the graph data at multi resolutions via a constructive data-driven filter bank.
The graph framelets are built on a chain-based orthonormal basis that supports fast graph Fourier transforms. From this, we give a fast algorithm for the decimated $\gph$-framelet transforms, or {\fgt}, that has linear computational complexity $\bigo{}{N}$ for a graph of size $N$. The effectiveness for constructing the decimated framelet system and the {\fgt} is demonstrated by a simulated example of random graphs and real-world applications, including multiresolution analysis for traffic network and representation learning of graph neural networks for graph classification tasks.
\end{abstract}

\vspace{0.5cm}
\begin{keywords}
Graphs, Decimated tight framelets, Tree, SPOC, Undecimated tight framelets, Filter bank, Fast $\gph$-framelet transforms, Fast Fourier transforms, Coarse-grained chain, Graph Laplacian, Haar basis, Graph convolution, Graph neural networks, Multiresolution analysis
\end{keywords}

\tableofcontents

\section{Introduction}\label{sec:intro}
Geometric structure and feature of data are in the center of many commonly seen systems, which plays a pivotal role in understanding and guiding modern data science and machine learning avenues.
In physics, particles interact with each other following physics laws to form matters at different scales, such as atoms, molecules, planets, stars, solar systems, galaxies, and the whole universe \cite{shlomi2020graph,perraudin2019deepsphere}. In biology, genes as sequences of DNA or RNA encode molecules' functions to form cells, tissues, organs, organ systems, plants, animals, and the entire ecosystem. In human societies, individuals linked by the social rules interact to shape the new communities, societies, governments, nations, and countries. In computer science, data specified by the Internet protocols provide end-to-end data communications from the lowest link layers, through packeting, addressing, transmitting, routing, and receiving, to the highest application layers between PCs, smartphones, digital sensors. All these systems, and many others in the fields of neuroscience, cognitive science, sociology, have an underlying data structure that can be represented by graphs, where vertices are distinct elements or actors, and edges indicate connections or interactions between the elements or actors.
Moreover, a multiscale structure appears in all these systems, which is determined by the intensity of `force' of the interactions which form clusters or subsystems in different scales. A large amount of information might be generated and attached to the complex system, where we can view a system and its information as a graph and graph data. 
This paper constructs a tight decimated framelet system to represent graph data and develops an efficient algorithm for framelet transforms. 

On Euclidean domains, harmonic analysis \citep{St1970, StWe1971, St1993} has been an active research branch of mathematics since the seminal work of Fourier \citep{Fourier1822}. In the past two centuries, it has become a well developed subject with application in areas as diverse as signal processing, representation theory, number theory, quantum mechanics, tidal analysis, and neuroscience. In the last four decades, one of its sub-branches -- wavelet analysis has been intensively studied by many pioneers \citep{Meyer1990, Chui1992, Daubechies1992, Mallat2009, Han2017}. In recent years, there has been a great interest in developing wavelet-like representation systems for data on non-Euclidean domains, including manifolds and graphs. One motivation comes from the interdisciplinary research demand of computer science and mathematics, when the data are not only big but also have intrinsic geometric structure, from such as social networks, biology, chemistry, physics, finance to image processing. The big data are usually regarded as samples from a smooth manifold, where the graph Laplacian approximates the underlying manifold Laplacian \citep{Singer2006}. The underlying manifold encodes the geometric information of the data, which approach has been widely used in machine learning and statistical models \citep{roweis2000nonlinear, tenenbaum2000global, BeMaNi2004, zhang2004principal, BeNi2003, BeNi2008, bruna2013spectral,DeBrVa2016,cheung2018graph,cheung2020graph}. Moreover, people have combined deep learning and graph signal processing \citep{sandryhaila2013discrete, sandryhaila2014discrete, shuman2013emerging,zhou2021graph}, which fosters emerging fields of \emph{geometric deep learning} \citep{bronstein2017geometric,bronstein2021geometric,bodnar2021mpsn,bodnar2021cwn} and \emph{graph neural networks} \citep{wu2020comprehensive,ma2020unified,zheng2020mathnet,zheng2021framelets,zhou2021spectral}. 

Motivated by the importance of data processing, the sparse representation of graph data, and the increasing interest in harmonic analysis for graph signal processing, in this paper, we investigate the characterization, construction, and computation for tight framelets on graphs. We lay out the framework of this paper, as follows.
We focus on an undirected connected graph $\gph=(\VG,\EG)$ with the vertex set $\VG$ and edge set $\EG$. Our goal is to provide a systematic method for multiscale decomposition and reconstruction of a graph signal $\vf: \VG\rightarrow \C$ defined on the graph $\gph$.

\begin{figure}[thb!]
\begin{minipage}{\textwidth}
\centering
\begin{minipage}{\textwidth}
\centering
\includegraphics[width=0.55\textwidth]{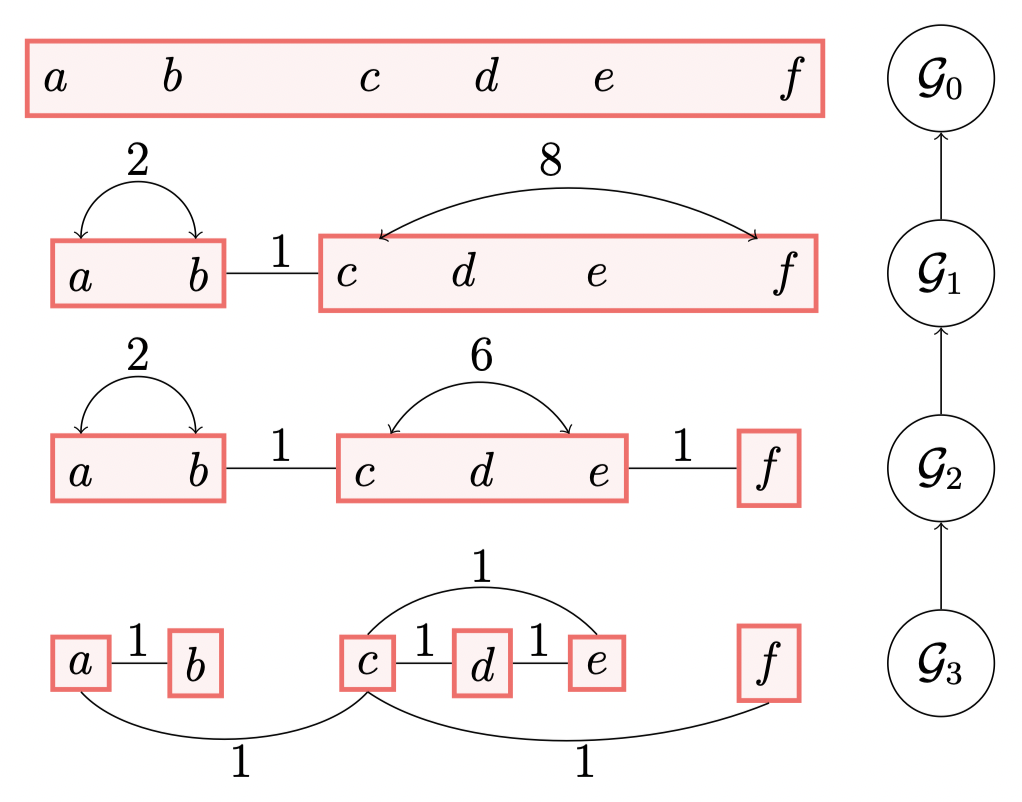}
\end{minipage}
\begin{minipage}{0.8\textwidth}
\caption{Coarse-grained chain $\gph_{3\rightarrow0}:=\gph_3\rightarrow \gph_2\rightarrow \gph_1\rightarrow \gph_0$ of $\gph\equiv \gph_3$. Here the bottom graph $\gph_3\equiv \gph$ is the underlying graph. Each box in $\gph_j=(\VG_j,\EG_j,\wG_j)$ is a vertex of $\gph_j$, the line between two vertices is a edge, the arc on a same vertex indicates a self-loop edge, and the number on a line/curve is the corresponding edge weight.}
\label{fig:coarse-grain-toy}
\end{minipage}
\end{minipage}
\end{figure}

\begin{itemize}
\item[(1)]
Graph clustering generates a coarse-grained chain $\gph_{J\rightarrow J_0}$ with $\gph_J \equiv \gph$ which can be built with a \emph{nested} property (finest to coarsest). That is, each vertex in $\gph_j=(\VG_j, \EG_j)$ is a cluster of the previous $\gph_{j-1}$ and hence a cluster of the original graph $\gph$. The \emph{bottom-up} construction provides a multiscale structure on $\gph$. It assigns a level of the chain to \emph{scale} values and defines \emph{translation} by cluster centroids. The resulting chain plays an important role in construction of our \emph{decimated tight framelets}. Figure~\ref{fig:coarse-grain-toy} illustrates clustering of a graph $\gph$ to a coarse-grained chain $\gph_{3\rightarrow 0}$.

\item[(2)] With the constructed coarse-grained chain, we can define a sequence of orthonormal bases $\{\eigfm^{\gph_j} \}_{\ell=1}^{|\VG_j|}$, $j=J_0,\ldots, J$ in a \emph{top-down} manner. Eventually, we would obtain a \emph{global} orthonormal basis $\{\eigfm: \VG\rightarrow \C\}_{\ell=1}^{|\VG|}$ for the underlying graph $\gph$, which can be used to construct tight framelets on $\gph$. The strategy for framelet construction is independent of the orthogonal basis: one can apply either spectral methods (such as eigendecomposition of the graph Laplacian) or non-spectral methods (such as interval decomposition).  The top-down approach utilizes a chain, which provides an orthogonal basis mimicking the classical Fourier bases, and defines a notion of \emph{frequencies} 
that can be used to distinguish \emph{low-pass} and \emph{high-pass} information for a graph signal. Any signal $\vf: V\rightarrow \C$ can be precisely represented in the graph framelet domain by the framelet transforms.

\item[(3)] Motivated by the localization property of the filtered kernel 
\begin{equation*}
    K_{j,\alpha}(p,v):=\sum_\ell \FT{\alpha}\left(\frac{\eigvm}{2^j}\right) \overline{\eigfm(p)}\eigfm(\vG),
     \mbox{~for~} p,v\in\VG.
\end{equation*} 
and the framelet filter bank on $\R$, we define \emph{tight framelets} $\frb{}=K_{j,\alpha}(p,\cdot)$ on a graph. Framelet $\frb{}=K_{j,\alpha}(p,\cdot)$ at scale $j$ is localized around the centroid $\pV$, and the set of framelets is a tight frame with multiscale property. Here, $\widehat{\alpha}$ is the filter which localizes and smooths the kernel, and $\lambda_{\ell}$ is the eigenvalue of the underlying graph Laplacian for degree $\ell$. The $2^j$ is the scaling factor for level $j$ in the multiscale system. In this work, we investigate two types of tight frames on the graph: \emph{undecimated} and \emph{decimated tight framelets}. Undecimated framelets take the form of $\cfrb{n} : V\rightarrow C$ with node $\uG\in \VG$ running through vertices on $\VG$ (at scale $j$), while decimated framelets are of the form $\frb{n} : \VG\rightarrow C$ with node $\pV\in \VG_{j+1}$ through all vertices on $\VG_{j+1}$. Both framelets are defined on the underlying graph $\gph$, while for the decimated version, the `centers' $\uG$ and $\pV$ locate at different layers of the chain. Tightness of the framelet system provides an exact representation of the graph data in the framelet domain. It also generates the perfect reconstruction for graph signals under framelet transforms.
By the characterization theorem, designing a tight framelet system is 
equivalent to finding a proper filter bank. For decimated framelets, the filter bank depends on the chain structure.

\item[(4)] The chain-based orthogonal basis gives rise to fast algorithmic transforms for the framelet system. With the chain-based orthogonal basis $\{\eigfm\}$, we have the \emph{fast graph Fourier transforms} to compute the graph Fourier coefficients $\FT{\vf_\ell}$ from $\vf$. With the above tight framelet filter banks for the decimated framelets on $\gph$, we can implement fast $\gph$-framelet transforms for signals defined on $\gph$. Such fast framelet transforms provide a way for sparse representation, efficient computation, and effective processing of signals on a graph. See Figure~\ref{fig:algo:fb} for an illustration for the one-level discrete $\gph$-framelet transforms.
\end{itemize}

\begin{figure}[th]
\begin{minipage}{\textwidth}
\centering
\includegraphics[width=0.8\textwidth]{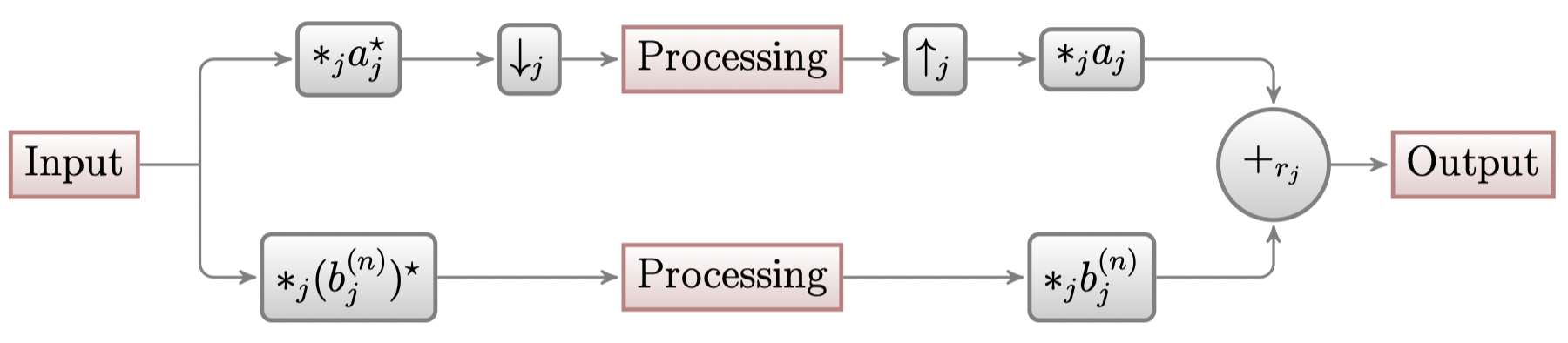}
\begin{minipage}{0.8\textwidth}
\caption{One-level $\gph$-framelet transforms including framelet decomposition and reconstruction based on the filter bank $\{\maska_j;\maskb[1]_j,\ldots,\maskb[r_j]_j \}$. Input graph signal is decomposed using convolutions ($*$) with filters and possible downsampling ($\downarrow$) to output framelet coefficient sequences, which are processed before reconstruction that uses the same filter bank. Here the node $\dconv\maskb_j$ of the convolution operation ranges over $n=1,\ldots,r_j$ and the $+_{r_j}$ is the summation over the low-pass filtered coefficient sequence and all $r_j$ high-pass filtered coefficient sequences.}
\label{fig:algo:fb}
\end{minipage}
\end{minipage}
\end{figure}


The contributions of the paper lie in the following aspects.
\begin{enumerate}
  \item[(i)] We introduce the decimated framelet system on a graph, as a special affine system. \cite{WaZh2018} constructed continuous framelets and semi-discrete framelets on a continuous (or smooth) manifold. \cite{Dong2017} constructed undecimated framelets on graphs, where the framelets $\psi_{j,\uG}$ are defined for all $\uG\in\VG$ and all $j$. These correspond to a continuous wavelet tight frame on a manifold. However, the undecimated framelet system may result in a high redundancy rate when the number of levels is big, as there is no \emph{coarsening} in the framelet system (see also \cite{HaVaGr2011}). One needs to use Chebyshev approximation for fast framelet transforms in this case. In contrast, by using the graph's clustering feature, we embed the edge relation and clustering features into a chain-based orthonormal system to establish a \emph{decimated framelet system}. The resulting tight framelet system that corresponds to a discrete framelet system on a manifold of \cite{WaZh2018} would achieve a low redundancy rate. 

  \item[(ii)] We present a framework for the characterization and construction of decimated and undecimated tight framelets on graphs. In particular, we offer a complete characterization for the tightness of decimated framelet systems on graphs. The characterization implies an equivalent condition of the framelet tightness that significantly simplify the design of tight graph framelets, due to the excellent property of the constructed chain-based orthonormal system. 

  \item[(iii)] We provide a fast algorithm in $\mathcal{O}(|\VG|)$ for the discrete Fourier transforms under the established chain-based orthonormal system. The fast algorithm overcomes the general difficulty of applying FFT for graph-structured signals on irregular nodes. Building on a chain-based orthogonal basis and a sequence of tight framelet filter banks, the discrete decimated framelets allow fast computation of decomposition and reconstruction for a graph signal in the multiscale framelet representation --- \emph{fast $\gph$-framelet transforms} or F$\gph$Ts.

  \item[(iv)] The chain-based orthonormal basis and tight framelets provide a spectral method in defining graph convolution and pooling, which are key ingredients of graph neural networks. See preliminary empirical studies in \cite{wang2019haar,LI2020188,zheng2020mathnet} and Section~\ref{sec:fgconv} provide a multiscale analysis tool with potential applications in areas such as superresolution analysis \citep{Deng_2019_ICCV,candes2014towards,Schiebinger2017superresolution} and geometric matrix/tensor completion \citep{jiang2020framelet,berg2017graph,monti2017geometric}. We provide framelet multiresolution analysis on the Minnesota traffic network. We would develop an {\fgt}-based spectral graph convolution for graph neural networks on various graph classification tasks. 
\end{enumerate}


We organize the rest of the paper as follows. In Section~\ref{sec:pre}, we introduce basic notions used in the paper, including graph, chain, orthonormal basis, frame, framelet and tightness.
In Section~\ref{sec:char}, we define both undecimated and decimated framelet systems on graphs and provide a complete characterization for the tightness of both types of framelet systems. In Section~\ref{sec:constr}, we construct a chain-based orthonormal basis, including examples of graph Laplacian-based basis and Haar-like basis. 
We then define decimated tight framelets on the graph using the above chain-based orthogonal basis. 
In Section~\ref{sec:decomp.reconstr}, we develop a fast computational strategy for the transforms in the orthogonal basis representation and framelet representation.
We develop algorithms for discrete Fourier transforms under the chain-based orthonormal basis, and give a sufficient condition that guarantees linear computational complexity of the algorithm.
Decimated $\gph$-framelet transforms can then be implemented based on the underlying filter bank, and the computational complexity is in the same order as the corresponding DFT.
In Section~\ref{sec:toy}, we show a simple example to illustrate the full construction of a chain-based orthonormal basis and decimate tight framelets. In Section~\ref{sec:numer}, we show examples of filter banks for decimated framelets, and provide a test of computational complexity which shows the consistent result to the algorithmic analysis. We finally apply decimated framelets for MRA of Minnesota traffic network, and graph convolution of GNNs on graph classification tasks. 
Discussion and remarks are given in Section~\ref{sec:discussion}.

\section{Related Work}\label{subsec:relatedwork}
\paragraph{Clustering} Our construction of decimated tight framelets on the graph utilizes graph clustering. There are many existing clustering algorithms \citep{gphClus1,gphClus2, gphClus3, gphClus4, gphClus5, gphClus6}. While we do not focus on clustering methods, which themselves are an active research area, we mention a few examples. The coarse grained chain $\gph_{J\rightarrow J_0}=\gph_J\rightarrow \gph_{J-1}\rightarrow \cdots \rightarrow \gph_{J_0}$ has a tree structure in nature (see Figure~\ref{fig:coarse-grain-toy} for an example. One can simply take $\VG$ at the root node as a single cluster if $\gph_{J_0}$ is not with a single cluster). To obtain such a chain, one can use a partition approach, which recursively partitions a graph into subgraphs, or, reversely, a merging approach, which groups various vertices to generate clusters. The partition approach starts from the root of a tree (the full graph) and continues to its branches (clusters) up to a certain level, for example, the graph cut method \citep{gphCut1, shi2000normalized}. In contrast, the merging approach starts from the leaves (vertices of a graph) of a tree and climbs up to a certain level (clusters), for example, the $k$-means methods \citep{kmean1,kmean2,karypis1998fast}. One can also categorize graph clustering algorithms based on \emph{spatial-frequency} criteria, that is, by whether the clustering method is spectral-based. The spectral-based graph clustering algorithms usually use graph Laplacian, eigendecomposition or wavelet decomposition \citep{MeBeOs1992,BeFl2012,Garcia-Cardona_etal2014}, while the non-spectral clustering employs information from an appropriately defined weight function $\wG$ \citep{Coifman_etal2005,Dongen2000,GaCo2012,GaNaCo2010,ChDa2010,LaLe2006,BoVeZa2001,chen2014unsupervised}.

\paragraph{Wavelets and framelets} Once a chain of nested graphs is obtained, we can build the multiscale structure on a graph $\gph$ by framelet filter banks, which bridges with the classical wavelet/framelet systems on $\R$. We use non-homogeneous affine systems to describe our framelet systems on $\gph$, where a non-homogeneous affine system on a domain $\dom$ typically takes the form
\[
\{\varphi_{J_0,k} \setsep k\in \Lambda_{J_0}\}\cup \{\psi_{j,k}^{(n)}\setsep k\in \Lambda_{j}, n=1,\ldots, r\}_{j=J_0}^J
\]
for $J_0,J\in\Z$, $r\in\N$. Here $\Lambda_j$ are the index sets of the \emph{translation points} and \emph{centers} for the framelets on the domain $\dom$.
The function $\varphi_{j,k}$ is from a scaling function $\varphi$ which captures the \emph{low frequency} information of the graph signal and $\psi_{j,k}^{(n)}$ is associated with framelet functions $\psi^{(n)}$ which captures \emph{high frequency} information at a specific scale. The functions of $\Psi:=\{\varphi; \psi^{(1)},\ldots,\psi^{(r)}\}$ are associated with a filter bank $\filtbk:=\{\maska; \maskb[1],\ldots,\maskb[r]\}$ by the \emph{Unitary Extension Principle} (UEP).
On $\R^d$, affine systems and the construction of wavelet/framelet systems have been well studied \citep{RoSh1997,RoSh1997b,chui1998affine,ChHeSt2002,DaHaRoSh2003}. Non-homogeneous affine systems on $\R^d$ for a more general setting are studied in \cite{Han2010, Han2012}. The construction of directional affine systems was further exploited in \cite{guo2004wavelets,  HanZhuang2015, Zhuang2016, CheZhuang2018, ATREAS20191}. In particular, \cite{WaZh2018} constructed a tight framelet system on a compact, smooth manifold $\mfd$. A framelet system on a more abstract locally compact abelian group is proposed by \cite{ChGo2019}.

A variety of wavelets/framelets on manifolds have been proposed. \cite{CoMa2006} introduced orthogonal diffusion wavelets on a smooth manifold by diffusion operators.  Mhaskar et al. \citep{fasttour,MaMh2008, Mhaskar2010} extended the construction from diffusion wavelets to diffusion polynomial frames on a manifold. \cite{Dong2017} constructed continuous framelets on a compact Riemannian manifold, where they made the first connection of filter bank with continuous framelets on manifolds. \cite{wang2018analysis} developed semi-discrete and fully discrete tight framelets on a simplex of any dimension by using a quadrature rule on the simplex for integral discretization. \cite{wang2017fully,le2017needlet} developed discrete spherical needlets (a highly localized tight frame on the unit sphere) by filtered spherical harmonic expansion and quadrature rule, and applied to the decomposition of spherical random fields.
\cite{li2019tensor} constructed tight tensor framelets for vector fields on the sphere and developed a fast algorithm for the tensor framelet transforms.
The impressive advance of data science and deep learning, such as graph neural networks, has driven the rapid development of the field of wavelets and framelets on graphs \citep{shuman2015spectrum}. 

Wavelets on graphs were first proposed in \cite{CrKo2003}. The diffusion polynomial approach was then introduced to construct diffusion wavelets on graphs \citep{CoMa2006, MaMh2008}.
With the spectral graph theory, \cite{HaVaGr2011} established the wavelet analysis, and \cite{Dong2017} constructed undecimated tight wavelet frames on graphs. Haar transform-based approach on graphs were studied in \cite{GaNaCo2010, GaCo2012,LI2020188}. \cite{ChFiMh2015} constructed an orthogonal polynomial system on a weighted tree, which gave a chain-based Haar orthonormal basis. On another direction, \cite{bruna2013invariant,mallat2012group,mallat2016understanding} developed wavelet scattering that is a wavelet system combined with neural activation that mimics the architecture of a deep convolutional neural network. \cite{gao2019geometric, gama2019stability, zou2020graph} developed the wavelet scattering methods for graphs. We propose to use the graph framelet transforms to provide a multiscale representation for graph data and feature extraction and thus to define a new graph convolution as a core computational block for the graph neural network rather than using wavelet scattering to replace the whole GNN architecture.

\paragraph{Fast transform and graph convolution} 
The fast transform algorithms are a fundamental tool for efficient computation with wavelets or framelets. One applies the forward transform to decompose a signal representation from the spatial domain to a set of wavelet basis. The adjoint transform, in contrast, reconstructs the signal from its wavelet representation back to the spatial domain. A fast implementation benefits, in particular, for large-scale graphs. 
The fast algorithm of wavelet transforms was first proposed by \cite{mallat1989theory} with multi-resolution analysis, which later inspires a number of applications in signal processing. The wavelet coefficients through the fast algorithm are evaluated level by level using discrete convolution and decimated operators rather than estimating inner products for wavelet coefficients. The domain of fast wavelet transforms has been well developed for discrete signals, see \cite{candes2006fast,han2013properties,han2016directional}, to mention but a few. 

Graph convolution was first introduced in \cite{bruna2013spectral} based on the convolution theorem and spectral graph theory \citep{Chung1997}. Since then, a good number of graph convolution and graph neural network models have been proposed \citep{DeBrVa2016,kipf2016semi,veli2018graph,Fawzi_etal2018,cheung2018graph,LI2020188,ma2019graph,wang2019haar,ma2020path,zhou2021graph}. The graph convolution is conducted in the spectral domain as diagonal multiplications. However, due to the non-Euclidean nature, discrete Fourier transforms,  suffer from a high cost for computing the orthogonal basis from the graph Laplacian. To overcome the difficulty,  \cite{DeBrVa2016} approximates smooth filters with Chebyshev polynomials, which then circumvents eigendecomposition for the graph Laplacian. \cite{kipf2016semi,gilmer2017neural,veli2018graph,xu2018how} further simplified the convolutional layer with first-order approximation and proposed spatial-based graph convolution. Such methods rely on node feature aggregation among neighbour vertices. Meanwhile, fast Fourier or wavelet transform algorithms were developed which can be exploited to accelerate spectral-based graph convolution \citep{Dong2017, LI2020188, zheng2020mathnet}.

\paragraph{Most related works} We discuss important papers that are close to and motivate our work. \cite{ChFiMh2015} (see also its extension to digraph in \cite{ChMhZh2018}) established an orthogonal system with localization properties on the tree and its approximation theory based on the notion of \emph{filtration} for a tree. Such an orthogonal system is indeed a Haar-type wavelet system on a tree. The idea of \cite{Haar1910} has been `invented', `reinvented' and `rejuvenated' in different domains.  By defining `$h$-hop' neighborhoods on a graph, \cite{CrKo2003} reduced the construction of wavelets on a graph to the classical Haar wavelets on the interval $[0,1)$. Using the concept of `folders' and `subfolders', Haar-like bases are constructed in \cite{GaNaCo2010} for high-dimensional data.
Directional Haar tight framelets exist in an arbitrary dimension $\R^d$, and their projection to lower dimensions induce the directional box splines \citep{HaLiZh2019, LiZh2019}. The application of Haar transforms on graphs for deep learning tasks can be seen in \cite{LI2020188,wang2019haar,zheng2020mathnet}.

In \cite{HaVaGr2011}, spectral graph theory is used to define scaling operator $T_g^t=g(t\mathcal{L})$ for functions on a graph, where $g$ is a  function, $t$ is a scaling parameter, and $\mathcal{L}$ is the graph Laplacian. A wavelet $\psi_{t,\vG}$ at scale $t$ and `centered' at a vertex $\vG$ can then be defined as such operator applied to a delta impulse signal, that is, $\psi_{t,\vG} = T_g^t\delta_{\vG}$. Spectral wavelet graph transforms (SGWT) are then implemented based on the Chebyshev polynomial approximation. \cite{Dong2017} constructed tight wavelet frames on both manifolds and graphs, where $\psi_{j,\vG}$ is defined from a scaling function $\phi$ and hence associated the construction of wavelet frames on a graph with a filter bank. Fast wavelet frame transforms on the graph (WFTG) can then be implemented based on the Chebyshev polynomial approximation of filters.

In \cite{WaZh2018}, tight framelets, including both continuous and semi-discrete tight framelets on a Riemannian manifold, are constructed based on orthogonal eigenpairs on the manifold, where localized filtered kernel functions were used to define \emph{scaling} and \emph{translation} on the manifold. Framelets on the manifold could then be constructed from the filtered kernels. More importantly, the framelets are associated with a filter bank that enables a simple characterization of tight framelets. In particular, quadrature rules (weighted sampling point sets) with polynomial exactness property naturally induced the construction of semi-discrete tight framelets on the manifold, which are significantly different from the construction of \cite{HaVaGr2011, Dong2017}. The quadrature-based framelets have fast tight framelet filter bank transforms (F$\mathcal{M}$T), which is useful for practical applications in manifold-data multiresolution analysis. This paper follows the line of \cite{WaZh2018} but is mainly focused on the multiscale data analysis on graphs and its fast algorithms.

\section{Preliminaries}
\label{sec:pre}
In this section, we introduce some basic notation and properties on graphs, frames and filter banks, which are based on the works of \citet{Chung1997,Dong2017,HaVaGr2011,WaZh2018,wang2019tight}.

\subsection{Graph and Chains}
\label{subsec:graph}

An undirected and weighted graph $\gph$ is an ordered triple $\gph=(\VG,\EG,\wG)$ with a non-empty finite set $\VG$ of vertices, a set  $\EG\subseteq \VG\times \VG$ of edges between vertices in $\VG$, and a non-negative weight function $\wG: \EG\rightarrow\R$.  For an undirected graph $\gph$, we denote $|\VG|$ and $|\EG|$ the numbers of vertices and edges. An edge $e\in\EG$ with vertices $\uG,\vG\in\VG$ is an unordered pair denoted by $(\uG,\vG)$ or $(\vG,\uG)$. In this paper, we assume the self-loops $(\vG,\vG)$, $\vG\in\VG$ in the edge set $\EG$, and we extend $\wG$ from $\EG$ to $\VG\times\VG$ by letting $\wG(\uG,\vG):=0$ for $(\uG,\vG)\notin\EG$. The extended weight function $\wG$ is also called the \emph{adjacency (representation) matrix} of $\gph$. Note that for an undirected graph, the weight $\wG$ is \emph{symmetric} in the sense that $\wG(\uG,\vG)=\wG(\vG,\uG)$ for all $\uG,\vG\in\VG$.
We denote the \emph{degree} of a vertex $\vG\in\vertex$ by
$
\dG(\vG) := \sum_{\uG\in\VG}\wG(\vG,\uG).
$
The \emph{volume} of the graph $\vol(\gph):=\vol(\VG)=\sum_{\vG\in\VG}\dG(\vG)$, which is the sum of degrees of all vertices of $\gph$. Given a subset $\VG_0$ of $\VG$, the \emph{volume} of $\VG_0$ is the sum of degrees of all nodes in $\VG_0$, that is, $\vol(\VG_0):=\sum_{\vG\in\VG_0} \dG(\vG)$.

Let $(\eG_1,\ldots,\eG_n)$ be a sequence of edges in $\gph$. If there exist distinct vertices $\vG_0,\ldots,\vG_n$ in $\VG$ such that any pair of consecutive nodes is connected by an edge of $\gph$, that is, $\eG_j = (\vG_{j-1},\vG_j)$ for $j=1,\ldots,n$, then the sequence $(\eG_1,\ldots,\eG_n)$ is called a \emph{path} of $\gph$ between $\vG_0$ and $\vG_n$, and the \emph{length} of the path is defined to be $\sum_{j=1}^n \wG(\vG_{j-1},\vG_j)$. Two vertices $\uG$ and $\vG$ are \emph{connected} if there exists a path between $\uG$ and $\vG$. If $\uG$ and $\vG$ are connected, the \emph{distance} $\distG(\uG,\vG)$ between two vertices $\uG$ and $\vG$ is defined as the length of the shortest possible path between them. We define $\distG(\uG,\vG)=\infty$ if there is no path between $\uG$ and $\vG$. A graph itself is called \emph{connected} if any two distinct vertices of $\gph$ are connected. In the paper, we only consider connected graphs.

Let $\gph=(\VG,\EG,\wG)$ and $\gph_c=(\VG_c,\EG_c,\wG_c)$ be two graphs. We say that $\gph_c$ is a \emph{coarse-grained graph} of $\gph$ if $\VG_c$ is a \emph{partition} of $V$; that is, there exist $k$ ($k>1$) subsets $\VG_1,\ldots,\VG_k$ of $\VG$ such that
\[
\VG_c =\{\VG_1,\VG_2,\ldots,\VG_k\},\quad
\VG_1\cup\cdots\cup\VG_k = \VG, \quad
\VG_i\cap \VG_j = \emptyset,\; 1\le i< j\le k.
\]
In this case, each vertex $V_j$ of $\gph_c$ is called a \emph{cluster} for $\gph$. The edges of $\gph_c$ are the links between clusters of $\gph$. The nodes in $\gph$ that are in the same cluster (node) of $\gph_c$ define an equivalence relation on $\gph$: two vertices $\uG$ and $\vG$ are equivalent, denoted by $\uG\sim\vG$, if $\uG$ and $\vG$ are in the same cluster. An equivalent class (cluster), as a vertex in $\gph_c$, associated with a vertex $\vG\in\VG$ can then be denoted as $[\vG]_{\gph_c}:=\{\uG\in\gph \setsep \uG\sim\vG\}$, and we let the set of clusters $\VG_c=\VG_{\sim} =\{[\vG]_{\gph_c}\setsep \vG\in\VG\}$. If no confusion arises, we will drop the subscript $\gph_c$ and simply use $[\vG]$ to denote a cluster in $\gph$ with respect to the coarse-grained graph $\gph_c$. We denote by $|[\vG]|$ or $\#[\vG]$ the number of vertices in the cluster $[\vG]$.

Let $J,J_0$, $J\ge J_0$ be two integers.
A \emph{coarse-grained chain} $\gph_{J\rightarrow J_0}:=(\gph_J, \gph_{J-1},\ldots,\gph_{J_0})$ of $\gph$ is a sequence of graphs with $\gph_J\equiv \gph$ such that each $\gph_j=(\VG_j,\EG_j,\wG_j)$ is a coarse-grained graph of $\gph$ for all $J_0\le j\le J$, and $[v]_{\gph_j}\subseteq [v]_{\gph_{j-1}}$ for all $j=J_0+1,\ldots, J$ and all $v\in\gph$. Here $[v]_{\gph_j}$ is the set of nodes of $\gph$ at level $j$ that contains node $v$, and can also be viewed as an (equivalence class) cluster of any finer level.
Note that in the coarse-grained chain $\gph_{J\to J_0}$, for any $p,v\in \gph$, $[p]_{\gph_j}$ is either belonging to $[v]_{\gph_j}$ or has no intersection with $[v]_{\gph_j}$. So the children at a finer level of any cluster $[v]_{\gph_j}$ form a partition of the node set $[v]_{\gph_j}$. 

We call $\gph_j$ \emph{the level-$j$ graph} of the chain $\gph_{J\to J_0}$. The $\gph_{j-1}$ can be viewed as a \emph{coarse-grained graph} of $\gph_j$ for $j=J_0+1,\ldots,J$. For convenience of discussion, we treat each vertex $\vG$ of the finest level graph $\gph_J\equiv\gph$ as a cluster of singleton, that is, $[\vG]_{\gph_J} =\{\vG\}$. We say the vertices of the graph on each level of the chain are \emph{nodes} of the chain. When $\gph_{J_0},\dots,\gph_J$ are not all equal, we call $\gph_{J\to J_0}$ a \emph{decimated chain}.
See Figure~\ref{fig:coarse-grain-toy} for an illustration of a coarse-grained chain.


When $\gph_j\equiv\gph$ for all $j=J_0,\ldots,J$, we call $\gph_{J\rightarrow J_0}$ an \emph{undecimated chain} of $\gph$. When there is only one vertex in the coarsest graph $\gph_{J_0}$, that is, $\VG_{J_0}=\{[\vG]_{\gph_{J_0}}\}=\{V\}$, we call $\gph_{J\rightarrow J_0}$ a \emph{tree} and denote it by $\tri$. The $\{V\}$ is the root of $\tri$ at the top level $J_0$, and all vertices of $\gph$ are the \emph{leaves} at the bottom level $J$. The vertices of $\gph_j$ (clusters of $\gph$) are the \emph{nodes} of the tree at level $j$. The node $[\vG]$ is called the parent of $\vG$.

\subsection{Orthonormal Bases on Graphs and Chains}\label{subsec:ONB}
Let $l_2(\gph):=l_2(\gph,\ipG{\cdot,\cdot}_\gph)$ be the Hilbert space of \emph{vectors} $\vf:\VG\rightarrow \C$ on the graph $\gph$ equipped with the inner product
\begin{equation*}
\ipG{\vf,\vg}_\gph:=\sum_{\vG\in\VG} \vf(\vG)\conj{\vg(\vG)}, \quad \vf,\vg\in l_2(\gph),
\end{equation*}
where $\conj{\vg}$ is the complex conjugate to $\vg$.
The induced norm $\nrmG{\cdot}_\gph$ is then given by $\nrmG{\vf}_\gph := \sqrt{\ipG{\vf,\vf}_\gph}$ for $\vf\in l_2(\gph)$.  For simplicity, we shall drop the subscript $\gph$, and use $\ipG{\cdot,\cdot}$ and $\nrmG{\cdot}$ if no confusion arises.

Let $\delta_{\ell,\ell'}$ be the \emph{Kronecker delta} satisfying $\delta_{\ell,\ell'}=1$ if $\ell=\ell'$ and $\delta_{\ell,\ell'}=0$ if $\ell\neq\ell'$, and $\NV:=|V|$ the total number of vertices of the graph $\gph$. A finite subset $\{\eigfm\}_{\ell=1}^{\NV}$ of $l_2(\gph)$ is said an \emph{orthonormal basis} for $l_2(\gph)$ if
\begin{equation*}
    \ipG{\eigfm,\eigfm[\ell']} = \delta_{\ell,\ell'}, \quad 1\le \ell,\ell'\le \NV.
\end{equation*}

Let $\{\eigfm\}_{\ell=1}^\NV$ be an orthonormal basis for $l_2(\gph)$. For $\ell=1,\dots,N$, let
\begin{equation*}
	\Fcoem{\vf}:= \ipG{\vf,\eigfm}
\end{equation*}
be the (generalized) \emph{Fourier coefficient} of degree $\ell$ for $\vf\in l_2(\gph)$ with respect to $\eigfm$. Let $\Fcoem{\vf}:=\bigl(\Fcoe[1]{\vf},\dots,\Fcoe[N]{\vf}\bigr)\in\C^N$ be the sequence of the Fourier coefficients for $\vf$.
Then,
$\vf = \sum_{\ell=1}^\NV \FT{\vf}_\ell \,\eigfm$ for all $f\in l_2(\gph)$,
and Parseval's identity holds: $\nrmG{\vf}^2=\sum_{\ell=1}^\NV|\FT{\vf}_\ell|^2$ for all $\vf\in l_2(\gph)$.
We say $\{(\eigfm,\eigvm)\}_{\ell=1}^{\NV}$ is an \emph{orthonormal eigen-pair} for $l_2(\gph)$ if $\{\eigfm\}_{\ell=1}^\NV$ is an orthonormal basis for $l_2(\gph)$ with $\eigfm[1]\equiv1/\sqrt{\NV}$ and $\{\eigvm\}_{\ell=1}^{\NV}\subseteq\R$ is a nondecreasing sequence of nonnegative numbers satisfying $0=\eigvm[1]\le \cdots \le \eigvm[\NV]$. A typical example is the \emph{eigen-pairs}, that is, the set of all pairs of the eigenvectors and eigenvalues of the \emph{graph Laplacian} on $\gph$. The (combinatorial or unnormalized) \emph{graph Laplacian} operator $\gL: l_2(\gph)\rightarrow l_2(\gph)$ is
\begin{equation}\label{defn:gL}
	[\gL \vf](\uG):=\dG(\uG) \vf(\uG)-\sum_{\vG\in \VG}\wG(\uG,\vG)\vf(\vG), \quad \uG\in \VG,\; \vf\in l_2(\gph).
\end{equation}
One can verify that
$\ipG{\vf,\gL \vf} =\frac12\sum_{\uG,\vG} \wG(\uG,\vG)|\vf(\uG)-\vf(\vG)|^2\ge0$. The eigenvalues $\widetilde{\eigvm}$ of $\gL$ are then nonnegative, and associated with eigenvectors $\eigfm: \gL \,\eigfm = \widetilde{\eigvm}\,\eigfm, \;\ell=1,\ldots, \NV$. For simplicity, we need to take the square root of eigenvalue to define $\eigvm:=\sqrt{\widetilde{\eigvm}}$, which is to meet the construction of framelets (see below). To be precise, the set of eigenvectors $\{\eigfm\}_{\ell=1}^\NV$ then  forms  an orthonormal basis for $l_2(\gph)$ satisfying $0=\eigvm[1]\le\ldots\le \eigvm[\NV]$ and $\eigfm[1] \equiv 1/\sqrt{N}$.
An orthonormal eigen-pair can be deduced from other positive semi-definite operators on $l_2(\gph)$, for example, diffusion operators \citep{MaMh2008}, or by interval decomposition method, such as Haar orthonormal basis \citep{ChFiMh2015}.

Let $\gph_{J\to J_0}:=(G_{J},\dots,G_{J_0})$ be a chain of the graph $\gph$ with $N$ vertices. Let $l(\gph_{J\to J_0})$ be the set of all vectors $\vf$ defined on the union of vertices at all levels $V_J\cup \cdots \cup V_{J_0}$.
\begin{definition}\label{defn:orth.basis.chain}
	A set of pairs of vectors and complex numbers $\{(\eigfm,\eigvm)\}_{\ell=1}^{N}$ in $l(\gph_{J\to J_0})$ is called an orthonormal basis for the chain $\gph_{J\to J_0}$ if the restriction $\{(\eigfm|_{\gph_{j}},\eigvm)\}_{\ell=1}^{N}$ on the $j$th level graph $\gph_{j}$ is an orthonormal basis for $l_2(\gph_j)$ at each level $j=J_0,\dots,J$.
\end{definition}

In Sections~\ref{sec:orthonormal.basis.Laplacian} and \ref{sec:haar.orth.basis}, we will show the construction of an orthonormal basis for chain $\gph_{J\to J_0}$ by either graph Laplacian or interval decomposition.

\subsection{Tight Frames and Filter Banks}
\label{subsec:frameletR}
Orthonormal bases as we mentioned above are non-redundant systems for $l_2(\gph)$. In this paper, we would mainly focus on \emph{redundant systems} for $l_2(\gph)$ that are frames. 
Let $\{\vg_\ell\}_{\ell=1}^M$ be a set of elements from $l_2(\gph)$. We say that $\{\vg_\ell\}_{\ell=1}^M$ is a \emph{frame} for $l_2(\gph)$ if there exist constants $A$ and $B$, $0<A\le B< \infty$ such that
\begin{equation}\label{defn:frame}
	A\nrmG{\vf}^2\le \sum_{\ell=1}^M |\ipG{\vf,\vg_\ell}|^2 \le B\nrmG{\vf}^2\quad \forall \vf \in l_2(\gph).
\end{equation}
Here, we call $A,B$ the \emph{frame bounds}.
When the frame bounds $A=B=1$, we say $\{\vg_\ell\}_{\ell=1}^M$ a \emph{tight frame} for $l_2(\gph)$, and by the polarization identity, \eqref{defn:frame} is then equivalent to
\begin{equation}\label{defn:frame2}
	\vf = \sum_{\ell=1}^M \ipG{\vf,\vg_\ell}\vg_\ell.
\end{equation}
When $\{\vg_\ell\}_{\ell=1}^M$ is a tight frame and $\nrmG{\vg_\ell}=1$ for $\ell=1,\dots,M$, we must have $M=\NV$ and $\{\vg_\ell\}_{\ell=1}^\NV$ an orthonormal basis for $l_2(\gph)$. Tight frames are of significance as we can use coefficients $\ipG{\vf,\vg_\ell}$ to represent the vector $\vf$. See \cite{Daubechies1992}.

Let $\Psi:=\{\scala; \scalb^{(1)},\ldots,\scalb^{(r)}\}$ be a set of functions in $L_1(\R)$, which is the space of absolutely integrable functions on $\R$ with respect to the Lebesgure measure. The \emph{Fourier transform} $\FT{\gamma}$  of a function $\gamma\in L_1(\R)$ is defined by $\FT{\gamma}(\xi):=\int_{\R}\gamma(t)e^{-2\pi i t\xi} \IntD{t}$, $\xi\in\R$. The Fourier transform on $L_1(\R)$ can be naturally extended to the space $L_2(\R)$ of square integrable functions on $\R$. See, for example, \cite{stein2011fourier}.

A \emph{filter (or mask)} $\mask:=\{\mask_k\}_{k\in\Z}\subseteq \C$ is a complex-valued sequence in $l_1(\Z):=\{h=\{h_k\}_{k\in\Z}\subseteq\C \setsep \sum_{k\in\Z} |h_k|<\infty \}$. A \emph{filter bank} is a set of filters.
The \emph{Fourier series} of a sequence $\{\mask_k\}_{k\in\Z}$ is the $1$-periodic function $\FT{\mask}(\xi):=\sum_{k\in\Z}\mask_k e^{-2\pi i k\xi}$, $\xi\in\R$.
Let $\Psi_j=\{\scala; \scalb^{(1)},\ldots,\scalb^{(r)}\}$ be a set of \emph{framelet generators} associated with a filter bank $\filtbk:=\{\maska; \maskb[1],\ldots,\maskb[r]\}$, where the Fourier transforms of the functions in $\Psi$ and the Fourier series of the filters in $\filtbk$ satisfy
\begin{equation}
\label{eq:refinement}
    \FT{\scala}(2\xi) = \FS{\maska}(\xi)\FT{\scala}(\xi),\quad
    \FT{\scalb^{(n)}}(2\xi) = \FS{\maskb}(\xi)\FT{\scala}(\xi),\quad n=1,\ldots,r, \; \xi\in\Rone.
\end{equation}
The first equation in \eqref{eq:refinement} is called the \emph{refinement equation}. The $\maska$ is called \emph{low-pass filter} or \emph{refinement mask} and $\maskb$, $n=1,\ldots,r$ are called \emph{high-pass filters} or \emph{framelet masks}.
One can then consider (stationary non-homogeneous) \emph{affine system} of the form
\begin{equation}\label{defn:AS:R}
\AS_J(\Psi)=\{\scala(2^J\cdot-k)\setsep k\in\Z\}\cup\{\scalb^{(n)}(2^j\cdot-k) \setsep k\in\Z, n=1,\ldots,r, j\ge J\}.
\end{equation}
Under certain extension principles such as the unitary extension principle (UEP) \citep{RoSh1997,DaHaRoSh2003}, one can build the affine system $\AS_J(\Psi)$ that is a tight frame for $L_2(\R)$. We call the elements $\AS_J(\Psi)$ \emph{tight framelets} for $L_2(\R)$ in this case.
More generally, one can consider the (non-stationary, non-homogeneous) affine system
\begin{equation}\label{defn:nAS:R}
\AS_J(\{\Psi_j\}_{j=J}^\infty)=\{\scala_{j}(2^J\cdot-k)\setsep k\in\Z\}\cup\{\scalb_{j}^{(n)}(2^j\cdot-k) \setsep k\in\Z, n=1,\ldots,r_j, j\ge J\},
\end{equation}
where $\Psi_j:=\{\scala_{j};\scalb^{(1)}_j,\ldots,\scalb^{(r_j)}_j\}\subset L_2(\R)$ are framelet generators at scale $j$. Examples of such (non-stationary, non-homogeneous) affine systems include framelets on $\Rd$ \citep{Han2012,Zhuang2016}, and framelets on compact Riemannian manifolds \citep{WaZh2018}.

In the paper, we use the same ``\,$\widehat{\,\cdot\,}\,$'' notation for Fourier coefficients, Fourier transforms and Fourier series for simplicity.
For a set $\Omega$, let $l_2(\Omega)$ represent the set of all $l_2$-summable sequences on $\Omega$ and $l(\Omega)$ be the set of all complex-valued sequences supported on $\Omega$. For a finite set $\Omega$, a complex-valued sequence supported on $\Omega$ is a $l_2$-summable sequence. Thus, $l(\Omega)\equiv l_2(\Omega)$. Then, we only use the notation $l_2(\Omega)$ when we discuss a space of sequences on a finite set $\Omega$.

\section{Characterization of Tight Framelets on $\gph$}\label{sec:char}
In this section, we construct undecimated and decimated framelet systems on a graph and give equivalence characterizations for the tightness of them.

Given a set of orthonormal eigen-pairs for a coarse-grained chain $\gph_{J\to J_0}$ of the graph $\gph$, we can define a (stationary) \emph{undecimated framelet system} on $\gph$ by a set of framelet generators $\Psi:=\{\scala;\scalb^{(1)},\ldots,\scalb^{(r)}\}$ and a filter bank $\filtbk:=\{\maska;\maskb[1],\ldots,\maskb[r]\}$. Alternatively, we can construct a \emph{decimated framelet system} with a sequence of framelet generator sets $\Psi_J,\dots,\Psi_{J_0}$ and a sequence of filter banks $\filtbk_J,\dots,\filtbk_{J_0}$. Here, we use ``undecimated'' when the framelet system is invariant on all levels of the coarse-grained chain, and ``decimated'' when a sequence of framelets has different scales at different levels of a chain. Each framelet generator and the corresponding filter bank are associated with the graph of a particular level.

\subsection{Undecimated Tight Framelets on $\gph$}
\label{sec:UFS}

The undecimated framelets on a graph can be seen as framelets on the discretised manifold, which are important counterpart to compact Riemannian manifolds \citep{WaZh2018}. Here, we give detailed construction of undecimated tight framelets on a graph.

The construction of framelets uses the graph spectrum and framelet generators. Let $\{(\eigfm,\eigvm)\}_{\ell=1}^\NV$ be orthonormal eigen-pairs for $l_2(\gph)$
and $\Psi=\{\scala;\scalb^{(1)},\ldots,\scalb^{(r)}\}$ be a set of functions in $L_1(\R)$ associated with a filter bank $\filtbk=\{\maska;\maskb[1],\ldots,\maskb[r]\}$ satisfying \eqref{eq:refinement}.
For $j\in\Z$ and $\uG\in\VG$, the \emph{undecimated framelets} $\cfra(\vG)$ and $\cfrb{n}(\vG)$, $\vG\in\VG$ at scale $j$ are \emph{filtered Bessel kernels} (or summability kernels)
\begin{equation}\label{defn:ufra:ufrb}
\begin{aligned}
    \cfra(\vG) :=&  \sum_{\ell=1}^{\NV} \FT{\scala}\left(\frac{\eigvm}{2^{j}}\right)\conj{\eigfm(\uG)}\eigfm(\vG),\\
    \cfrb{n}(\vG) :=&  \sum_{\ell=1}^{\NV} \FT{\scalb^{(n)}}\left(\frac{\eigvm}{2^{j}}\right)\conj{\eigfm(\uG)}\eigfm(\vG), \quad n = 1,\ldots,r.
\end{aligned}
\end{equation}
See for example, \citet{Brauchart_etal2015,MaMh2008}.
Here, $j$ and $\uG$ in  $\cfra(\vG)$ and $\cfrb{n}(\vG)$ indicate the ``dilation'' at scale $j$ and the ``translation'' at a vertex $\uG\in\VG$. They are analogues of those of wavelets in $\Rd$. The functions $\alpha,\beta^{n}$ of $\Psi$ are called \emph{framelet generators} or \emph{scaling functions} for the undecimated framelet system.

Let $J, J_1$, $J > J_1$ be two integers. An \emph{undecimated framelet system} $\ufrsys[]\left(\Psi,\filtbk;\gph\right)$ (starting from a scale $J_1$) is a (non-homogeneous, stationary) affine system:
\begin{equation}\label{defn:UFS}
\begin{aligned}
   \ufrsys[J_1]^{J}(\Psi,\filtbk)
   &:=\ufrsys[J_1]^J(\Psi,\filtbk;\gph) \\
   &:=\{\cfra[J_1,\uG] \setsep \uG\in\VG\} \cup\{\cfrb{n} \setsep \uG\in\VG, n = 1,\ldots,r, j= J_1,\ldots, J\}.
\end{aligned}
\end{equation}
The system $\ufrsys[J_1]^{J}(\Psi,\filtbk)$ is then called an \emph{undecimated tight frame} for $l_2(\gph)$ and the elements in $\ufrsys[J_1]^{J}(\Psi,\filtbk)$ are called \emph{undecimated tight framelets} on $\gph$.

The sequence $\{\ufrsys[J_1]^{J}(\Psi,\filtbk)\setsep J_1=J_0,J_0+1,\ldots,J\}$ of undecimated framelet systems provides a tool for the multiresolution analysis on $l_2(\gph)$, which passes messages between finer and coarser scales of a chain on $\gph$. The precision of \emph{transforms} is ensured by the tightness of the framelet system. The following theorem gives important equivalence conditions for the tightness of a sequence of undecimated framelet systems.

\begin{theorem}\label{thm:UFS}
Let $\gph=(\VG,\EG,\wG)$ be a graph and $\{(\eigfm,\eigvm)\}_{\ell=1}^\NV$ a set of orthonormal eigen-pairs for $l_2(\gph)$. Let $\Psi=\{\scala;\scalb^{(1)},\ldots,\scalb^{(r)}\}$ be a set of functions in $L_1(\R)$ associated with a filter bank $\filtbk=\{\maska;\maskb[1],\ldots,\maskb[r]\}$ satisfying \eqref{eq:refinement}. Let $J_0, J$, $J_0\le J$, be two integers, which indicate the coarsest scale and the finest scale of a chain. Let $\ufrsys[J_1]^{J}(\Psi,\filtbk;\gph), J_1=J_0,\ldots, J$, be an undecimated framelet system given in \eqref{defn:UFS} with framelets $\cfra$ and $\cfrb{n}$ given in \eqref{defn:ufra:ufrb}. Then, the following statements are equivalent.
\begin{enumerate}[{\rm(i)}]
\item The undecimated framelet system $\ufrsys[J_1]^{J}(\Psi,\filtbk;\gph)$ is a tight frame for $l_2(\gph)$ for each $J_1=J_0,\ldots,J$. That is,
    \begin{equation}\label{eq:f:UFS0}
    \nrmG{\vf}^2 =\sum_{\uG\in\VG}\Big|\ipG{\vf,\cfra[J_1,\uG]}\Big|^2
    +\sum_{j=J_1}^{J}\sum_{n=1}^r\sum_{\uG\in\VG}\Big|\ipG{\vf,\cfrb{n}}\Big|^2\quad\forall \vf\in l_2(\gph), J_1=J_0,\ldots,J.
    \end{equation}

\item  For all $\vf\in l_2(\gph)$ and for $j=J_0,\ldots,J-1$, the following identities hold:
\begin{align}
&   \vf = \sum_{\uG\in\VG} \ipG{\vf,\cfra[J,\uG]}\cfra[J,\uG]
+\sum_{n=1}^r\sum_{\uG\in\VG}\ipG{\vf,\cfrb[J,\uG]{n}}\cfrb[J,\uG]{n},
\label{thmeq:normalization1}\\
& \sum_{\uG\in\VG} \ipG{\vf,\cfra[j+1,\uG]}\cfra[j+1,\uG]
= \sum_{\uG\in\VG} \ipG{\vf,\cfra}\cfra+
\sum_{n=1}^{r}\sum_{\uG\in\VG} \ipG{\vf,\cfrb{n}}\cfrb{n}. \label{thmeq:2scale1}
\end{align}

\item For  all $\vf\in l_2(\gph)$ and for $j=J_0,\ldots, J-1$, the following identities hold:
\begin{align}
           & \nrmG{\vf}^2 = \sum_{\uG\in\VG} \bigl|\ipG{\vf,\cfra[J,\uG]}\bigr|^{2}
           +\sum_{n=1}^r\sum_{\uG\in\VG}\bigl|\ipG{\vf,\cfrb[J,\uG]{n}}\bigr|^{2}, \quad\label{thmeq:normalization2}\\
            & \sum_{\uG\in\VG} \bigl|\ipG{\vf,\cfra[j+1,\uG]}\bigr|^{2}
            = \sum_{\uG\in\VG} \bigl|\ipG{\vf,\cfra}\bigr|^{2} + \sum_{n=1}^{r}\sum_{\uG\in\VG} \bigl|\ipG{\vf,\cfrb{n}}\bigr|^{2}.&\label{thmeq:2scale2}
\end{align}

\item The functions in $\Psi$ satisfy
\begin{align}
   1 = \left|\FT{\scala}\left(\frac{\eigvm}{2^{J}}\right)\right|^{2} + \sum_{n=1}^{r}\left|\FT{\scalb^{(n)}}\left(\frac{\eigvm}{2^{J}}\right)\right|^{2} &\quad \forall
  \ell=1,\ldots,\NV, \label{thmeq:nrm:alpha:beta}\\
     \left|\FT{\scala}\left(\frac{\eigvm}{2^{j+1}}\right)\right|^{2}
                = \left|\FT{\scala}\left(\frac{\eigvm}{2^{j}}\right)\right|^{2} + \sum_{n=1}^{r}\left|\FT{\scalb^{(n)}}\left(\frac{\eigvm}{2^{j}}\right)\right|^{2} &\quad \forall
 \begin{array}{l}
 \ell=1,\ldots,\NV,\\
 j=J_0,\ldots,J-1.
 \end{array}\label{thmeq:2scale:alpha:beta}
 \end{align}

\item The identities in \eqref{thmeq:nrm:alpha:beta} hold  and  the filters in the filter bank $\filtbk$ satisfy
\begin{align}
  \left|\FS{\maska}\left(\frac{\eigvm}{2^{j}}\right)\right|^{2} + \sum_{n=1}^{r} \left|\FS{\maskb}\left(\frac{\eigvm}{2^{j}}\right)\right|^{2} = 1 \quad \forall \ell\in\sigma_{\scala}^{(j)},\; j = J_0+1,\ldots,J,
\label{thmeq:2scale:masks}
 \end{align}
where
\[
\sigma_{\scala}^{(j)}:=\left\{\ell\in\{1,\ldots,\NV\} \setsep \FT{\scala}\left(\frac{\eigvm}{2^j}\right) \neq 0\right\}.
\]
\end{enumerate}
\end{theorem}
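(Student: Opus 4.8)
The plan is to funnel all five equivalences through a single scalar computation and then move between the operator, quadratic-form, generator, and filter formulations by elementary principles. The engine is the following spectral collapse: for any $\vf\in l_2(\gph)$ and any scale $j$, substituting $\vf=\sum_\ell \FT{\vf}_\ell\eigfm$ into the definition \eqref{defn:ufra:ufrb} gives $\ipG{\vf,\cfra}=\sum_\ell \conj{\FT{\scala}(\eigvm/2^j)}\,\FT{\vf}_\ell\,\eigfm(\uG)$, so the map $\uG\mapsto\ipG{\vf,\cfra}$ is itself an inverse Fourier expansion with coefficients $\conj{\FT{\scala}(\eigvm/2^j)}\,\FT{\vf}_\ell$. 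Applying Parseval's identity over $\uG\in\VG$ then collapses the spatial sum,
\[
\sum_{\uG\in\VG}\bigl|\ipG{\vf,\cfra}\bigr|^2=\sum_\ell\Bigl|\FT{\scala}(\eigvm/2^j)\Bigr|^2\,|\FT{\vf}_\ell|^2,
\]
and likewise for each $\cfrb{n}$ with $\FT{\scalb^{(n)}}$ replacing $\FT{\scala}$. I expect this step to be short but decisive, since it rewrites every spatial framelet sum appearing in (i)--(iii) as a spectral weight acting on $\{|\FT{\vf}_\ell|^2\}$.

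With the engine in hand, (iii) $\Leftrightarrow$ (iv) is immediate: because $\nrmG{\vf}^2=\sum_\ell|\FT{\vf}_\ell|^2$, both \eqref{thmeq:normalization2} and \eqref{thmeq:2scale2} become identities of the form $\sum_\ell c_\ell|\FT{\vf}_\ell|^2=\sum_\ell c_\ell'|\FT{\vf}_\ell|^2$; choosing $\vf=\eigfm[\ell_0]$ isolates each degree $\ell_0$, so these hold for all $\vf$ exactly when the spectral weights match term by term, which are precisely \eqref{thmeq:nrm:alpha:beta} and \eqref{thmeq:2scale:alpha:beta}. For (i) $\Leftrightarrow$ (iii) I would telescope: the frame identity \eqref{eq:f:UFS0} at $J_1=J$ is exactly \eqref{thmeq:normalization2}, while subtracting \eqref{eq:f:UFS0} at consecutive starting scales $J_1$ and $J_1+1$ yields \eqref{thmeq:2scale2} with $j=J_1$; conversely, summing the two-scale relations from $J$ down to $J_1$ and adjoining the normalization recovers \eqref{eq:f:UFS0} at every $J_1$.

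For (ii) $\Leftrightarrow$ (iii) I would pass between operator identities and their quadratic forms. The synthesis operators $\vf\mapsto\sum_{\uG}\ipG{\vf,\cfra}\cfra$ and $\vf\mapsto\sum_{n}\sum_\uG\ipG{\vf,\cfrb{n}}\cfrb{n}$ are self-adjoint, and a self-adjoint operator on $l_2(\gph)$ is determined by its quadratic form through the polarization identity already invoked for \eqref{defn:frame2}. Hence the vector identities \eqref{thmeq:normalization1} and \eqref{thmeq:2scale1} are equivalent to the scalar identities \eqref{thmeq:normalization2} and \eqref{thmeq:2scale2} obtained by pairing both sides against $\vf$. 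Finally, for (iv) $\Leftrightarrow$ (v) I would substitute the refinement relations \eqref{eq:refinement} evaluated at $\xi=\eigvm/2^{j+1}$, namely $\FT{\scala}(\eigvm/2^j)=\FS{\maska}(\eigvm/2^{j+1})\,\FT{\scala}(\eigvm/2^{j+1})$ and similarly for each $\FT{\scalb^{(n)}}$, into \eqref{thmeq:2scale:alpha:beta}. Factoring out $|\FT{\scala}(\eigvm/2^{j+1})|^2$ turns \eqref{thmeq:2scale:alpha:beta} into $|\FT{\scala}(\eigvm/2^{j+1})|^2\bigl(|\FS{\maska}(\eigvm/2^{j+1})|^2+\sum_n|\FS{\maskb}(\eigvm/2^{j+1})|^2-1\bigr)=0$, which after reindexing $j+1\mapsto j$ is precisely the mask identity \eqref{thmeq:2scale:masks}.

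The main obstacle I anticipate is exactly this last factoring: the mask identity may only be divided out where $\FT{\scala}(\eigvm/2^{j+1})\neq0$, so the correct statement is that \eqref{thmeq:2scale:alpha:beta} forces the filter sum to equal $1$ only on the support set $\sigma_\scala^{(j+1)}$. Off that set one must verify that \eqref{thmeq:2scale:alpha:beta} is automatically satisfied: when $\FT{\scala}(\eigvm/2^{j+1})=0$, the refinement equation \eqref{eq:refinement} forces both $\FT{\scala}(\eigvm/2^j)$ and every $\FT{\scalb^{(n)}}(\eigvm/2^j)$ to vanish irrespective of the mask values, so both sides of \eqref{thmeq:2scale:alpha:beta} are zero and no constraint on $\filtbk$ survives. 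Getting this bookkeeping right---confirming that the restriction to $\sigma_\scala^{(j)}$ is neither too strong nor too weak---is what makes (v) a genuine simplification of (iv), and it is the one place demanding care; the remainder of the argument is purely linear-algebraic.
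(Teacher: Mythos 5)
Your proposal is correct and follows essentially the same route as the paper: the spectral collapse $\sum_{\uG}|\ipG{\vf,\cfra}|^2=\sum_\ell|\FT{\scala}(\eigvm/2^j)|^2|\FT{\vf}_\ell|^2$ is the paper's \eqref{eq:Fcoe.cfrpr}, the telescoping across starting scales mirrors the paper's recursion \eqref{eq:cfrpra.m1} (you telescope the quadratic-form identities, the paper telescopes the projection identities, and both bridge the two via polarization), and the refinement-equation factoring for (iv)$\Leftrightarrow$(v) is identical. Your explicit check that \eqref{thmeq:2scale:alpha:beta} is vacuous off $\sigma_\scala^{(j+1)}$ is a welcome bit of care the paper leaves implicit, but it does not change the argument.
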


\begin{proof}
(i)$\Longleftrightarrow$(ii). Let $\mathcal{V}_j:=\mathrm{span}\{\cfra: \uG\in\VG\}$ and $\mathcal{W}_j^n:=\mathrm{span}\{\cfrb{n}: \uG\in\VG\}$. Define projections  $\cfrpra{j}, \cfrprb{j}$, $n=1,\dots,r$ as
\begin{equation}\label{eqs:proj.ufr}
    \cfrpra{j}(\vf) := \sum_{\uG\in\VG} \ipG{\vf,\cfra}\cfra,\quad
    \cfrprb{j}(\vf) := \sum_{\uG\in\VG} \ipG{\vf,\cfrb{n}}\cfrb{n},\quad \vf\in l_2(\gph).
\end{equation}
Since $\ufrsys[J_1]^{J}(\Psi,\filtbk)$ is a tight frame for $l_2(\gph)$ for  $J_1= J_0,\ldots,J$, by polarization identity,
\begin{equation}\label{eq:f:UFS}
\begin{aligned}
  \vf = \cfrpra{J_1}(\vf) + \sum_{j=J_1}^{J}\sum_{n=1}^{r} \cfrprb{j}(\vf)
    = \cfrpra{J_1+1}(\vf) + \sum_{j=J_1+1}^{J}\sum_{n=1}^{r} \cfrprb{j}(\vf)
\end{aligned}
\end{equation}
for all $\vf\in l_2(\gph)$ and for all $J_1= J_0,\ldots,J$.
Thus,  for $J_1=J_0,\ldots,J-1$, we have
\begin{equation}\label{eq:ufr.pr.J.J1}
 \cfrpra{J_1+1}(\vf) =  \cfrpra{J_1}(\vf) + \sum_{n=1}^{r} \cfrprb{J_1}(\vf),
\end{equation}
which is \eqref{thmeq:2scale1}. Moreover, when $J_1=J$, \eqref{eq:f:UFS} gives \eqref{thmeq:normalization1}.  Consequently, (i)$\Longrightarrow$(ii).
Conversely,  recursively using \eqref{eq:ufr.pr.J.J1} gives
\begin{equation}\label{eq:cfrpra.m1}
  \cfrpra{m+1}(\vf) = \cfrpra{J_1}(\vf) + \sum_{j=J_1}^{m}\sum_{n=1}^{r} \cfrprb{j}(\vf)
\end{equation}
for all $J_1\le m\le J-1$. Taking $m=J-1$ together with \eqref{thmeq:normalization1}, we deduce \eqref{eq:f:UFS},
which is equivalent to \eqref{eq:f:UFS0}. Thus, (ii)$\Longrightarrow$(i).

(ii)$\Longleftrightarrow$(iii). The equivalence between (ii) and (iii) simply follows from the polarization identity.

(ii)$\Longleftrightarrow$(iv). By the orthonormality of $\eigfm$,
\begin{equation*}\label{eq:cfr.coeff}
    \ipG{\vf,\cfra} = \sum_{\ell=1}^{\NV} \conj{\FT{\scala}\left(\frac{\eigvm}{2^{j}}\right)}\Fcoem{\vf}\:\eigfm(\uG), \quad
    \ipG{\vf,\cfrb{n}} = \sum_{\ell=1}^{\NV} \conj{\FT{\scalb^{(n)}}\left(\frac{\eigvm}{2^{j}}\right)}\Fcoem{\vf}\:\eigfm(\uG),
\end{equation*}
where $\FT{\vf}_\ell=\ipG{\vf,\eigfm}$ is the Fourier coefficient of $\vf$ with respect to $\eigfm$.
This together with \eqref{eqs:proj.ufr} and \eqref{defn:ufra:ufrb} gives, for $j\ge J_0$ and $n=1,\dots,r$, the Fourier coefficients for the projections $\cfrpra{j}(\vf)$ and $\cfrprb{j}(\vf)$:
\begin{equation}\label{eq:Fcoe.cfrpr}
    \Fcoem{\left(\cfrpra{j}(\vf)\right)}
    = \left|\FT{\scala}\left(\frac{\eigvm}{2^{j}}\right)\right|^{2} \Fcoem{\vf},\quad
    \Fcoem{\left(\cfrprb{j}(\vf)\right)}
    = \left|\FT{\scalb^{(n)}}\left(\frac{\eigvm}{2^{j}}\right)\right|^{2} \Fcoem{\vf},\quad  \ell=1,\ldots,\NV,
\end{equation}
which implies that \eqref{thmeq:normalization1} and \eqref{thmeq:2scale1} are equivalent to \eqref{thmeq:nrm:alpha:beta} and \eqref{thmeq:2scale:alpha:beta} respectively.
Thus, (ii)$\Longleftrightarrow$(iv).

(iv)$\Longleftrightarrow$(v).  By the relation in \eqref{eq:refinement}, it can be deduced that for $\ell\ge0$ and $j\ge J_0$,
\begin{align*}
     \left|\FT{\scala}\left(\frac{\eigvm}{2^{j}}\right)\right|^{2} + \sum_{n=1}^{r}\left|\FT{\scalb^{(n)}}\left(\frac{\eigvm}{2^{j}}\right)\right|^{2}
    = \left(\left|\FT{\maska}\left(\frac{\eigvm}{2^{j+1}}\right)\right|^{2} + \sum_{n=1}^{r}\left|\FT{\maskb}\left(\frac{\eigvm}{2^{j+1}}\right)\right|^{2}\right)\left|\FT{\scala}\left(\frac{\eigvm}{2^{j+1}}\right)\right|^{2}.
\end{align*}
This shows that \eqref{thmeq:2scale:alpha:beta} is equivalent to \eqref{thmeq:2scale:masks}. Therefore, (iv)$\Longleftrightarrow$(v).
\end{proof}

When $\FT{\scala}(\xi)=1$ in a neighborhood of the origin, the system in \eqref{defn:UFS} is simplified by the following property. Let $\Dirac_\uG\in l_2(\gph)$ be the \emph{Dirac sequence} defined by that $\Dirac_\uG(\uG)=1$ and $\Dirac_\uG(\vG)=0$ if $\uG\neq\vG$
\begin{proposition}
Suppose there exists a constant $C>0$ such that
\begin{equation}\label{eq:alpha1}
\FT{\scala}(\xi) \equiv 1\quad \forall |\xi|\le C.
\end{equation}
Then, for sufficiently large $j$, we have
$\cfra[j,\uG] = \Dirac_\uG$.
\end{proposition}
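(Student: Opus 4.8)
The plan is to exploit the finiteness of the graph spectrum together with the flat-window hypothesis \eqref{eq:alpha1} on $\FT{\scala}$, and then to invoke the completeness (reproducing-kernel) identity of the orthonormal basis $\{\eigfm\}_{\ell=1}^{\NV}$. The point is that on a finite graph the eigenvalues $0=\eigvm[1]\le\cdots\le\eigvm[\NV]$ form a bounded set, so for $j$ large all the rescaled frequencies $\eigvm/2^{j}$ are pushed into the interval $[0,C]$ on which the low-pass filter is identically $1$. On that range the filtered Bessel kernel $\cfra[j,\uG]$ from \eqref{defn:ufra:ufrb} loses its filter factor and degenerates into the plain kernel $\sum_{\ell}\conj{\eigfm(\uG)}\eigfm(\vG)$, which reproduces point evaluation.

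First I would fix the threshold explicitly. Since $\eigvm[\NV]$ is the largest eigenvalue, choosing any $j$ with $2^{j}\ge \eigvm[\NV]/C$ guarantees $0\le \eigvm/2^{j}\le C$ for every $\ell=1,\ldots,\NV$. By \eqref{eq:alpha1} this forces $\FT{\scala}(\eigvm/2^{j})=1$ for all such $\ell$. Substituting into the definition of $\cfra[j,\uG]$ in \eqref{defn:ufra:ufrb} collapses every coefficient, leaving
\[
\cfra[j,\uG](\vG) = \sum_{\ell=1}^{\NV}\conj{\eigfm(\uG)}\,\eigfm(\vG), \qquad \vG\in\VG.
\]

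The remaining, and only mildly delicate, step is to identify this sum with $\Dirac_\uG(\vG)$. Here I would use the completeness of the orthonormal basis: expanding the Dirac sequence in $\{\eigfm\}$ gives $\Dirac_\uG=\sum_{\ell}\ipG{\Dirac_\uG,\eigfm}\eigfm$, and since $\ipG{\Dirac_\uG,\eigfm}=\conj{\eigfm(\uG)}$, evaluating at $\vG$ yields exactly $\Dirac_\uG(\vG)=\sum_{\ell}\conj{\eigfm(\uG)}\eigfm(\vG)$. Equivalently, the synthesis matrix whose columns are the $\eigfm$ is unitary, so its Gram relation $U^{*}U=\Id$ upgrades to $UU^{*}=\Id$ in finite dimension, which is the same identity. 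Combining with the previous display gives $\cfra[j,\uG](\vG)=\Dirac_\uG(\vG)$ for every $\vG$, i.e. $\cfra[j,\uG]=\Dirac_\uG$.

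I do not anticipate a genuine obstacle: the argument is a two-line reduction followed by the standard reproducing-kernel identity. The only points deserving attention are stating the bound on $j$ cleanly in terms of $\eigvm[\NV]$, and being careful that the completeness relation $\sum_{\ell}\conj{\eigfm(\uG)}\eigfm(\vG)=\delta_{\uG,\vG}$ is invoked in the correct direction (equivalently, that finiteness promotes orthonormality $U^{*}U=\Id$ to completeness $UU^{*}=\Id$).
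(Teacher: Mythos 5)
Your proposal is correct and follows exactly the paper's argument: for $j$ large enough all rescaled eigenvalues $\eigvm/2^{j}$ fall into $[0,C]$ so the filter factor is identically $1$, and the resulting kernel $\sum_{\ell}\conj{\eigfm(\uG)}\eigfm(\vG)$ equals $\Dirac_\uG(\vG)$ by completeness of the orthonormal basis. You merely spell out the explicit threshold $2^{j}\ge\eigvm[\NV]/C$ and the finite-dimensional upgrade from $U^{*}U=\Id$ to $UU^{*}=\Id$, which the paper leaves implicit.
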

\begin{proof}
In view of \eqref{eq:alpha1}, we have for sufficient large $j$, $\FT{\scala}(\eigvm/2^j)\equiv1$ for all $\ell=1,\ldots,\NV$. Then, by the orthonormality of $\{\eigfm\}_{\ell=1}^\NV$, we have
\begin{equation*}
	\cfra[j,\uG](\vG) = \sum_{\ell=1}^{\NV}\conj{\eigfm(\uG)}\eigfm(\vG) = \Dirac_\uG(\vG), \quad\vG\in\VG.
\end{equation*}
\end{proof}
Hence, when \eqref{eq:alpha1} holds and $J_1$ is sufficiently large, the undecimated framelet system  $\ufrsys[J_1]^{J}(\Psi,\filtbk)$ defined in \eqref{defn:UFS} becomes
\[
\ufrsys[J_1]^{J}(\Psi,\filtbk) \equiv \{\Dirac_\uG\setsep \uG\in\VG\},
\]
which is a trivial orthonormal basis for $l_2(\gph)$. We observe from this case that the framelet systems become more and more localized as the level $J_1$ increases.

\subsection{Decimated Tight Framelets on $\gph$}\label{sec:dfs}
In this section, we introduce a \emph{decimated tight framelet system} on a chain of a graph $\gph$. The scaling of a framelet is associated with the level of the associated graph in the chain. The framelets at level $j$ ($j=J_0,\dots,J$) take the vertices of the $j$th-level graph of the chain as their transition points. By doing this, the number of framelets equals to the nodes of the chain. The redundancy of the representation system is then determined by the clustering that constructed the chain of the graph. The decimated framelet system, compared with the undecimated framelet system, compresses the graph size at a coarser scale and saves the storage while no information in the framelet representation is lost.

Let $\gph=(\VG,\EG,\wG)$ be a graph and $\gph_{J\rightarrow J_0}:=(\gph_J,\ldots,\gph_{J_0})$ be a coarse-grained chain of $\gph$.
For each vertex $\pV$ in $\gph_j=(\VG_j,\EG_j,\wG_j)$, we assign a real number $\wV\in\R$, which we call the \emph{(associated) weight}.
At the bottom level with $j=J$, we let $\wV[J,{[\uG]_{\gph_J}}]\equiv1$ for all $[\uG]_{\gph_J}=\{\uG\}$ in $\VG_J$.
Let $\QN[j]:=\{\wV: \pV\in\VG_j\}$ be the set of weights on $\gph_j$ and $\QN[J\rightarrow J_0]:=(\QN[J],\ldots,\QN[J_0])$ be the sequence of sets of weights for the coarse-grained chain $\gph_{J\rightarrow J_0}$. 


\begin{definition}[Scaling functions]
	Let $\Psi_j=\{\scala_{j};\scalb^{(1)}_j,\ldots,\scalb^{(r_j)}_j\}$ be a set of functions in $L_1(\R)$ at scale $j$ for $j=J_0,\ldots, J$. We link the framelet generators in $\Psi_{j}$ and $\Psi_{j-1}$ by a filter bank $\filtbk_j:=\{\maska_{j};\maskb[1]_j,\ldots,\maskb[r_{j-1}]_j\}$ in that, for $\xi\in\R$ and $0<\Lambda_{J_0}\le \Lambda_{J_0+1}\le \cdots\le \Lambda_{J}<\infty$,
\begin{equation}
\label{eq:refinement:nonstationary}
\begin{aligned}
    \FT{\scala_{j-1}}(\xi/\Lambda_{j-1}) &= \FS{\maska_{j}}(\xi/\Lambda_{j})\FT{\scala_{j}}(\xi/\Lambda_j),\\
    \FT{\scalb^{(n)}_{j-1}}(\xi/\Lambda_{j-1}) &= \FS{\maskb_{j}}(\xi/\Lambda_j)\FT{\scala_{j}}(\xi/\Lambda_j), \quad n=1,\ldots,r_{j-1},
\end{aligned}
\end{equation}
where $\Lambda_{J_0},\Lambda_{J_0+1},\dots,\Lambda_J$ are called \emph{scaling factors}.
We call $\scala_{j};\scalb^{(1)}_j,\ldots,\scalb^{(r_j)}_j$ are the framelet generators or scaling functions of level $j$.
\end{definition}
\begin{definition}[Decimated framelets]\label{defn:decimated.framlets}
	The decimated framelets $\fra(\vG)$ and $\frb{n}(\vG)$, $\uG,\vG\in\VG$, at scale $j=J_0,\ldots,J$ for the chain $\gph_{J\to J_0}$ of the graph $\gph$, framelet generators in \eqref{eq:refinement:nonstationary} and a weights sequence $\QN[J\rightarrow J_0]$ are defined by
\begin{equation}
\label{defn:fra.frb}
\begin{aligned}
    \fra(\vG) &:= \sqrt{\wV}\sum_{\ell=1}^{\NV} \FT{\scala_{j}}\left(\frac{\eigvm}{\Lambda_{j}}\right)\conj{\eigfm(\pV)}\eigfm(\vG), \quad [\uG]\in\VG_j,\\
    \frb{n}(\vG) &:= \sqrt{\wV[j+1,{[\uG]}]}\sum_{\ell=1}^{\NV} \FT{\scalb_{j}^{(n)}}\left(\frac{\eigvm}{\Lambda_{j}}\right)\conj{\eigfm(\pV)}\eigfm(\vG), \quad \pV\in\VG_{j+1}, \;n=1,\ldots, r_j,
\end{aligned}
\end{equation}
where for $j=J$, we let $V_{J+1}:=V_J$ and $\wV[J+1,{[\uG]}]:=\wV[J,{[\uG]}]$. We call $\fra$ and $\frb{n}$ low-pass and high-pass (decimated) framelets at scale $j$.
\end{definition}

The decimated framelets in Definition~\ref{defn:decimated.framlets} are constructed based on the orthonormal basis associated with the chain $\gph_{J\to J_0}$.
Here, the function $\eigfm(\pV_{\gph_j})$ can either be defined as $\eigfm(\pV_{\gph_j}):=\frac{1}{\#\pV_{\gph_j}}\sum_{\vG\in\pV_{\gph_j}}\eigfm(\vG)$, or $\eigfm(\pV_{\gph_j}):=\max_{\vG\in\pV_{\gph_j}}\eigfm(\vG)$. In \eqref{defn:fra.frb}, the vertices $[\uG]=[\uG]_{\gph_{j+1}}\in \VG_{j+1}$ for the high-pass framelet $\frb{n}$ of level $j$ are at the $(j+1)$th level while the vertices $[p]_{\gph_j}$ in $ V_{j}$ for $\fra$ of level $j$ are at the $j$th level. These can be interpreted from the view of multiresolution analysis that $\frb{n}$ in fact lies in the scale $j+1$ while $\fra$ is in the scale $j$.


\begin{definition}[Decimated framelet system]\label{def:DFS}
	The (decimated) framelet system\\ $\dfrsys(\{\Psi_j\}_{j=J_1}^J,\{\filtbk_j\}_{j=J_1+1}^{J})$ on $\gph$ (starting from a scale $J_1$) is a non-homogeneous, non-stationary affine system which is a set of low-pass and high-pass framelets given by
\begin{equation}\label{defn:DFS}
\begin{aligned}
\dfrsys(\{\Psi_j\}_{j=J_1}^J,\{\filtbk_j\}_{j=J_1+1}^{J}) 
&:= \dfrsys(\{\Psi_j\}_{j=J_1}^J,\{\filtbk_j\}_{j=J_1+1}^{J};\gph_{J\rightarrow J_1}, \QN[J\rightarrow J_1]) \\ 
&:=
\{\fra[{J_1},\pV] \setsep \pV\in\VG_{J_1}\}\cup \{\frb{n} \setsep \pV\in \VG_{j+1}, \: j = J_1,\ldots,J\}.
\end{aligned}
\end{equation}
\end{definition}
%
%
For $\QN[j]=\{\wV: \pV\in\VG_j\}$ on $\gph_j$, we define
\begin{equation}\label{eq:sum:wt:u}
 \QU{\ell'}(\QN[j]):=\sum_{\pV\in\VG_j}\wV\:\eigfm(\pV)\overline{\eigfm[\ell'](\pV)}.
\end{equation}
Note that $\QU{\ell'}(\QN[J]) = \delta_{\ell,\ell'}$ since $\wV[J,{[\uG]}]\equiv1$ and $[\uG]_{\gph_J}=\{\uG\}$ is a singleton.

Similar to the undecimated setting, the following then gives equivalence characterizations for the tightness of a decimated framelet system $\{\dfrsys(\{\Psi_j\}_{j=J_1}^J,\{\filtbk_j\}_{j=J_1+1}^{J})\setsep J_1=J_0,J_0+1,\ldots,J\}$.

\begin{theorem}
\label{thm:DFS}
Let $\Psi_j:=\{\scala_{j};\scalb^{(1)}_j,\ldots,\scalb^{(r_j)}_j\}$, $j=J_0,\ldots, J$ be a sequence of framelet generators sets  in $L_1(\R)$ associated with a sequence of filter banks $\filtbk_j=\{\maska_{j};\maskb[1]_j,\ldots,\maskb[r_{j-1}]_j\}$, $j=J_0+1,\ldots, J$, see \eqref{eq:refinement:nonstationary}.
Let $\gph_{J\rightarrow J_0}$ be a coarse-grained chain of a graph $\gph$ with a weight sequence $\QN[J\rightarrow J_0]$. Let $\dfrsys(\{\Psi_j\}_{j=J_1}^J,\{\filtbk_j\}_{j=J_1+1}^{J})$, $J_1=J_0,\ldots, J$ be a sequence of decimated framelet systems for the chain $\gph_{J\rightarrow J_0}$ in Definition~\ref{def:DFS} with framelets in Definition~\ref{defn:decimated.framlets} with framelet generators $\Psi_j$. Then, the following statements are equivalent.
 \begin{enumerate}[{\rm(i)}]
 \item The decimated framelet system $\dfrsys(\{\Psi_j\}_{j=J_1}^J,\{\filtbk_j\}_{j=J_1+1}^{J})$  is a tight frame for $l_2(\gph)$ for all $J_1=J_0,\ldots,J$, that is,
    \begin{equation}\label{eq:f:DFS0}
    \hspace{-6mm}\nrmG{\vf}^2
    =\sum_{[\uG]\in\VG_{J_1}}\Big|\ipG{\vf,\fra[J_1,\pV]}\Big|^2
    +\sum_{j=J_1}^{J}\sum_{n=1}^{r_j}\sum_{[\uG]\in\VG_{j+1}}\Big|\ipG{\vf,\frb{n}}\Big|^2\quad\forall \vf\in l_2(\gph), J_1=J_0,\ldots,J.
    \end{equation}

\item  For $\vf\in l_2(\gph)$ and $j=J_0,\ldots,J-1$,
\begin{align}
&   \vf = \sum_{\pV\in\VG_J} \ipG{\vf,\fra[J,\pV]}\fra[J,\pV]
+\sum_{n=1}^{r_J}\sum_{\pV\in\VG_J}\ipG{\vf,\frb[J,\pV]{n}}\frb[J,\pV]{n},
\label{thmeq:DFS:nrm1}\\
& \sum_{\pV\in\VG_{j+1}} \ipG{\vf,\fra[j+1,\pV]}\fra[j+1,\pV]
= \sum_{\pV\in\VG_{j}} \ipG{\vf,\fra}\fra+
\sum_{n=1}^{r_j}\sum_{\pV\in\VG_{j+1}} \ipG{\vf,\frb{n}}\frb{n}. \label{thmeq:DFS:2scale1}
\end{align}

\item For $\vf\in l_2(\gph)$ and $j=J_0,\ldots, J-1$,
\begin{align}
& \nrmG{\vf}^2 = \sum_{\pV\in\VG_J} \Bigl|\ipG{\vf,\fra[J,\pV]}\Bigr|^{2}+\sum_{n=1}^{r_J}\sum_{\pV\in\VG_J}\Bigl|\ipG{\vf,\frb[J,\pV]{n}}\Bigr|^{2}, \quad\label{thmeq:DFS:nrm2}\\
& \sum_{\pV\in\VG_{j+1}} \Bigl|\ipG{\vf,\fra[j+1,\pV]}\Bigr|^{2}
   = \sum_{\pV\in\VG_j} \Bigl|\ipG{\vf,\fra}\Bigr|^{2} + \sum_{n=1}^{r_j}\sum_{\pV\in\VG_{j+1}} \Bigl|\ipG{\vf,\frb{n}}\Bigr|^{2}.&\label{thmeq:DFS:2scale2}
\end{align}

  %
%

\item The framelet generators in $\Psi_j$ and the weights in $\QN[j]$ satisfy, for $1\leq \ell,\ell'\leq N$,
\begin{align}
  & 1 = \left|\FT{\scala_{j}}\left(\frac{\eigvm}{\Lambda_J}\right)\right|^{2} + \sum_{n=1}^{r_J}\left|\FT{\scalb_{j}^{(n)}}\left(\frac{\eigvm}{\Lambda_{J}}\right)\right|^{2},\label{thmeq:DFS:nrm:alpha:beta}\\[1mm]
            &
\overline{\FT{\scala_{j+1}}\left(\frac{\eigvm}{\Lambda_{j+1}}\right)}
{\FT{\scala_{j+1}}}\left(\frac{\eigvm[\ell']}{\Lambda_{j+1}}\right)\QU{\ell'}(\QN[{j+1}])
-\overline{\FT{\scala_{j}}\left(\frac{\eigvm}{\Lambda_{j}}\right)}
{\FT{\scala_{j}}}\left(\frac{\eigvm[\ell']}{\Lambda_{j}}\right)\QU{\ell'}(\QN[j])
\nonumber\\[1mm]
&\;=\sum_{n=1}^{r_j} \overline{\FT{\scalb^{(n)}_{j}}\left(\frac{\eigvm}{\Lambda_{j}}\right)}
{\FT{\scalb^{(n)}_{j}}}\left(\frac{\eigvm[\ell']}{\Lambda_{j}}\right)\QU{\ell'}(\QN[j+1])\quad \forall j=J_0,\ldots,J-1,\label{thmeq:DFS:2scale:alpha:beta}
 \end{align}
 where $\QU{\ell'}(\QN[j])$ is given by \eqref{eq:sum:wt:u}.

\item The identities in \eqref{thmeq:DFS:nrm:alpha:beta} hold, and for all $(\ell,\ell')\in \sigma_{\scala,\overline{\scala}}^{(j)}$ and $j=J_0+1,\ldots,J$,
\begin{align}
&\overline{\FT{\maska_{j}}\left(\frac{\eigvm}{\Lambda_{j}}\right)}{\FT{\maska_{j}}}\left(\frac{\eigvm[\ell']}{\Lambda_{j}}\right)
\QU{\ell'}(\QN[{j-1}])+
\sum_{n=1}^{r_{j-1}} \overline{\FT{\maskb_{j}}\left(\frac{\eigvm}{\Lambda_{j}}\right)}{\FT{\maskb_{j}}}\left(\frac{\eigvm[\ell']}{\Lambda_{j}}\right) \QU{\ell'}(\QN[{j}]) = \QU{\ell'}(\QN[{j}]),  \label{thmeq:DFS:mask}
 \end{align}
where
\begin{equation}\label{eq:sigma.scala}
 \sigma_{\scala,\overline{\scala}}^{(j)}:=\left\{(\ell,\ell')\in\N\times \N \setsep \overline{\FT{\scala}\left(\frac{\eigvm}{\Lambda_{j}}\right)}{\FT{\scala}}\left(\frac{\eigvm[\ell']}{\Lambda_{j}}\right)\neq0\right\}.
\end{equation}
\end{enumerate}
\end{theorem}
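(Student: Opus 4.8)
The plan is to prove Theorem~\ref{thm:DFS} by reproducing the four-link cycle of implications (i)$\Leftrightarrow$(ii)$\Leftrightarrow$(iii)$\Leftrightarrow$(iv)$\Leftrightarrow$(v) used for the undecimated case in Theorem~\ref{thm:UFS}, but carefully tracking the effects of decimation and of the weights $\QN[j]$. As in that proof I would first introduce the partial-reconstruction operators
\[
P_j\vf:=\sum_{[\uG]\in\VG_j}\ipG{\vf,\fra}\fra,\qquad
Q_j^{(n)}\vf:=\sum_{\pV\in\VG_{j+1}}\ipG{\vf,\frb{n}}\frb{n},
\]
so that tightness of $\dfrsys(\{\Psi_j\}_{j=J_1}^J,\{\filtbk_j\}_{j=J_1+1}^J)$ at starting scale $J_1$, i.e.\ \eqref{eq:f:DFS0}, is equivalent by polarization to the reproducing identity $\vf=P_{J_1}\vf+\sum_{j=J_1}^{J}\sum_{n=1}^{r_j}Q_j^{(n)}\vf$ for all $\vf\in l_2(\gph)$.

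For (i)$\Leftrightarrow$(ii) and (ii)$\Leftrightarrow$(iii) I would follow the telescoping and polarization arguments verbatim. Subtracting the reproducing identity at starting scale $J_1+1$ from that at $J_1$ cancels the high-pass blocks for $j\ge J_1+1$ and isolates the single-scale relation $P_{J_1+1}\vf=P_{J_1}\vf+\sum_n Q_{J_1}^{(n)}\vf$, which is \eqref{thmeq:DFS:2scale1}, while the boundary case $J_1=J$ gives the normalization \eqref{thmeq:DFS:nrm1}; conversely, recursing on \eqref{thmeq:DFS:2scale1} and feeding in \eqref{thmeq:DFS:nrm1} recovers tightness at every $J_1$. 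The passage to (iii) is the polarization identity applied termwise.

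The heart of the argument, and the step that departs essentially from the undecimated case, is (ii)$\Leftrightarrow$(iv). Expanding \eqref{defn:fra.frb} and using orthonormality of $\{\eigfm\}$ on $\gph_J\equiv\gph$, a direct calculation gives for each $\ell'$
\[
\Fcoe[\ell']{(P_j\vf)}=\sum_{\ell=1}^{\NV}\overline{\FT{\scala_{j}}\!\left(\frac{\eigvm}{\Lambda_{j}}\right)}\,\FT{\scala_{j}}\!\left(\frac{\eigvm[\ell']}{\Lambda_{j}}\right)\QU{\ell'}(\QN[j])\,\Fcoem{\vf},
\]
and the same with $\scala_{j}\mapsto\scalb_{j}^{(n)}$ and $\QN[j]\mapsto\QN[j+1]$ for $Q_j^{(n)}$. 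The obstacle here is that, unlike the undecimated setting where $\QU{\ell'}(\QN[j])=\delta_{\ell,\ell'}$ collapses the double sum onto the diagonal, the weights and the coarsening retain the full $(\ell,\ell')$ coupling through $\QU{\ell'}(\QN[j])$; this is precisely why (iv) is phrased over ordered pairs $(\ell,\ell')$ rather than single indices. Since $\vf\mapsto(\Fcoem{\vf})_{\ell=1}^{\NV}$ is a bijection onto $\C^{\NV}$, the operator identities \eqref{thmeq:DFS:nrm1}--\eqref{thmeq:DFS:2scale1} hold for all $\vf$ iff the coefficient of each $\Fcoem{\vf}$ matches, which yields \eqref{thmeq:DFS:nrm:alpha:beta} and \eqref{thmeq:DFS:2scale:alpha:beta} after equating entries pairwise. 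At the top scale I would invoke the boundary convention $\VG_{J+1}:=\VG_J$, $\wV[J+1,{[\uG]}]:=\wV[J,{[\uG]}]$, together with $\QU{\ell'}(\QN[J])=\delta_{\ell,\ell'}$, to reduce the normalization to the diagonal form \eqref{thmeq:DFS:nrm:alpha:beta}.

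Finally, (iv)$\Leftrightarrow$(v). Shifting the index $j\mapsto j-1$ in \eqref{thmeq:DFS:2scale:alpha:beta} and substituting the refinement relations \eqref{eq:refinement:nonstationary} makes every term carry the common factor $\overline{\FT{\scala_{j}}(\eigvm/\Lambda_{j})}\,\FT{\scala_{j}}(\eigvm[\ell']/\Lambda_{j})$. On the support set $\sigma_{\scala,\overline{\scala}}^{(j)}$ of \eqref{eq:sigma.scala} this factor is nonzero and may be divided out, producing exactly the mask identity \eqref{thmeq:DFS:mask}; off $\sigma_{\scala,\overline{\scala}}^{(j)}$ the same refinement relations force every term of \eqref{thmeq:DFS:2scale:alpha:beta} to vanish, so the two-scale relation holds automatically there and nothing is lost by restricting (v) to $\sigma_{\scala,\overline{\scala}}^{(j)}$. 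Together with the normalization \eqref{thmeq:DFS:nrm:alpha:beta} inherited from (iv), this yields (iv)$\Leftrightarrow$(v) and closes the cycle. I expect the bookkeeping of the off-diagonal $\QU{\ell'}(\QN[j])$ factors in the (ii)$\Leftrightarrow$(iv) step, rather than any single hard estimate, to be the main point requiring care.
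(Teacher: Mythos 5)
Your proposal is correct and follows essentially the same route as the paper's proof: telescoping the reproducing identities plus polarization for (i)$\Leftrightarrow$(ii)$\Leftrightarrow$(iii), expansion in the orthonormal basis with the weighted sums $\QU{\ell'}(\QN[j])$ carrying the off-diagonal $(\ell,\ell')$ coupling for the link to (iv), and substitution of the refinement relations \eqref{eq:refinement:nonstationary} with division by the common factor on $\sigma_{\scala,\overline{\scala}}^{(j)}$ for (iv)$\Leftrightarrow$(v). The only cosmetic difference is that you pass from (ii) to (iv) by matching matrix entries of the operator identities, whereas the paper passes from (iii) to (iv) by matching the kernels of the quadratic forms; since (ii)$\Leftrightarrow$(iii) by polarization, the two are interchangeable.
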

\begin{proof} The proofs of (i)$\Longleftrightarrow$(ii)$\Longleftrightarrow$(iii) are similar to those in Theorem~\ref{thm:UFS}. We next prove (iii)$\Longleftrightarrow$(iv)$\Longleftrightarrow$(v).

(iii) $\Longleftrightarrow$ (iv). For $\vf\in l_2(\gph)$, by the  the orthonormality of $\{\eigfm\}_{\ell=1}^\NV$, we have
\begin{equation}\label{eq:fr.coeff}
\begin{aligned}
    \ipG{\vf,\fra} &= \sqrt{\wV}\sum_{\ell=1}^{\NV} \conj{\FT{\scala_{j}}\left(\frac{\eigvm}{\Lambda_{j}}\right)}\Fcoem{\vf}\:\eigfm(\pV), \quad \pV\in\VG_j.\\
    \ipG{\vf,\frb{n}} &= \sqrt{\wV[j+1,{[\uG]}]}\sum_{\ell=1}^{\NV} \conj{\FT{\scalb^{(n)}_{j}}\left(\frac{\eigvm}{\Lambda_{j}}\right)}\Fcoem{\vf}\:\eigfm(\pV),\quad \pV\in\VG_{j+1}.
\end{aligned}
\end{equation}
Then,
\[
\begin{aligned}
\sum_{\pV\in\VG_j}  \Bigl|\ipG{\vf,\fra}\Bigr|^{2}
&=\sum_{\pV\in\VG_j} \wV\left|\sum_{\ell=1}^{\NV} \conj{\FT{\scala_{j}}\left(\frac{\eigvm}{\Lambda_{j}}\right)}\Fcoem{\vf}\:\eigfm(\pV)\right|^2 \\
&=\sum_{\ell=1}^\NV\sum_{\ell'=1}^\NV\Fcoem{\vf}\conj{\Fcoem[\ell']{\vf}}\;
  \conj{\FT{\scala_{j}}\left(\frac{\eigvm}{\Lambda_{j}}\right)}
  {\FT{\scala_{j}}}\left(\frac{\eigvm[\ell']}{\Lambda_{j}}\right)
  \sum_{\pV\in\VG_j}\wV\eigfm(\pV)\overline{\eigfm[\ell'](\pV)}\\
&=\sum_{\ell=1}^\NV\sum_{\ell'=1}^\NV
\Fcoem{\vf}\conj{\Fcoem[\ell']{\vf}}\;
\conj{\FT{\scala_{j}}\left(\frac{\eigvm}{\Lambda_{j}}\right)}
{\FT{\scala_{j}}}\left(\frac{\eigvm[\ell']}{\Lambda_{j}}\right)\QU{\ell'}(\QN[j])\\
\end{aligned}
\]
Since $\QU{\ell'}(\QN[J]) = \delta_{\ell,\ell'}$,
\[
\begin{aligned}
\sum_{\pV\in\VG_J} \Bigl|\ipG{\vf,\fra[J,\pV]}\Bigr|^{2}
           +\sum_{n=1}^{r_J}\sum_{\pV\in\VG_J}\Bigl|\ipG{\vf,\frb[J,\pV]{n}}\Bigr|^{2}
        = \sum_{\ell=1}^\NV |\vf_\ell|^2\left( \left|\FT{\scala_{j}}\left(\frac{\eigvm}{\Lambda_{J}}\right)\right|^{2}
        +\sum_{n=1}^{r_J}\left|\FT{\scalb^{(n)}_{j}}\left(\frac{\eigvm}{\Lambda_{J}}\right)\right|^{2}\right),
\end{aligned}
\]
which shows the equivalence between
 \eqref{thmeq:DFS:nrm2} and \eqref{thmeq:DFS:nrm:alpha:beta}. Moreover, \eqref{thmeq:DFS:2scale2} can be  rewritten as
\[
\begin{aligned}
&\sum_{\ell=1}^\NV\sum_{\ell'=1}^\NV \Fcoem{\vf}\overline{\Fcoem[\ell']{\vf}}\:
\overline{\FT{\scala_{j+1}}\left(\frac{\eigvm}{\Lambda_{j+1}}\right)}{\FT{\scala_{j+1}}}\left(\frac{\eigvm[\ell']}{\Lambda_{j+1}}\right)\QU{\ell'}(\QN[{j+1}])\\
=&\sum_{\ell=1}^{\NV}\sum_{\ell'=1}^\NV \Fcoem{\vf}\overline{\Fcoem[\ell']{\vf}}
\left[
\overline{\FT{\scala_{j}}\left(\frac{\eigvm}{\Lambda_{j}}\right)}
{\FT{\scala_{j}}}\left(\frac{\eigvm[\ell']}{\Lambda_{j}}\right)\QU{\ell'}(\QN[j])
+
\sum_{n=1}^{r_j}
\conj{\FT{\scalb^{(n)}_{j}}\left(\frac{\eigvm}{\Lambda_{j}}\right)}
{\FT{\scalb^{(n)}_{j}}}\left(\frac{\eigvm[\ell']}{\Lambda_{j}}\right)\QU{\ell'}(\QN[j+1])\right],
\end{aligned}
\]
which is equivalent to  \eqref{thmeq:DFS:2scale:alpha:beta}.

(iv) $\Longleftrightarrow$ (v). By  \eqref{eq:refinement:nonstationary}, we have
\[
\begin{aligned}
&\left[\overline{\FT{\scala_{j-1}}\left(\frac{\eigvm}{\Lambda_{j-1}}\right)}
{\FT{\scala_{j-1}}}\left(\frac{\eigvm[\ell']}{\Lambda_{j-1}}\right)\QU{\ell'}(\QN[{j-1}])
+
\sum_{n=1}^{r_{j-1}} \overline{\FT{\scalb^{(n)}_{j-1}}\left(\frac{\eigvm}{\Lambda_{j-1}}\right)}
{\FT{\scalb^{(n)}_{j-1}}}\left(\frac{\eigvm[\ell']}{\Lambda_{j-1}}\right)\QU{\ell'}(\QN[{j}])\right]
\\=&
\left[\overline{\FT{\maska_{j}}\left(\frac{\eigvm}{\Lambda_{j}}\right)}
{\FT{\maska_{j}}}\left(\frac{\eigvm[\ell']}{\Lambda_{j}}\right)\QU{\ell'}(\QN[{j-1}])
+\sum_{n=1}^{r_{j-1}} \overline{\FT{\maskb_{j}}\left(\frac{\eigvm}{\Lambda_{j}}\right)}
{\FT{\maskb_{j}}}\left(\frac{\eigvm[\ell']}{\Lambda_{j}}\right)\QU{\ell'}(\QN[{j}])\right]
\overline{\FT{\scala_{j}}\left(\frac{\eigvm}{\Lambda_{j}}\right)}
{\FT{\scala_{j}}}\left(\frac{\eigvm[\ell']}{\Lambda_{j}}\right),
\end{aligned}
\]
which implies \eqref{thmeq:DFS:2scale:alpha:beta}$\Longleftrightarrow$\eqref{thmeq:DFS:mask} and thus proves the equivalence  between (iv) and (v).
\end{proof}

The next corollary shows a condition on $\QU{\ell'} (\QN[j])$ such that the conditions (iv) and (v) of Theorem~\ref{thm:DFS} take a simplified form.

\begin{corollary}\label{cor:thm:DFS}
Retain all assumptions in Theorem~\ref{thm:DFS}, and in addition suppose that
\begin{equation}\label{tight:QN}
 \sigma_{\scala,\overline{\scala}}^{(j)}\subseteq \sigma_{\scala,\overline{\scala}}^{(j+1)}\quad
 \mbox{and}\quad\QU{\ell'} (\QN[j]) = \delta_{\ell,\ell'}\quad \forall (\ell,\ell')\in \sigma_{\scala,\overline{\scala}}^{(j)}, \;j=J_0,\ldots,J-1.
\end{equation}
Then, the following statements are equivalent.
\begin{enumerate}[{\rm(i)}]
\item
$\dfrsys(\{\Psi_j\}_{j=J_1}^J,\{\filtbk_j\}_{j=J_1+1}^{J})$  is a tight frame for $l_2(\gph)$ for all $J_1=J_0,\ldots,J$.

\item The identities in \eqref{thmeq:DFS:nrm:alpha:beta}, and for $j=J_0,\ldots,J-1$ and $\ell=1,\ldots, \NV$,
\begin{equation}
\label{thmeq:DFS:2scale:alpha:beta:simplified}
\Big|\FT{\scala_{j+1}}\left(\frac{\eigvm}{\Lambda_{j+1}}\right)\Big|^2
=\Big|\FT{\scala_{j}}\left(\frac{\eigvm}{\Lambda_{j}}\right)\Big|^2
+\sum_{n=1}^{r_j} \Big|\FT{\scalb^{(n)}_{j}}\left(\frac{\eigvm}{\Lambda_{j}}\right)\Big|^2.
\end{equation}

\item The identities in \eqref{thmeq:DFS:nrm:alpha:beta}, and for $j=J_0+1,\ldots, J$ and $\ell=1,\ldots,\NV$,
\begin{equation}\label{thmeq:DFS:mask:simplified}
\Big|\FT{\maska_{j}}\left(\frac{\eigvm}{\Lambda_{j}}\right)\Big|^2+
\sum_{n=1}^{r_{j-1}} \Big|\FT{\maskb_{j}}\left(\frac{\eigvm}{\Lambda_{j}}\right)\Big|^2=1.
\end{equation}
\end{enumerate}
\end{corollary}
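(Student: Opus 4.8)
The plan is to read the corollary as the specialization of Theorem~\ref{thm:DFS} under the extra hypothesis \eqref{tight:QN}, whose sole effect is to turn the weight sums $\QU{\ell'}(\QN[j])$ into Kronecker deltas on the supports that matter, thereby collapsing every double sum over $(\ell,\ell')$ in conditions (iv) and (v) of the theorem onto its diagonal $\ell=\ell'$. Concretely, I would invoke Theorem~\ref{thm:DFS} for the equivalences (i)$\Longleftrightarrow$(iv)$\Longleftrightarrow$(v), and then prove two things: that \eqref{tight:QN} reduces the theorem's condition \eqref{thmeq:DFS:2scale:alpha:beta} exactly to the corollary's \eqref{thmeq:DFS:2scale:alpha:beta:simplified}, yielding (i)$\Longleftrightarrow$(ii); and that the corollary's (ii) and (iii) are interchangeable through the refinement relation \eqref{eq:refinement:nonstationary}. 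The normalization identity \eqref{thmeq:DFS:nrm:alpha:beta} is common to all three statements and needs no work.

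For (i)$\Longleftrightarrow$(ii) the key is to track the support of each of the three weighted terms in \eqref{thmeq:DFS:2scale:alpha:beta}. The $\scala_{j+1}$-term carries the factor $\overline{\FT{\scala_{j+1}}(\eigvm/\Lambda_{j+1})}\FT{\scala_{j+1}}(\eigvm[\ell']/\Lambda_{j+1})$, which is nonzero only when $(\ell,\ell')\in\sigma_{\scala,\overline{\scala}}^{(j+1)}$; there $\QU{\ell'}(\QN[{j+1}])=\delta_{\ell,\ell'}$ by \eqref{tight:QN} (using $\QU{\ell'}(\QN[J])=\delta_{\ell,\ell'}$ at the top level $j+1=J$). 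Likewise the $\scala_j$-term is supported on $\sigma_{\scala,\overline{\scala}}^{(j)}$, where $\QU{\ell'}(\QN[j])=\delta_{\ell,\ell'}$. Finally, substituting the refinement relation $\FT{\scalb_j^{(n)}}(\xi/\Lambda_j)=\FS{\maskb_{j+1}}(\xi/\Lambda_{j+1})\FT{\scala_{j+1}}(\xi/\Lambda_{j+1})$ shows that the $\scalb_j^{(n)}$-term vanishes off $\sigma_{\scala,\overline{\scala}}^{(j+1)}$ as well, so it too is effectively weighted by $\delta_{\ell,\ell'}$. Each weighted term therefore collapses to its $\ell=\ell'$ value; the off-diagonal rows of \eqref{thmeq:DFS:2scale:alpha:beta} become $0=0$, and the diagonal rows are precisely \eqref{thmeq:DFS:2scale:alpha:beta:simplified} for every $\ell=1,\dots,\NV$. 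The nesting $\sigma_{\scala,\overline{\scala}}^{(j)}\subseteq\sigma_{\scala,\overline{\scala}}^{(j+1)}$ (itself forced by the refinement relation) guarantees the three supports are compatible, so Theorem~\ref{thm:DFS}(iv) is equivalent to the corollary's (ii).

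For (ii)$\Longleftrightarrow$(iii) I would substitute \eqref{eq:refinement:nonstationary} directly into \eqref{thmeq:DFS:2scale:alpha:beta:simplified}. Since $\FT{\scala_j}(\eigvm/\Lambda_j)=\FS{\maska_{j+1}}(\eigvm/\Lambda_{j+1})\FT{\scala_{j+1}}(\eigvm/\Lambda_{j+1})$ and $\FT{\scalb_j^{(n)}}(\eigvm/\Lambda_j)=\FS{\maskb_{j+1}}(\eigvm/\Lambda_{j+1})\FT{\scala_{j+1}}(\eigvm/\Lambda_{j+1})$, the right-hand side of \eqref{thmeq:DFS:2scale:alpha:beta:simplified} factors as $\bigl(|\FS{\maska_{j+1}}(\eigvm/\Lambda_{j+1})|^2+\sum_{n=1}^{r_j}|\FS{\maskb_{j+1}}(\eigvm/\Lambda_{j+1})|^2\bigr)\,|\FT{\scala_{j+1}}(\eigvm/\Lambda_{j+1})|^2$. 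Thus \eqref{thmeq:DFS:2scale:alpha:beta:simplified} at level $j$ is exactly the mask identity \eqref{thmeq:DFS:mask:simplified} at level $j+1$ multiplied through by $|\FT{\scala_{j+1}}(\eigvm/\Lambda_{j+1})|^2$. The direction (iii)$\Longrightarrow$(ii) is then immediate, and on the support $\sigma_{\scala}^{(j+1)}$, where $\FT{\scala_{j+1}}(\eigvm/\Lambda_{j+1})\neq0$, one simply divides to recover (iii).

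The step I expect to be the main obstacle is the off-support part of (ii)$\Longrightarrow$(iii): where $\FT{\scala_{j+1}}(\eigvm/\Lambda_{j+1})=0$, the identity \eqref{thmeq:DFS:2scale:alpha:beta:simplified} degenerates to $0=0$ and places no constraint on the filters, so the ``$\forall\,\ell$'' form of \eqref{thmeq:DFS:mask:simplified} is not literally forced there --- which is exactly why Theorem~\ref{thm:DFS}(v) restricts \eqref{thmeq:DFS:mask} to the support $\sigma_{\scala,\overline{\scala}}^{(j)}$. I would resolve this by observing that the framelets \eqref{defn:fra.frb} depend on the masks only through the generators $\scala_j,\scalb_j^{(n)}$, and the refinement relation annihilates any mask value at an $\eigvm$ with $\FT{\scala_{j+1}}(\eigvm/\Lambda_{j+1})=0$; hence those off-support mask values are immaterial to the framelet system and may be normalized without loss of generality so that \eqref{thmeq:DFS:mask:simplified} holds for all $\ell$. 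With this convention the three statements are equivalent, which completes the proof.
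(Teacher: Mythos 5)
Your proposal is correct and follows essentially the same route as the paper: under \eqref{tight:QN} the weighted double sums in conditions (iv) and (v) of Theorem~\ref{thm:DFS} collapse onto their diagonals, reducing \eqref{thmeq:DFS:2scale:alpha:beta} and \eqref{thmeq:DFS:mask} to \eqref{thmeq:DFS:2scale:alpha:beta:simplified} and \eqref{thmeq:DFS:mask:simplified}, after which the theorem yields the equivalences. You are considerably more explicit than the paper's two-line proof, and your observation that \eqref{thmeq:DFS:mask:simplified} is only forced at indices $\ell$ with $\FT{\scala_{j}}\left(\eigvm/\Lambda_{j}\right)\neq0$ --- so that a normalization convention for the mask values off that support is needed to get the ``for all $\ell$'' form of (iii) --- is a genuine subtlety that the paper's proof silently glosses over.
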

\begin{proof}
If \eqref{tight:QN} holds, \eqref{thmeq:DFS:2scale:alpha:beta} and \eqref{thmeq:DFS:mask} are reduced to \eqref{thmeq:DFS:2scale:alpha:beta:simplified} and \eqref{thmeq:DFS:mask:simplified} respectively. The equivalence between (i), (ii) and (iii) follow from Theorem~\ref{thm:DFS}.
\end{proof}

\begin{remark} The characterization conditions in Theorems~\ref{thm:UFS} and \ref{thm:DFS} are simplified when $\fra[J+1,u]$ is the Dirac delta function $\delta_u$.
\end{remark}

Undecimated and decimated framelets are both defined on a graph $\gph$, but their construction utilises orthonormal eigenpairs for the $l_2$ spaces on the graph and the chain: $l(\gph)$ and $l(\gph_{J\to J_0})$. One can view an undecimated framelet system as a decimated system on a special coarse-grained chain, when each level is identical to $\gph$. Decimated framelets depend on the structure of the chain, that is, the connection between graphs at different levels of the chain. 
In the next section, we will show how to construct a coarse-grained chain of a graph, and study the impact of the chain structure on the generating set of the decimated framelet system.

\section{Construction of Decimated Tight Framelets on $\gph$}\label{sec:constr}
In the last section, we provide the characterization of undecimated and decimated framelet systems to be tight frames for $l_2(\gph)$. Based on the characterization, in this section, we detail the construction of tight decimated framelets on a given graph. We would first discuss the construction of coarse-grained chains of $\gph$ and  orthonormal eigen-pairs on $\gph$ with desired properties. Based on the coarse-grained chains and orthonormal eigen-pairs, together with a careful design of generating set $\Psi_j$ on $\R$, we then provide explicit construction of decimated tight framelets on $\gph$.

\subsection{Graph Clustering and Coarse-graining on $\gph$}
Graph partitioning and clustering are among central topics in the structured data analysis. The partitioned graph requires to contain various clusters for appropriate applications such as unsupervised or semi-supervised learning and data mining in various types of networks. One can use a top-down approach, which recursively splits a graph to subgraphs.  
This approach starts from the root (usually with one node) of a tree and proceeds to its branches (clusters) down to the bottom level.
In contrast, a bottom-up approach groups the leaves of a tree of the bottom level and then clusters the nodes level by level up to the coarsest. 
Another way of categorization for graph clustering algorithms takes account of whether the spectral method is used. For example, the clustering algorithms by using the graph Laplacian is a spectral method, see \cite{MeBeOs1992,BeFl2012,Garcia-Cardona_etal2014}. The clustering algorithm that utilises the weight function $\wG$ on a graph is a non-spectral method, see \cite{Coifman_etal2005,Dongen2000,GaCo2012,GaNaCo2010,ChDa2010,LaLe2006,BoVeZa2001,chen2014unsupervised}.

For a graph of triplet $\gph=(\VG,\EG,\wG)$, a clustering algorithm is usually based on a partition for $\gph$. Suppose $\VG_c:=\{[\uG]: \uG\in\VG\}$ the resulting partition with each $[\uG]\subseteq\VG$ a cluster on $\gph$. To obtain a coarse-grained chain of $\gph$, we can use the clustering algorithm hierarchically. In the first place, a connection relation between clusters needs to be defined on $\VG_c$. Following \cite{LaLe2006}, we define
\begin{equation}\label{defn:wG:coarse}
	\wG_c([\uG],[\vG]):=\sum_{\uG\in[\uG]}\sum_{\vG\in[\vG]}\frac{\wG(\uG,\vG)}{\vol(\gph)} \quad \mbox{for~} [\uG],[\vG]\in\VG_c.
\end{equation}
Then, $\wG_c$ becomes a weight function on $\VG_c\times \VG_c$. The weight $\wG_c$ is symmetric on $\VG_c$ and hence determines an (undirected) edge set $\EG_c$ by $\EG_c:=\{([\uG],[\vG]) \setsep \wG_c([\uG,\vG])>0\}$. We obtain $\gph_c:=(\VG_c,\EG_c,\wG_c)$, which is called a \emph{coarse-grained graph} of $\gph$. Using the clustering algorithm for $\gph_c$ then gives a coarse-grained graph $(\gph_c)_c$ of $\gph_c$. Recursively doing this step, we would obtain a \emph{coarse-grained chain} of the original graph $\gph$. We call the process of constructing a coarse-grained chain by a clustering algorithm the \emph{Coarse-Grained Chain (CGC)} algorithm.
The clustering algorithm that one uses determines the coarse-grained chain. Algorithm~\ref{algo:coarse-grain} shows an implementation of the CGC algorithm, which utilises a modified version of the non-spectral hierarchical clustering (NHC) algorithm given in \cite{ChMhZh2018}.

\IncMargin{1em}
\begin{algorithm}[th]
\SetKwData{step}{Step}
\SetKwInOut{Input}{Input}\SetKwInOut{Output}{Output}
\BlankLine
\Input{Graph $\gph=(\VG,\EG,\wG)$. The number of clusters on each level $(N_{J-1},\ldots,N_{J_0})$ with $1\leq N_{J_0}<N_{J_1}<\cdots<N_{J-1}<\NV$, where $\NV$ is the number of vertices in $\gph$.}

\Output{A coarse-grained chain $(\gph_J,\ldots,\gph_{J_0})$ of $\gph$ with $\gph_j=(\VG_j,\EG_j,\wG_j)$ such that $|\VG_j| = N_j$ for $j=J-1,\ldots,J_0$.}

Initialization: $\gph_J\leftarrow\gph$ and $j\leftarrow J$.\\

\While {$j>J_0$}{Compute graph distance $\distG_{\gph_j}$ of $\gph_j$.\\

Randomly choose $N_{j-1}$ vertices $u_1,\ldots,u_{k_{j-1}}$ from $V_j$ as centers. In the semi-supervised case, these centers are given in advance.\\

\While{true}{
Construct cluster $C_\ell$ for $\ell=1,\ldots,N_{j-1}$ such that $[\vG]\in V_j$ belongs to $C_\ell$ and the distance between $[\vG]$ and the cluster center $u_{\ell}$ is the minimal among all clusters:
 \[
 \ell = \mathrm{argmin}_{1\le \ell \le N_{j-1}}  \distG_{\gph_j}(u_{\ell},[v]).
 \]
 \\
 Update the centers: for each $C_\ell$, find a new center $u_{\ell} \in C_\ell$ such that $\sum_{v\in C_\ell} \distG_{\gph_j}(u_{\ell},v)$ is minimized.\\
 Break if all centers remain the same.
}
Construct a coarse-grained graph $\gph_{j-1}=(\VG_{j-1},\EG_{j-1},\wG_{j-1})$ of $\gph$, where
 \[
 \VG_j = \{\cup_{[v]\in C_\ell} [v]: \ell = 1,\ldots, N_{j-1}\},
 \]
 is the vertex set with $k_{j-1}$ vertices, and the weight function $\wG_{j-1}$ is defined by \eqref{defn:wG:coarse}.\\
$j\leftarrow j-1$.
}
\caption{Coarse-Grained Chain Algorithm (CGC)}
\label{algo:coarse-grain}
\end{algorithm}
\DecMargin{1em}\vspace{3mm}

\subsection{Orthonormal Bases for a Coarse-grained Chain via Graph Laplacian}\label{sec:orthonormal.basis.Laplacian}

Let $\gph_c=(\VG_c,\EG_c,\wG_c)$ be a coarse-grained graph of $\gph=(\VG,\EG,\wG)$, and $n_1:=|\VG|$, $n_0:=|\VG_c|$. Suppose $\bigl\{(\eigfm^{(0)},\eigvm^{(0)})\bigr\}_{\ell=1}^{n_0}$ and $\bigl\{(\eigfm^{(1)},\eigvm^{(1)})\bigr\}_{\ell=1}^{n_1}$ are orthonormal eigen-pairs for $l_2(\gph_c)$ and $l_2(\gph)$ respectively. The orthonormal eigen-pair $\bigl\{(\eigfm^{(0)},\eigvm^{(0)})\bigr\}_{\ell=1}^{n_0}$ can be extended to an orthonormal system on $\gph$ using the following Gram-Schmidt process.

Define vectors $\eigfm, \ell=1,\ldots,n_0$ on $\gph$ by
\[
\eigfm(\vG):=\frac{\eigfm^{(0)}([\vG])}{\sqrt{\#[\vG]}},\quad \vG\in \VG,\; \ell=1,\ldots, n_0.
\]
Then $\{(\eigfm,\eigvm)\}_{\ell=1}^{n_0}$ are orthonormal in $l_2(\gph)$.
Since $\{(\eigfm^{(1)},\eigvm^{(1)})\}_{\ell=1}^{n_1}$ is a set of orthonormal eigen-pairs for $l_2(\gph)$, there must exist a subsequence $\eigfm[\ell_{n_0+1}]^{(1)},\ldots,\eigfm[\ell_{n_1}]^{(1)}$ of $\eigfm[1]^{(1)},\ldots,\eigfm[n_1]^{(1)}$ such that
\begin{equation}\label{defn:lin.indep:G:Gc}
\{
\eigfm\setsep \ell=1,\ldots,n_0
\}
\cup
\{
\eigfm[\ell_k]^{(1)}\setsep k=n_0+1,\ldots,n_1
\}
\end{equation}
is linearly independent in $l_2(\gph)$. Applying the Gram-Schmidt process to the system of linearly independent vectors in \eqref{defn:lin.indep:G:Gc}, we would obtain the following orthonormal basis for $l_2(\gph)$:
\begin{equation}\label{defn:ONB:G:Gc}
	\{\eigfm\setsep \ell=1,\ldots,n_0\} \cup \{\eigfm \setsep \ell=n_0+1,\ldots,n_1\}.
\end{equation}

Now letting $\eigvm=\eigvm^{(1)}$, $\ell=1,\ldots,n_1$, we then have a set of orthonormal eigen-pairs $\{(\eigfm,\eigvm)\}_{\ell=1}^{n_1}$ for $l_2(\gph)$ satisfying $\eigfm(p) \equiv const$ for $p\in [\vG]\in \VG_c$ and $\ell=1,\ldots, n_0$. Here, the first $n_0$ vectors can be regarded as eigenvectors on the coarse-grained graph $\gph_c$.

Let $(\gph_J,\ldots,\gph_{J_0})$ be a coarse-grained chain associated with a sequence $\{(\eigfm^{(j)},\eigvm^{(j)}) \setsep \ell=1,\ldots,|\VG_j|=:N_j\}_{j = J_0}^{J}$ of orthonormal eigen-pairs of $\gph_{j}$, which can be constructed from positive semi-definite operators on the chain, such as graph Laplacians and diffusion operators. For example, if we use the graph Laplacian $\gL_j$ on $\gph_j$ in \eqref{defn:gL}, the sequence $\{\gL_{j}\setsep j=J_0,\ldots, J\}$ for a sequence of orthonormal eigen-pairs is given by
\[
\gL_j \eigfm^{(j)} = \eigvm^{(j)} \eigfm^{(j)},\quad \ell=1,\ldots, N_j, \; j=J_0,\ldots, J.
\]
Recursively using the Gram-Schmidt process for orthonormal eigen-pairs 
\[
\left(\left\{(\eigfm^{(j)},\eigvm^{(j)})\right\}_{\ell=1}^{N_j},
\left\{(\eigfm^{(j+1)},\eigvm^{(j+1)})\right\}_{\ell=1}^{N_{j+1}}\right)
\]
for $j=J_0,\ldots,J-1$, we obtain orthonormal eigen-pairs
$\{(\eigfm,\eigvm)\}_{\ell=1}^{\NV}$ for $l_2(\gph)$, which satisfy for each $j=J_0,\ldots,J$,
\begin{equation}\label{eq:ONB:const}
\eigfm(\vG) \equiv const\quad \forall \vG\in[\vG]_{\gph_j}\mbox{ and } \ell\le N_j.
\end{equation}
We group the $\{\eigfm\}_{\ell=1}^\NV$ as $\cup_{j=J_0}^J\{\eigfm \setsep \ell=N_{j-1}+1,\ldots, N_j\}$ where we have let $N_{J_0-1}:=0$. Specifically, the $j$th group $\{\eigfm \setsep \ell=N_{j-1}+1,\ldots, N_j\}$ is an orthogonal basis on the graph $\gph_j$ for $j=J_0,\ldots,J$.
We call $\{(\eigfm,\eigvm)\}_{\ell=1}^{\NV}$ \emph{global orthonormal eigen-pair} for the coarse-grained chain $\gph_{J\to J_0}$.
We present the detailed implementation of CGC eigen-pairs in Algorithm~\ref{alg:ONB:const}, which we call \emph{orthonormalization on the coarse-grained chain (ONBC)} algorithm.

\medskip
\IncMargin{1em}
\begin{algorithm}[th]
\SetKwData{step}{Step}
\SetKwInOut{Input}{Input}\SetKwInOut{Output}{Output}
\BlankLine
\Input{A coarse-grained chain $(\gph_J,\ldots,\gph_{J_0})$ of $\gph$ with $\gph_j=(\VG_j,\EG_j,\wG_j)$ associating with a sequence $\{(\eigfm^{(j)},\eigvm^{(j)}) \setsep \ell=1,\ldots,|\VG_j|\}_{j = J_0}^{J}$ of orthonormal eigen-pairs. }

\Output{An orthonormal basis $\{(\eigfm,\eigvm)\}_{\ell=1}^{|\VG|}$  for $l_2(\gph)$ satisfying \eqref{eq:ONB:const}.}

Initialization: $j\leftarrow J_0$.\\

\While {$j<J$}{
$n_0\leftarrow |\VG_j|, n_1\leftarrow |\VG_{j+1}|$.\\

{\bf Extension}: Define $\eigfm\in l_2(\gph_{j+1})$, $\ell=1,\ldots,n_0$ by
\begin{equation}\label{eq:u:extension}
\eigfm([\vG]):=\frac{\eigfm^{(j)}([\vG]_{\gph_j})}{\sqrt{N_{[\vG]_{\gph_j}}}},\quad [\vG]\in \VG_{j+1}, \;\ell=1,\ldots, n_0,
\end{equation}
where $N_{[\vG]_{\gph_j}} := \#\{[\uG]_{\gph_{j+1}}: \uG\in[\vG]_{\gph_j}\}$ is the number of vertices in $\gph_{j+1}$ which are also in the cluster $[\vG]_{\gph_j}$.

{\bf Gram-Schmidt process}: Choose $\eigfm[{\ell_k}]^{(j+1)}, k=n_0+1,\ldots,n_1$ from $\eigfm^{(j+1)}: \ell=1,\ldots, n_1$ so that
\[
\{\eigfm\setsep \ell=1,\ldots,n_0\}
\cup
\{\eigfm[\ell_k]^{(j+1)}\setsep k=n_0+1,\ldots,n_1\}
\]
is linearly independent for $l_2(\gph_{j+1})$. Apply the Gram-Schmidt process to the above vectors to obtain the full orthonormal basis for $l_2(\gph_{j+1})$,
\[
\{\eigfm\setsep \ell=1,\ldots,n_1\}.
\]

{\bf Update}:  Replace $\{(\eigfm^{(j+1)},\eigvm^{(j+1)})\}_{\ell=1}^{n_1}$ by $\{(\eigfm,\eigvm^{(j+1)})\}_{\ell=1}^{n_1}$.
\\ \hspace{1.6cm} $j\leftarrow j+1$.
}
\caption{Chain-based Graph Laplacian Basis}
\label{alg:ONB:const}
\end{algorithm}
\DecMargin{1em}\vspace{3mm}

\subsection{Orthonormal Bases on $\gph$ with Low Spoc: Haar Basis on Graphs}
\label{sec:haar.orth.basis}
We define a notion to measure the sparsity and locality for a vector on the graph.
\begin{definition}[SPOC]
	For a vector $\eigfm$ on $\gph$, we define the spoc as the number of distinct non-zero entries in the vector $\eigfm$, denoted by
\begin{equation}\label{defn:spoc}
	\spoc(\eigfm):=\#\{\eigfm(\vG)\neq0: \vG\in\VG\}.
\end{equation}
\end{definition}
Recall that the support of a vector $\eigfm$ is given by
\begin{equation}\label{defn:supp}
\supp(\eigfm):=\{\vG\in\VG: \eigfm(\vG)\neq 0\}.
\end{equation}
The spoc is different from support. For example, a vector $\eigfm\equiv const$ has spoc $1$ yet it has full support.
The computational complexity for the transforms for a basis on the graph is usually dependent on the spoc of the basis but not on its support.

For Algorithm~\ref{alg:ONB:const}, the computational complexity might go above linear, as the spoc of the global orthonormal basis by ONBC is not always bounded. Here, we construct a global orthonormal basis for the coarse-grained chain of a graph $\gph$ based on the interval decomposition method \citep{ChFiMh2015}. The resulting basis is Haar-like and satisfies condition~\eqref{eq:ONB:const}, and its spoc is at most $2$.

We first give the construction of a basis for a chain with two levels. To construct a basis on the chain with more levels, we can use this method recursively.
Let $\gph_c=(\VG_c,\EG_c,\wG_c)$ be a coarse-grained graph of $\gph=(\VG,\EG,\wG)$, and $n_1:=|\VG|$ and $n_0:=|\VG_c|$.
To construct an orthogonal basis on $\gph_c$, we sequence the vertices of $\gph_c$ by their degrees as
\[
\VG_c:=\{[\uG_j]_{\gph_c}\setsep j=1,\ldots,n_0\} \mbox{ and } \dG([\uG_j]_{\gph_c})\ge \dG([\uG_{j+1}]_{\gph_c}).
\]
For each vertex $[\uG_j]_{\gph_c}$, we associate with it a characteristic function $\chi_{[\uG_j]}$ on $\gph_c$:
\[
\chi_j^c([\vG]) :=
\begin{cases}
1, & [\vG] = [\uG_j]_{\gph_c},\\
0, & \mbox{otherwise.}
\end{cases}
\]
The system $\{\chi_j^c \setsep j=1,\ldots, n_0\}$ is then an orthogonal system on $\gph_c$. We need to include the constant vector $\eigfm[1]^c =\frac{1}{\sqrt{n_0}}{\bf 1}$ in the system. To achieve that, we replace by $\eigfm[1]^c= \frac{1}{\sqrt{n_0}}{\bf 1}$ the characteristic function $\chi_{[\uG_{n_0}]_{\gph_c}}$ on the set $\{\chi_{[\uG_j]_{\gph_c}} \setsep j=1,\ldots, n_0\}$ and make the set $\{\eigfm[1], \chi_1^c,\ldots, \chi_{n_0-1}^c\}$ orthonormal by the Gram-Schmidt process. It then gives a new orthonormal system, as follows.
\begin{equation}\label{defn:haar-orth-gph-c}
\begin{aligned}
\eigfm[1]^c & = \frac{1}{\sqrt{n_0}}{\bf 1}, \\
\eigfm[\ell]^c &= \sqrt{\frac{n_0-\ell+1}{n_0-\ell+2}}\left(\chi^c_{\ell-1}-\frac{1}{{n_0-\ell+1}}\sum_{j=\ell}^{n_0}\chi^c_j\right),\quad \ell=2,\ldots,n_0.
\end{aligned}
\end{equation}

\begin{proposition}\label{prop:haar-orth-gph-c}
The system $\{\eigfm[\ell]^c \setsep \ell=1,\ldots, n_0\}$ given in \eqref{defn:haar-orth-gph-c} is an orthonormal basis for $\gph_c$ with $\spoc(\eigfm^c)\le 2$ for all $\ell=1,\ldots, n_0$.
\end{proposition}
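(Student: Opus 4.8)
The plan is to verify directly from the explicit formula \eqref{defn:haar-orth-gph-c} the three assertions: each $\eigfm[\ell]^c$ has unit norm, the $\eigfm[\ell]^c$ are pairwise orthogonal, and $\spoc(\eigfm[\ell]^c)\le 2$; the spanning (basis) property is then automatic, since $n_0$ orthonormal vectors in the $n_0$-dimensional space $l_2(\gph_c)$ must span it. Throughout I abbreviate $m:=n_0-\ell+1$, so that for $\ell\ge 2$ one has $\eigfm[\ell]^c=\sqrt{m/(m+1)}\bigl(\chi^c_{\ell-1}-\tfrac1m\sum_{j=\ell}^{n_0}\chi^c_j\bigr)$, and I use that $\{\chi^c_j\}_{j=1}^{n_0}$ is the standard orthonormal basis of $l_2(\gph_c)$ consisting of characteristic functions of distinct vertices.

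First I would record two elementary identities for $\ell\ge 2$. Since $\chi^c_{\ell-1}$ is orthogonal to every $\chi^c_j$ with $j\ge\ell$ and the inner sum has exactly $m$ pairwise orthogonal terms, a one-line computation gives $\nrmG{\eigfm[\ell]^c}^2=\tfrac{m}{m+1}\bigl(1+m\cdot\tfrac1{m^2}\bigr)=1$. The same cancellation yields the \emph{sum-to-zero} identity $\sum_{[\vG]\in\VG_c}\eigfm[\ell]^c([\vG])=\sqrt{m/(m+1)}\,(1-\tfrac1m\cdot m)=0$. Combined with $\nrmG{\eigfm[1]^c}^2=1$ (immediate, as $\eigfm[1]^c=\tfrac1{\sqrt{n_0}}{\bf 1}$), the sum-to-zero identity already gives $\ipG{\eigfm[\ell]^c,\eigfm[1]^c}=0$ for every $\ell\ge 2$, because $\eigfm[1]^c$ is a constant multiple of ${\bf 1}$.

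The main step is orthogonality of $\eigfm[\ell]^c$ and $\eigfm[\ell']^c$ for $2\le\ell<\ell'$, and this is where I would be most careful. Indexing vertices by $j$, the support of $\eigfm[\ell']^c$ is $\{\ell'-1,\ell',\dots,n_0\}$, and since $\ell'-1\ge\ell$ it is contained in $\{\ell,\dots,n_0\}$. On this latter index set $\eigfm[\ell]^c$ is \emph{constant}, equal to $c:=-\tfrac1m\sqrt{m/(m+1)}$. Hence, since all entries are real, $\ipG{\eigfm[\ell]^c,\eigfm[\ell']^c}=c\sum_{[\vG]\in\VG_c}\overline{\eigfm[\ell']^c([\vG])}=0$ by the sum-to-zero identity applied to $\eigfm[\ell']^c$. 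The crux is precisely this nesting of supports together with the constancy of the coarser vector on the support of the finer one; no further cancellation computation is needed.

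Finally, the spoc bound and the basis property are immediate. The vector $\eigfm[1]^c$ is constant, so $\spoc(\eigfm[1]^c)=1$, while for $\ell\ge 2$ the formula shows that $\eigfm[\ell]^c$ attains only the two distinct nonzero values $\sqrt{m/(m+1)}$ and $-\tfrac1m\sqrt{m/(m+1)}$, whence $\spoc(\eigfm[\ell]^c)\le 2$. Having established that $\{\eigfm[\ell]^c\}_{\ell=1}^{n_0}$ is an orthonormal set of cardinality $n_0=\dim l_2(\gph_c)$, it is an orthonormal basis for $\gph_c$, which completes the argument.
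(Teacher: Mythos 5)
Your proof is correct, and while it follows the same overall strategy as the paper (direct verification of unit norm, pairwise orthogonality, and the spoc bound from the explicit formula \eqref{defn:haar-orth-gph-c}, with the basis property following from dimension counting), the key orthogonality step is handled by a genuinely different mechanism. The paper proves $\ipG{\eigfm[\ell]^c,\eigfm[k]^c}=0$ for $2\le k<\ell$ by expanding the inner product into its four cross terms and exhibiting an explicit cancellation $-\tfrac{1}{n_0-k+1}+\tfrac{n_0-\ell+1}{(n_0-k+1)(n_0-\ell+1)}=0$. You instead isolate a single structural fact --- the sum-to-zero identity $\sum_{[\vG]}\eigfm[\ell]^c([\vG])=0$ for $\ell\ge2$ --- and combine it with the observation that $\supp(\eigfm[\ell']^c)\subseteq\{\ell,\dots,n_0\}$ for $\ell'>\ell$, a set on which $\eigfm[\ell]^c$ is constant, so the inner product is a constant times a zero sum. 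This buys you two things: the same identity disposes of both the $\ell=1$ case (orthogonality to the constant vector) and the general case in one stroke, and the argument makes transparent \emph{why} the Gram--Schmidt output is orthogonal (nested supports plus constancy of the coarser vector on the finer support), which is exactly the Haar-type structure the paper exploits later in Proposition~\ref{prop:haar:sparse}. The paper's computation is equally short but leaves the cancellation looking accidental. Both proofs are complete and correct.
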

\begin{proof}
By \eqref{defn:haar-orth-gph-c}, it follows that $\spoc(\eigfm^c)\le 2$ and  $\|\eigfm[\ell]^c\| = 1$ for all $\ell=1,\ldots, n_0$. Next, we prove $\eigfm^c$, $\ell=1,\dots,n_0$, are orthogonal.
For $\ell=2,\ldots, n_0$,
\[
\begin{aligned}
\ipG{\eigfm[1]^c, \eigfm[\ell]^c}
=\sqrt{\frac{n_0-\ell+1}{n_0(n_0-\ell+2)}}\ipG{{\bf1}, \chi^c_{\ell-1}-\frac{1}{n_0-\ell+1}\sum_{j=\ell}^{n_0}\chi^c_j} = 0.
\end{aligned}
\]
For $2\le k <\ell \le n_0$,
\[
\begin{aligned}
\ipG{\eigfm[\ell]^c, \eigfm[k]^c}
 &=c_{k,\ell}\ipG{\chi^c_{\ell-1}-\frac{1}{n_0-\ell+1}\sum_{j=\ell}^{n_0}\chi^c_j,
\chi^c_{k-1}-\frac{1}{n_0-k+1}\sum_{j=k}^{n_0}\chi^c_j} \\
&=c_{k,\ell}\left(
-\frac{\sum_{j=k}^{n_0}\ipG{\chi^c_{\ell-1},\chi^c_j}}{n_0-k+1}
+\frac{\ipG{\sum_{j=\ell}^{n_0}\chi^c_j,\sum_{j=k}^{n_0}\chi^c_j}}{(n_{0}-k+1)(n_{0}-\ell+1)}\right)\\
&=c_{k,\ell}\left(
-\frac{1}{n_0-k+1}
+\frac{n_0-\ell+1}{(n_{0}-k+1)(n_{0}-\ell+1)}\right) = 0,\\
\end{aligned}
\]
where $c_{k,\ell} = \sqrt{\frac{n_0-\ell+1}{n_0-\ell+2}}\times  \sqrt{\frac{n_0-k+1}{n_0-k+2}}$.
Thus, $\{\eigfm^c \setsep \ell=1,\ldots, n_0\}$ is an orthonormal basis for $\gph_c$.
\end{proof}

We associate each element of $\{\eigfm^c \setsep \ell=1,\ldots,n_0\}$ with the vertex $[\uG_\ell]_{\gph_c}$ in $\gph_c$, and define a sequence of vectors by extending $\eigfm^c$, $\ell=1,\ldots,n_0$ onto $\gph$
\begin{equation}\label{defn:Haar-orth-gph-1}
\eigfm[\ell,1](\vG):=\frac{\eigfm^c([\vG])}{\sqrt{\#[\vG]}},\quad \vG\in \VG,\; \ell=1,\ldots, n_0,
\end{equation}
which satisfy
\begin{equation}\label{eq:haar-const-ext}
\eigfm[\ell,1](\vG) \equiv \mathrm{const} \quad \forall v\in [\uG_j]_{\gph_c}  \mbox{ and } j=1,\ldots,n_0.
\end{equation}
The resulting $\eigfm$, $\ell=1,\dots,n_0$, are orthogonal for $l_2(\gph)$.

For a cluster $[\uG_\ell]_{\gph_c}$, let $k_\ell:=\#[\uG_\ell]_{\gph_c}$. We sequence the vertices in $[\uG_\ell]_{\gph_c}$ by their degrees to obtain
\[
[\uG_\ell]_{\gph_c} = \{v_{\ell,1},\ldots,v_{\ell,k_\ell}\}\mbox{ and } \dG(\vG_{\ell,j})\ge \dG(\vG_{\ell,j+1}).
\]
For each vertex in the cluster $[\uG_\ell]_{\gph_c}$, we define a characteristic function $\chi_{\ell,j}$ on $\gph$ by
\[
\chi_{\ell,j}(\vG) :=
\begin{cases}
1 & \vG = \vG_{\ell,j},\\
0 & \mbox{otherwise.}
\end{cases}
\]
Using the Gram-Schmidt process for the system $\{\eigfm[\ell,1],\chi_{\ell,1},\ldots,\chi_{\ell,k_\ell-1}\}$, we then obtain
\begin{equation}\label{defn:Haar-orth-gph-2}
\eigfm[\ell,k] =
 \sqrt{\frac{k_\ell-k+1}{k_\ell-k+2}}\left(\chi_{\ell,k-1}-\frac{1}{k_\ell-k+1}\sum_{j=k}^{k_\ell}\chi_{\ell,j}\right),\quad k=2,\ldots,k_\ell.
\end{equation}

\begin{proposition}\label{prop:haar:sparse}
Let
\begin{equation}\label{eq:haar-orth-gph}
\{\eigfm[\ell,k]\setsep  k = 1,\ldots,k_\ell, \ell=1,\ldots,n_0\}
\end{equation}
be the set of vectors in \eqref{defn:Haar-orth-gph-1} and \eqref{defn:Haar-orth-gph-2}. Then it is an orthonormal basis on $\gph$ with $\spoc(\eigfm[\ell,k])\le 2$ for all $k=1,\ldots,k_\ell$ and $\ell=1,\ldots,n_0$.
\end{proposition}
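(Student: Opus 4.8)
The plan is to verify the three defining properties in turn: the $\spoc$ bound, orthonormality, and completeness. First, a counting check confirms the right cardinality. There are $n_0$ coarse vectors $\eigfm[\ell,1]$ ($\ell=1,\ldots,n_0$) from \eqref{defn:Haar-orth-gph-1} and, for each cluster $[\uG_\ell]_{\gph_c}$, exactly $k_\ell-1$ detail vectors $\eigfm[\ell,k]$ ($k=2,\ldots,k_\ell$) from \eqref{defn:Haar-orth-gph-2}, giving $n_0+\sum_{\ell=1}^{n_0}(k_\ell-1)=\sum_{\ell=1}^{n_0}k_\ell=|\VG|=n_1$ in total. For the sparsity bound, each detail vector in \eqref{defn:Haar-orth-gph-2} is a combination of characteristic functions $\chi_{\ell,j}$ supported inside the single cluster $[\uG_\ell]_{\gph_c}$, taking the value $\sqrt{(k_\ell-k+1)/(k_\ell-k+2)}$ at $\vG_{\ell,k-1}$ and $-1/(k_\ell-k+1)$ times this value on $\vG_{\ell,k},\ldots,\vG_{\ell,k_\ell}$; it therefore has at most two distinct nonzero entries, so $\spoc(\eigfm[\ell,k])\le 2$. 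The coarse vectors $\eigfm[\ell,1]$ inherit $\spoc\le 2$ from $\spoc(\eigfm^c)\le 2$ (Proposition~\ref{prop:haar-orth-gph-c}) through the extension \eqref{defn:Haar-orth-gph-1}, being constant on each cluster by \eqref{eq:haar-const-ext}.

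The heart of the argument is orthonormality, which I would organise by the block structure induced by the clusters. The coarse vectors are mutually orthonormal because the extension $g\mapsto g([\cdot])/\sqrt{\#[\cdot]}$ is an isometry from $l_2(\gph_c)$ into $l_2(\gph)$: grouping the sum over $\VG$ by clusters,
\[
\ipG{\eigfm[\ell,1],\eigfm[\ell',1]}
=\sum_{\vG\in\VG}\frac{\eigfm^c([\vG])\,\overline{\eigfm[\ell']^c([\vG])}}{\#[\vG]}
=\sum_{[\uG]\in\VG_c}\eigfm^c([\uG])\,\overline{\eigfm[\ell']^c([\uG])}
=\delta_{\ell,\ell'},
\]
where the last equality is Proposition~\ref{prop:haar-orth-gph-c}. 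For a fixed cluster $[\uG_\ell]_{\gph_c}$, the detail vectors $\{\eigfm[\ell,k]\}_{k=2}^{k_\ell}$ are exactly those produced by the construction \eqref{defn:haar-orth-gph-c} applied to the $k_\ell$ vertices of that cluster (with $n_0$ replaced by $k_\ell$); hence, by Proposition~\ref{prop:haar-orth-gph-c}, they are orthonormal and, being orthogonal to the constant on $[\uG_\ell]_{\gph_c}$, each has zero sum there, i.e.\ $\sum_{\vG\in[\uG_\ell]}\eigfm[\ell,k](\vG)=0$ for $k\ge 2$. It then remains to treat the cross terms.

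The cross terms split into two cases once the two structural facts above are in hand, and this is where I expect the only real subtlety to sit. Detail vectors attached to different clusters $[\uG_\ell]\neq[\uG_{\ell'}]$ have disjoint supports, so their inner product vanishes. For a detail vector $\eigfm[\ell,k]$ ($k\ge 2$) against any coarse vector $\eigfm[\ell',1]$, note that $\eigfm[\ell',1]$ is constant on $[\uG_\ell]_{\gph_c}$ by \eqref{eq:haar-const-ext} while $\eigfm[\ell,k]$ is supported on $[\uG_\ell]_{\gph_c}$ with zero sum there; denoting that constant by $c_{\ell'}$,
\[
\ipG{\eigfm[\ell,k],\eigfm[\ell',1]}
=\overline{c_{\ell'}}\sum_{\vG\in[\uG_\ell]}\eigfm[\ell,k](\vG)=0.
\]
Thus the whole system is orthonormal, and since it consists of $|\VG|=\dim l_2(\gph)$ orthonormal vectors it is automatically an orthonormal basis, which together with the $\spoc$ bounds completes the proof. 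The one nonobvious point is precisely this coarse-versus-detail orthogonality: the coarse vectors have wide support spanning several clusters, so a direct expansion looks unpleasant, but the complementary ``constant on each cluster'' (coarse) versus ``zero sum on each cluster'' (detail) dichotomy makes every such inner product collapse without computation.
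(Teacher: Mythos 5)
Your proposal is correct and follows essentially the same route as the paper's proof: within-cluster orthonormality is reduced to the computation of Proposition~\ref{prop:haar-orth-gph-c}, cross-cluster orthogonality of detail vectors comes from disjoint supports, and coarse-versus-detail orthogonality comes from the ``constant on each cluster'' versus ``zero sum on each cluster'' dichotomy. You merely make explicit a few steps the paper leaves implicit (the isometry computation for the extended coarse vectors and the dimension count showing the orthonormal system is a basis).
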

\begin{proof}
Fixed a level $\ell$, in a similar way to the proof in Proposition~\ref{prop:haar-orth-gph-c}, one can show that $\{\eigfm[\ell,k] \setsep k = 1,\ldots, k_\ell\}$ is an orthonormal system for $l_2(\gph)$. Now for two different $\ell,\ell'$, by definition, $\supp(\eigfm[\ell,k])=[\uG_\ell]_{\gph_c}$ and $\supp(\eigfm[\ell',k']) = [\uG_{\ell'}]_{\gph_c}$, and hence $\supp(\eigfm[\ell,k])\cap \supp(\eigfm[\ell',k'])=\emptyset$ for all $k=2,\ldots,k_{\ell}$ and $k'=2,\ldots, k_{\ell'}$. Moreover, by \eqref{eq:haar-const-ext}, $\eigfm[\ell,1](\vG)$ is constant for all $\vG\in[\uG_{\ell'}]_{\gph_c}$, and $\eigfm[\ell',1](\vG')$ is constant for all $\vG'\in[\uG_{\ell}]_{\gph_c}$. Thus, $\ipG{\eigfm[\ell,k],\eigfm[\ell',k']}=\delta_{\ell,\ell'}\delta_{k,k'}$, which shows that \eqref{eq:haar-orth-gph} is an orthonormal basis for $l_2(\gph)$.
\end{proof}

To construct a global orthonormal basis for the coarse-grained chain $(\gph_J,\ldots,\gph_{J_0})$, we continue the above construction for $\gph_c$. Starting from $\gph_{J_0}$, we obtain an orthonormal basis on $\gph_{J_0}$ as $\{\eigfm^{(J_0)} \setsep \ell = 1,\ldots, N_{J_0}\}$. By the chain relation of $\gph_{J_0}$ and $\gph_{J_1}$, we can extend this basis to an orthonormal basis $\{\eigfm^{(J_1)} \setsep \ell = 1,\ldots, n_{J_1}\}$ on $\gph_{J_1}$. Continuing carrying on this process, we then obtain an orthonormal basis for each $\gph_j$,
\[
\{\eigfm^{(j)} \setsep \ell = 1,\ldots, N_j\}
\]
with $\spoc(\eigfm^{(j)})\le 2$ for all $\ell$ and $j=J_0,\ldots, J$.

In particular, when $j=J$, $\{\eigfm:=\eigfm^{(J)} \setsep \ell = 1,\ldots, N_J=:N\}$ becomes an orthonormal basis for $l_2(\gph)$, which satisfies $\spoc(\eigfm)\le 2$ for all $\ell=1,\ldots,N$. Moreover, we can group $\{\eigfm\}_{\ell=1}^\NV$ by their associated levels in the chain as $\cup_{j=J_0}^J\{\eigfm \setsep \ell=N_{j-1}+1,\ldots, N_j\}$. Here, we have let $N_{J_0-1}:=0$, and the $j$th group $\{\eigfm \setsep \ell=N_{j-1}+1,\ldots, N_j\}$ is the extension with respect to  the orthogonal basis on the graph $\gph_j$ for $j=J_0,\ldots, J$. By this rearrangement, $\{\eigfm\}_{\ell=1}^N$ then satisfies the condition in \eqref{eq:ONB:const}. We call the resulting orthonormal basis $\{\eigfm\}_{\ell=1}^N$ \emph{Haar global orthonormal basis} for the chain.

We show the detailed steps of constructing the global orthonormal basis in Algorithm~\ref{alg:HONB:const}, and call the algorithm \emph{Haar orthonormalization on the coarse-grained chain}, or HONBC.

\subsection{Construction of Decimated Tight Framelets on $\gph$}
\label{sec:constr.dtf}
In this section, we construct decimated tight framelets using band-limited filters on $\gph$. To this end, we introduce some auxiliary functions for the construction of generating sets $\Psi_j$ in Theorem~\ref{thm:DFS} and then derive a filter bank $\filtbk_j$. Together with the global orthonormal basis satisfying \eqref{eq:ONB:const}, we can define framelets $\fra$ and $\frb{n}$ on $\gph$.

Here we define the filter bank based on polynomial splines in \cite{Han1997,Han2013,HaZh2010,HaZhZh2016}. Let $P_m(x)$ be a polynomial given by
\[
P_m(x):=\left(\frac{1+x}{2}\right)^m\sum_{k=0}^{m-1}{m-1+k\choose k}\left(\frac{1-x}{2}\right)^k,
\]
which satisfies $P_m(x)+P_m(-x)=1$ for any $m\in\N$ \citep{Daubechies1992}. We define a bump function $\nu_{[c_L,c_R];\veps_L,\veps_R}(\xi)$ for $c_L<c_R$ and positive numbers $\veps_L, \veps_R$ satisfying $\veps_L+\veps_R\le c_R-c_L$ by
\begin{equation}\label{eq:bumpnu}
\nu_{[c_L,c_R];\veps_L,\veps_R}(\xi):=
\begin{cases}
0, & \xi\le c_L-\veps_L \mbox{ or } \xi\ge c_R+\veps_R,\\
\sin(\frac{\pi}{2}P_m(\frac{\xi-c_L+\veps_L}{2\veps_L})), & c_L-\veps_L <  \xi < c_L+\veps_L,\\
1, &  c_L+\veps_L \le \xi\le c_R-\veps_R,\\
\cos(\frac{\pi}{2}P_m(\frac{\xi-c_R+\veps_R}{2\veps_R})), &  c_R-\veps_R< \xi\le c_R+\veps_R.
\end{cases}
\end{equation}

\IncMargin{1em}
\begin{algorithm}[H]
\label{algo:haar_basis}
\SetKwData{step}{Step}
\SetKwInOut{Input}{Input}\SetKwInOut{Output}{Output}
\BlankLine
\Input{A coarse-grained chain $(\gph_J,\ldots,\gph_{J_0})$ of $\gph$ with $\gph_j=(\VG_j,\EG_j,\wG_j)$.}

\Output{Haar-like orthonormal basis $\{\eigfm^{(j)}\setsep \ell=1,\ldots,N_j\}$ for $l_2(\gph_j)$ with low spoc, $j=J_0,\ldots,J$.}

Initialization: $j\leftarrow J_0-1$. $\VG_{j} \leftarrow [\uG_1]_{\gph_j}:=[V]$. $\eigfm[1]^{(j)} \equiv 1$. ($\eigfm[1]^{(j)}$ is associated with the vertex $[\uG_1]_{\gph_j}$.) \\

\While {$j<J$}{
$n_0\leftarrow |\VG_j|,\; n_1\leftarrow |\VG_{j+1}|$.
(Note that $\eigfm^{(j)}$ is associated with $
[\uG_\ell]_{\gph_j} = \{[\uG]_{\gph_{j+1}}\setsep \uG\in[\uG_\ell]_{\gph_j}\}$.)\\

{\bf Extension}: Define $\eigfm[\ell,1]\in l_2(\gph_{j+1})$, $\ell=1,\ldots,n_0$ by
\begin{equation}\label{eq:haar:u:extension}
\eigfm[\ell,1]([\vG]):=\frac{\eigfm^{(j)}([\vG]_{\gph_j})}{\sqrt{N_{[\vG]_{\gph_j}}}},\quad [\vG]\in \VG_{j+1}, \ell=1,\ldots, n_0,
\end{equation}
where $N_{[\vG]_{\gph_j}} := \#\{[\uG]_{\gph_{j+1}}:  \uG\in[\vG]_{\gph_j}\}$ is the number of vertices in $\gph_{j+1}$ which lie in the cluster $[\vG]_{\gph_j}$.

{\bf Gram-Schmidt process}:

\For{$\ell=1,\ldots,n_0$}
{
Sort the vertices in $[\uG_\ell]_{\gph_j}$ by their degrees as
$[\uG_\ell]_{\gph_j}=\{[\vG_{\ell,1}]_{\gph_{j+1}},\ldots,[\vG_{\ell,k_\ell}]_{\gph_{j+1}}\}$ and define $\chi_{\ell,k}$ as characteristic function with respect to the vertex $[\vG_{\ell,k}]_{\gph_{j+1}}$,
where $k_\ell$ is the number of children in $\gph_{j+1}$ of $[p_{\ell}]_{\gph_j}$. Compute
\[
\eigfm[\ell,k] =
 \sqrt{\frac{k_\ell-k+1}{k_\ell-k+2}}\left(\chi_{\ell,k-1}-\frac{1}{k_\ell-k+1}\sum_{j=k}^{k_\ell}\chi_{\ell,j}\right),\quad k=2,\ldots,k_\ell,
\]
and associate it with the node $[\vG_{\ell,k}]_{\gph_{j+1}}$.
}

{\bf Update}:  Rearrange $\{(\eigfm[\ell,k], [\vG_{\ell,k}]_{\gph_{j+1}}) \setsep k=1,\ldots,k_\ell, \ell=1,\ldots, n_0\}$ in the following order
 \[
 \bigl\{\eigfm[\ell,1] \setsep \ell\in\{1,\ldots,n_0\}\bigr\}\cup   \bigl\{\eigfm[\ell,2] \setsep \ell\in\{1,\ldots,n_0\}\bigr\}\cup\cdots\cup \bigl\{\eigfm[\ell,k_{max}] \setsep \ell\in\{1,\ldots,n_0\}\bigr\}
 \]
 to form an orthonormal basis $\{(\eigfm^{(j+1)},[\vG_\ell]_{\gph_{j+1}}) \setsep \ell=1,\ldots, n_1\}$ for $l_2(\gph_{j+1})$ such that the first $n_0$ vectors are orthogonal for $l_2(\gph_{j})$.

 $j\leftarrow j+1$ .
}
\caption{Haar-Like Orthonormal Basis on the Coarse-Grained Chain (HONBC)}
\label{alg:HONB:const}
\end{algorithm}
\DecMargin{1em}\vspace{3mm}

The scaling functions and filters for the decimated framelets are defined by $\nu_{[c_L,c_R];\veps_L,\veps_R}$ and the coarse-grained chain of the graph.
Let $0<\veps<1$ and define $\scala_\veps$ by
\[
\FT{\scala_\veps}(\xi):=\nu_{[-\frac{1+\veps}{2},\frac{1+\veps}{2}];\frac{1-\veps}{2},\frac{1-\veps}{2}}(\xi), \quad \xi\in\R.
\]
Its support is $\supp\: \FT{\scala_\veps} \subseteq [-1,1]$, and $\FT{\scala_\veps}(\xi) \equiv 1$ for all $\xi\in[-\veps,\veps]$.
This is our low-pass scaling functions. 

We now construct the high-pass scaling functions by using the low-pass $\FT{\scala_\veps}$.
Let $J,J_0$, $J\ge J_0$ be two fixed integers corresponding to the finest and coarsest scales.
We choose $J-J_0+1$ \emph{scaling factors} $\Lambda_j, j=J_0, \dots,J$ such that
\begin{equation*}
    \begin{array}{ll}
    \Lambda_j>1, & j=J_0,\dots,J,\\
    \Lambda_{j+1}>\Lambda_j/\varepsilon, & j=J_0,\dots,J-1.
    \end{array}
\end{equation*}
For each level $j=J_0,\dots,J$, let
$r_j$ be a positive integer, we then construct $r_j$ high-passes functions $\scalb_{j}^{(n)}\in L_2(\R)$, $n=1,\ldots, r_j$, such that
\begin{equation}\label{eq:decomp.scala.b}
	|\FT{\scala_\veps}(\xi/\Lambda_{j+1})|^2-|\FT{\scala_\veps}(\xi/\Lambda_{j})|^2 =
	\sum_{n=1}^{r_j}\Big|\FT{\scalb_{j}^{(n)}}(\xi/\Lambda_{j})\Big|^2.
\end{equation}
To construct the above function $\scalb_{j}^{(n)}$ for scale $j$, we choose positive numbers $\veps_1,\ldots,\veps_{r_j}$ and $c_1,\ldots c_{r_j}$ such that
\[
    c_1<\cdots<c_{r_j-1}<1<c_{r_j}, \; c_{r_j}=1+\veps_{r_j} \mbox{ and } \veps_{n-1}+\veps_{n}\le c_{n}-c_{n-1}, \; n=2,\ldots,r_j.
\]
We need to define partition functions $\gamma^{(1)},\ldots,\gamma^{(r_j)}$ by
\[
\gamma^{(1)}:=\nu_{[-c_1,c_1],\varepsilon_1,\varepsilon_1} \mbox{ and }
\gamma^{(n)}:=\nu_{[c_{n-1},c_{n}],\varepsilon_{n-1},\varepsilon_{n}},\; n=2,\ldots,r_j.
\]
It can be verified that the $\gamma^{(n)}$ satisfies the partition of unity:
\[
\sum_{n=1}^{r_j}|\gamma^{(n)}(\xi)|^2 = 1 \;\;\forall \xi\in[0,1].
\]
We then define the \emph{high-pass scaling functions} for scale $j$ by, for $n=1,\ldots,r_j$, 
\[
\FT{\scalb_{j}^{(n)}}(\xi/\Lambda_{j}):=
\sqrt{|\FT{\scala_\veps}(\xi/\Lambda_{j+1})|^2-|\FT{\scala_\veps}(\xi/\Lambda_{j})|^2}\cdot\gamma^{(n)}(\xi/\Lambda_{j+1}),
\]
which satisfy \eqref{eq:decomp.scala.b}. 
The associated \emph{framelet generating set} is then
\begin{equation}
\label{defn:Psi_j:split}
\Psi_j:=\{\scala_\veps; \scalb_j^{(1)},\ldots, \scalb_j^{(r_j)}\}, \quad j=J_0,\ldots, J.
\end{equation}
From this, we define the filter bank, as follows.
\begin{definition}[Filter bank for chain]\label{def:filter bank}
    For $j=J_0+1,\ldots, J$, the filter bank for the chain $\gph_{J\to J_0}$ of graph $\gph$ is
\begin{equation}\label{defn:filtbk_j:split}
\filtbk_j:=\{\maska_{j};\maskb[1]_j,\ldots,\maskb[r_{j-1}]_j\}
\end{equation}
with
\begin{equation}\label{defn:filtbk_j:split2}
\FT{\maska_{j}}(\xi/\Lambda_{j})
:=\frac{\FT{\scala_\veps}(\xi/\Lambda_{j-1})}{\FT{\scala_\veps}(\xi/\Lambda_{j})},\quad
\FT{\maskb_{j}}(\xi/\Lambda_{j})
:=\frac{\FT{\scalb_{j}^{(n)}}(\xi/\Lambda_{j-1})}{\FT{\scala_\veps}(\xi/\Lambda_{j})},\quad \xi\in \R, \; n=1,\ldots, r_{j-1}.
\end{equation}
\end{definition}
Then, \eqref{eq:refinement:nonstationary} holds with $\maska_{j}$ and $\maskb_{j}$ in \eqref{defn:filtbk_j:split2}.
\begin{remark}
    Different from the undecimated case, the scaling factor $\Lambda_j$ should be adaptive to the chain structure. In practice, we can simply choose $\Lambda_j = N_j$.
\end{remark}

\subsection*{Example of filter bank}
We show an example of filter bank with 1, 2 or 3 high passes. The filters in the filter bank can be obtained by the function in \eqref{eq:bumpnu}, using the parameters in the following table. Here, besides the parameters $c_R, c_L, \varepsilon_R, \varepsilon_L$ used in $\nu$ in \eqref{eq:bumpnu}, there are extra free parameters $\zeta_c^a, \zeta_c^{b^{(1)}}, \zeta_c^{b^{(2)}}$ which can adjust the intersection points of the low-pass and high-pass filter curves. The $\zeta_c^a, \zeta_c^{b^{(1)}}, \zeta_c^{b^{(2)}}$ are needed to lie in $(0,0.5)$. For simplicity in the case with 3 high passes, we define $N^{b^{(1)}}$ and $N^{b^{(2)}}$ by
\begin{equation*}
    N^{b^{(1)}} = N_{j-1} + 0.3(N_j-N_{j-1}),\quad 
    N^{b^{(2)}} = N_{j-1} + 0.8(N_j-N_{j-1}).
\end{equation*}
\begin{equation*}
\begin{array}{lcccccc}
\toprule
\multicolumn{3}{c}{\textbf{Filter bank with 1 high pass}} & &\multicolumn{3}{c}{\textbf{Filter bank with 2 high passes}}\\[1mm] 
     & a & b & & a & b^{(1)} & b^{(2)}\\ \cmidrule{1-3} \cmidrule{5-7} 
    c_{R} & 0.5(1+N_{j-1})\zeta_c^a & 2N_j & & 0.5(1+N_{j-1})\zeta_c^a & 2(N_j+N_{j-1})\zeta_c^{b} & 2N_j\\
    \varepsilon_{R} & N_{j-1} - c_{R}^a & N_j/4 & & N_{j} - c_R^{a} & N_j-c_R^{b} & 1\\
    c_{L} & -c_{R}^a & c_R^a & & -c_{R}^a & c_R^{a} & c_R^{b^{(1)}}\\
    \varepsilon_{L} & \varepsilon_{R}^a & \varepsilon_R^a & & \varepsilon_{R}^a & \varepsilon_R^a & \varepsilon_R^{b^{(1)}}\\
    \bottomrule
\end{array}
\end{equation*}
\begin{equation*}
\begin{array}{lcccc}
\multicolumn{5}{c}{\textbf{Filter bank with 3 high passes}}\\[1mm]
     & c_{R} & \varepsilon_{R}  & c_{L} & \varepsilon_{L}\\ \midrule 
    a & 0.5N_{j-1}(1+\zeta_c^a) & N_{j-1} - c_{R}^a & -c_{R}^a & \varepsilon_{R}^a\\[1mm]
    b^{(1)} & 0.5(N^{b^{(1)}}+N_{j-1})+0.5 \zeta_{c}^{b^{(1)}}(N^{b^{(1)}}-N_{j-1}) & N^{b^{(1)}}-c_R^{b^{(1)}} & c_R^a & \varepsilon_R^a\\[1mm]
    b^{(2)} & 0.5(N^{b^{(2)}}+N^{b^{(1)}})+0.5 \zeta_{c}^{b^{(2)}}(N^{b^{(2)}}-N^{b^{(1)}}) & N^{b^{(2)}}-c_R^{b^{(2)}} & c_R^{b^{(1)}} & \varepsilon_R^{b^{(1)}}\\[1mm]
    b^{(3)} & 2N_j & 1 & c_R^{b^{(2)}} & \varepsilon_R^{b^{(2)}}\\
    \bottomrule\\
\end{array}
\end{equation*}

Figure~\ref{fig:filter bank 3high} shows a filter bank with three high passes based on the above construction for the chain $\gph_{4\to0}$ with 500, 250, 100, 40 and 8 nodes for levels from the finest to coarsest. The supports of low-pass and high-pass filters are determined by the chain.
For the $j$th level, the low-pass filter is supported on $[0,N_j]$; the high passes are supported on subintervals of $(0,N_j)$, where the lower end of the support interval is greater than 0. The intersection points between the low pass and high passes are tunable parameters determined by the parameters $\zeta_c^a$, $\zeta_c^{b^{(1)}}$ and $\zeta_c^{b^{(2)}}$.

\begin{figure*}[h]
    \centering
    \scriptsize
\begin{tabular}{cccc}
  \includegraphics[width=.225\textwidth]{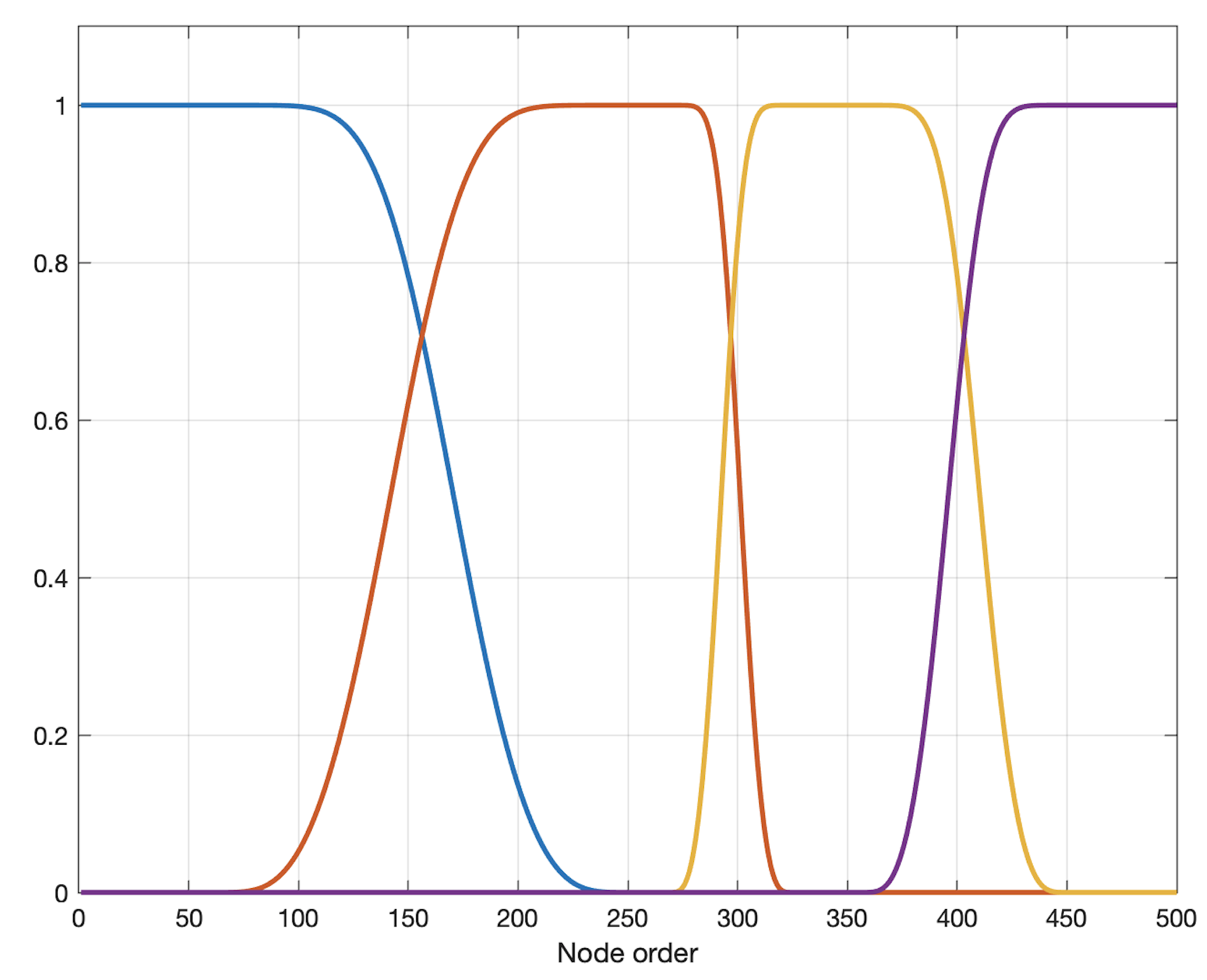}
  &\includegraphics[width=.225\textwidth]{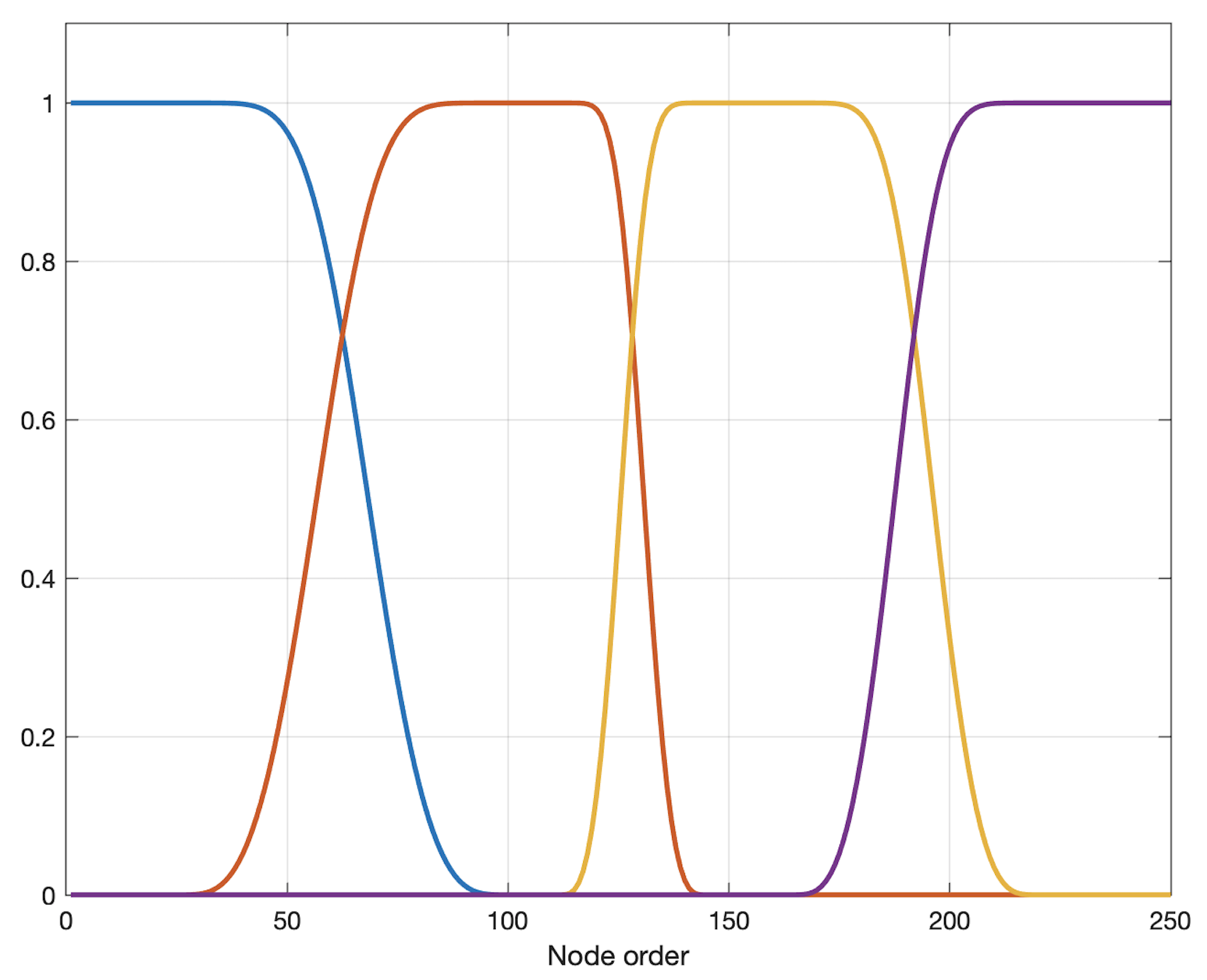}
  &\includegraphics[width=.225\textwidth]{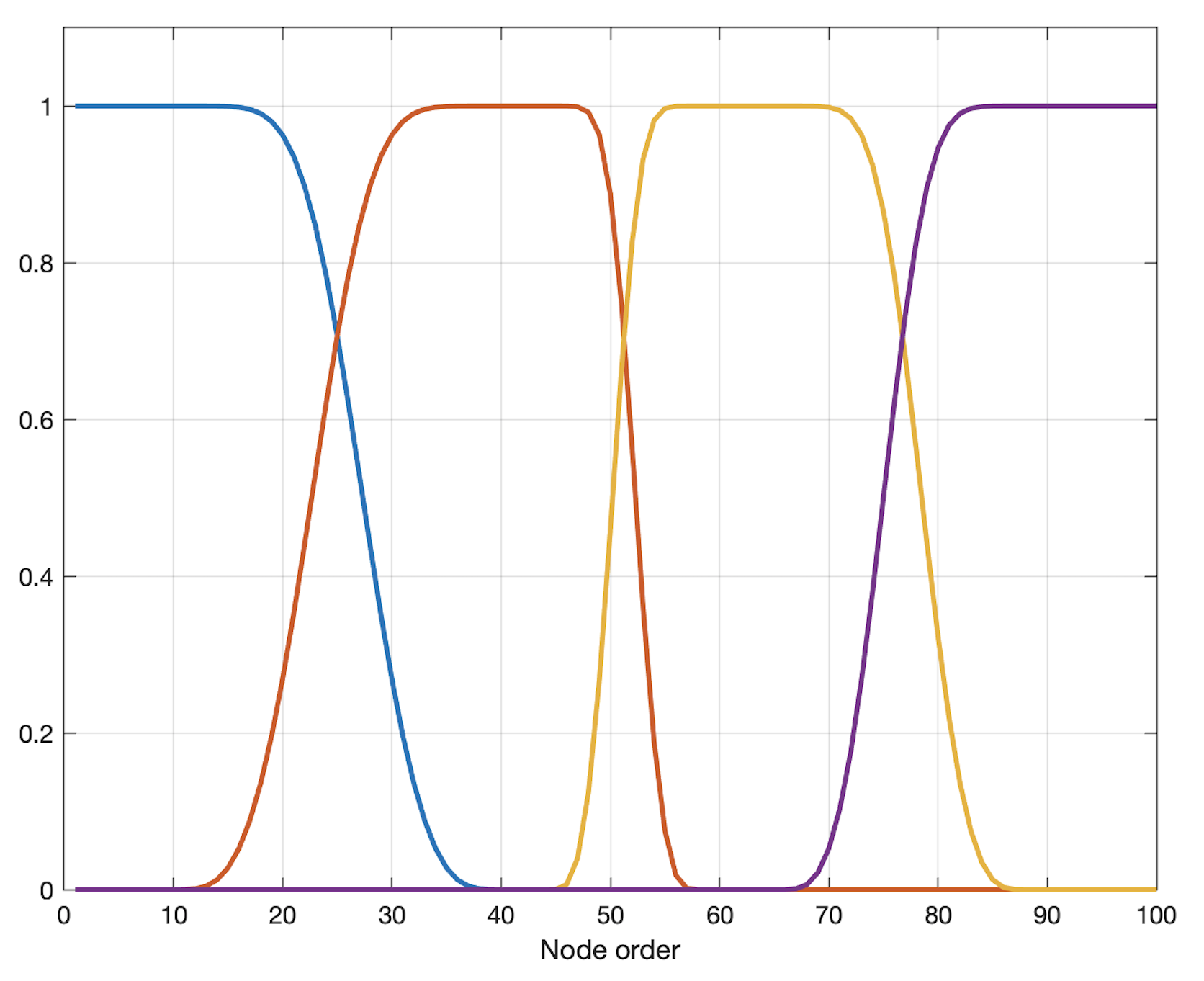}
  &\includegraphics[width=.225\textwidth]{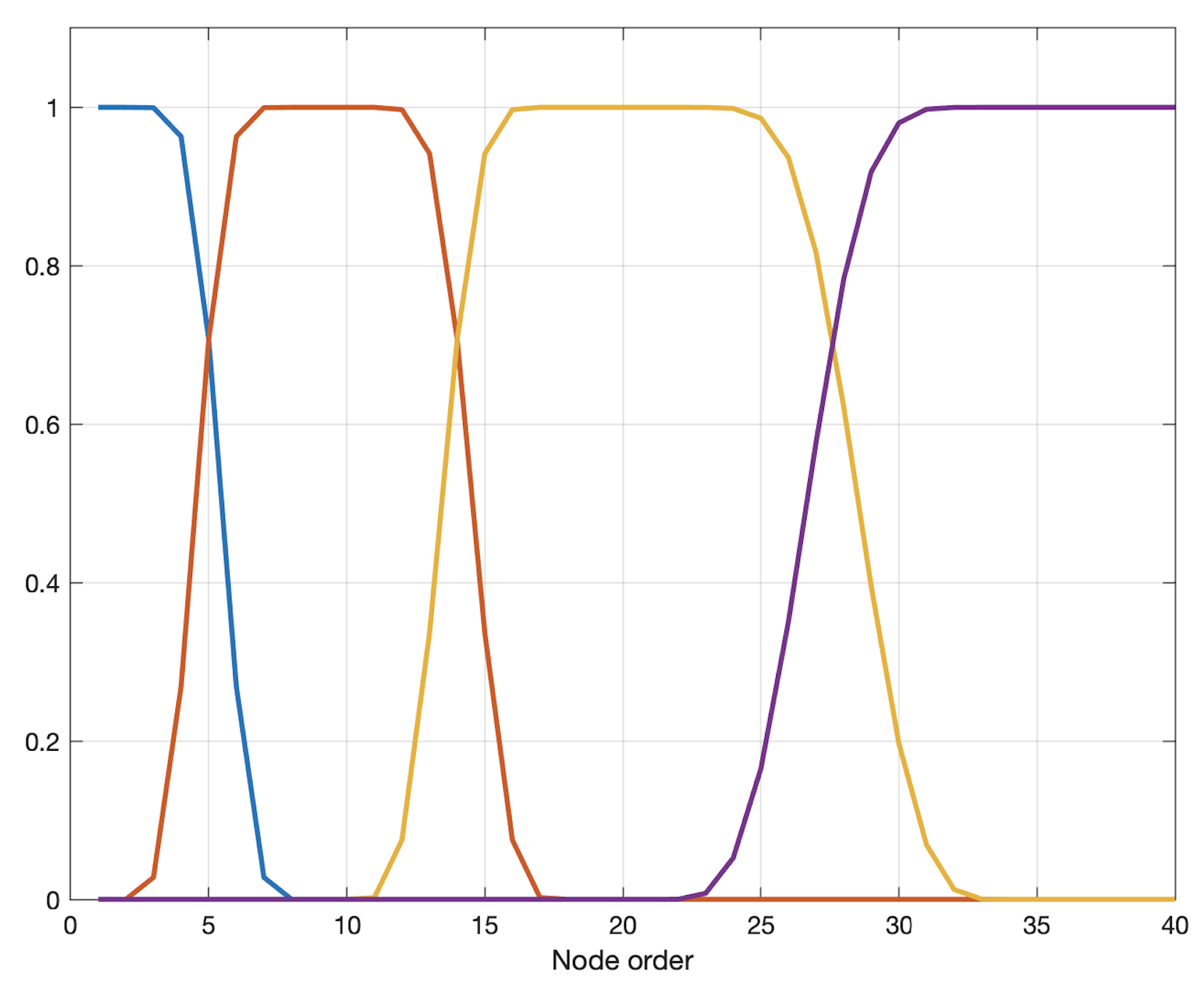}
\end{tabular}
    \caption{Filter banks with three high passes. The chain has 5 levels and the graphs of the chain have 500, 250, 100, 40 and 8 nodes from the finest to coarsest. Note that we do not need a filter bank for the coarsest level.
    From left to right are the pictures of the filters for the chain level from the finest to the second last coarsest. The top and bottom use the intersection point parameters $\zeta_c^a=0.25$, $\zeta_c^{b^{(1)}}=0.25$ and $\zeta_c^{b^{(2)}}=0.25$. The support of low-pass and high-passes are exactly controlled by the graph size at each level of the chain.}\label{fig:filter bank 3high}
\end{figure*}

With the filter bank in Definition~\ref{def:filter bank}, we can construct a tight decimated framelet system $\dfrsys\bigl(\{\Psi_j\}_{j=J_1}^J,\{\filtbk_j\}_{j=J_1+1}^{J}\bigr)$, $J_1=J_0,\ldots,J$, for $l_2(\gph_{J\to J_0})$ with the framelets given by, for $j=J_0,\ldots,J$,
\begin{equation}
\label{defn:fra.frb2}
\begin{aligned}
    \fra(\vG) &:= \sqrt{\wV}\sum_{\ell=1}^{\NV} \FT{\scala_\veps}\left(\frac{\eigvm}{\Lambda_{j}}\right)\conj{\eigfm(\pV)}\eigfm(\vG),  \quad [\uG]\in\VG_j,\\
    \frb{n}(\vG) &:= \sqrt{\wV[j+1,{[\uG]}]}\sum_{\ell=1}^{\NV} \FT{\scalb_{j}^{(n)}}\left(\frac{\eigvm}{\Lambda_j}\right)\conj{\eigfm(\pV)}\eigfm(\vG),\quad \pV\in\VG_{j+1}, \; n =1,\ldots,r_j,
\end{aligned}
\end{equation}
where $\wV[j,{[\uG]}]:={\#[\uG]_{\gph_j}}$.
\begin{theorem}
\label{thm:DFS:construction}
Let $\{(\eigfm,\eigvm)\}_{\ell=1}^N$ be a global orthonormal eigen-pair for a coarse-grained chain $\gph_{J\rightarrow J_0}$ of $\gph$. Let
\[
\dfrsys(\{\Psi_j\}_{j=J_1}^J,\{\filtbk_j\}_{j=J_1+1}^{J}), \quad J_1=J_0,\ldots, J
\]
be a sequence of the decimated framelet systems associated with the generating sets and filter banks in \eqref{defn:Psi_j:split} and \eqref{defn:filtbk_j:split} and with $\fra$ and $\frb{n}$ in \eqref{defn:fra.frb2}. Then, the decimated framelet system $\dfrsys(\{\Psi_j\}_{j=J_1}^J,\{\filtbk_j\}_{j=J_1+1}^{J})$ is a tight frame for $l_2(\gph)$ for each $J_1=J_0,\ldots,J$.
\end{theorem}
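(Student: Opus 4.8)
The plan is to derive tightness from Corollary~\ref{cor:thm:DFS} rather than from Theorem~\ref{thm:DFS} directly, since the corollary is tailored to exactly this band-limited construction with cluster-cardinality weights. First I would check that the weight sequence $\QN[J\rightarrow J_0]$ with $\wV[j,{[\uG]}]=\#[\uG]_{\gph_j}$ satisfies the extra hypothesis \eqref{tight:QN}; then I would verify condition (ii) of the corollary, i.e. the finest-scale normalization \eqref{thmeq:DFS:nrm:alpha:beta} and the simplified two-scale identity \eqref{thmeq:DFS:2scale:alpha:beta:simplified}. Since condition (i) of the corollary is the tightness of $\dfrsys(\{\Psi_j\}_{j=J_1}^J,\{\filtbk_j\}_{j=J_1+1}^{J})$ for \emph{all} $J_1=J_0,\ldots,J$ simultaneously, this yields the theorem in one stroke.

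For \eqref{tight:QN}, the inclusion $\sigma_{\scala,\overline{\scala}}^{(j)}\subseteq\sigma_{\scala,\overline{\scala}}^{(j+1)}$ is routine: because $\supp\FT{\scala_\veps}\subseteq[-1,1]$ with $\FT{\scala_\veps}(\xi)\neq0$ precisely for $|\xi|<1$, a pair lies in $\sigma_{\scala,\overline{\scala}}^{(j)}$ exactly when $\eigvm<\Lambda_j$ and $\eigvm[\ell']<\Lambda_j$, and the inclusion follows from $\Lambda_j<\Lambda_{j+1}$ (which holds since $\Lambda_{j+1}>\Lambda_j/\veps$ and $\veps<1$). The substantive part is the quadrature-type identity $\QU{\ell'}(\QN[j])=\delta_{\ell,\ell'}$ on $\sigma_{\scala,\overline{\scala}}^{(j)}$. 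Here I would argue that band-limiting forces every surviving index into the range $\ell\le N_j$, where by \eqref{eq:ONB:const} the chain-based eigenvector $\eigfm$ is constant on each cluster of $\gph_j$; for such indices the weighted cluster sum collapses,
\[
\QU{\ell'}(\QN[j])=\sum_{\pV\in\VG_j}(\#\pV)\,\eigfm(\pV)\overline{\eigfm[\ell'](\pV)}=\sum_{\pV\in\VG_j}\sum_{\vG\in\pV}\eigfm(\vG)\overline{\eigfm[\ell'](\vG)}=\sum_{\vG\in\VG}\eigfm(\vG)\overline{\eigfm[\ell'](\vG)}=\delta_{\ell,\ell'},
\]
the last step being orthonormality of $\{\eigfm\}$ on $\gph$. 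The averaging that defines $\eigfm(\pV)$ telescopes against the cluster-constant factor, so the identity persists even when only one of $\ell,\ell'$ lies in the low-index range, which is all that is needed to cover $\sigma_{\scala,\overline{\scala}}^{(j)}$.

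For condition (ii), the simplified two-scale identity \eqref{thmeq:DFS:2scale:alpha:beta:simplified} is, with $\scala_j\equiv\scala_\veps$, exactly the defining partition \eqref{eq:decomp.scala.b} read at $\xi=\eigvm$, hence it holds by construction. The finest-scale normalization \eqref{thmeq:DFS:nrm:alpha:beta} follows by summing the high-passes at level $J$: under the natural convention that the finest-scale details fill the gap up to $1$ (equivalently $\FT{\scala_\veps}(\eigvm/\Lambda_{J+1})\equiv1$, obtained by taking the top scaling factor large enough that $\eigvm/\Lambda_{J+1}\le\veps$ for every $\ell$), \eqref{eq:decomp.scala.b} gives $|\FT{\scala_\veps}(\eigvm/\Lambda_J)|^2+\sum_{n=1}^{r_J}|\FT{\scalb_J^{(n)}}(\eigvm/\Lambda_J)|^2=1$. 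With both identities of condition (ii) in hand, Corollary~\ref{cor:thm:DFS} delivers the tight-frame property.

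The main obstacle is the quadrature identity on $\sigma_{\scala,\overline{\scala}}^{(j)}$, because it is the single place where the three design choices must interlock: the support of $\FT{\scala_\veps}$ must be narrow relative to $\Lambda_j$ and the spectrum so that the filter kills every eigenvector with $\ell>N_j$; the chain-based basis must satisfy the constancy property \eqref{eq:ONB:const} exactly; and the weights must be the cluster cardinalities $\#\pV$ so that the weighted cluster sum reproduces the full inner product on $\gph$. Pinning down the compatibility of the scaling factors $\Lambda_j$ (in practice $\Lambda_j=N_j$) with the level-grouping of the eigenvalues, so that $\FT{\scala_\veps}(\eigvm/\Lambda_j)=0$ whenever $\ell>N_j$, is the delicate bookkeeping on which the whole argument rests.
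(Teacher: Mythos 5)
Your proposal is correct and follows essentially the same route as the paper: verify the quadrature identity $\QU{\ell'}(\QN[j])=\delta_{\ell,\ell'}$ on $\sigma_{\scala,\overline{\scala}}^{(j)}$ by collapsing the cardinality-weighted cluster sum via the constancy property \eqref{eq:ONB:const}, note $\sigma_{\scala,\overline{\scala}}^{(j)}\subseteq\sigma_{\scala,\overline{\scala}}^{(j+1)}$, and invoke Corollary~\ref{cor:thm:DFS}. Your only superfluous addition is the remark about pairs where just one index is in the low range; since $\sigma_{\scala,\overline{\scala}}^{(j)}$ requires both filter factors to be nonzero, that mixed case never arises.
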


\begin{proof}
For the framelets $\fra$ and $\frb{n}$ given in \eqref{defn:fra.frb2}, the weighted sum of the product of two eigenvectors by the quadrature weights $\QN[j]$ is
\begin{equation}\label{tight:QN:2}
\begin{aligned}
\QU{\ell'}(\QN[j])
=&\sum_{\pV\in\VG_j}\wV\:\eigfm(\pV)\overline{\eigfm[\ell'](\pV)}
\\=&\sum_{\pV\in\VG}\sum_{\vG\in[\uG]_{\gph_j}}\eigfm(\vG)\overline{\eigfm[\ell'](\vG)}
\\=&\sum_{\vG\in\VG}\eigfm(\vG)\overline{\eigfm[\ell'](\vG)} =\delta_{\ell,\ell'}
\end{aligned}
\end{equation}
for all pairs in $\{(\ell,\ell')\setsep 1\le \ell,\ell'\le N_j\}= \sigma_{\scala,\overline{\scala}}^{(j)}$, where the second equality has used $\wV[j,{[\uG]}]:={\#[\uG]_{\gph_j}}$.
By our construction follows $\sigma_{\scala,\overline{\scala}}^{(j)}\subseteq\sigma_{\scala,\overline{\scala}}^{(j+1)}$. Then,
the tightness of $\dfrsys(\{\Psi_j\}_{j=J_1}^J,\{\filtbk_j\}_{j=J_1+1}^{J})$ for each $J_1=J_0,\ldots,J$ follows by Corollary~\ref{cor:thm:DFS}.
\end{proof}
\section{Fast Decimated $\gph$-Framelet Transforms}\label{sec:fmtG}

In this section, we describe the multi-level decimated $\gph$-framelet transforms for a coarse-grained chain of the graph $\gph$ based on the decimated tight framelet system constructed in Section~\ref{sec:dfs}. We would introduce the discrete Fourier transforms for a global orthonormal basis based on a decimated coarse-grained chain, and their fast algorithm. With decimated filter banks, it then gives fast $\gph$-framelet transforms.

\subsection{Discrete Fourier Transforms on $\gph$}
Let $\{\eigfm\}_{\ell=1}^{\NV}$ be a global orthonormal basis for a coarse-grained chain $\gph_{J\rightarrow J_0}$ of a graph $\gph$. The \emph{discrete Fourier transform} (DFT) of a vector $\vf\in l_2(\gph)$ is to compute the Fourier coefficient vector $\FT{\vf}:=(\FT{\vf_\ell})_{\ell=1}^{\NV}$:
\begin{equation}\label{defn:fft:adj:G}
\FT{\vf_\ell}:=\ipG{\vf,\eigfm}=\sum_{\vG\in\VG}\vf(\vG)\overline{\eigfm(\vG)},\quad \ell=1,\ldots,\NV.
\end{equation}
The \emph{adjoint discrete Fourier transform} (ADFT) of a coefficient vector $\vc:=(c_\ell)_{\ell=1}^\NV$ with respect to $\{\eigfm\}_{\ell=1}^{\NV}$ is to compute the vector $\vf\in l_2(\gph)$:
\begin{equation}\label{defn:fft:G}
\vf(\vG) := \sum_{\ell=1}^\NV c_\ell\: \eigfm(\vG),\quad \vG\in\VG.
\end{equation}
\paragraph{Computational cost for DFT and ADFT}
We denote $\flop^*(\mathrm{DFT}_\gph)$ and $\flop^*(\mathrm{ADFT}_\gph)$ the minimal total number of summations and multiplications for the DFT and ADFT in \eqref{defn:fft:adj:G} and \eqref{defn:fft:G} for a given orthonormal basis over all possible algorithmic realizations.
Direct evaluation of \eqref{defn:fft:adj:G} or \eqref{defn:fft:G} requires total number of $\NV^2$ summations and $\NV^2$ multiplications. Thus, $\flop^*(\mathrm{DFT}_\gph)= \bigO(N^2)$ and $\flop^*(\mathrm{ADFT}_\gph)= \bigO(N^2)$. The property \eqref{eq:ONB:const} suggests that for specific bases, the computational complexity for DFT's can be significantly reduced, as we discuss now.

Let $\{\eigfm\}_{\ell=1}^{|\VG|}$ be a global orthonormal basis for a coarse-grained chain $\gph_{J\rightarrow J_0}$ of the graph $\gph$. Let $N_j:=|\VG_j|$ for $j=J_0,\ldots,J$ and $N_{J_0-1}:=0$, and the indicator function $\ve_{[\vG]}:\VG\rightarrow\R$ by $\ve_{[\vG]}(\uG)=1$ for all $\uG\in[\vG]$ and $0$ for $\uG\in V\backslash[\vG]$, and $\ve_\gph:=\Id_\VG$ the identity function on $\gph$. One can then show that $\ve_\gph:= \sum_{[\vG]\in\VG_j}\ve_{[\vG]} \equiv 1$ for any coarse-grained graph $\gph_j$ of $\gph$.

We first consider the computational steps of the discrete Fourier transform in \eqref{defn:fft:adj:G}. Suppose $N_{j-1}<\ell\le N_j$ for some $J_0\le j\le J$. Then, by \eqref{eq:ONB:const}, 
\[
\begin{aligned}
\sum_{\vG\in\VG} \vf(\vG)\eigfm(\vG)
&=\sum_{[\vG]\in\VG_j}\sum_{\uG\in[\vG]}\vf(\uG)\eigfm(\uG)\\
&=\sum_{[\vG]\in\VG_j}\eigfm([\vG])\sum_{\uG\in[\vG]}\vf(\uG)\\
&=\sum_{[\vG]\in\VG_j}\eigfm([\vG])s(\vf,[\vG])\\
&=\sum_{[\vG]\in\VG_j}t_\ell(\vf,[\vG]),
\end{aligned}
\]
where we have let
\begin{equation*}
    s(\vf,[\vG]):=\sum_{\vG\in[\vG]}\vf(\vG),\quad
      t_\ell(\vf,[\vG]):=\eigfm([\vG])s(\vf,[\vG]).
\end{equation*} 
We can evaluate $\FT{\vf}_{\ell}$, $\ell=1,\ldots,\NV$ in \eqref{defn:fft:adj:G} in two steps, for which we state a pseudocode in Algorithm~\ref{alg:DFT:G}.
\begin{enumerate}[\rm(1)]
\item Evaluate for $j=J_0,\ldots,J-1$,
\begin{equation}\label{defn:sumsG}
	s(\vf,[\vG]) \quad \forall [\vG] \in \VG_j.
\end{equation}
With the hierarchical structure of the chain, we can evaluate the $s(\vf,[\vG]_{\gph_j})$ by using the $s(\vf)$ over nodes at the finer $(j+1)$th level, that is,
\[
s(\vf,[\vG]_{\gph_j}) = \sum_{[\uG]_{\gph_{j+1}}\subseteq [\vG]_{\gph_j}}s(\vf,[\uG]_{\gph_{j+1}}).
\]
Here the summation over all the children of $[v]_{\gph_j}$.
Consequently, the total number of summations required to compute $\{s(\vf,[\vG])\setsep [\vG] \in \VG_j,\; j=J_0,\ldots,J-1\}$ is no more than $\sum_{j=J_0}^{J-1} N_{j+1}$.

\item For each $\ell$, let $j$ be the integer such that $N_{j-1}<\ell\le N_j$. The evaluation of\\ $\sum_{[\vG]\in\VG_j}\eigfm([\vG])s(\vf,[\vG])$ includes the following steps.
\begin{enumerate}
\item Compute the product $t_\ell(\vf,[\vG])=\eigfm([\vG])s(\vf,[\vG])$ for all $[\vG]\in\VG_j$,  which requires no more than $\spoc(\eigfm)$ multiplications.

\item Evaluate the sum $\sum_{[\vG]\in\VG_j}t_\ell(\vf,[\vG])$, which needs no more than $\spoc(\eigfm)$ summations.
\end{enumerate}
The total number of multiplication and summation operations is at most $\sum_\ell\spoc(\eigfm)$.
\end{enumerate}

\IncMargin{1em}
\begin{algorithm}[th]
\SetKwData{step}{Step}
\SetKwInOut{Input}{Input}\SetKwInOut{Output}{Output}
\BlankLine
\Input{A global orthonormal basis $\{(\eigfm,\eigvm)\}_{\ell=1}^{|\VG|}$ on $\gph_{J\rightarrow J_0}$ satisfying \eqref{eq:ONB:const}, and a vector $\vf$ on $\gph$.}

\Output{Fourier coefficients $\FT{\vf}$ given in \eqref{defn:fft:adj:G}.}

Initialization: $N_j\leftarrow|\VG|$. $s(\vf,[\vG]_{\gph_{J+1}})\leftarrow \vf(\vG),\; \vG\in\VG$.

\For{$j$=$J$ \KwTo $J_0$}
{

$N_j\leftarrow |\VG_{j-1}|$.\\
 $s(\vf,[\vG]_{\gph_j})\leftarrow \sum_{[\uG]_{\gph_{j+1}}\subseteq [\vG]_{\gph_j}}s(\vf,[\uG]_{\gph_{j+1}}),\; [\vG]_{\gph_j}\in \VG_{j}$.

\For{$\ell$=$N_{j-1}+1$ \KwTo $N_j$}
{
$\FT{f}_\ell\leftarrow \sum_{[\vG]\in\VG_j}\eigfm([\vG])s(\vf,[\vG])$, $[\vG]\in V_j$.
}
}

\caption{Discrete Fourier Transform on $\gph$ (DFT)}
\label{alg:DFT:G}
\end{algorithm}
\DecMargin{1em}\vspace{3mm}

For the computation of the ADFT in \eqref{defn:fft:G}, by using the clustering feature of the chain, we can rewrite
\[
\begin{aligned}
\vf = \sum_{\ell} c_\ell \eigfm
&=\sum_{j=J_0}^J \sum_{\ell=N_{j-1}+1}^{N_j} c_\ell \eigfm
=\sum_{j=J_0}^J \sum_{\ell=N_{j-1}+1}^{N_j} \sum_{[\vG]\in\VG_j}c_\ell \eigfm([\vG])\ve_{[\vG]}\\
&=\sum_{j=J_0}^J \sum_{\ell=N_{j-1}+1}^{N_j} \sum_{[\vG]\in\VG_j}t_\ell([\vG],\vc)\ve_{[\vG]}\\
&=\sum_{j=J_0}^J\sum_{[\vG]\in\VG_j}\left(\sum_{\ell=N_{j-1}+1}^{N_j}t_\ell([\vG],\vc)\right)\ve_{[\vG]}\\
&=\sum_{j=J_0}^J\sum_{[\vG]\in\VG_j}s([\vG],\vc)\ve_{[\vG]},
\end{aligned}
\]
where we have let 
\begin{equation*}
	t_\ell([\vG],\vc):=c_\ell\eigfm([\vG]),\quad s([\vG]_{\gph_j},\vc):=\sum_{\ell=N_{j-1}+1}^{N_j}t_\ell(\vc,[\vG]_{\gph_j}).
\end{equation*}
It suggests that the evaluation of $\vf(\vG)$ for $\vG\in\VG$ in \eqref{defn:fft:G} can be split into the following three steps, for which we show a pseudocode in Algorithm~\ref{alg:ADFT:G}.
\begin{enumerate}[\rm(1)]
\item For each $j$, let $\ell$ be the integer such that $N_{j-1}<\ell\le N_j$. Compute the product
$$t_\ell([\vG],\vc)=c_\ell\eigfm([\vG])$$
for all $[\vG]\in\VG_j$, which requires multiplications no more than $\spoc(\eigfm)$ steps. The total number of multiplications is thus no greater than $\sum_\ell\spoc(\eigfm)$.
\item For each $j=J_0,\ldots,J$, evaluate the sums
    \begin{equation*}
    s([\vG],\vc)=\sum_{\ell=N_{j-1}+1}^{N_j}t_\ell(\vc,[\vG]),\quad [\vG]\in\VG_j.
    \end{equation*}
Again, the total number of summations is no more than $\sum_\ell \spoc(\eigfm)$.
\item Evaluate $\vf:=\sum_{j=J_0}^{J_1}\sum_{[\vG]\in\VG_j}s([\vG],\vc)\ve_{[\vG]}$. We exploit the hierarchical chain structure to compute it efficiently. Let
\begin{equation*}
    \vf_{J_1}:=\sum_{j=J_0}^{J_1}\sum_{[\vG]\in\VG_j}s([\vG],\vc)\ve_{[\vG]},\quad J_1=J_0,\ldots,J,
\end{equation*}
which can be regarded as a function on $\gph_{J_1}$. We observe that
\begin{equation*}
\vf_{J_1+1}=\vf_{J_1}+\sum_{[\vG]\in\VG_{J_1+1}}s([\vG],\vc)\ve_{[\vG]},
\end{equation*}
for which the total number of summations is no more than $\sum_{j=J_0}^{J-1}N_{j+1}$.
\end{enumerate}

\IncMargin{1em}
\begin{algorithm}[th]
\SetKwData{step}{Step}
\SetKwInOut{Input}{Input}\SetKwInOut{Output}{Output}
\BlankLine
\Input{A global orthonormal basis $\{(\eigfm,\eigvm)\}_{\ell=1}^{|\VG|}$ on $\gph_{J\rightarrow J_0}$ satisfying \eqref{eq:ONB:const} and $\vc=(c_\ell)_{\ell=1}^\NV$.}

\Output{$\vf$ given in \eqref{defn:fft:G}. }

Initialization: $N_{J_0-1}\leftarrow0$. $\vf_{J_0-1}([\vG])\leftarrow 0,\; [\vG]\in \gph_{J_0}$. \\

\For{$j$=$J_0$ \KwTo $J$}
{
$N_j\leftarrow |\VG_j|$.\\
 $s([\vG],\vc)\leftarrow 0$ for all $[\vG]\in\VG_j$.\\

\For{$\ell$=$N_{j-1}+1$ \KwTo $N_j$}
{
$s([\vG],\vc)\leftarrow s([\vG],\vc)+c_\ell\eigfm([\vG]),\; [\vG]\in\VG_j$.
}
$\vf_{j}([\vG])\leftarrow \vf_{j-1}([\vG]) + s([\vG],\vc),\; [\vG]\in \VG_{j}$.
}
$\vf \leftarrow\vf_J$
\caption{Adjoint Discrete Fourier Transform on $\gph$ (ADFT) }
\label{alg:ADFT:G}
\end{algorithm}
\DecMargin{1em}\vspace{3mm}

\medskip

From the above analysis, we obtain the upper bounds for the computational complexity of the discrete Fourier transform and its adjoint for a coarse-grained chain.
\begin{theorem}\label{thm:flop}
Let $\{\eigfm\}_{\ell=1}^{\NV}$ be a global orthonormal basis for the coarse-grained chain $\gph_{J\rightarrow J_0}$ of a graph $\gph$. Then,
\[
\flop^*(\mathrm{DFT}_\gph)= \bigO\left(\sum_{\ell=1}^{\NV} \spoc(\eigfm)\right)\quad\mbox{and}\quad
\flop^*(\mathrm{ADFT}_\gph)= \bigO\left(\sum_{\ell=1}^{\NV} \spoc(\eigfm)\right),
\]
where the constants in the big $\mathcal{O}$ is independent of the $N$ and $\spoc(\eigfm)$ for any $\ell=1,\dots,N$.
\end{theorem}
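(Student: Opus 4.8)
Since $\flop^*(\mathrm{DFT}_\gph)$ and $\flop^*(\mathrm{ADFT}_\gph)$ are \emph{minima} over all algorithmic realizations, it suffices to exhibit one realization of each transform and count its operations; Algorithms~\ref{alg:DFT:G} and~\ref{alg:ADFT:G} serve this purpose. The plan is to split the cost of each algorithm into a \emph{hierarchical} part (computing the cluster partial sums $s(\vf,[\vG])$, and, for the adjoint, redistributing the $s([\vG],\vc)$ back to $\vf$) and a \emph{spectral} part (combining against the basis values $\eigfm([\vG])$ degree by degree), bound the two parts separately, and add them.

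For the spectral part I would invoke property~\eqref{eq:ONB:const}: for $N_{j-1}<\ell\le N_j$ the vector $\eigfm$ is constant on each level-$j$ cluster, hence takes only $\spoc(\eigfm)$ distinct nonzero values on $\VG_j$. Collapsing the already-formed sums $s(\vf,[\vG])$ onto these distinct values lets each coefficient $\FT{\vf}_\ell=\sum_{[\vG]\in\VG_j}\eigfm([\vG])\,s(\vf,[\vG])$ be assembled with $\bigO(\spoc(\eigfm))$ multiplications and additions, and symmetrically for the products $c_\ell\,\eigfm([\vG])$ in the adjoint. Summing over $\ell$, this contributes $\bigO\bigl(\sum_{\ell=1}^{\NV}\spoc(\eigfm)\bigr)$ to each transform.

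For the hierarchical part the crucial structural fact is the nestedness of the chain: each cluster $[\uG]_{\gph_{j+1}}$ is contained in exactly one parent $[\vG]_{\gph_j}$, so the partial sums satisfy $s(\vf,[\vG]_{\gph_j})=\sum_{[\uG]_{\gph_{j+1}}\subseteq[\vG]_{\gph_j}}s(\vf,[\uG]_{\gph_{j+1}})$ and may be built by reusing the next finer level. Forming one parent sum from its $c$ children costs $c-1$ additions, so level $j$ costs $N_{j+1}-N_j$ and the telescoping sum over $j=J_0,\dots,J-1$ is $N_J-N_{J_0}=\NV-N_{J_0}\le\NV$; the redistribution step of the ADFT is the adjoint of this pass and is carried out with the same asymptotic count. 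Hence the hierarchical part contributes only $\bigO(\NV)$. Adding the two parts gives $\bigO\bigl(\NV+\sum_{\ell}\spoc(\eigfm)\bigr)$, and since every basis vector is nonzero we have $\spoc(\eigfm)\ge 1$ and therefore $\sum_{\ell}\spoc(\eigfm)\ge\NV$, so the $\bigO(\NV)$ term is absorbed and both transforms run in $\bigO\bigl(\sum_{\ell}\spoc(\eigfm)\bigr)$. Because each pass is executed a fixed number of times, the implied constants depend on neither $\NV$ nor the individual values $\spoc(\eigfm)$.

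The step I expect to be the real obstacle is keeping the hierarchical part genuinely linear in $\NV$: a direct accounting charges one addition to each of the $N_{j+1}$ children and yields only the \emph{a priori} bound $\sum_{j=J_0}^{J-1}N_{j+1}$, which is superlinear for a chain that coarsens slowly. The telescoping above removes this factor, but it relies essentially on the partition property of the chain, and the adjoint (ADFT) direction requires the additional observation that the redistribution can be organized to reuse partial sums exactly as the forward aggregation does. A secondary point is that the spectral count must be expressed through $\spoc(\eigfm)$ rather than $|\supp(\eigfm)|$, which is precisely what \eqref{eq:ONB:const} secures by forcing $\eigfm$ to be constant on clusters.
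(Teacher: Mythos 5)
Your proposal follows the paper's own two-part accounting --- a hierarchical pass that builds the cluster partial sums $s(\vf,[\vG])$ level by level through the nested chain, and a spectral pass costed at $\bigO(\spoc(\eigfm))$ operations per coefficient via property~\eqref{eq:ONB:const}, with the adjoint handled symmetrically --- so it is essentially the same proof. Your telescoping bound $\sum_{j}(N_{j+1}-N_j)=N-N_{J_0}\le N$ for the hierarchical pass, together with the explicit remark that $\spoc(\eigfm)\ge 1$ forces $\sum_{\ell}\spoc(\eigfm)\ge N$ and hence absorbs that term, slightly sharpens the paper's cruder per-level bound $\sum_{j}N_{j+1}$ but does not change the argument.
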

Thus, when the $\spoc(\eigfm)=\bigo{}{1}$. The DFT and ADFT have linear computational complexity.
\subsection{Decimated $\gph$-Framelet Transforms}
\label{sec:decomp.reconstr}
In this section, we study the multi-level decimated framelet transforms on $\gph$, which include the \emph{framelet decomposition} and \emph{reconstruction} algorithms. For a signal $\vf$ on a graph $\gph$ and a sequence of decimated tight framelets $\dfrsys\bigl(\{\Psi_j\}_{j=J_1}^J,\{\filtbk_j\}_{j=J_1+1}^{J}; \gph_{J\rightarrow J_0}, \QN[J\rightarrow J_0]\bigr)$, $J_1=J_0,\ldots, J$ in \eqref{defn:DFS}, the \emph{framelet decomposition} algorithm produces a sequence of the vectors as the \emph{framelet approximation} and \emph{detail coefficients}
\begin{equation}
\label{defn:fmtG:coefs}
\{\fracoev[J_0]\}\cup\{\frbcoev{n}: n=1,\ldots, r_j,\; j=J_0,\ldots,J-1\}
\end{equation}
where for level $j=J_0,\ldots,J$, $\fracoev$ is the vector of the approximation framelet coefficients for $\gph_j$, and $\frbcoev{n}$, $n=1,\ldots,r_j$, is the vector of the detail framelet coefficients for $\gph_{j+1}$:
\begin{equation}\label{defn:fmtG:coefs2}
\begin{aligned}
\fracoev({[\uG]}) := &\ipG{\vf,\fra},\quad [\uG]\in\VG_j, \\
\frbcoev{n}([\uG]) := &\ipG{\vf,\frb{n}},\quad [\uG]\in\VG_{j+1},\;  n=1,\dots,r_j.
\end{aligned}
\end{equation}
The \emph{framelet reconstruction} algorithm is to reconstruct $\vf$ with the framelet coefficients in \eqref{defn:fmtG:coefs}.
We give a constructive implementation for the framelet decomposition and reconstruction by using generalized Fourier transforms on the graph, as follows.

Let $\{(\eigfm,\eigvm)\}_{\ell=1}^N$ be a global orthonormal basis for a coarse-grained chain $\gph_{J\rightarrow J_0}$ of the graph $\gph$. For $j=J_0,\ldots,J$, let $\QN[j]:=\{\wV: \pV\in\VG_j\}$ be the set of weights on $\gph_j$ and $\QN[J\rightarrow J_0]:=(\QN[J],\ldots,\QN[J_0])$ for the coarse-grained chain $\gph_{J\rightarrow J_0}$ which satisfies \eqref{tight:QN}. For a finite index set $\Omega$, we denote by $l_2(\Omega):=\{\vc: \Omega\rightarrow\C\}$ all complex-valued sequences supported on $\Omega$.
For $j=J_0,\ldots,J$, let 
\begin{equation*}
    \Omega_j:=\{\ell: 1\le \ell \le N_j\},
\end{equation*}
where $N_j:=|\VG_j|$, and $l_2(\Omega_j)$ and $l_2(\VG_j)$ are the sequences supported on $\Omega_j$ and $\VG_j$ respectively.

\begin{definition}[Discrete Fourier transform]
	The (generalized) discrete Fourier transform (DFT) $\fft[j]^*: l_2(\VG_j)\rightarrow l_2(\Omega_j)$ on $\gph_j$ is
\begin{equation}\label{defn:adjfft}
(\fft[j]^*\fracoev[])_\ell := \sum_{[\uG]\in\VG_j}  \fracoev[]([\uG])\: \sqrt{\wV}\:\conj{\eigfm(\pV)}, \quad \ell\in\Omega_j.
\end{equation}
We say the sequence $\fft[j]^*\fracoev[]$ a $(\VG_j,\Omega_j)$-sequence and let $l_2(\VG_j,\Omega_j)$ be the set of all $(\VG_j,\Omega_j)$-sequences.
\end{definition}

\begin{definition}[Adjoint discrete Fourier transform]
	Define $\fft[j]: l_2(\Lambda_j)\rightarrow l_2(\VG_j)$ the (generalized) adjoint discrete Fourier transform (ADFT) (operator) on $\gph_j$ by
\begin{equation}\label{defn:fft}
[\fft[j] \vc]([\uG]) :=   \sum_{\ell\in\Omega_j}c_\ell\: \sqrt{\wV}\:\eigfm(\pV),\quad [\uG]\in\VG_j, \; \vc=(c_\ell)_{\ell=1}^{N_j}\in l_2(\Omega_j).
\end{equation}
We say the sequence $(\fft[j]\vc)$ a $(\Omega_j,\VG_j)$-sequence. Let $l_2(\Omega_j,\VG_j)$ be the set of all $(\Omega_j,\VG_j)$-sequences.
\end{definition}
The following proposition shows that $\fft[j]$ and $\fft[j]^*$ are invertible when the weight sequence for the chain satisfies \eqref{tight:QN}.
\begin{proposition}\label{prop:fft}
Let $\{(\eigfm,\eigvm)\}_{\ell=1}^N$ be a global orthonormal basis for a coarse-grained chain $\gph_{J\rightarrow J_0}$ of the graph $\gph$. Let $\QN[J\rightarrow J_0]$ be a weight sequence for $\gph_{J\rightarrow J_0}$ which satisfies \eqref{tight:QN}. Let $\fft[j]^*$ and $\fft[j]$ be the DFT and ADFT for $\{(\eigfm,\eigvm)\}_{\ell=1}^N$ given in \eqref{defn:adjfft} and \eqref{defn:fft}.
Then, $\fft[j]^*$ and $\fft[j]$ satisfy
\[
    \fft[j]^*\fft[j] = \Id_{\VG_j},\quad\quad\fft[j]\fft[j]^* = \Id_{\Omega_j},
\]
where $\Id_{\VG_j}$ and $\Id_{\Omega_j}$ are the identity operators on $l_2(\VG_j)$ and $l_2(\Omega_j)$.
\end{proposition}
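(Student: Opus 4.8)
The plan is to regard $\fft[j]$ and $\fft[j]^*$ as mutually adjoint linear maps between the two $N_j$-dimensional spaces $l_2(\Omega_j)$ and $l_2(\VG_j)$, and to reduce both operator identities to the single weighted orthonormality relation $\QU{\ell'}(\QN[j])=\delta_{\ell,\ell'}$ supplied by hypothesis \eqref{tight:QN}. First I would record the matrix form: relative to the standard bases of $l_2(\Omega_j)$ and $l_2(\VG_j)$, the map $\fft[j]$ in \eqref{defn:fft} has $([\uG],\ell)$-entry $\sqrt{\wV}\,\eigfm(\pV)$ (rows indexed by $[\uG]\in\VG_j$, columns by $\ell\in\Omega_j$), and $\fft[j]^*$ in \eqref{defn:adjfft} is precisely its conjugate transpose. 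Since $|\Omega_j|=|\VG_j|=N_j$, the two maps act between spaces of equal finite dimension, so it suffices to prove that one of the two compositions is the identity on its domain; the other then follows from the elementary fact that a one-sided inverse of a square matrix is automatically two-sided.

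Next I would compute the composition sending $l_2(\Omega_j)$ to itself directly on a sequence $\vc=(c_{\ell'})_{\ell'\in\Omega_j}$. Substituting \eqref{defn:fft} into \eqref{defn:adjfft} and interchanging the finite sums over $\ell'$ and over $[\uG]\in\VG_j$, the vertex sum factors out as
\begin{equation*}
\sum_{[\uG]\in\VG_j}\wV\,\eigfm[\ell'](\pV)\,\overline{\eigfm(\pV)}=\overline{\QU{\ell'}(\QN[j])},
\end{equation*}
the weighted pairing of \eqref{eq:sum:wt:u}; because this kernel is Hermitian in $(\ell,\ell')$, the $\ell$-th output equals $\sum_{\ell'\in\Omega_j}c_{\ell'}\,\QU{\ell'}(\QN[j])$ up to a harmless conjugation. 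The whole calculation therefore collapses to evaluating the $N_j\times N_j$ matrix $\bigl[\QU{\ell'}(\QN[j])\bigr]_{\ell,\ell'\in\Omega_j}$.

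The decisive step is to invoke \eqref{tight:QN}, which asserts $\QU{\ell'}(\QN[j])=\delta_{\ell,\ell'}$ for every $(\ell,\ell')\in\sigma_{\scala,\overline{\scala}}^{(j)}$. Here I expect the one genuine subtlety, and the main obstacle of the proof: \eqref{tight:QN} only guarantees the delta property on $\sigma_{\scala,\overline{\scala}}^{(j)}$, whereas I need it on all of $\Omega_j\times\Omega_j$. The point to spell out is that $\Omega_j=\{1,\dots,N_j\}$ indexes exactly the level-$j$ frequencies---those $\ell$ for which $\eigfm$ descends to a well-defined function on $\VG_j$ in the sense of \eqref{eq:ONB:const}---and these coincide with the indices on which the band-limited low-pass $\FT{\scala}(\eigvm/\Lambda_j)$ is non-vanishing, so that $\Omega_j\times\Omega_j\subseteq\sigma_{\scala,\overline{\scala}}^{(j)}$ (indeed with equality, exactly as recorded in the proof of Theorem~\ref{thm:DFS:construction}). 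With this containment, \eqref{tight:QN} yields $\QU{\ell'}(\QN[j])=\delta_{\ell,\ell'}$ for all $\ell,\ell'\in\Omega_j$, so the composition above acts as the identity on $l_2(\Omega_j)$.

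Finally I would obtain the remaining identity on $l_2(\VG_j)$ either from the square-matrix argument of the first paragraph, or---if a self-contained check is preferred---by the dual computation, whose summation runs over $\ell\in\Omega_j$ and reduces to the completeness (reproducing-kernel) identity $\sum_{\ell\in\Omega_j}\sqrt{\wV[j,{[\vG]}]\,\wV}\,\eigfm([\vG])\,\overline{\eigfm(\pV)}=\delta_{[\uG],[\vG]}$; since the matrix $\bigl[\sqrt{\wV}\,\eigfm(\pV)\bigr]_{[\uG],\ell}$ is square of size $N_j$, orthonormality of its columns (the relation just established) is equivalent to orthonormality of its rows (this identity). Thus the analytic content is concentrated entirely in the reduction to $\QU{\ell'}(\QN[j])$ and in matching the index range $\Omega_j$ against the support set $\sigma_{\scala,\overline{\scala}}^{(j)}$ on which \eqref{tight:QN} is phrased; the rest is bookkeeping.
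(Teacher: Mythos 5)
Your proof is correct and follows essentially the same route as the paper's: substitute the definitions of $\fft[j]$ and $\fft[j]^*$, interchange the finite sums, and reduce everything to the relation $\QU{\ell'}(\QN[j])=\delta_{\ell,\ell'}$, with your square-matrix observation merely replacing the paper's ``in a similar way'' for the second identity. The one place you go beyond the paper---checking that $\Omega_j\times\Omega_j\subseteq\sigma_{\scala,\overline{\scala}}^{(j)}$ so that \eqref{tight:QN} genuinely covers every index pair appearing in the sum---is a legitimate point of care that the paper's proof leaves implicit and only verifies later, in the proof of Theorem~\ref{thm:DFS:construction}.
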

\begin{proof}
Let $\vc=(c_\ell)_{\ell=1}^{N_j}$ be a sequence in $l_2(\Omega_j)$. Then, for $\ell=1,\dots,N_j$,
\begin{align*}
[\fft[j]^*(\fft[j]\vc)]_\ell
& = \sum_{[\uG]\in\VG_j}  [\fft[j]\vc]([\uG])\: \sqrt{\wV}\:\conj{\eigfm(\pV)}\\
& = \sum_{[\uG]\in\VG_j}  \left(\sum_{\ell'=1}^{N_j}c_{\ell'}\: \sqrt{\wV}\:\eigfm[\ell'](\pV)\right) \sqrt{\wV}\:\conj{\eigfm(\pV)}\\
& = \sum_{\ell'=1}^{N_j}c_{\ell'} \left(\sum_{[\uG]\in\VG_j}{\wV}\:\eigfm[\ell'](\pV)\:\conj{\eigfm(\pV)}\right) \\
& = \sum_{\ell'=1}^{N_j}c_{\ell'}\:\QU{\ell'}(\QN[j])\\
& = c_\ell.
\end{align*}
Hence, $[\fft[j]^*(\fft[j]\vc)]=\vc$ for all $\vc\in l_2(\Omega_j)$. In a similar way, we can prove
$\fft[j]\fft[j]^*=\Id_{\Omega_j}$.
\end{proof}

For every $(\Omega_j,\VG_j)$-sequence $\fracoev[]$, there exists a \emph{unique} sequence $\vc\in l_2(\Omega_j)$ such that $\fft[j]\vc = \fracoev[]$. This implies that we can well define \emph{Fourier coefficients} by DFT as $\FT{\fracoev[]}:=\vc=\fft[j]^*\fracoev[]$ for the graph signal $\fracoev[]$.

Based on the discrete Fourier transform operators, we can define convolution, downsampling, and upsampling operators.
Let $\mask\in l_1(\Z)$ be a mask and $\fracoev[]\in l_2(\Omega_j,\VG_j)$ be a $(\Omega_j,\VG_j)$-sequence. Let $\FT{\fracoev[]}:=(\dfcav[\ell])_{\ell\in\Omega_j}$ be its discrete Fourier coefficient sequence.
\begin{definition}[Discrete convolution]
	The discrete convolution $\fracoev[] \dconv \mask$ is defined as the following sequence in $l_2(\Omega_j,\VG_j)$:
\begin{equation}\label{defn:dis.conv}
    [\fracoev[] \dconv \mask]([\uG]) := \sum_{\ell\in \Omega_j} \dfcav[\ell]\: {\FS{\mask}}\left(\frac{\eigvm}{\Lambda_j}\right)\sqrt{\wV}\:\eigfm(\pV),\quad [\uG]\in\VG_j.
\end{equation}
\end{definition}
That is, in the Fourier domain $(\widehat{\fracoev[] \dconv \mask})_\ell = \dfcav[\ell]\:{\FS{\mask}}\left(\frac{\eigvm}{2^{j}}\right)$ for $\ell\in\Omega_j$.
\begin{definition}[Downsampling]
	We define the downsampling operator $\downsmp:l_2(\Omega_j,\VG_j)\rightarrow l_2(\Omega_{j-1},\VG_{j-1})$ for a $(\Omega_j,\VG_j)$-sequence $\fracoev[]$ by
\begin{equation}\label{defn:dwnsmp}
    [\fracoev[]\downsmp]([\uG]) := \sum_{\ell\in\Omega_{j}} \dfcav[\ell] \:\sqrt{\wV[j-1,{[\uG]}]}\:\eigfm({[\uG]}),\quad [\uG]\in\VG_{j-1}.
\end{equation}
\end{definition}

\begin{definition}[Upsampling]
	The upsampling operator $\:\upsmp: l_2(\Omega_{j-1},\VG_{j-1})\rightarrow l_2(\Omega_j,\VG_j)$ for a sequence $\fracoev[]\in l_2(\Omega_{j-1},\VG_{j-1})$ is defined by
\begin{equation}\label{defn:upsmp}
[\fracoev[]\upsmp]([\uG]) :=
\sum_{\ell\in\Omega_{j-1}}\dfcav[\ell] \sqrt{\wV}\:\eigfm(\pV),\quad [\uG]\in\VG_j.
\end{equation}
\end{definition}

For a mask $\mask\in l_1(\Z)$, we denote $\mask^\star$ the filter such that its Fourier series $\FT{\mask^\star}(\xi) = \conj{\FT{h}(\xi)}$, $\xi\in\R$. With the above notions, we have the following theorem which shows the framelet decomposition and reconstruction for the framelet approximation and detail coefficients at different scales.
\begin{theorem}
\label{thm:dec:rec}
Let $\dfrsys(\{\Psi_j\}_{j=J_1}^J,\{\filtbk_j\}_{j=J_1+1}^{J})$,  $J_1=J_0,\ldots, J$ be the sequence of the decimated tight framelet systems given in \eqref{defn:DFS}, and the framelet approximation and detail coefficients given in \eqref{defn:fmtG:coefs}. Then,
\begin{enumerate}
\item[{\rm (i)}] $\fracoev$ is a $(\Omega_j,\VG_j)$-sequence for all $j = J_0,\ldots,J$, and $\frbcoev[j]{n}, n=1,\ldots, r_j$ are $(\Omega_{j+1},\VG_{j+1})$-sequences for all $j = J_0-1,\ldots,J$.
\item[{\rm (ii)}]  For any $J_0+1\le j\le \ord$, the decomposition is given by
\begin{equation}
\label{defn:dec:tz}
  \fracoev[j-1] = (\fracoev\dconv \maska_j^\star)\downsmp,\quad \frbcoev[j-1]{n} =  (\fracoev\dconv (\maskb_j)^\star),\quad  n=1,\ldots,r_{j-1}.
\end{equation}
\item[{\rm (iii)}]  For any $J_0+1\le j\le \ord$, the reconstruction is given by
\begin{equation}\label{defn:reconstr.j.j1}
  \fracoev  =  (\fracoev[j-1]\upsmp) \dconv \maska_{j}+\sum_{n=1}^{r_{j-1}}  \frbcoev[j-1]{n} \dconv \maskb_{j}.
\end{equation}
\end{enumerate}
\end{theorem}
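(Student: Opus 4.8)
The plan is to run the whole argument in the Fourier domain of the global orthonormal basis $\{\eigfm\}_{\ell=1}^{\NV}$, turning each operator---discrete convolution, downsampling, upsampling---into a pointwise multiplier on Fourier coefficients and then matching coefficients across the two sides of \eqref{defn:dec:tz} and \eqref{defn:reconstr.j.j1}. The backbone is Proposition~\ref{prop:fft}: under \eqref{tight:QN} the transforms $\fft[j]$ and $\fft[j]^*$ are mutually inverse bijections between $l_2(\Omega_j)$ and $l_2(\VG_j)$. This settles (i) at once, since $\fracoev$ is by construction a function on $\VG_j$ and $\frbcoev{n}$ a function on $\VG_{j+1}$, and surjectivity of $\fft[j]$ (resp.\ $\fft[j+1]$) makes every such function a $(\Omega_j,\VG_j)$- (resp.\ $(\Omega_{j+1},\VG_{j+1})$-) sequence. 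More importantly, the same bijectivity reduces (ii) and (iii) to identities between Fourier coefficient sequences, which is where I would spend the effort.

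First I would record the Fourier coefficients of the framelet coefficients themselves. Inserting \eqref{eq:fr.coeff} into the definition \eqref{defn:adjfft} of the DFT and collapsing the inner double sum with $\QU{\ell'}(\QN[j])=\delta_{\ell,\ell'}$ (valid under \eqref{tight:QN}) yields
\begin{equation*}
(\fft[j]^*\fracoev)_\ell=\conj{\FT{\scala_j}\left(\frac{\eigvm}{\Lambda_j}\right)}\Fcoem{\vf},\ \ \ell\in\Omega_j,\qquad
(\fft[j+1]^*\frbcoev{n})_\ell=\conj{\FT{\scalb_j^{(n)}}\left(\frac{\eigvm}{\Lambda_j}\right)}\Fcoem{\vf},\ \ \ell\in\Omega_{j+1}.
\end{equation*}
The band-limiting built into the generators---the support of $\FT{\scala_{j-1}}(\cdot/\Lambda_{j-1})$ lies in $\Omega_{j-1}$ and that of $\FT{\scalb_{j-1}^{(n)}}(\cdot/\Lambda_{j-1})$ in $\Omega_j$---is the structural fact that will let downsampling and upsampling act as faithful restriction and zero-extension.

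For (ii), convolution multiplies Fourier coefficients by the mask's Fourier series at $\eigvm/\Lambda_j$ (see \eqref{defn:dis.conv}), so convolving $\fracoev$ with $\maska_j^\star$ scales $(\fft[j]^*\fracoev)_\ell$ by $\conj{\FT{\maska_j}(\eigvm/\Lambda_j)}$. The refinement relation \eqref{eq:refinement:nonstationary} then collapses $\conj{\FT{\maska_j}(\eigvm/\Lambda_j)}\,\conj{\FT{\scala_j}(\eigvm/\Lambda_j)}$ into $\conj{\FT{\scala_{j-1}}(\eigvm/\Lambda_{j-1})}$, and likewise turns the high-pass product into $\conj{\FT{\scalb_{j-1}^{(n)}}(\eigvm/\Lambda_{j-1})}$. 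Since the former vanishes for $\ell>N_{j-1}$, the downsampling \eqref{defn:dwnsmp} sees an effectively $\Omega_{j-1}$-supported coefficient sequence and reproduces precisely $\fracoev[j-1]$; the high-pass identity requires no downsampling and matches $\frbcoev[j-1]{n}$ directly. Applying $\fft[j-1]$ and $\fft[j]$ and invoking Proposition~\ref{prop:fft} gives \eqref{defn:dec:tz}.

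For (iii), I would compute the Fourier coefficients of the right-hand side of \eqref{defn:reconstr.j.j1}. Upsampling \eqref{defn:upsmp} zero-extends the coefficients of $\fracoev[j-1]$ from $\Omega_{j-1}$ to $\Omega_j$, convolution with $\maska_j$ multiplies by $\FT{\maska_j}(\eigvm/\Lambda_j)$, and \eqref{eq:refinement:nonstationary} produces $|\FT{\maska_j}(\eigvm/\Lambda_j)|^2\,\conj{\FT{\scala_j}(\eigvm/\Lambda_j)}\,\Fcoem{\vf}$; each of the $r_{j-1}$ high-pass terms contributes $|\FT{\maskb_j}(\eigvm/\Lambda_j)|^2\,\conj{\FT{\scala_j}(\eigvm/\Lambda_j)}\,\Fcoem{\vf}$. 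Summing the branches and applying the mask partition of unity \eqref{thmeq:DFS:mask:simplified} of Corollary~\ref{cor:thm:DFS} collapses the bracketed factor to $1$, leaving $\conj{\FT{\scala_j}(\eigvm/\Lambda_j)}\Fcoem{\vf}=(\fft[j]^*\fracoev)_\ell$, whence \eqref{defn:reconstr.j.j1} follows by Proposition~\ref{prop:fft}. The main obstacle is the bookkeeping of Fourier supports across the nested index sets $\Omega_{j-1}\subseteq\Omega_j$ under sampling: I must check that on $\Omega_j\setminus\Omega_{j-1}$ the refinement forces $\FT{\maska_j}(\eigvm/\Lambda_j)=0$, so that the low-pass branch silently drops there while the partition of unity still closes on the high-pass terms alone, and that on the zero set of $\FT{\scala_j}$ both sides degenerate together---so the coefficient identity holds for every $\ell\in\Omega_j$, not merely where all generators are nonzero.
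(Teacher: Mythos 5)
Your proposal is correct and follows essentially the same route as the paper's proof: both pass to the Fourier coefficients of the framelet coefficients via \eqref{eq:fr.coeff}, use the refinement relation \eqref{eq:refinement:nonstationary} to identify the decomposition with convolution followed by downsampling, and invoke the mask partition of unity \eqref{thmeq:DFS:mask:simplified} to collapse the reconstruction. Your explicit attention to the support bookkeeping on $\Omega_j\setminus\Omega_{j-1}$ (which the paper handles implicitly through $\supp\,\FT{\scala_\veps}\subseteq[-1,1]$) and your shortcut for part (i) via bijectivity of $\fft[j]$ are only cosmetic departures.
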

\begin{remark}
For scale (or level) $j$, the low-pass sequence $\fracoev$ is in $l_2(\Omega_j,\VG_j)$ while the high-pass sequences $\frbcoev[j]{n}$ are in $l_2(\Omega_{j+1},\VG_{j+1})$. Figure~\ref{fig:algo:fb} shows the framelet decomposition and reconstruction for the one-level $\gph$-framelet transforms based on a filter bank $\{\maska_j;\maskb[1]_j,\ldots,\maskb[r_j]_j \}$, which consists of discrete convolution, upsampling and downsampling on $\gph$, as stated in Theorem~\ref{thm:dec:rec}. Here the operator $\dconv\maskb_j$ ranges over $n=1,\ldots,r$ and the operation $+_r$ is the summation over the low-pass filtered coefficient sequence and all $r$ high-pass filtered coefficient sequences.
\end{remark}
\begin{proof}[Proof of Theorem~\ref{thm:dec:rec}]
For $\fracoev$ and $\frbcoev{n}$ in \eqref{defn:fmtG:coefs}, by \eqref{eq:fr.coeff} and $\supp\:\FT{\scala_\veps}\subseteq[-1,1]$, we have
\begin{equation*}
    \fracoev([\uG]) =\sum_{\ell\in\Omega_j}\Fcoem{\vf}\: \conj{\FT{\scala_\veps}\left(\frac{\eigvm}{\Lambda_j}\right)}\sqrt{\wV}\: \eigfm(\pV)
\end{equation*}
and
\begin{equation*}
    \frbcoev[j-1]{n}([\uG]) =\sum_{\ell\in\Omega_j}\Fcoem{\vf}\: \conj{\FT{\scalb^n}\left(\frac{\eigvm}{\Lambda_{j-1}}\right)}\sqrt{\wV}\: \eigfm(\pV).
\end{equation*}
Hence, the statement in (i) holds. Moreover,  the discrete Fourier coefficient sequences $\FT{\fracoev}=(\dfcav[j,\ell])_{\ell\in\Omega_j}$ and $\FT{\frbcoev[j-1]{n}}=(\dfcbv[j-1,\ell]{n})_{\ell\in\Omega_j}$ are given by $\dfcav[\ell] = \Fcoem{\vf}\: \conj{\FT{\scala_\veps}\left(\frac{\eigvm}{\Lambda_j}\right)}$
and $\dfcbv[j-1,\ell]{n} = \Fcoem{\vf}\: \conj{\FT{\scalb_{j}^{(n)}}\left(\frac{\eigvm}{\Lambda_{j-1}}\right)}$, $\ell\in\Omega_j$.

Since $\fracoev[j-1]$ is a $(\Lambda_{j-1},\VG_{j-1})$-sequence, \eqref{eq:refinement:nonstationary} implies that for $[\uG]\in\VG_{j-1}$, the approximate framelet coefficients
\begin{align*}
\fracoev[j-1]([\uG]) &= \sum_{\ell\in\Omega_{j-1}}\Fcoem{\vf}\: \conj{\FT{\scala_\veps}\left(\frac{\eigvm}{\Lambda_{j-1}}\right)}\sqrt{\wV[j-1,{[\uG]}]}\:\eigfm(\pV)\notag\\
&= \sum_{\ell\in\Omega_{j-1}}\Fcoem{\vf}\:\conj{\FT{\scala_\veps}\left(\frac{\eigvm}{\Lambda_{j}}\right)}\:
    \conj{\FT{\maska_{j}}\left(\frac{\eigvm}{\Lambda_{j}}\right)}\sqrt{\wV[j-1,{[\uG]}]}\: \eigfm(\pV)\\
    &= \sum_{\ell\in\Omega_{j-1}}  \dfcav[j,\ell]\:\conj{\FT{\maska_{j}}\left(\frac{\eigvm}{\Lambda_{j}}\right)}\sqrt{\wV[j-1,{[\uG]}]}\: \eigfm(\pV)\\
    &= [(\fracoev \dconv \maska_{j}^\star)\downsmp]([\uG]).
\end{align*}
And for the detail framelet coefficients, for $[\uG]\in\VG_j$ and $n=1,\dots,r_{j-1}$,
\begin{align*}
\frbcoev[j-1]{n} ([\uG])
&=\sum_{\ell\in\Omega_j}\Fcoem{\vf}\: \conj{\FT{\scalb_{j}^{(n)}}\left(\frac{\eigvm}{\Lambda_{j-1}}\right)} \sqrt{\wN[j,{[\uG]}]}\: \eigfm(\pV)\\
&= \sum_{\ell\in\Omega_{j}}\Fcoem{\vf}\:\conj{\FT{\scala_\veps}\left(\frac{\eigvm}{\Lambda_{j}}\right)}\:
    \conj{\FT{\maskb[n]_{j}}\left(\frac{\eigvm}{\Lambda_{j}}\right)}\sqrt{\wV[j,{[\uG]}]}\: \eigfm(\pV)\\
&= [(\fracoev \dconv (\maskb_{j})^\star)](\pV).
\end{align*}
This gives \eqref{defn:dec:tz}.

Using the equations $\fracoev[j-1]=(\fracoev\dconv (\maska_{j})^\star)\downsmp$ and $\frbcoev[j-1]{n}=\fracoev\dconv {(\maskb_{j})^\star}$, we write
\begin{align*}
\widetilde{\fracoev[]}
&:=(\fracoev[j-1]\upsmp) \dconv \maska_{j}+\sum_{n=1}^{r_{j-1}}  \frbcoev[j-1]{n} \dconv \maskb_{j}\\
&=\bigl(((\fracoev\dconv (\maska_{j})^\star)\downsmp)\upsmp\bigr) \dconv \maska_{j}+\sum_{n=1}^{r_{j-1}}  \bigl(\fracoev\dconv {(\maskb_{j})^\star}\bigr) \dconv \maskb_{j}
\end{align*}
where $r_{j-1}$ is the number of high-pass filters at level $j-1$. This together with \eqref{defn:dis.conv}, \eqref{defn:dwnsmp}, \eqref{defn:upsmp}, and \eqref{thmeq:DFS:mask:simplified} gives, for $[\uG]\in\VG_j$,
\begin{align*}
\widetilde{\fracoev[]}([\uG])
& = \sum_{\ell\in\Omega_j}\dfcav[j,\ell]\left( \left|\FS{\maska_{j}}\left(\frac{\eigvm}{\Lambda_{j}}\right)\right|^2+\sum_{n=1}^{r_{j-1}}
 \left|\FS{\maskb_{j}}\left(\frac{\eigvm}{\Lambda_{j}}\right)\right|^2\right) \sqrt{\wV}\:{\eigfm}(\pV)\\
& =\sum_{\ell\in\Omega_j}\dfcav[j,\ell] \sqrt{\wV}\:{\eigfm}(\pV)\\
& = \fracoev([\uG]).
\end{align*}
Thus, \eqref{defn:reconstr.j.j1} holds.
\end{proof}

Recursively using the decimated $\gph$-framelet transforms in \eqref{defn:dec:tz} and \eqref{defn:reconstr.j.j1}, we would obtain the multi-level decimated $\gph$-framelet transforms on the graph, in which the graph signal has zero-loss.

\begin{definition}[Decimated $\gph$-framelet transforms]
For a sequence of graph data $\fracoev[J_1]\in l(\Lambda_{J_1},\Omega_{J_1})$, $J_0\le J_1\le \ord$ on $\gph$, the multi-level framelet decomposition on $\gph$ is to compute for $j = \ord,\ldots, {J_1}+1$,
\begin{equation}
\label{defn:dec:tz:J}
  \fracoev[j-1] = (\fracoev\dconv \maska_j^\star)\downsmp,\quad \frbcoev[j-1]{n} =  (\fracoev\dconv (\maskb_j)^\star),\quad  n=1,\ldots,r_{j-1}.
\end{equation}
For a sequence $(\frbcoev[J-1]{1},\ldots, \frbcoev[J-1]{r_{J-1}}, \ldots, \frbcoev[J_0]{1},\ldots,\frbcoev[J_0]{r_{J_0}}, \fracoev[J_0])$ of the framelet coefficients derived from a multi-level decomposition,
the multi-level $\gph$-framelet reconstruction is to evaluate for $j = {J_0}+1,\ldots,J$,
\begin{equation}
\label{defn:rec:sd:J}
    \fracoev  =  (\fracoev[j-1]\upsmp) \dconv \maska_{j}+\sum_{n=1}^{r_{j-1}}  \frbcoev[j-1]{n} \dconv \maskb_{j}.
\end{equation}
We call \eqref{defn:dec:tz:J} and \eqref{defn:rec:sd:J} multi-level decimated $\gph$-framelet transforms.
\end{definition}
Figure~\ref{fig:multi-level-FMT} below illustrates a flowchart for a two-level decomposition and reconstruction of the $\gph$-framelet transforms.
\begin{figure}[th]
\begin{minipage}{\textwidth}
	\centering
\begin{minipage}{\textwidth}
\includegraphics[width=\textwidth]{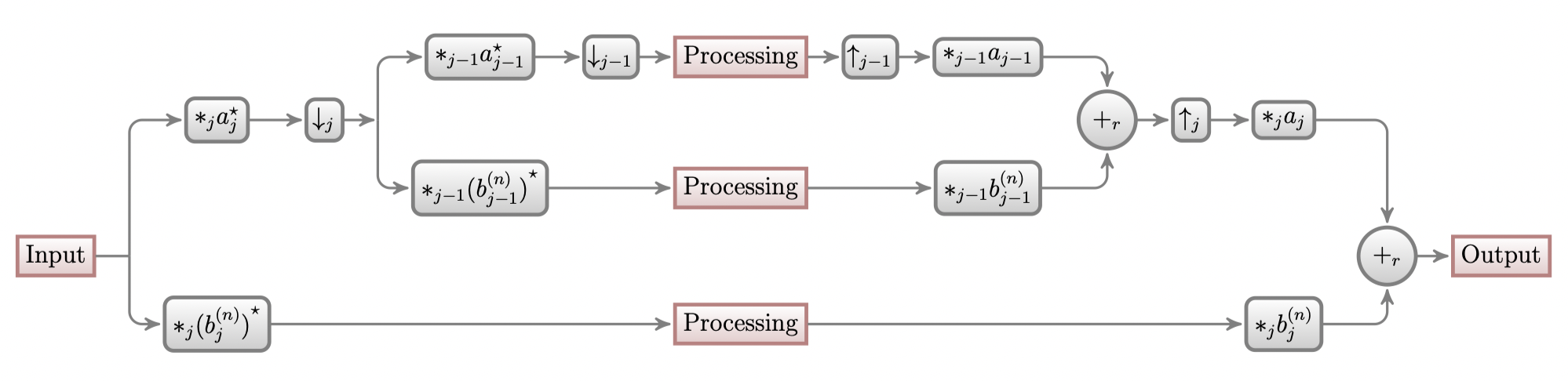}
\end{minipage}
\begin{minipage}{\textwidth}
\caption{Two-level $\gph$-framelet decomposition and reconstruction based on the filter banks $\{\maska_{j-1};\maskb[1]_{j-1},\ldots,\maskb[r_{j-1}]_{j-1}\}$ and $\{\maska_{j};\maskb[1]_{j},\ldots,\maskb[r_{j}]_{j}\}$. Graph signal input is decomposed into the low-pass and high-pass coefficients in the first level decomposition. The low-pass coefficient is then decomposed into low-pass and high-pass coefficients at the next scale. The reconstruction uses the low-pass coefficients at the coarsest level (or the first level) of the chain and all high-pass coefficients.}
\label{fig:multi-level-FMT}
\end{minipage}
\end{minipage}
\end{figure}

\begin{definition}[Framelet analysis and synthesis operators]
The multi-level decimated $\gph$-framelet transforms define the multi-level $\gph$-framelet analysis operator
\begin{equation*}
\label{defn:analOp}
\analOp: l_2(\Lambda_J,\Omega_J) \rightarrow l_2(\Omega_{J-1})^{1\times r_{J-1}}\times l_2(\Omega_{J-2})^{1\times r_{J-2}}\times\cdots\times l_2(\Omega_{J_0})^{1\times r_{J_0}}\times l_2(\Omega_{J_0})
\end{equation*}
with
\begin{equation}
\label{defn:dec:tr:J}
\analOp\fracoev[J] = (\frbcoev[J-1]{1},\ldots, \frbcoev[J-1]{r_{J-1}}, \ldots, \frbcoev[J_0]{1},\ldots,\frbcoev[J_0]{r_{J_0}}, \fracoev[J_0]),\quad \fracoev[J]\in l_2(\Lambda_J,\Omega_J);
\end{equation}
and the multi-level $\gph$-framelet synthesis operator
\begin{equation*}
\label{defn:synOp}
\synOp:  l_2(\Omega_{J-1})^{1\times r_{J-1}}\times l_2(\Omega_{J-2})^{1\times r_{J-2}}\times\cdots\times l_2(\Omega_{J_0})^{1\times r_{J_0}}\times l_2(\Omega_{J_0}) \rightarrow l_2(\Lambda_J,\Omega_{J})
\end{equation*}
with
\[
\synOp( \frbcoev[J-1]{1},\ldots, \frbcoev[J-1]{r_{J-1}}, \ldots, \frbcoev[J_0]{1},\ldots,\frbcoev[J_0]{r_{J_0}}, \fracoev[J_0]) = \fracoev[J].
\]
\end{definition}
Under the condition of Theorem~\ref{thm:dec:rec}, the analysis and synthesis operators are invertible on $l_2(\Lambda_j,\Omega_j)$ for $J_0\le j\le J$, that is, $\synOp\analOp = \Id|_{l_2(\Lambda_j,\Omega_j)}$. The type of the filter banks and the number of high passes in each filter bank in the multi-level $\gph$-framelet transforms may vary at multi scales.

\subsection{Fast $\gph$-Framelet Transforms}
In this section, we show that the decomposition and reconstruction of the decimated $\gph$-framelet transforms can be implemented in linear computational complexity, that is, in steps proportional to the size of the graph by fast discrete Fourier transforms for the coarse-grained chain of the graph $\gph$.
We call this algorithm \emph{fast $\gph$-framelet transform}, or F$\mathcal{G}$T.
The fast computation is due to the following relation between the decimated $\gph$-framelet transforms and discrete Fourier transforms on $\gph$.
\begin{proposition}\label{prop:fgt:fft}
 For $j=J_0+1,\ldots,J$, the $\gph$-framelet decomposition and reconstruction at level $j$ can be written as
\[
  \fracoev[j-1] = \fft[j-1](\widehat{\fracoev\dconv (\maska_{j})^{\star}}),\quad \frbcoev[j-1]{n} = \fft[j](\widehat{\fracoev\dconv {(\maskb_{j})}^\star}),\quad n = 1,\ldots, r_{j-1}
\]
and 
\[
  \fracoev = \left(\fft[j]^*(\fracoev[j-1])\right) \dconv \maska_{j} + \sum_{n=1}^{r_{j-1}} \left(\fft[j]^*(\frbcoev[j-1]{n})\right) \dconv \maskb_{j}.
\]
\end{proposition}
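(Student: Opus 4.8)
The plan is to obtain Proposition~\ref{prop:fgt:fft} as a direct reformulation of the intrinsic transforms in Theorem~\ref{thm:dec:rec}, replacing the operators $\dconv$, $\downsmp$, and $\upsmp$ by the discrete Fourier transforms $\fft[j]$ and $\fft[j]^*$. The workhorse is the fact, recorded just after the definition of $\dconv$, that convolution is diagonal in the Fourier domain: for a $(\Omega_j,\VG_j)$-sequence $\fracoev[]$ with coefficients $\FT{\fracoev[]}=(\dfcav[\ell])_{\ell\in\Omega_j}$ one has $(\widehat{\fracoev[]\dconv\mask})_\ell=\dfcav[\ell]\,\FT{\mask}(\eigvm/\Lambda_j)$, and the synthesis step $\vc\mapsto\sum_{\ell\in\Omega_j}c_\ell\sqrt{\wV}\,\eigfm(\pV)$ appearing in \eqref{defn:dis.conv} is exactly $\fft[j]$. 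Together with Proposition~\ref{prop:fft}, which gives $\fft[j]\fft[j]^*=\Id_{\Omega_j}$ and $\fft[j]^*\fft[j]=\Id_{\VG_j}$, this lets me pass freely between any $(\Omega_j,\VG_j)$-sequence $g$ in the image of $\fft[j]$ and its coefficient sequence $\FT{g}=\fft[j]^*g$, with $\fft[j](\FT{g})=g$.

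For the decomposition I would treat the two branches separately. The high-pass identity is immediate: by Theorem~\ref{thm:dec:rec}(ii), $\frbcoev[j-1]{n}=\fracoev\dconv(\maskb_j)^\star$, which is a $(\Omega_j,\VG_j)$-sequence in the image of $\fft[j]$ by part (i), so $\fft[j](\FT{\frbcoev[j-1]{n}})=\frbcoev[j-1]{n}$ gives $\frbcoev[j-1]{n}=\fft[j](\widehat{\fracoev\dconv(\maskb_j)^\star})$. The low-pass identity $\fracoev[j-1]=(\fracoev\dconv\maska_j^\star)\downsmp$ requires the band-limited property of the refinement mask. Writing $g:=\fracoev\dconv\maska_j^\star$, its coefficients satisfy $\FT{g}_\ell=\dfcav[\ell]\,\overline{\FT{\maska_j}(\eigvm/\Lambda_j)}$, and by the construction \eqref{defn:filtbk_j:split2}, $\FT{\maska_j}(\eigvm/\Lambda_j)=\FT{\scala_\veps}(\eigvm/\Lambda_{j-1})/\FT{\scala_\veps}(\eigvm/\Lambda_j)$ vanishes once $\eigvm>\Lambda_{j-1}$, i.e. for every $\ell\notin\Omega_{j-1}$ (using $\sigma_{\scala,\overline{\scala}}^{(j-1)}=\{(\ell,\ell'):1\le\ell,\ell'\le N_{j-1}\}$ from the proof of Theorem~\ref{thm:DFS:construction}). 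Hence $\FT{g}$ is supported on $\Omega_{j-1}$, and truncating the sum in the downsampling formula \eqref{defn:dwnsmp} to $\Omega_{j-1}$ reproduces the defining sum \eqref{defn:fft} of $\fft[j-1]$ on $\VG_{j-1}$; this gives $g\downsmp=\fft[j-1](\FT{g})$, i.e. $\fracoev[j-1]=\fft[j-1](\widehat{\fracoev\dconv(\maska_j)^\star})$.

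For the reconstruction I would start from $\fracoev=(\fracoev[j-1]\upsmp)\dconv\maska_j+\sum_{n=1}^{r_{j-1}}\frbcoev[j-1]{n}\dconv\maskb_j$ of Theorem~\ref{thm:dec:rec}(iii). Each high-pass summand is handled directly: $\frbcoev[j-1]{n}$ is already a $(\Omega_j,\VG_j)$-sequence, so its coefficient data is $\fft[j]^*(\frbcoev[j-1]{n})$ and, since $\dconv$ depends only on these coefficients, the convolution produces the stated terms. For the low-pass term I must show that $\upsmp$ realises in the Fourier domain the zero-extension of the coarse coefficients from $\Omega_{j-1}$ to $\Omega_j$, so that $\FT{(\fracoev[j-1]\upsmp)}$ equals $\fft[j]^*(\fracoev[j-1])$ (the latter read as this zero-extension). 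Computing $\FT{(\fracoev[j-1]\upsmp)}_{\ell'}$ from \eqref{defn:upsmp} and \eqref{defn:adjfft} gives $\sum_{\ell\in\Omega_{j-1}}\widehat{\fracoev[j-1]}_\ell\,\QU{\ell'}(\QN[j])$, and since every pair $(\ell,\ell')$ with $\ell\in\Omega_{j-1}\subseteq\Omega_j$ and $\ell'\in\Omega_j$ lies in $\sigma_{\scala,\overline{\scala}}^{(j)}$, the standing hypothesis \eqref{tight:QN} forces $\QU{\ell'}(\QN[j])=\delta_{\ell,\ell'}$. The sum therefore collapses to $\widehat{\fracoev[j-1]}_{\ell'}$ for $\ell'\in\Omega_{j-1}$ and to $0$ otherwise; feeding this coefficient sequence into the diagonal convolution then matches the claimed formula.

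The main obstacle I expect is the bookkeeping at the interface of the two index sets $\Omega_{j-1}\subseteq\Omega_j$, rather than any deep estimate. Concretely, one must justify that the band-limited support of $\FT{\maska_j}$ makes the downsampling in the decomposition lossless and literally equal to $\fft[j-1]$, and dually that $\upsmp$ acts as the exact adjoint zero-extension in the reconstruction. Both hinge on the two conditions packaged in \eqref{tight:QN}, namely $\sigma_{\scala,\overline{\scala}}^{(j)}\subseteq\sigma_{\scala,\overline{\scala}}^{(j+1)}$ and $\QU{\ell'}(\QN[j])=\delta_{\ell,\ell'}$ on that set; without them, cross terms with $\ell\neq\ell'$ or coefficients indexed in $\Omega_j\setminus\Omega_{j-1}$ would survive and the clean transform identities would break. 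Everything else reduces to substituting the definitions of $\dconv$, $\downsmp$, $\upsmp$, $\fft[j]$, and $\fft[j]^*$.
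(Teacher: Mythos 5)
Your proposal is correct and follows exactly the route the paper intends: the paper states Proposition~\ref{prop:fgt:fft} without proof as an immediate consequence of Theorem~\ref{thm:dec:rec}, the definitions of $\dconv$, $\downsmp$, $\upsmp$, $\fft[j]$, $\fft[j]^*$, and Proposition~\ref{prop:fft}, and your argument supplies precisely that unpacking. The two points you single out --- the band-limited support of $\FT{\maska_j}$ making the downsampled branch coincide with $\fft[j-1]$, and \eqref{tight:QN} making $\upsmp$ act as zero-extension of coefficients from $\Omega_{j-1}$ to $\Omega_j$ --- are the same implicit facts the paper relies on in the proof of Theorem~\ref{thm:dec:rec}(i) and (iii), so no gap remains.
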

By Proposition~\ref{thm:flop}, the discrete Fourier transforms are implementable fast when the global orthornormal basis is appropriately chosen.
Algorithms~\ref{alg:decomp.multi.level} and \ref{alg:reconstr.multi.level} below give a pseudocode for the decomposition and reconstruction of \fgt based on the formula in Proposition~\ref{prop:fgt:fft}.
Using the Haar global orthonormal basis for the chain(see Section~\ref{sec:haar.orth.basis}), the discrete Fourier transforms for the input data with size $N$ has the computational cost $\bigo{}{N}$.
This and Algorithms~\ref{alg:decomp.multi.level} and \ref{alg:reconstr.multi.level} then show that the computational cost for decimated $\gph$-framelet transforms is $\bigo{}{N}$.

\medskip
\IncMargin{1em}
\begin{algorithm}[t]
\SetKwData{step}{Step}
\SetKwInOut{Input}{Input}\SetKwInOut{Output}{Output}
\BlankLine
\Input{$\fracoev[J]$, which is a $(\Lambda_J,\Omega_J)$-sequence; filter bank }

\Output{$(\dfcbv[J-1]{1},\ldots, \dfcbv[J-1]{r_{J-1}}, \ldots, \dfcbv[J_0]{1},\ldots,\dfcbv[J_0]{r_{J_0}},\dfcav[J_0])$}

$\fracoev[J] \longrightarrow \dfcav[J]$ \tcp*[f]{fast DFT}\\

\For{$j\leftarrow \ord$ \KwTo $J_0+1$}{
    $\dfcav[j-1] \longleftarrow \dfcav[j,\cdot] \: \conj{\FT{\maska_{j}}}\left(\eigvm[\cdot]/\Lambda_{j}\right)$ \tcp*[f]{discrete convolution at level $j$}\\[1mm]
    \tcp*[f]{downsample from level $j$ to $j-1$}\\[1mm]

\For{$n\leftarrow 1$ \KwTo $r_{j-1}$}{
    $\dfcbv[j-1]{n} \longleftarrow  \dfcav[j,\cdot] \: \conj{\FT{(\maskb_j})}\left(\eigvm[\cdot]/\Lambda_{j}\right)$ \tcp*[f]{discrete convolution at level $j$}
    $\frbcoev[j-1]{n} \longleftarrow \dfcbv[j-1]{n}$ \tcp*[f]{fast DFT}\\
    }
$\fracoev[J_0]\longleftarrow \dfcav[J_0]$\tcp*[f]{fast DFT}
}

\caption{Decomposition for \fgt}
\label{alg:decomp.multi.level}
\end{algorithm}
\DecMargin{1em}\vspace{3mm}

\IncMargin{1em}
\begin{algorithm}[t]
\SetKwData{step}{Step}
\SetKwInOut{Input}{Input}\SetKwInOut{Output}{Output}
\BlankLine
\Input{$(\dfcbv[J-1]{1},\ldots, \dfcbv[J-1]{r_{J-1}}, \ldots, \dfcbv[J_0]{1},\ldots,\dfcbv[J_0]{r_{J_0}},\dfcav[J_0])$}
\Output{$\fracoev[J]$}
$\dfcav[J_0] \longleftarrow \fracoev[J_0]$ \tcp*[f]{fast DFT}\\
\For{$j\leftarrow J_0+1$ \KwTo $\ord$}{
\For{$n\leftarrow 1$ \KwTo $r_{j-1}$}{
$\dfcbv[j-1]{n}\longleftarrow \frbcoev[j-1]{n}$ \tcp*[f]{fast DFT}\\
}
$\dfcav \longleftarrow (\dfcav[j-1,\cdot])\:\FT{\maska_{j}}\left(\eigvm[\cdot]/\Lambda_{j}\right)  + \sum_{n=1}^{r_{j-1}}\dfcbv[j-1,\cdot]{n} \;\FT{(\maskb_j)}\left(\eigvm[\cdot]/\Lambda_{j}\right)$
}
$\fracoev[J] \longleftarrow \dfcav[J]$ \tcp*[f]{fast ADFT}\\
\caption{Reconstruction for \fgt}\label{alg:reconstr.multi.level}
\end{algorithm}
\DecMargin{1em}

\section{Toy Example to Illustrate Framelet Construction}\label{sec:toy}
In this section, we show full construction of the decimated framelet system on a graph using the example in Figure~\ref{fig:toyG}. The graph $\gph=(\VG,\EG,\wG)$ has vertices and edges
\[
\VG:=\{a,b,c,d,e,f\}\mbox{ and }
E=\{(a,b),(a,c),(c,d),(c,e),(c,f),(d,e)\}.
\]

We apply Algorithm~\ref{algo:coarse-grain} for clustering. As shown by Figure~\ref{fig:coarse-grain-toy}, the resulting coarse-grained chain of $\gph$ is $\gph_{3\to0}$: the graph $\gph=:\gph_3$ which is at the bottom level 3 includes all six nodes of the original graph; at level 2 is $\gph_2$ which has 3 clusters with three initial centers $\{a\}, \{c\}, \{f\}$; the next level is $\gph_1$ with 2 clusters; and eventually the coarsest level $\gph_0$ has 1 cluster, which is the root of the chain (or tree). We give the detailed description of the coarse-grained chain $\gph_{3\rightarrow 0}=(\gph_3,\gph_2,\gph_1,\gph_0)$, as follow.

\begin{figure}[t]
\begin{minipage}{\textwidth}
	\centering
\begin{minipage}{\textwidth}
\centering
\includegraphics[width=0.5\textwidth]{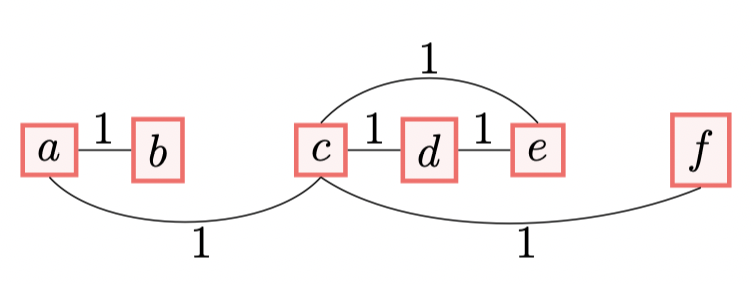}
\end{minipage}\vspace{1mm}
\begin{minipage}{0.85\textwidth}
\caption{Graph $\gph$, where the vertices are represented by the boxes and the edges are by the lines for the pairs of connected vertices, and the weight for each edge is $1$.}
\label{fig:toyG}
\end{minipage}
\end{minipage}
\end{figure}
\begin{enumerate}[(1)]
\item At the finest level 3, $\gph_3:=\gph$, of which each vertex is a leaf and a cluster of singleton. The graph $\gph$ is associated with the adjacency matrix $\wG$, the degree matrix $\dG$, and the graph Laplacian matrix $\gL:=\dG-\wG$:
\begin{equation}\label{ex:w-d-L-3}
\hspace{-1cm}\begin{array}{l}
\wG =
\left[
\begin{matrix}
0 & 1 & 1  & 0 & 0 & 0\\
1 & 0 & 0  & 0 & 0 & 0\\
1 & 0 & 0  & 1 & 1 & 1\\
0 & 0 & 1  & 0 & 1& 0\\
0 & 0 & 1  & 1 & 0 & 0\\
0 & 0 & 1  & 0 & 0 & 0\\
\end{matrix}
\right]
\dG=\left[
\begin{matrix}
2 & 0 & 0  & 0 & 0 & 0\\
0 & 1 & 0  & 0 & 0 & 0\\
0 & 0 & 4  & 0 & 0 & 0\\
0 & 0 & 0  & 2 & 0 & 0\\
0 & 0 & 0  & 0 & 2 & 0\\
0 & 0 & 0  & 0 & 0 & 1\\
\end{matrix}
\right]
\gL=\left[
\begin{matrix}
2 & -1 & -1  & 0 & 0 & 0\\
-1 & 1 & 0  & 0 & 0 & 0\\
-1 & 0 & 4  & -1 & -1 & -1\\
0 & 0 & -1  & 2 & -1& 0\\
0 & 0 & -1  & -1 & 2 & 0\\
0 & 0 & -1  & 0 & 0 & 1\\
\end{matrix}
\right].
\end{array}
\end{equation}
Here, the row or column is with respect to the vertices in the order $a,b,c,d,e,f$.

\item At level 2, we obtain three clusters $[a]_{\gph_2}=\{a,b\}$, $[c]_{\gph_2}=\{c,d,e\}$, and $[f]_{\gph_2}=\{f\}$ for the coarse-grained graph $\gph_2:=(\VG_2,\EG_2,\wG_2)$ of $\gph_3$, where $\VG_2  =\{[a]_{\gph_2},[c]_{\gph_2},[f]_{\gph_2}\}$,
    $\EG_2=\{([a]_{\gph_2},[a]_{\gph_2}),([a]_{\gph_2},[c]_{\gph_2}),([c]_{\gph_2},[c]_{\gph_2})$, $([c]_{\gph_2},[f]_{\gph_2})\}$,
    and
\begin{equation}\label{ex:w-d-L-2}
\wG_2=\frac{1}{12}
\left[
\begin{matrix}
2 & 1 & 0\\
1 & 6 & 1\\
0 & 1 & 0\\
\end{matrix}
\right]\;
\dG_2=\frac{1}{12}\left[
\begin{matrix}
3 & 0 & 0\\
0 & 8 & 0\\
0 & 0 & 1\\
\end{matrix}
\right]\;
\gL_2=\frac{1}{12}\left[
\begin{matrix}
1 & -1 & 0\\
-1 & 2 & -1\\
0 & -1 & 1\\
\end{matrix}
\right].
\end{equation}

\item At level 1,  we obtain two clusters $[a]_{\gph_1}=\{a,b\}$ and $[c]_{\gph_1}=\{c,d,e,f\}$ for the coarse-grained graph $\gph_1:=(\VG_1,\EG_1,\wG_1)$ of $\gph_2$, where
$\VG_1=\bigl\{[a]_{\gph_1},[c]_{\gph_1}\bigr\}$,\\ $\EG_1=\bigl\{([a]_{\gph_1},[a]_{\gph_1}),([a]_{\gph_1},[c]_{\gph_1}),([c]_{\gph_1},[c]_{\gph_1})\bigr\}$,
and
\begin{equation}\label{ex:w-d-L-1}
\wG_1=\frac{1}{12}
\left[
\begin{matrix}
2 & 1 \\
1 & 8 \\
\end{matrix}
\right]\;
\dG_1=\frac{1}{12}\left[
\begin{matrix}
3 & 0 \\
0 & 9 \\
\end{matrix}
\right]\;
\gL_1=\frac{1}{12}
\left[
\begin{matrix}
1 & -1\\
-1 & 1\\
\end{matrix}
\right].
\end{equation}

\item At the coarsest level 0, we reach the root $\gph_0:=(\VG_0,\EG_0,\wG_0)$, where $\VG_0=\{[a,b,c,d,e,f]=:[a]_{\gph_0}\}$ has only one cluster $[a]_{\gph_0}$ which contains all vertices from $\gph$, $\EG_0=\{([a]_{\gph_0}, [a]_{\gph_0})\}$, and $\wG_0=1/12$.

Next, we build the Haar global orthonormal eigen-pairs $\{(\eigfm,\eigvm)\}_{\ell=1}^6$ for $L_2(\gph)$. It utilises the hierarchical information of the chain $\gph_{3\rightarrow 0}$, and the resulting Haar basis satisfies
for $j=0,1,2,3$,
\begin{equation}
\label{property:constant}
\eigfm(\vG)\equiv \mathrm{const} \quad \forall v\in[\vG]_{\gph_j}\mbox{ and } \forall \ell \le |\VG_j|.
\end{equation}

\item At level 0, $\gph_0$ is a graph of singleton. In this case, $\lambda_1^{\gph_0}=0$ and $\eigfm[1]^{\gph_0}=1$. We then let
\[
\eigfm[1] = \frac{1}{\sqrt{6}}\left[\begin{matrix}
1 & 1 & 1 & 1 & 1 & 1
\end{matrix}
\right]^\top.
\]

\item At level 1, the eigenvalues of $\gL_1$ as in \eqref{ex:w-d-L-1} are $\lambda_1^{\gph_1}=0$ and $\lambda_2^{\gph_1}=\frac{1}{6}$. The eigenvectors of $\gL_1$ with respect to $0, \frac{1}{6}$ are
\[
\eigfm[1]^{\gph_1}=
\frac{1}{\sqrt{2}}\left[
\begin{matrix}
1 & 1
\end{matrix}
\right]^\top,\quad
\eigfm[2]^{\gph_1}=
\frac{1}{\sqrt{2}}\left[
\begin{matrix}
1 & -1
\end{matrix}
\right]^\top.
\]
By the discussion in Section~\ref{sec:haar.orth.basis}, we extend $\eigfm[2]^{\gph_1}$, with respect to clusters $[a]_{\gph_1}$ and $[c]_{\gph_1}$, to $\eigfm[2]^{(1)}$ on $\gph$:
\[
\eigfm[2]^{(1)} = \frac{1}{\sqrt{6}}\left[\begin{matrix}
1 & 1 & -1 & -1 & -1 & -1
\end{matrix}
\right]^\top.
\]
Apply the Gram-Schmidt orthonormalization process to $\{\eigfm[1],\eigfm[2]^{(1)}\}$, we then obtain a new vector
\[
\eigfm[2] = \frac{1}{2\sqrt{3}}\left[\begin{matrix}
2 & 2 & -1 & -1 & -1 & -1
\end{matrix}
\right]^\top.
\]

\item At level 2, the eigenvalues of $\gL_2$ in \eqref{ex:w-d-L-2} are $\lambda_1^{\gph_2}=0$,  $\lambda_2^{\gph_2}=\frac{1}{12}$, $\lambda_3^{\gph_2}=\frac{1}{4}$. The eigenvectors of $\gL_2$ with respect to $0$, $\frac{1}{12}$, $\frac{1}{4}$ are
\begin{equation*}
\eigfm[1]^{\gph_2}=
\frac{1}{\sqrt{3}}\left[
\begin{matrix}
1 & 1 & 1
\end{matrix}
\right]^\top\quad
\eigfm[2]^{\gph_2}=
\frac{1}{\sqrt{2}}\left[
\begin{matrix}
1 & 0 & -1
\end{matrix}
\right]^\top\quad
\eigfm[3]^{\gph_2}=
\frac{1}{\sqrt{6}}\left[
\begin{matrix}
1 & -2 & 1
\end{matrix}
\right]^\top.
\end{equation*}
We extend $\eigfm[3]^{\gph_2}$, with respect to clusters $[a]_{\gph_2}$,  $[c]_{\gph_2}$ and $[f]_{\gph_2}$, to $\eigfm[3]^{(2)}$ on $\gph$ as
\[
\eigfm[3]^{(2)} = \frac{1}{\sqrt{6}}\left[\begin{matrix}
1 & 1 & -1 & -1 & -1 & 1
\end{matrix}
\right]^\top.
\]
Apply the Gram-Schmidt orthonormalization process to $\{\eigfm[1],\eigfm[2],\eigfm[3]^{\gph_3}\}$, we then obtain a new vector $\eigfm[3]$:
\[
\eigfm[3] = \frac{1}{2\sqrt{3}}\left[\begin{matrix}
0 & 0 & -1 & -1 & -1 & 3
\end{matrix}
\right]^\top.
\]
\item
Continue the above similar steps, at level 3, from the graph Laplacian in \eqref{ex:w-d-L-3}, we obtain an orthonormal basis $\{\eigfm[1],\ldots,\eigfm[6]\}$ for $L_2(\gph)$ satisfying \eqref{property:constant}  as
\[
\begin{aligned}
\eigfm[1]&=
\frac{1}{\sqrt{6}} \left[
\begin{matrix}
1&1&1&1&1 &1\\
\end{matrix}
\right]^\top\\
\eigfm[2]&
=\frac{1}{2\sqrt{3}}\left[
\begin{matrix}
2&2&-1&-1&-1&-1]\\
\end{matrix}
\right]^\top
\\
\eigfm[3]&=
\frac{1}{2\sqrt{3}}\left[
\begin{matrix}
0 & 0 & -1 &-1 &-1 &3\\
\end{matrix}
\right]^\top\\
\eigfm[4]&=\frac{1}{\sqrt{6}}
\left[
\begin{matrix}
0 &0&2&-1&-1&0\\
\end{matrix}
\right]^\top\\
\eigfm[5]&=
\frac{1}{\sqrt{2}}\left[
\begin{matrix}
0 & 0 & 0 & 1 & -1& 0\\
\end{matrix}
\right]^\top\\
\eigfm[6]&=
\frac{1}{\sqrt{2}} \left[
\begin{matrix}
1& -1& 0 & 0 & 0 & 0\\
\end{matrix}
\right]^\top,\\
\end{aligned}
\]
and we let $\eigvm=\ell-1$ for $\ell=1,\ldots, 6$.

Based on the orthonormal eigen-pair $\{(\eigfm,\eigvm)\}_{\ell=1}^6$, we next construct decimated framelet systems as in \eqref{defn:DFS}.

\item At level 3, $\gph_3\equiv\gph$ and $[p]_{\gph_3} = \{p\}\in \VG_3$ are singletons. Let $\wV=1$ for all $p\in\VG$, and $\FT{\scala_{3}}\left(\frac{\eigvm}{\Lambda_{3}}\right)\equiv 1$ for all $\ell$, then by \eqref{defn:fra.frb},
\[
\fra[3,\pV](\vG) =\fra[3,p] = \delta_{\pV,\vG}\quad \forall p,\vG\in\VG.
\]
There is no framelet $\frb{(n)}(\vG)$ at this level. The system $\{\fra[3,\pV]: \pV\in\VG_3\}=\{\delta_{p,v}: p,v\in\VG\}$ is the trivial orthonormal basis.

\item At level 2, by the discussion in Section~\ref{sec:constr.dtf}, the scaling functions are given by
\begin{align*}
\FT{\scala_{2}}\left(\frac{\eigvm}{\Lambda_{2}}\right) &=
\begin{cases}
1 & \ell=1,2\\
\frac{1}{\sqrt{2}} & \ell=3\\
0 & \mbox{otherwise.}
\end{cases}\\[2mm]
\FT{\scalb_{2}^{(1)}}\left(\frac{\eigvm}{\Lambda_{2}}\right) &=
\begin{cases}
\frac{1}{\sqrt{2}} & \ell=3,5\\
1 & \ell=4\\
0 & \mbox{otherwise.}
\end{cases}\quad
\FT{\scalb_{2}^{(2)}}\left(\frac{\eigvm}{\Lambda_{2}}\right)=
\begin{cases}
\frac{1}{\sqrt{2}}  & \ell=5\\
1 & \ell=6\\
0 & \mbox{otherwise.}
\end{cases}
\end{align*}
It can be verified that $\bigl|\FT{\scala_{2}}(\frac{\eigvm}{\Lambda_{2}})\bigr|^2
+\bigl|\FT{\scalb_{2}^{(1)}}(\frac{\eigvm}{\Lambda_{2}})\bigr|^2
+\bigl|\FT{\scalb_{2}^{(2)}}(\frac{\eigvm}{\Lambda_{2}})\bigr|^2\equiv 1$ for all $\ell$. By \eqref{defn:fra.frb}, we then obtain $\fra[2,\pV]$, $\frb[2,\pV]{(1)}$ and $\frb[2,\pV]{(2)}$ in Table~\ref{tab:phi2psi2}, where the corresponding weights on $\VG_2$ are
\[
\wV[2,{[a]_{\gph_2}}] = 2\quad \wV[2,{[c]_{\gph_2}}] = 3 \quad \wV[2,{[f]_{\gph_2}}] = 1.
\]
\begin{table}[th]
\begin{center}
\begin{tabular}{c|cccccc}
  \hline
& $a$ & $b$ & $c$ & $d$ & $e$ & $f$\\
\hline
$\fra[2,{[a]_{\gph_2}}]$ &$\frac{1}{\sqrt{2}}$ &  $\frac{1}{\sqrt{2}}$  & 0 & 0 & 0 & 0\\
$\fra[2,{[c]_{\gph_2}}]$ & 0 & 0 & $\frac{\sqrt{3}}{4}+\frac{\sqrt{6}}{24}$ & $\frac{\sqrt{3}}{4}+\frac{\sqrt{6}}{24}$ &$\frac{\sqrt{3}}{4}+\frac{\sqrt{6}}{24}$ & $\frac{\sqrt{3}}{4}-\frac{\sqrt{6}}{8}$\\
$\fra[2,{[f]_{\gph_2}}]$ & 0 & 0 & $\frac14-\frac{\sqrt{2}}{8}$ & $\frac14-\frac{\sqrt{2}}{8}$ & $\frac14-\frac{\sqrt{2}}{8}$ & $\frac14+\frac{3\sqrt{2}}{8}$\\
\hline
$\frb[2,{[a]}_{\gph_3}]{(1)}$ & 0 & 0  & 0 & 0 & 0 & 0\\
$\frb[2,{[b]}_{\gph_3}]{(1)}$ &0  & 0  & 0 & 0 & 0 & 0\\
$\frb[2,{[c]}_{\gph_3}]{(1)}$ & 0 & 0  & $\frac{2}{3}+\frac{\sqrt{2}}{24}$ & $-\frac{1}{3}+\frac{\sqrt{2}}{24}$  & $-\frac{1}{3}+\frac{\sqrt{2}}{24}$ & $-\frac{\sqrt{2}}{8}$\\
$\frb[2,{[d]}_{\gph_3}]{(1)}$ & 0 & 0  & $-\frac{1}{3}+\frac{\sqrt{2}}{24}$ & $\frac{1}{6}+\frac{7\sqrt{2}}{24}$  & $\frac{1}{6}-\frac{5\sqrt{2}}{24}$ & $-\frac{\sqrt{2}}{8}$\\
$\frb[2,{[e]}_{\gph_3}]{(1)}$ & 0 & 0  & $-\frac{1}{3}+\frac{\sqrt{2}}{24}$ & $\frac{1}{6}-\frac{5\sqrt{2}}{24}$  & $\frac{1}{6}+\frac{7\sqrt{2}}{24}$ & $-\frac{\sqrt{2}}{8}$\\
$\frb[2,{[f]}_{\gph_3}]{(1)}$ & 0 & 0  & $-\frac{\sqrt{2}}{8}$ &  $-\frac{\sqrt{2}}{8}$ & $-\frac{\sqrt{2}}{8}$ & $\frac{3\sqrt{2}}{8}$\\
\hline
$\frb[2,{[a]}_{\gph_3}]{(2)}$ & $\frac12$ & $-\frac12$  & 0 & 0 & 0 & 0\\
$\frb[2,{[b]}_{\gph_3}]{(2)}$ & $-\frac12$  & $\frac12$  & 0 & 0 & 0 & 0\\
$\frb[2,{[c]}_{\gph_3}]{(2)}$ & 0 & 0  &  0 & 0 & 0 &0 \\
$\frb[2,{[d]}_{\gph_3}]{(2)}$ & 0 & 0  & 0 & $\frac{\sqrt{2}}{4}$  & $-\frac{\sqrt{2}}{4}$ & 0 \\
$\frb[2,{[e]}_{\gph_3}]{(2)}$ & 0 & 0  & 0 & $-\frac{\sqrt{2}}{4}$  & $\frac{\sqrt{2}}{4}$ & 0 \\
$\frb[2,{[f]}_{\gph_3}]{(2)}$ & 0 & 0  & 0 & 0 & 0 & 0\\
  \hline
\end{tabular}
\end{center}
\caption{Decimated framelets $\fra[2,\pV_{\gph_2}], \frb[2,\pV_{\gph_3}]{(1)}$ and $\frb[2,\pV_{\gph_3}]{(2)}$ at level $j=2$}
\label{tab:phi2psi2}
\end{table}

\item At level 1, set
\[
\FT{\scala_{1}}\left(\frac{\eigvm}{\Lambda_{1}}\right)=
\begin{cases}
1 & \ell=1\\
\frac{1}{\sqrt{2}} & \ell=2\\
0 & \mbox{otherwise.}
\end{cases}\quad
\FT{\scalb_{1}^{(1)}}\left(\frac{\eigvm}{\Lambda_{1}}\right)=
\begin{cases}
\frac{1}{\sqrt{2}} & \ell=2\\
\frac{1}{\sqrt{2}} & \ell=3\\
0 & \mbox{otherwise.}
\end{cases}
\]
which satisfies $\bigl|\FT{\scala_{1}}(\frac{\eigvm}{\Lambda_{1}})\bigr|^2
+\bigl|\FT{\scalb_{1}^{(1)}}(\frac{\eigvm}{\Lambda_{1}})\bigr|^2= \bigl|\FT{\scala_{2}}(\frac{\eigvm}{\Lambda_{2}})\bigr|^2$ for all $\ell$.
Letting the weights on $\VG_1$ as $\wV[1,{[a]_{\gph_1}}] = 2, \wV[2,{[c]_{\gph_1}}] = 4$, by \eqref{defn:fra.frb}, we then obtain $\fra[1,\pV]$ and $\frb[1,\pV]{(1)}$ in Table~\ref{tab:phi1psi1}.
\begin{table}[th]
\begin{center}
\begin{tabular}{c|cccccc}
  \hline
& $a$ & $b$ & $c$ & $d$ & $e$ & $f$\\
\hline
$\fra[1,{[a]_{\gph_1}}]$ & $\frac13+\frac{\sqrt{2}}{6}$ &  $\frac13+\frac{\sqrt{2}}{6}$  & $-\frac16+\frac{\sqrt{2}}{6}$ & $-\frac16+\frac{\sqrt{2}}{6}$ & $-\frac16+\frac{\sqrt{2}}{6}$ & $-\frac16+\frac{\sqrt{2}}{6}$\\
$\fra[1,{[c]_{\gph_1}}]$ & $\frac13-\frac{\sqrt{2}}{6}$ &  $\frac13-\frac{\sqrt{2}}{6}$  & $\frac13+\frac{\sqrt{2}}{12}$ & $\frac13+\frac{\sqrt{2}}{12}$ & $\frac13+\frac{\sqrt{2}}{12}$ & $\frac13+\frac{\sqrt{2}}{12}$\\
\hline
$\frb[1,{[a]_{\gph_2}}]{(1)}$ & $\frac13$ & $\frac13$ & $-\frac16$ & $-\frac16$ & $-\frac16$ & $-\frac16$ \\
$\frb[1,{[c]_{\gph_2}}]{(1)}$ &$-\frac{\sqrt{6}}{12}$  & $-\frac{\sqrt{6}}{12}$  & $\frac{\sqrt{6}}{12}$ &$\frac{\sqrt{6}}{12}$ & $\frac{\sqrt{6}}{12}$ & $-\frac{\sqrt{6}}{12}$\\
$\frb[1,{[f]_{\gph_2}}]{(1)}$ & $-\frac{\sqrt{2}}{12}$ & $-\frac{\sqrt{6}}{12}$ & $-\frac{\sqrt{2}}{12}$ & $-\frac{\sqrt{2}}{12}$ & $-\frac{\sqrt{2}}{12}$ &$-\frac{5\sqrt{2}}{12}$\\
  \hline
\end{tabular}
\end{center}
\caption{Decimated framelets $\fra[1,\pV_{\gph_1}]$ and $\frb[1,\pV_{\gph_2}]{(1)}$ at level $j=1$}
\label{tab:phi1psi1}
\end{table}

\item At level 0,  set
\[
\FT{\scala_{0}}\left(\frac{\eigvm}{\Lambda_{0}}\right)=
\begin{cases}
1 & \ell=1\\
0 & \mbox{otherwise.}
\end{cases}\quad
\FT{\scalb_{0}^{(1)}}\left(\frac{\eigvm}{\Lambda_{0}}\right)=
\begin{cases}
\frac{1}{\sqrt{2}} & \ell=2\\
0 & \mbox{otherwise.}
\end{cases}
\]
Here $\bigl|\FT{\scala_{0}}(\frac{\eigvm}{\Lambda_{0}})\bigr|^2
+\bigl|\FT{\scalb_{0}^{(1)}}(\frac{\eigvm}{\Lambda_{0}})\bigr|^2= \bigl|\FT{\scala_{1}}(\frac{\eigvm}{\Lambda_{1}})\bigr|^2$ for all $\ell$.
Let the weights on $\VG_0$ as $\wV[1,{[a]_{\gph_0}}] = 6$, then by \eqref{defn:fra.frb}, we obtain $\fra[0,\pV]$ and $\frb[0,\pV]{(1)}$ in Table~\ref{tab:phi0psi0}.
\begin{table}[th]
\begin{center}
\begin{tabular}{c|cccccc}
  \hline
& $a$ & $b$ & $c$ & $d$ & $e$ & $f$\\
\hline
$\fra[1,{[a]_{\gph_1}}]$ &$\frac{1}{6}$ &$\frac{1}{6}$ &$\frac{1}{6}$ &$\frac{1}{6}$ &$\frac{1}{6}$ &$\frac{1}{6}$\\
\hline
$\frb[1,{[a]_{\gph_2}}]{(1)}$ & $\frac13$ & $\frac13$ & $-\frac16$ & $-\frac16$ & $-\frac16$ & $-\frac16$\\
$\frb[1,{[c]_{\gph_2}}]{(1)}$ &$-\frac{\sqrt{2}}{6}$  &$-\frac{\sqrt{2}}{6}$  & $\frac{\sqrt{2}}{12}$ &$\frac{\sqrt{2}}{12}$ & $\frac{\sqrt{2}}{12}$ & $\frac{\sqrt{2}}{12}$\\
  \hline
\end{tabular}
\end{center}
\caption{Decimated framelets $\fra[0,\pV_{\gph_0}]$ and $\frb[0,\pV_{\gph_1}]{(1)}$ at level $j=0$}
\label{tab:phi0psi0}
\end{table}
\end{enumerate}
It can be verified that conditions in \eqref{tight:QN} and \eqref{thmeq:DFS:2scale:alpha:beta:simplified} hold for the above framelets constructed through the above steps (1)--(12). Hence, by Theorem~\ref{thm:DFS} the framelet system for $J_1=0,1,2,3$,
\[
\{\fra[{J_1},\pV] \setsep \pV\in\VG_{J_1}\}\cup \{\frb{(n)} \setsep \pV\in\VG_{j+1}, \: j = J_1,\ldots,J\}
\]
is a decimated tight frame for $L_2(\gph)$.

\section{Numerical Examples}\label{sec:numer}
In this section, we present three experiments for the {\fgt} algorithm. We show the computational complexity analysis of {\fgt} in Section~\ref{exp1} and the multiscale analysis by decimated framelets for the real-world traffic network in Section~\ref{exp2}. Also, we use {\fgt} to define a spectral graph convolution in Section~\ref{sec:fgconv}, which shows good performance in graph-level classification. The Python and Matlab codes for {\fgt} can be downloaded from Github\footnote{\url{https://github.com/YuGuangWang/FGT}}.

\subsection{Filter Bank for Decimated Framelets}
We show examples of the filter bank for decimated framelets on a graph with different numbers of high passes. The constructed filter banks use the parameters in the example shown in Section~\ref{sec:constr.dtf}. Figure~\ref{fig:filter_banks} shows three types of filter banks, with the numbers of high passes 1, 2, or 3. For each case, we show two examples of filter bank with a slightly different choice of intersection points (between the curves of low-pass and high-pass filters). The explicit values for the parameters ($\zeta_c^a, \zeta_c^b, \zeta_c^{b^{(1)}}$ and $\zeta_c^{b^{(2)}}$ are given in the caption of the figure.
From left to right, we show the filters from the finest level to the coarsest level. Note that the low-pass filter's support is strictly $[0, N_j]$ for the chain's $j$th level. The lower end of any high-pass filter's support is always positive, and the support is dependent on the chain structure.

\begin{figure*}[]
    \centering
    \scriptsize
Filter bank with 3 high passes
\begin{tabular}{cccc}
  \includegraphics[width=.19\textwidth]{figures/framelet_3high_pass_level3_ac025_b1c025_b2c025.png}
  &\includegraphics[width=.19\textwidth]{figures/framelet_3high_pass_level2_ac025_b1c025_b2c025.png}
  &\includegraphics[width=.19\textwidth]{figures/framelet_3high_pass_level1_ac025_b1c025_b2c025.png}
  &\includegraphics[width=.19\textwidth]{figures/framelet_3high_pass_level0_ac025_b1c025_b2c025.png}\\
  \includegraphics[width=.19\textwidth]{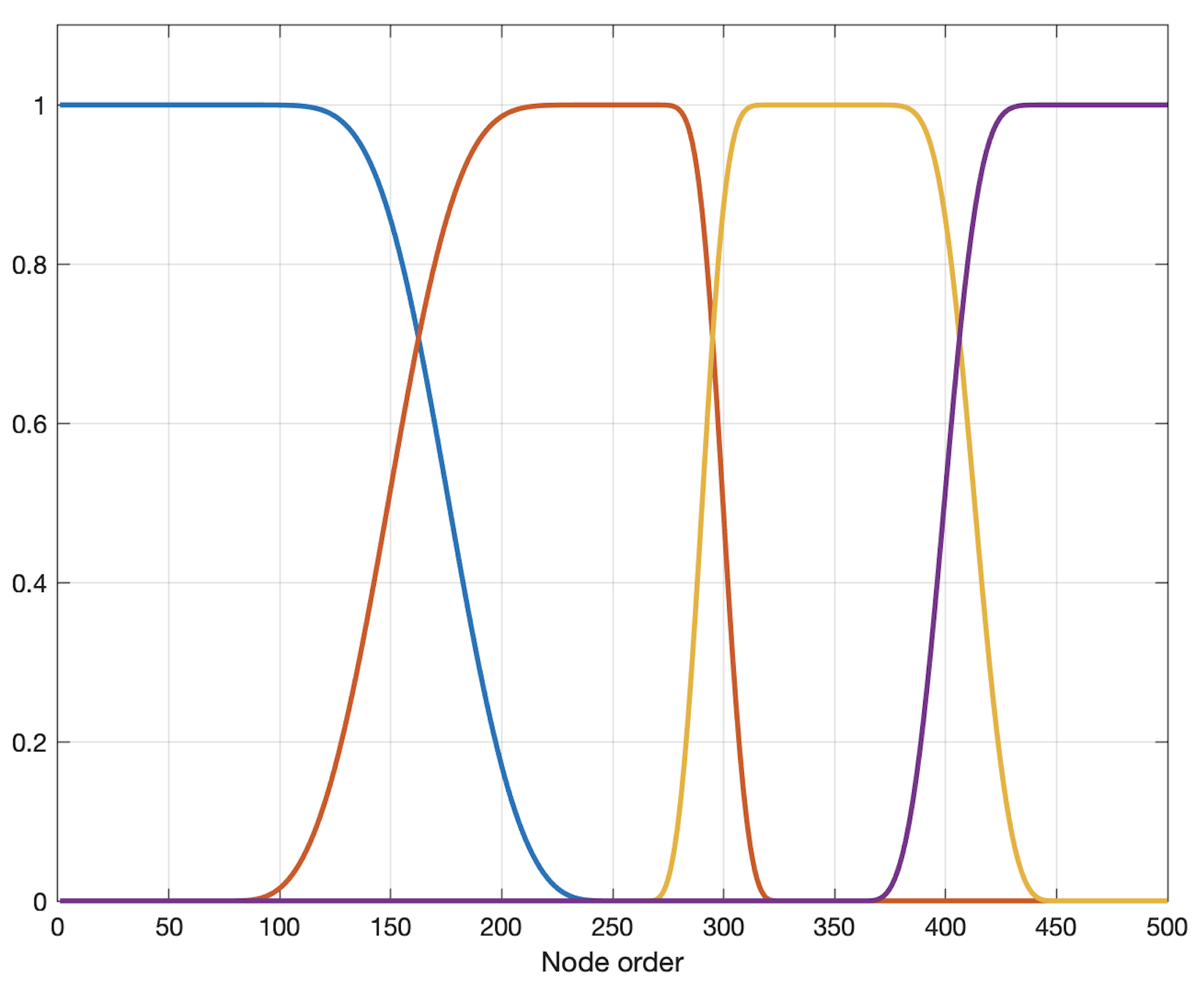}
  &\includegraphics[width=.19\textwidth]{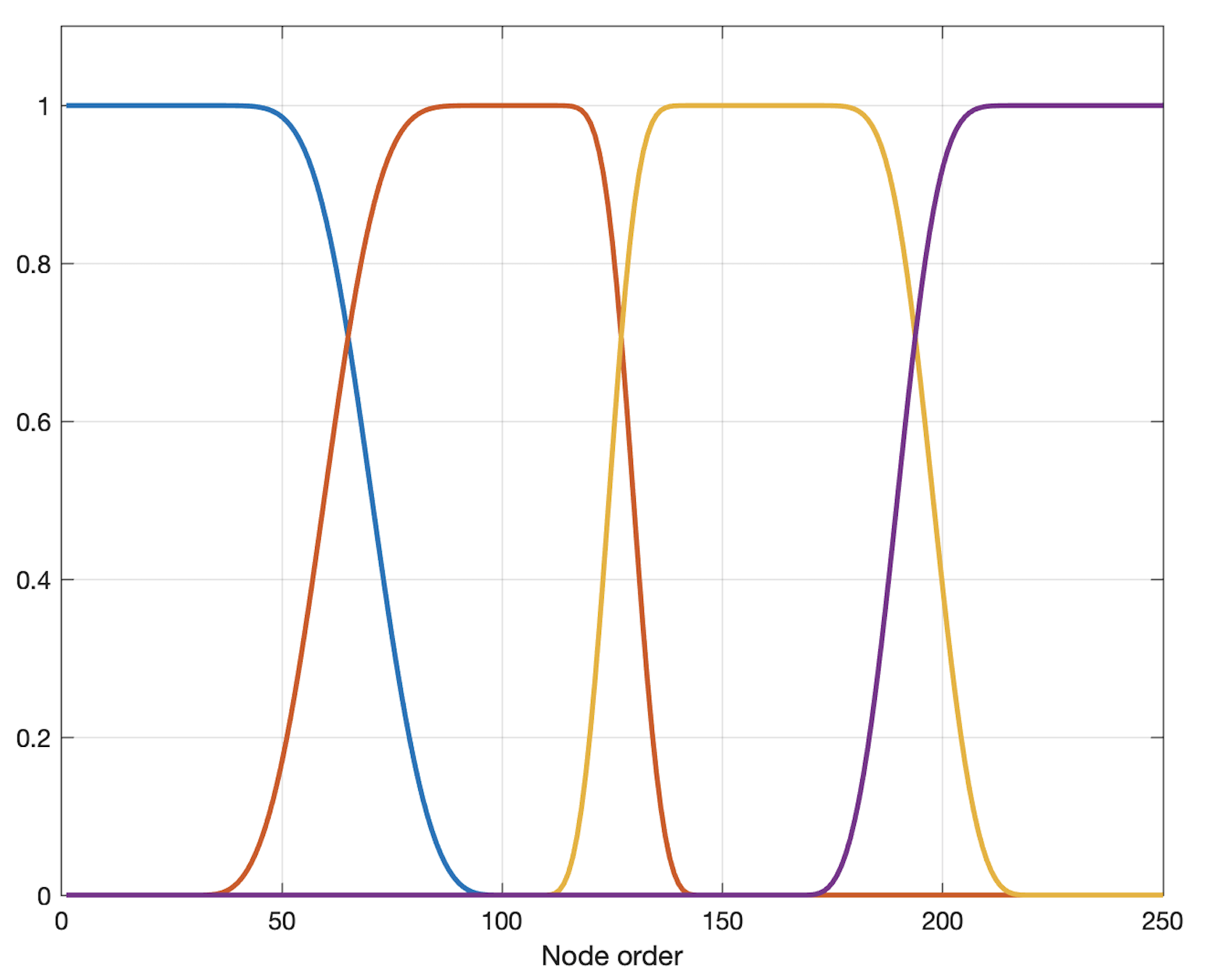}
  &\includegraphics[width=.19\textwidth]{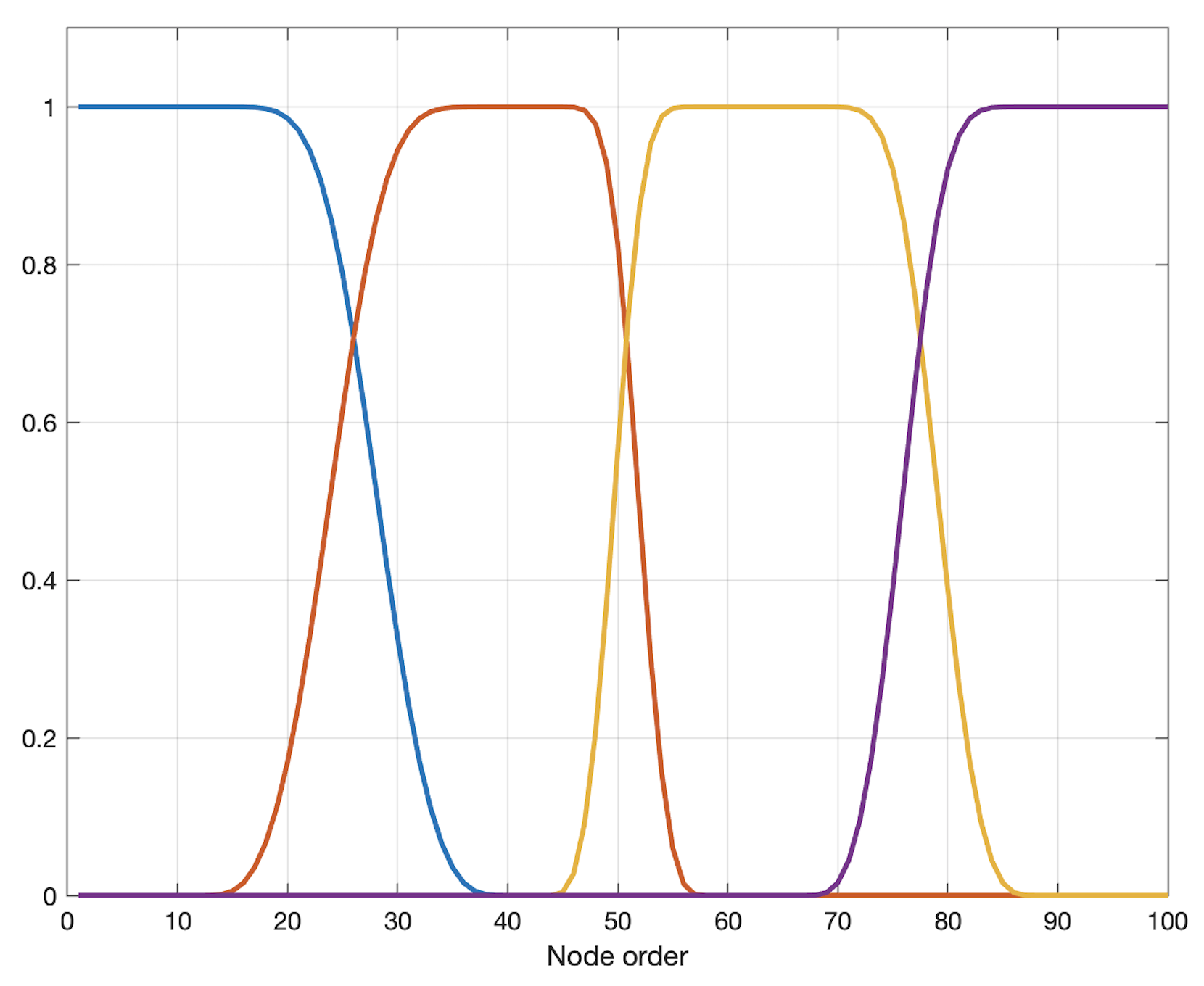}
  &\includegraphics[width=.19\textwidth]{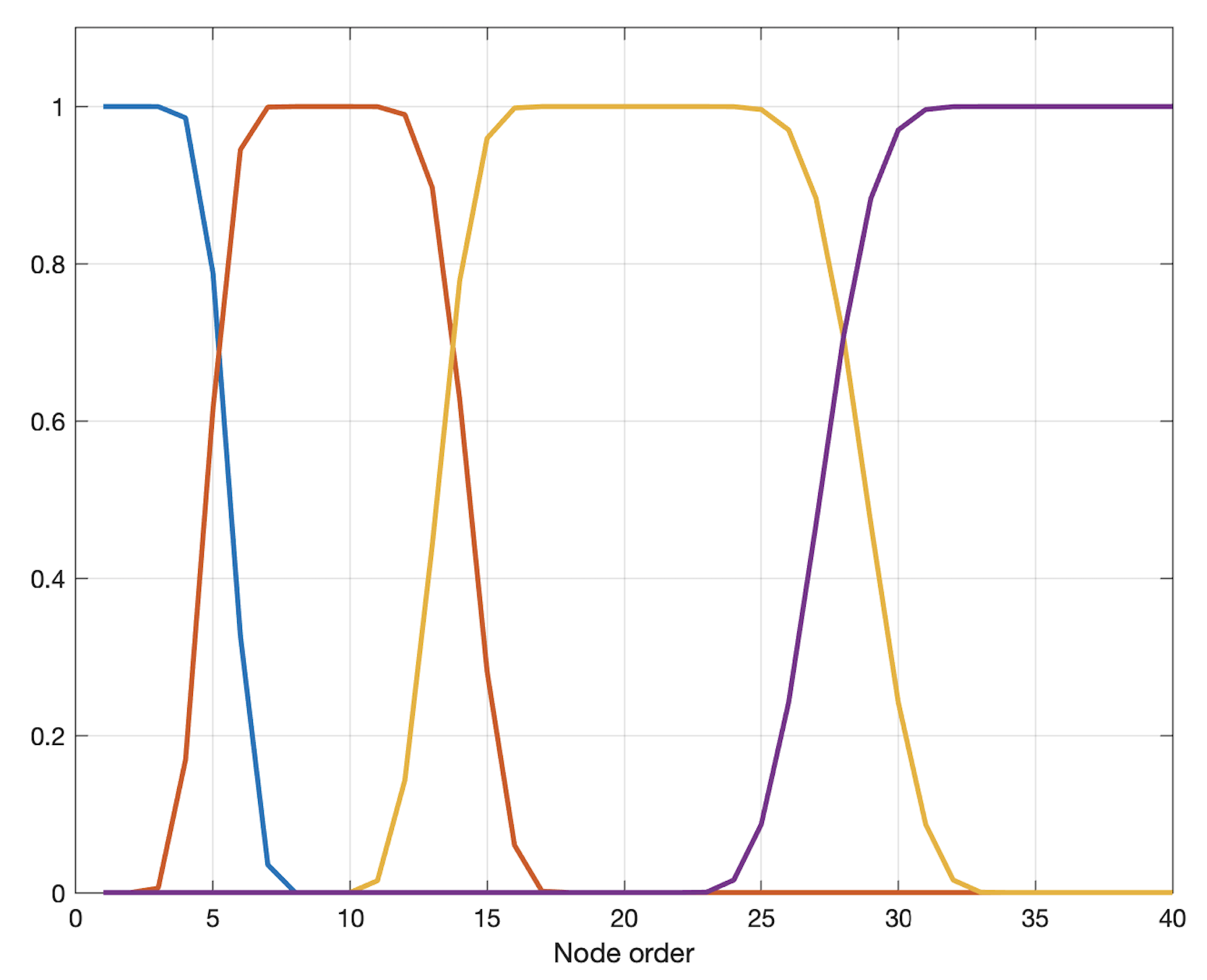}
\end{tabular}\\[2mm]
Filter bank with 2 high passes
\begin{tabular}{cccc}
  \includegraphics[width=.19\textwidth]{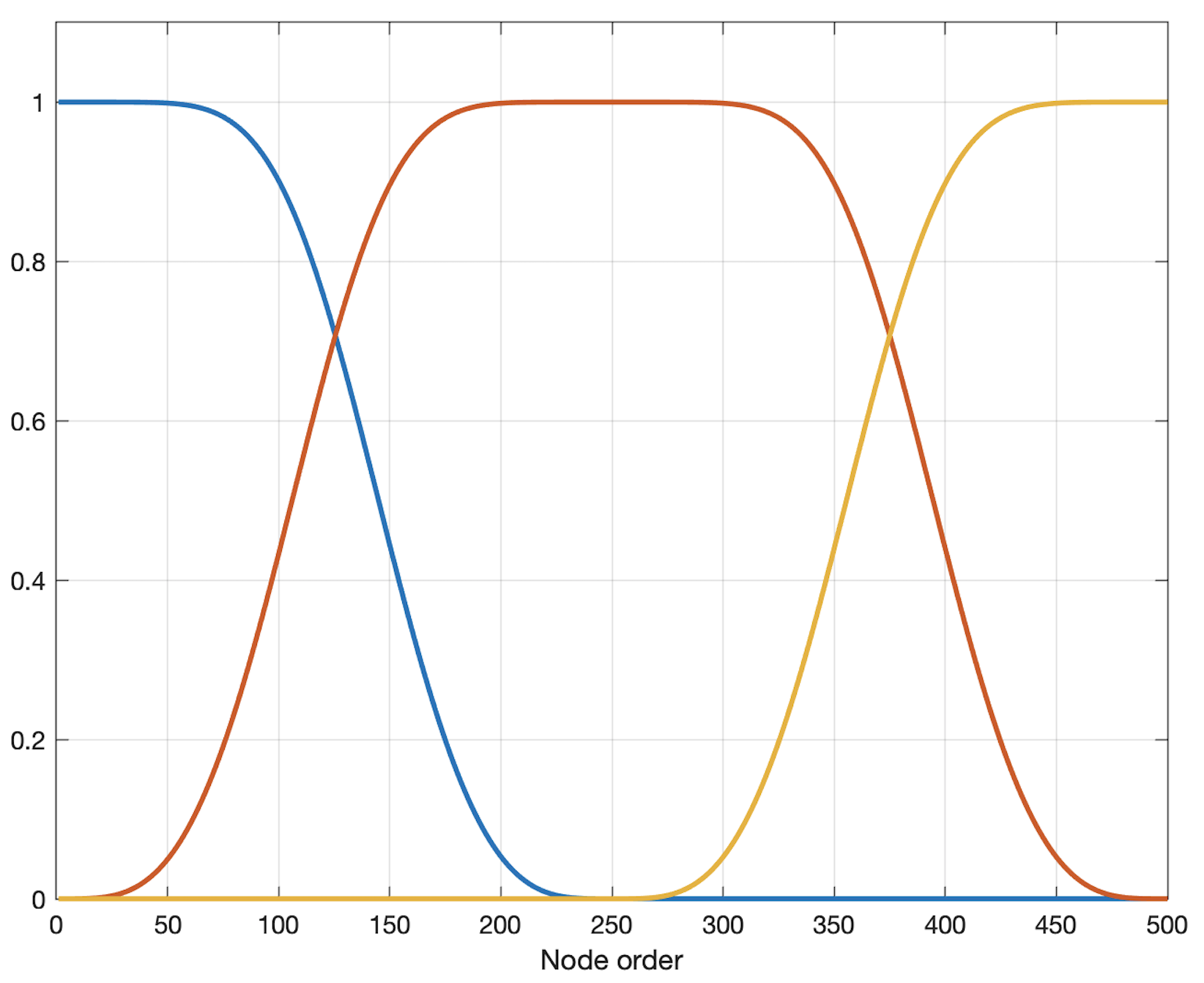}
  &\includegraphics[width=.19\textwidth]{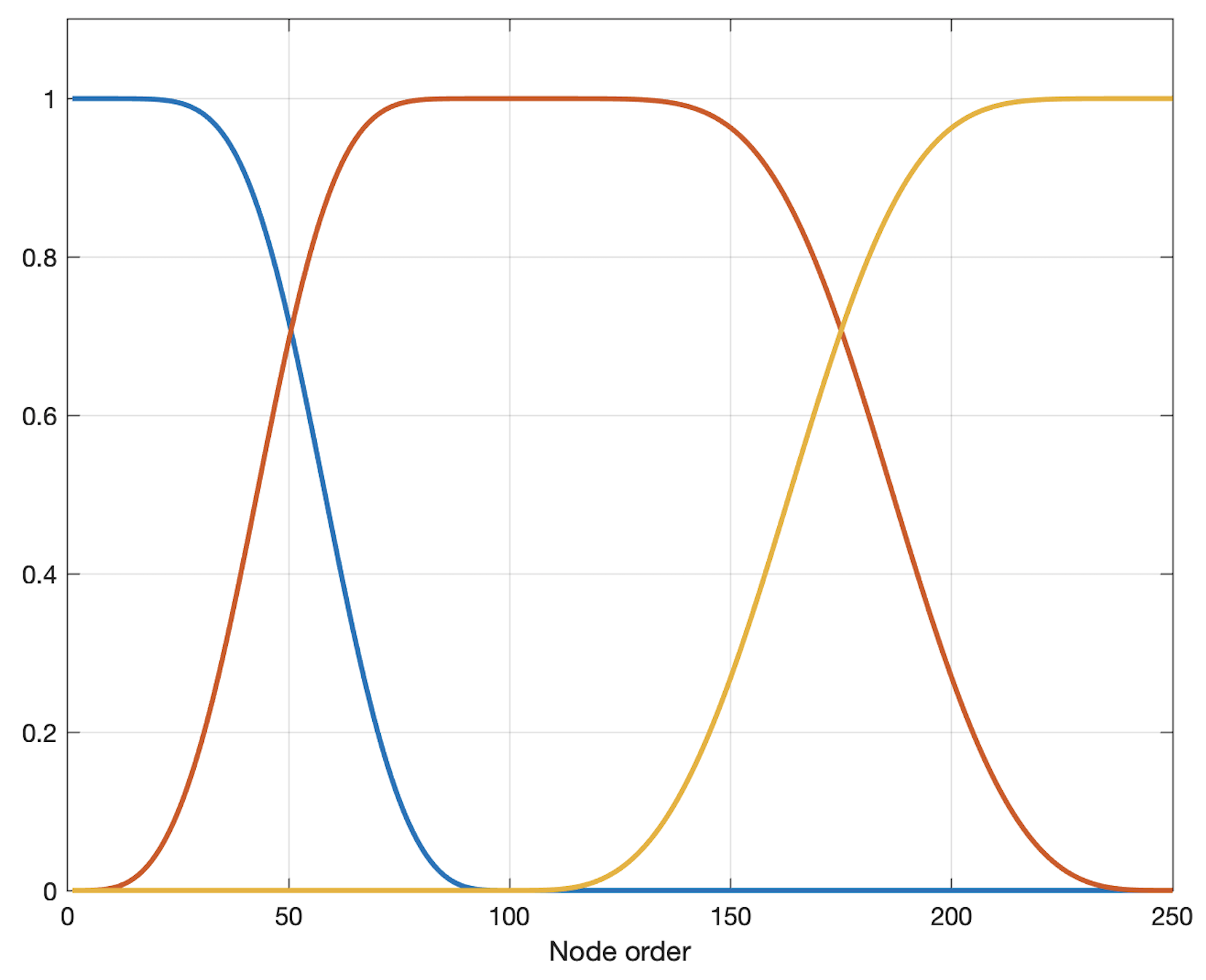}
  &\includegraphics[width=.19\textwidth]{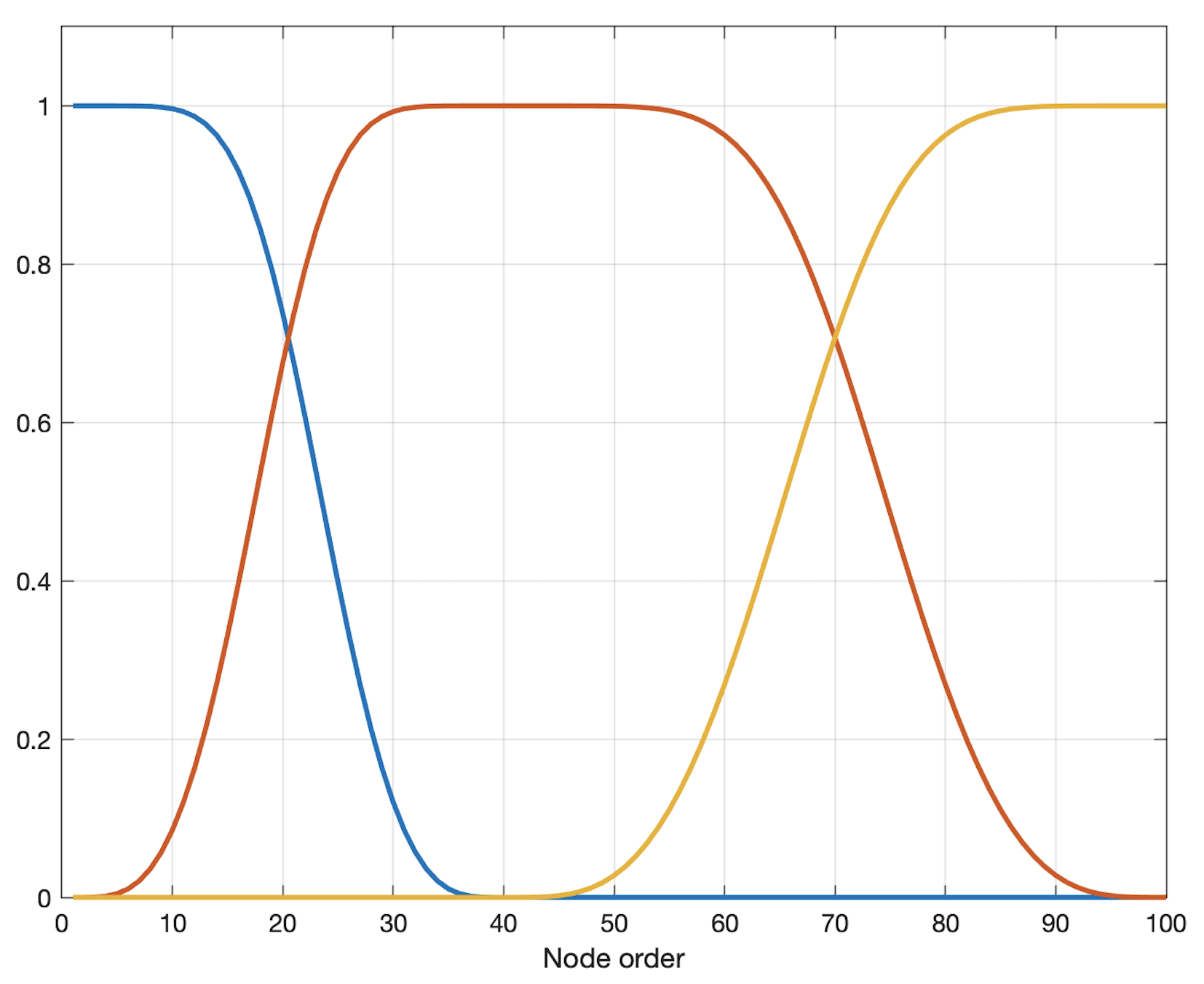}
  &\includegraphics[width=.19\textwidth]{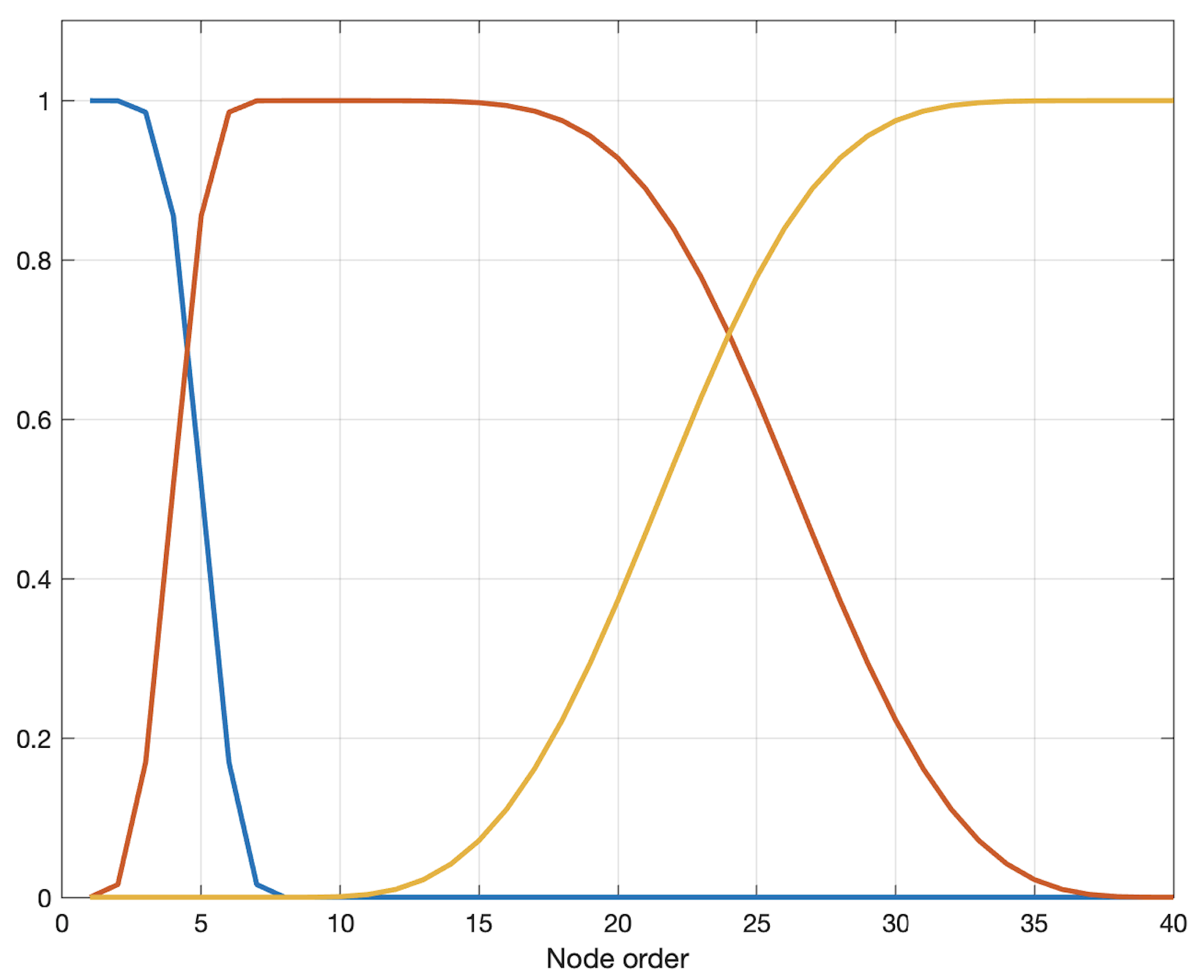}\\
  \includegraphics[width=.19\textwidth]{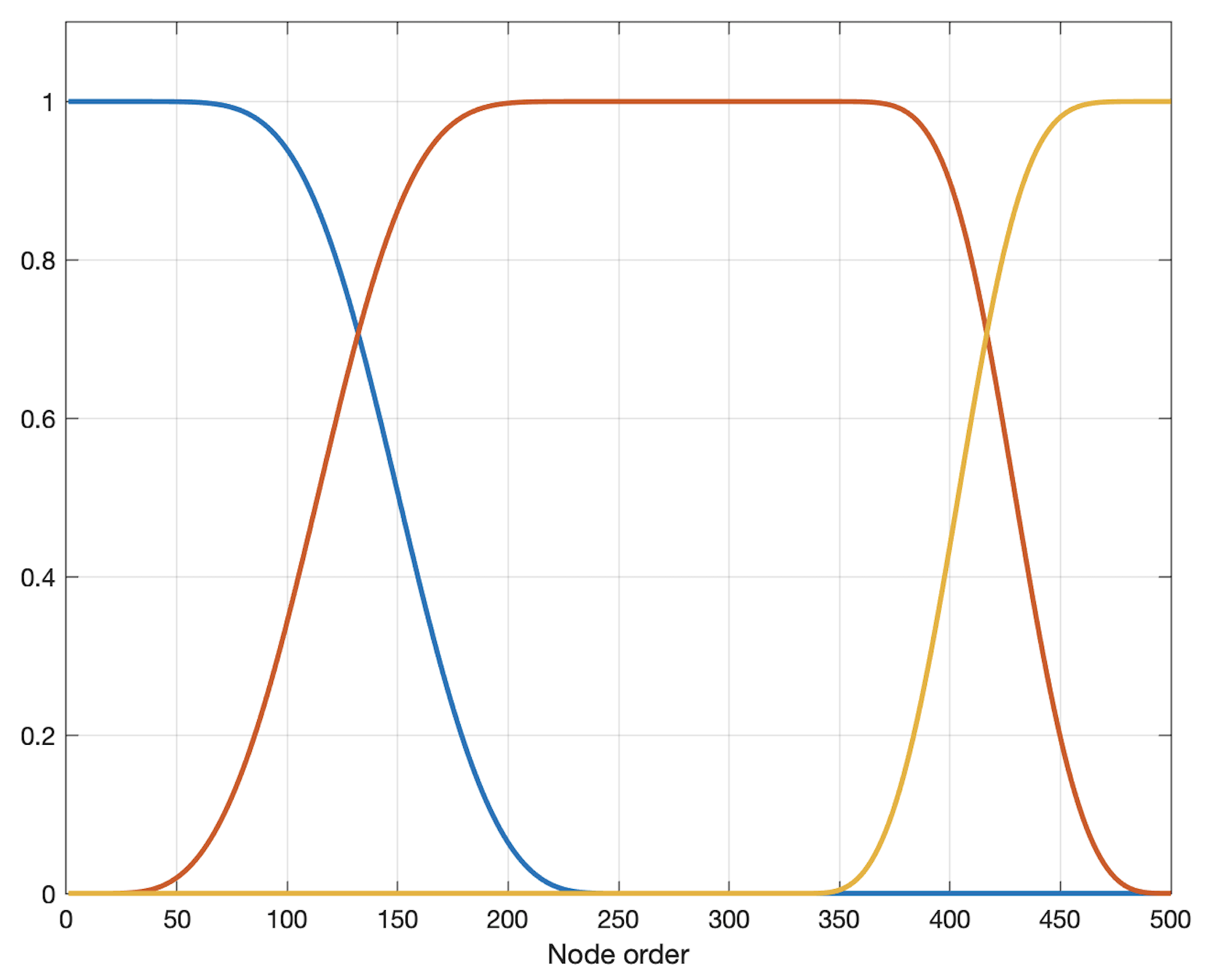}
  &\includegraphics[width=.19\textwidth]{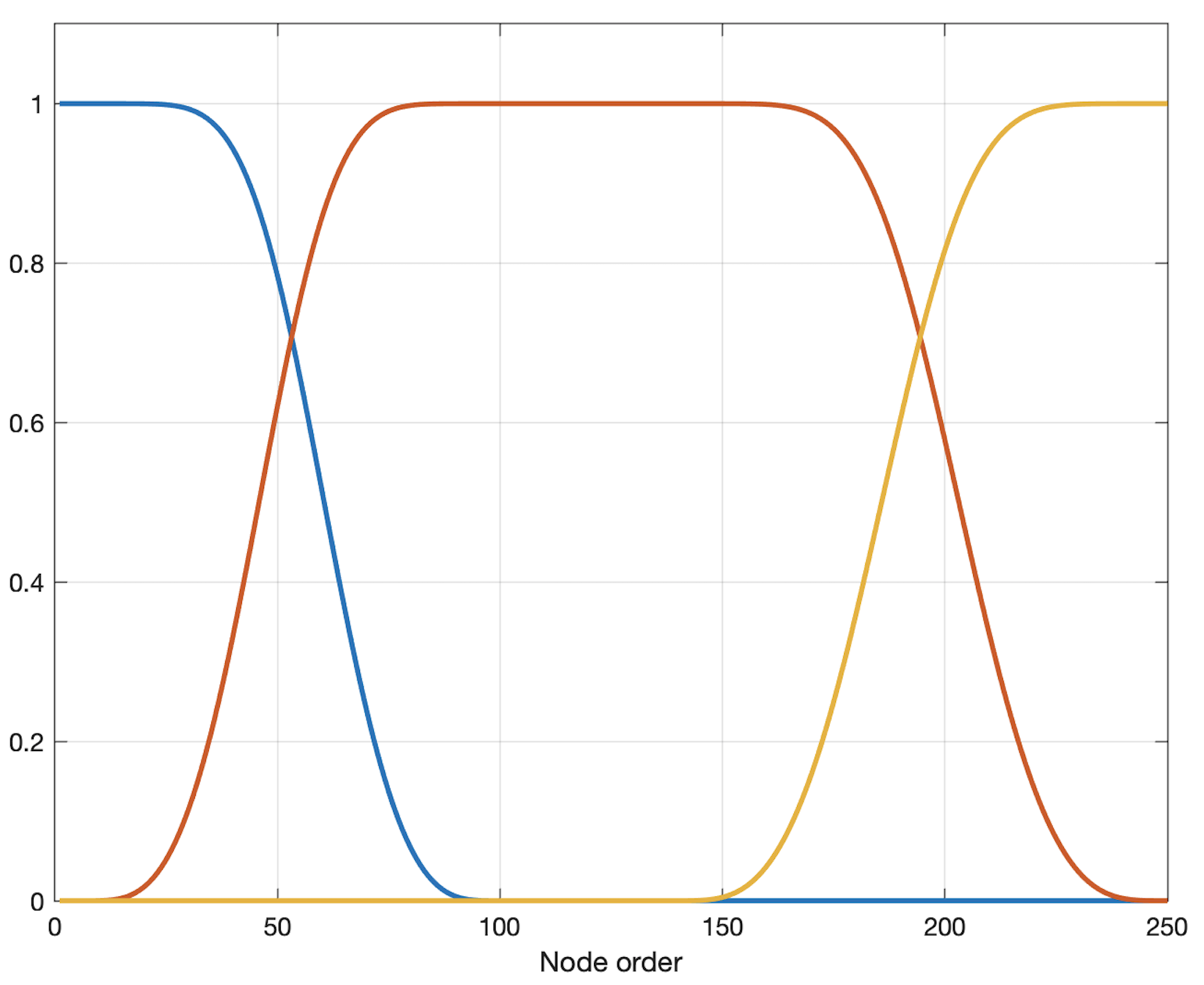}
  &\includegraphics[width=.19\textwidth]{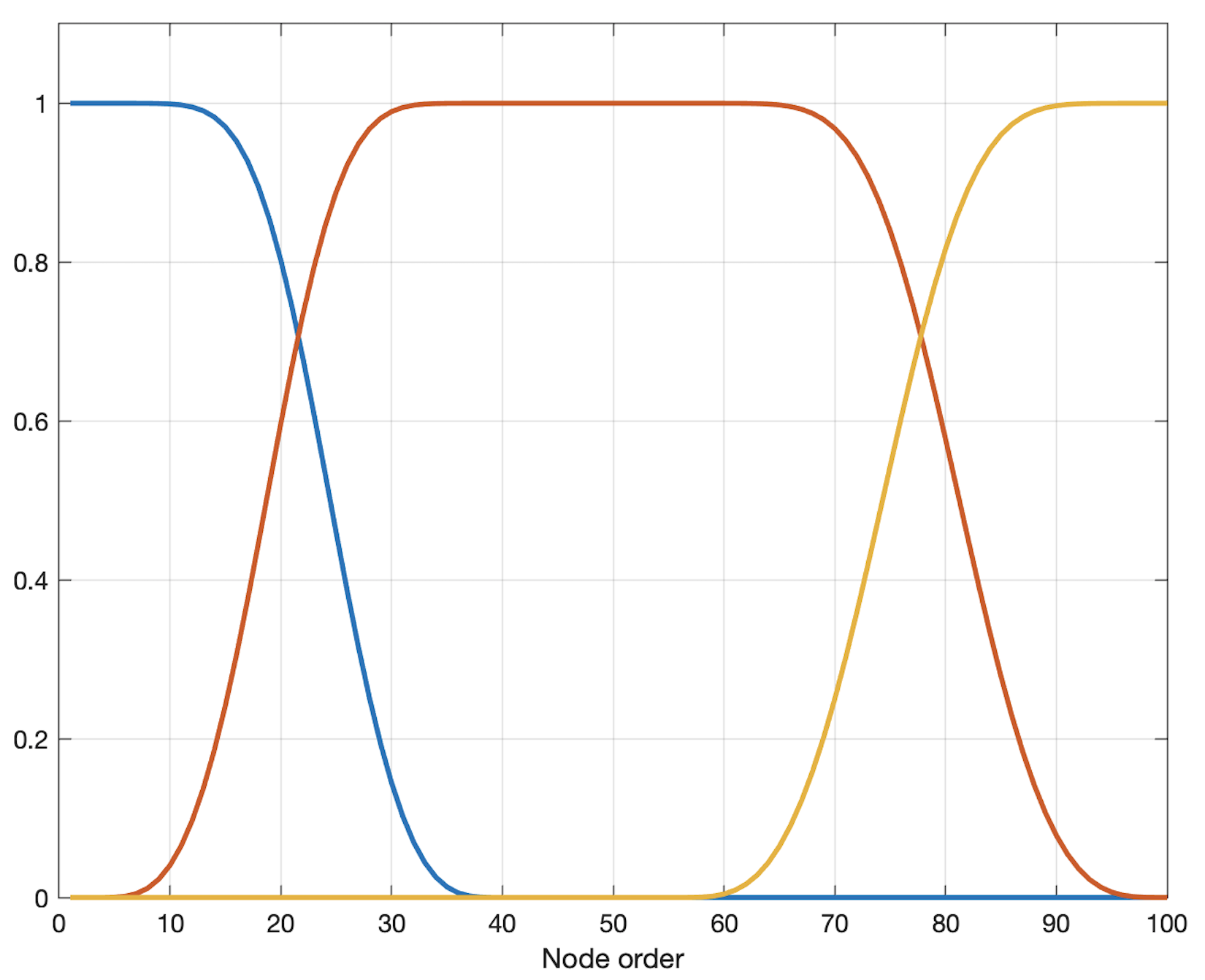}
  &\includegraphics[width=.19\textwidth]{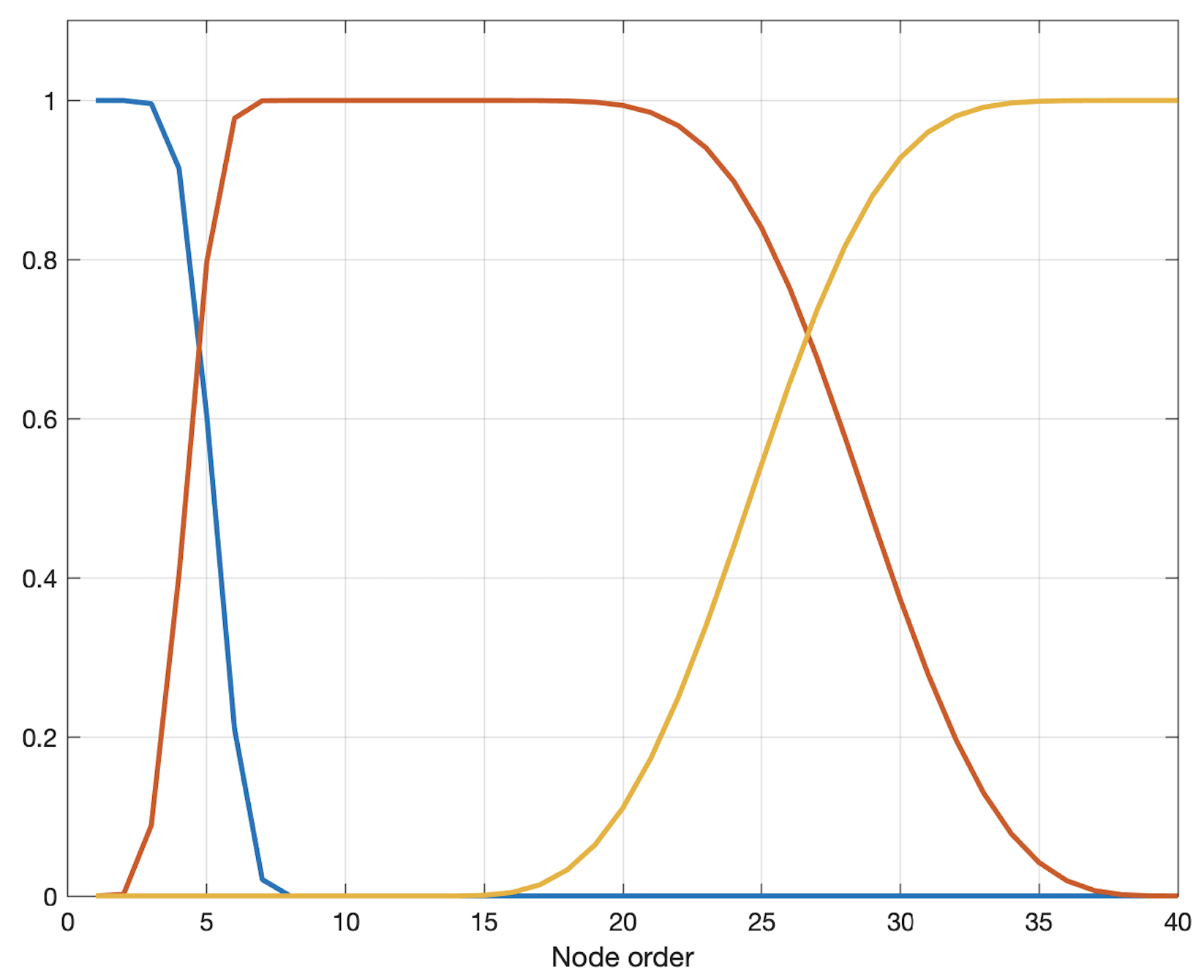}
\end{tabular}\\[2mm]
Filter bank with 1 high pass
\begin{tabular}{cccc}
  \includegraphics[width=.19\textwidth]{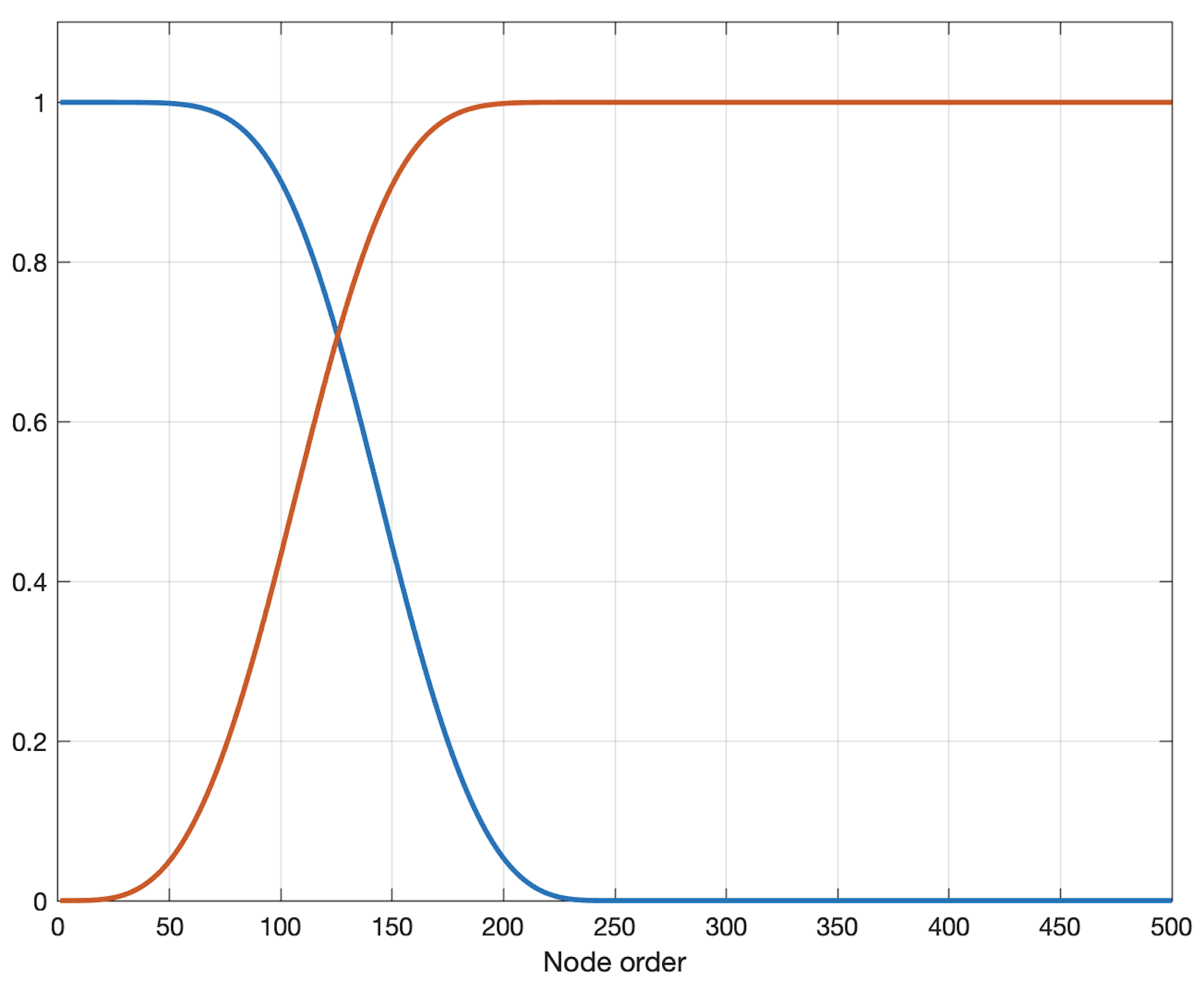}
  &\includegraphics[width=.19\textwidth]{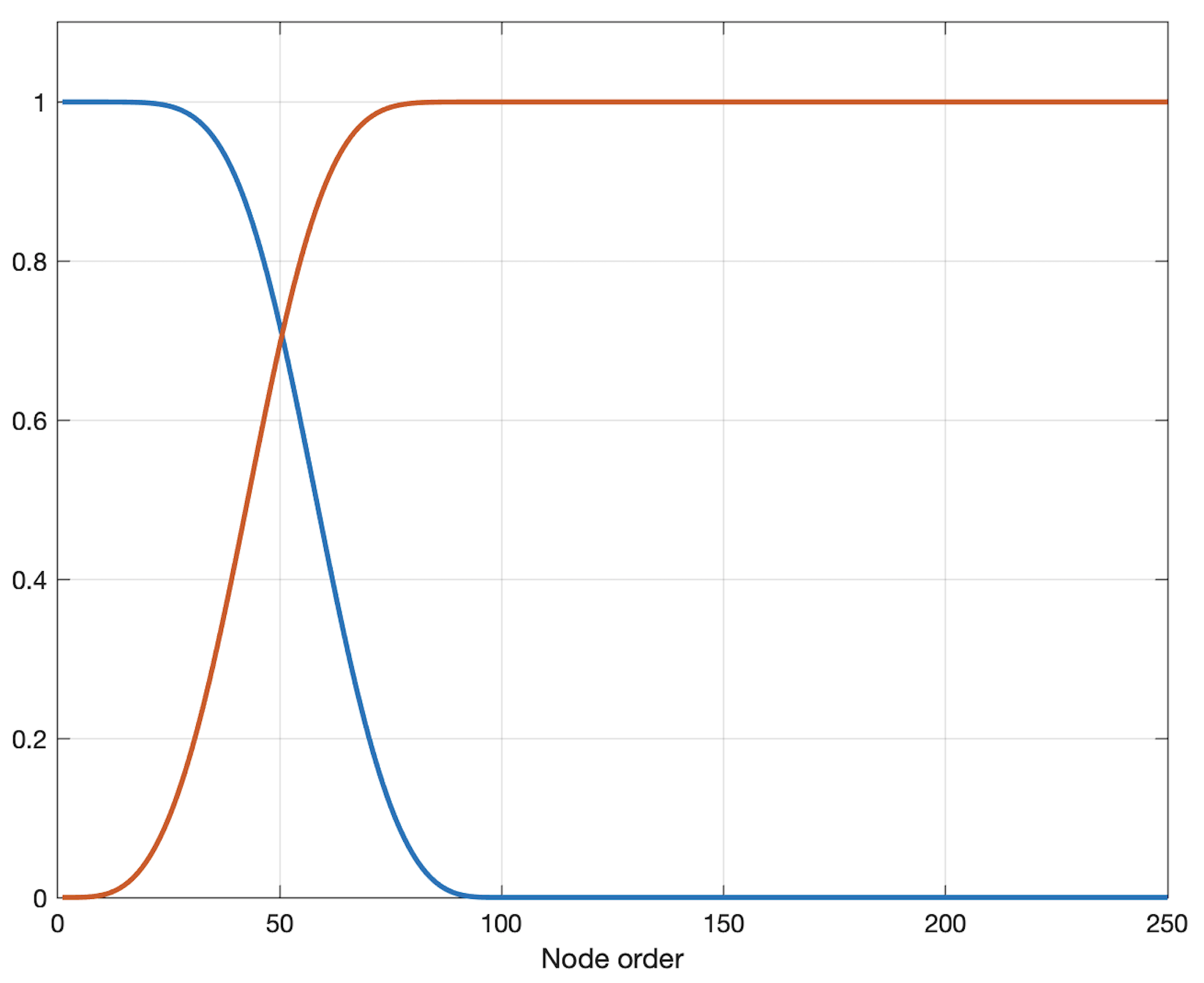}
  &\includegraphics[width=.19\textwidth]{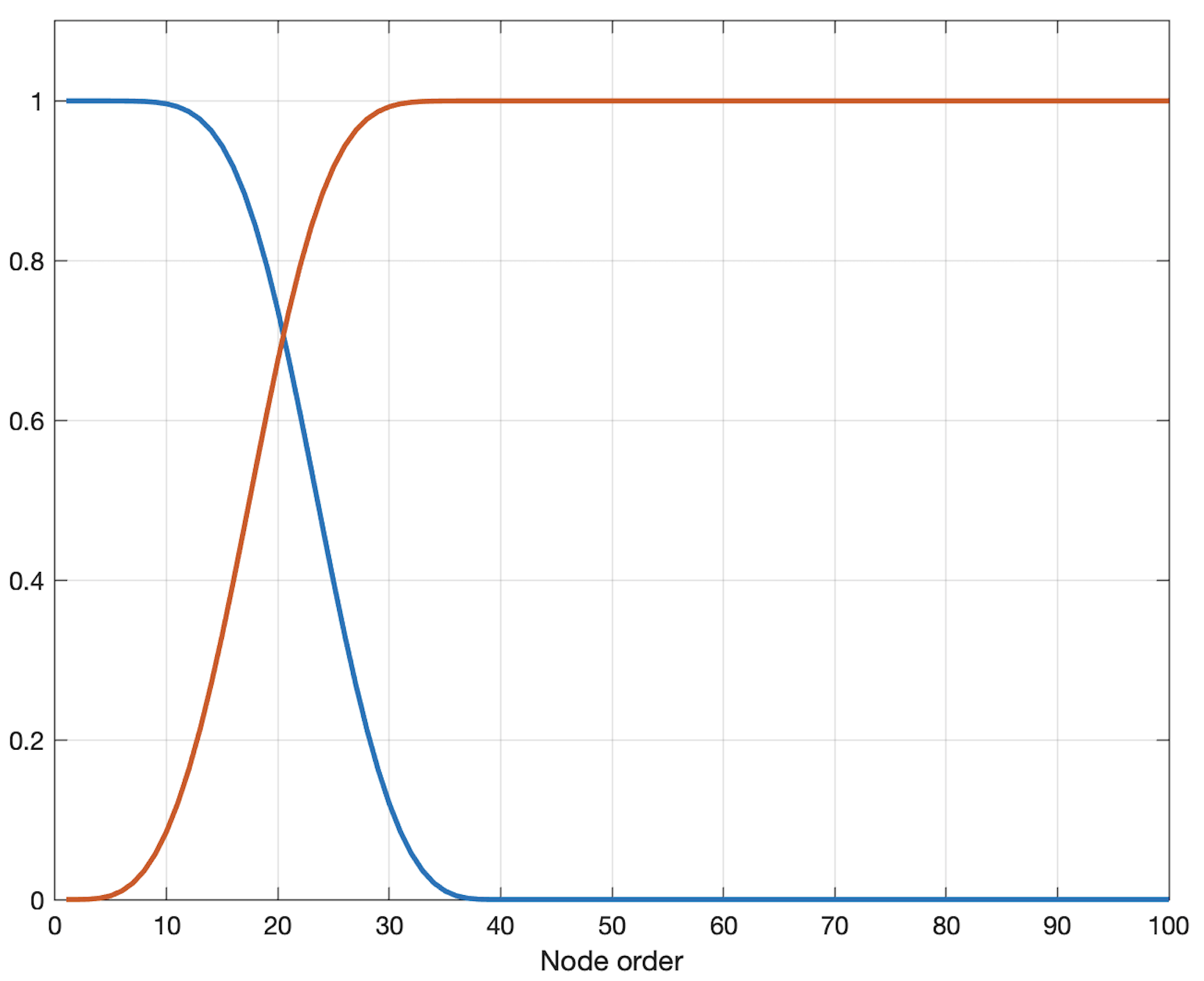}
  &\includegraphics[width=.19\textwidth]{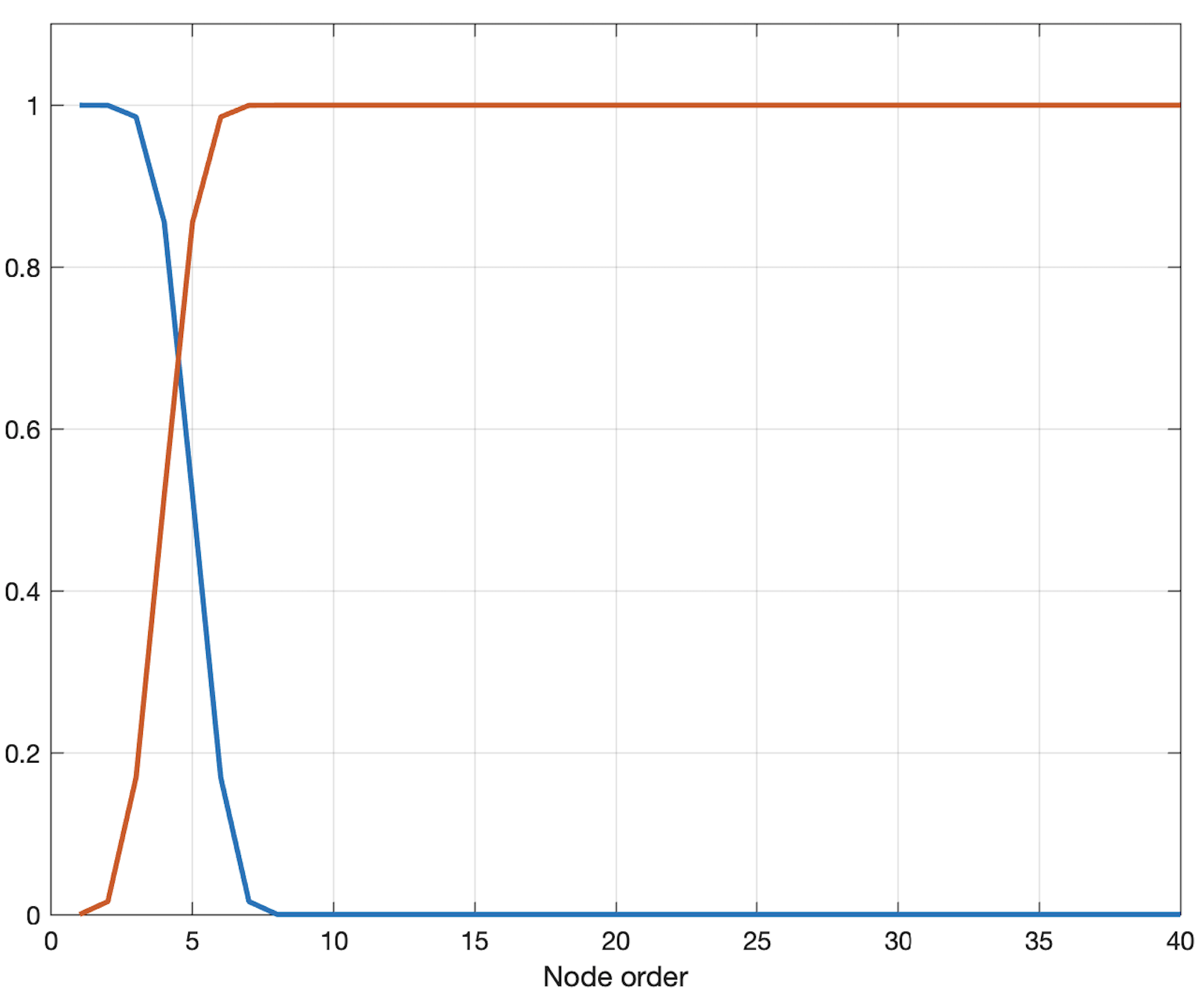}\\
  \includegraphics[width=.19\textwidth]{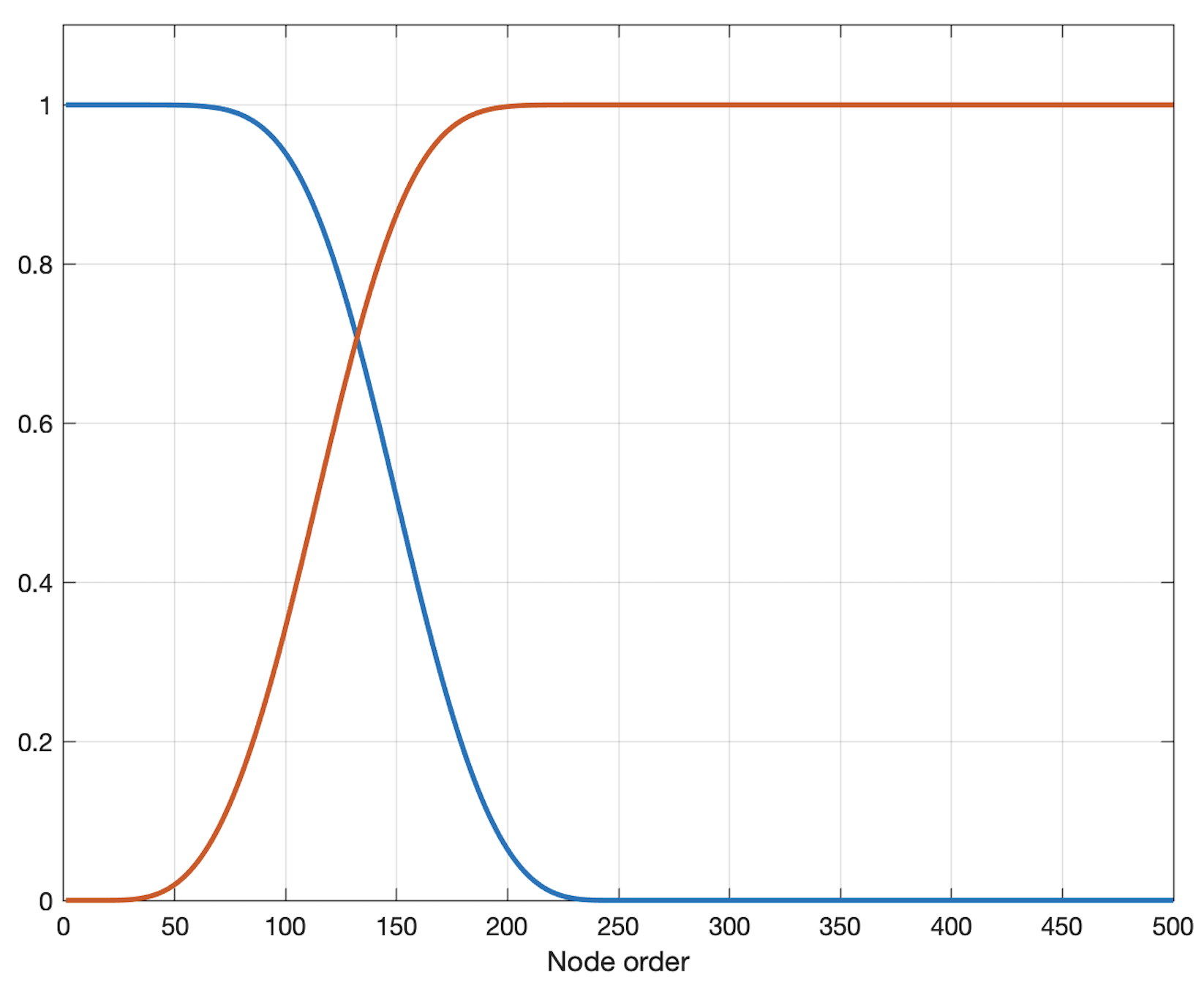}
  &\includegraphics[width=.19\textwidth]{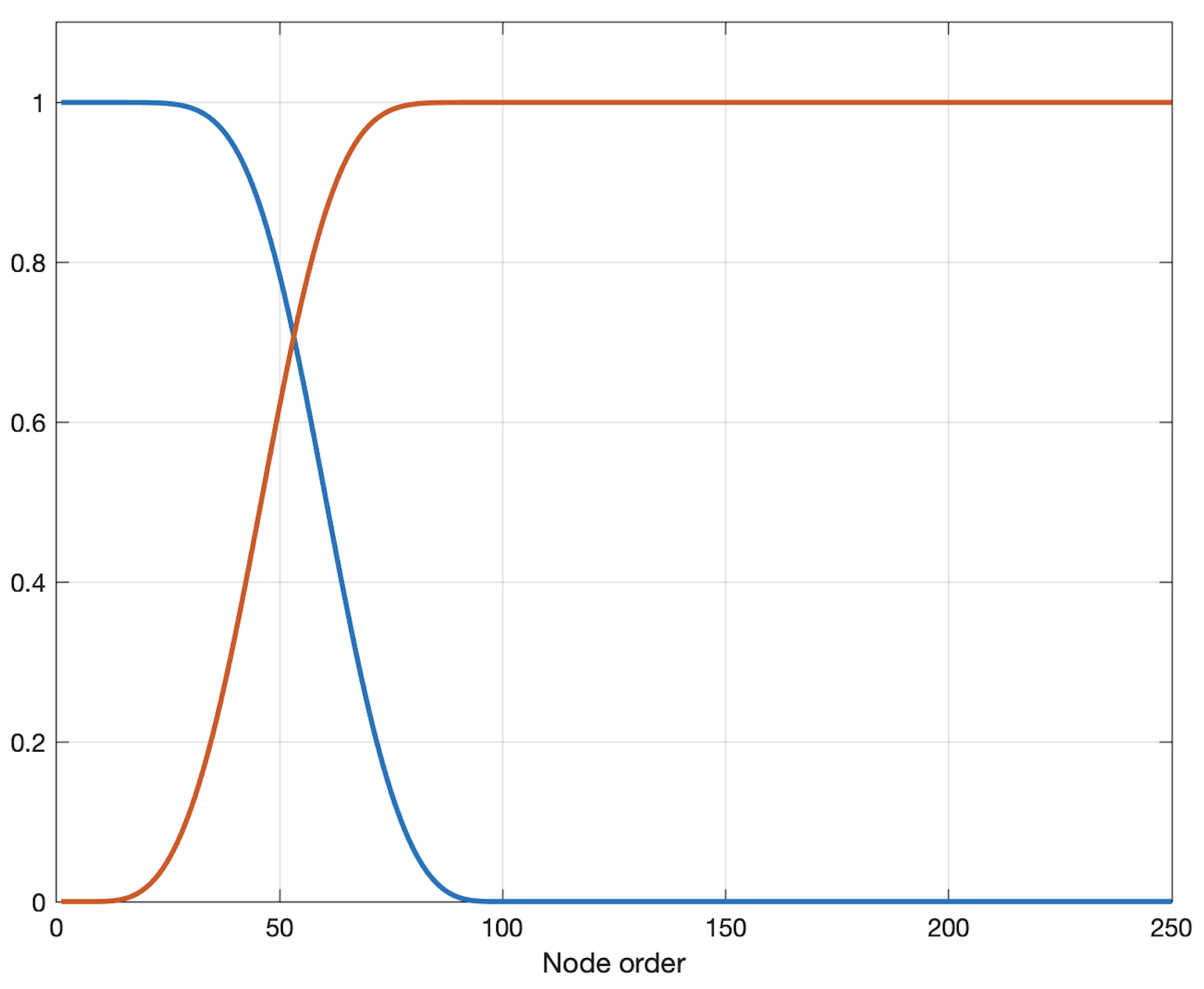}
  &\includegraphics[width=.19\textwidth]{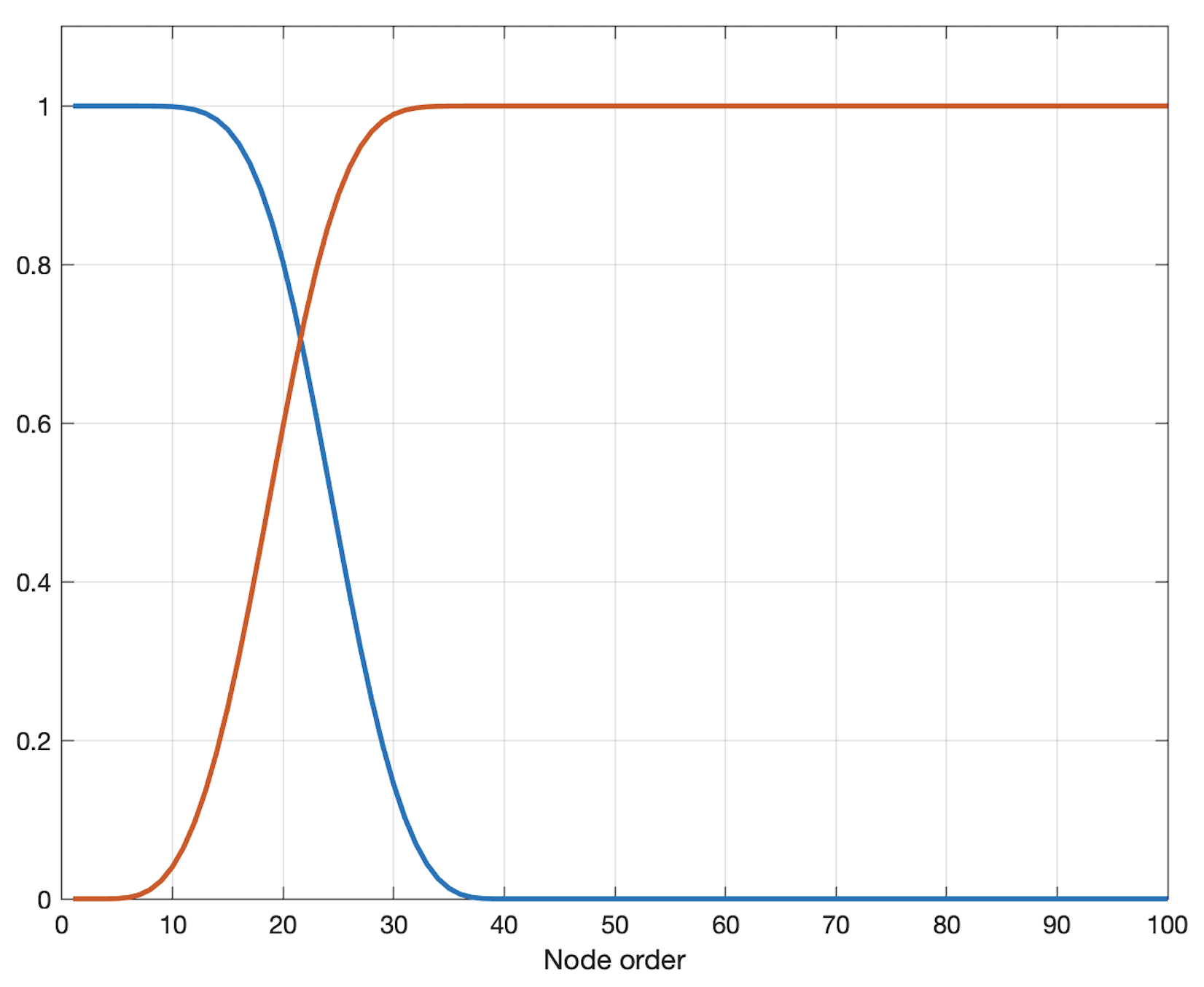}
  &\includegraphics[width=.19\textwidth]{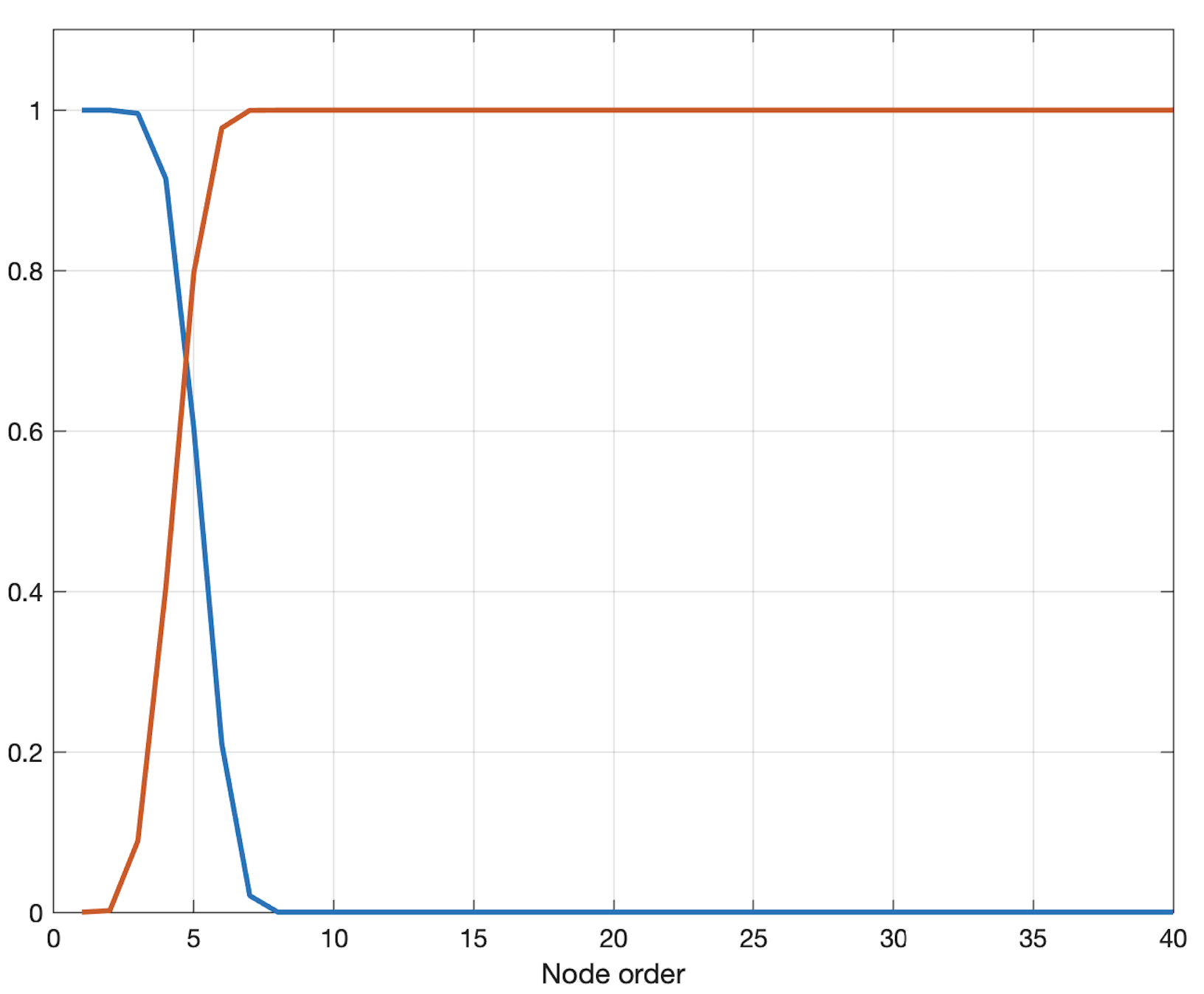}
\end{tabular}
\begin{minipage}{\textwidth}
    \caption{Filter banks with different number of high passes and different intersection points.
    For each filter bank case, from left to right are the pictures for the filter banks for the  chain level from finest to the second last coarsest. For 3-high pass case, the top and bottom use the intersection point parameters $\zeta_a=0.25, \zeta_c^{b^{(1)}}=0.25, \zeta_c^{b^{(2)}}=0.25$ and $\zeta_a=0.3, \zeta_c^{b^{(1)}}=0.2, \zeta_c^{b^{(2)}}=0.3$ respectively. For the 2-high pass case, the top and bottom use the parameters $\zeta_c^a=0.25, \zeta_c^{b}=0.25$ and $\zeta_a=0.3, \zeta_c^{b}=0.2$. For 1-high pass case,
    the top and bottom use the parameters $\zeta_c^a=0.25$ and $\zeta_c^a=0.3$.
    The graphs of the chain have 500, 250, 100, 40 and 8 from finest to coarsest. Note that we do not need a filter bank for the coarsest level.}
    \label{fig:filter_banks}
    \end{minipage}
\end{figure*}

\subsection{Computational Complexity of {\fgt} on Random Graphs}
\label{exp1}
In the first experiment, we show the computational cost of the Fast Fourier transforms (DFT [Algorithm~\ref{alg:DFT:G}] \& ADFT [Algorithm~\ref{alg:ADFT:G}]) and the fast framelet transforms (Decomposition [Algorithm~\ref{alg:decomp.multi.level}] \& Reconstruction [Algorithm~\ref{alg:reconstr.multi.level}] for {\fgt}). We demonstrate the computational complexities by timing the elapsed time for each of these algorithms on a set of randomly generated graphs with different sizes ranging from $500$ nodes to $30,000$ nodes. We simulate each adjacency matrix $\adjG\sim\mathsf{U}_{(0, 1)}$ with $\adjG_{ij}=\adjG_{ji}$ if $i\neq j$ and $\adjG_{ij}=0$ if $i=j$ for $i, j=1,2,\ldots,N$, where $N$ denotes the number of nodes of the graph. 
This implies that the simulated graph does not contain any isolated node. We use the uninformative feature (random scalar) for each node of all the randomly generated graphs. We build the coarse-grained chains with $6$ levels for the graphs larger than $2,500$ nodes; $5$ levels for the graphs with sizes between $1,000$ and $2,500$ nodes; $4$ levels for the graphs smaller than $1,000$ nodes. The choice of the clustering method and the resulting chain will have some impact on the performance of the GNNs with framelet convolution. Here, as an empirical study, we drop roughly $60\%$ of the nodes in every coarsening. All the coarse-grained chains used in Sections~\ref{exp1} \&~\ref{exp2} are generated by METIS \citep{karypis1998fast}, which is a computationally efficient graph partitioning method. Algorithm~\ref{algo:haar_basis} is applied to construct the Haar-like orthonormal basis for each coarse-grained chain for DFTs and {\fgt}s.

The computing environment of this experiment is MATLAB\textsuperscript{\textregistered} R2019b installed on a macOS Catalina machine with 2.3GHz 8-Core Intel Core i9 processor and 16GB RAM. Figure~\ref{fig:exp1} shows for both DFTs and {\fgt}s, the computational time is approximately proportional to $N$. This observation verifies our theoretical analysis in Section~\ref{sec:fmtG} that {\fgt}s have a computational cost $\bigo{}{N}$ for the graph with size $N$.

\subsection{Decimated Framelets on Road Network}
\label{exp2}
In this section, we present an application of multiscale analysis by {\fgt}s for a real-world traffic network of Minnesota \citep{nr}. The dataset represents the roads of Minnesota by edges of a graph and the intersections and towns by $2,642$ graph vertices. In this experiment, the graph $\gph$ is unweighted, which means all the edge weights are equal to unity regardless of the length of the road segment. The spatial coordinates of each node are only used for visualization purpose but do not affect the input graph data of {\fgt}s. We use the uninformative feature (constant scalar) for the nodes of the graph. The road network is visualized in Figure~\ref{fig:Minnesota.graph}(a).

We first employ Algorithm~\ref{alg:decomp.multi.level} for {\fgt} to decompose the graph signal into a set of framelet detail and approximation coefficients and visualize the significance and distribution of these coefficients (here the values are in the spectral domain) on the road network. The framelets use the filter banks, each of which has one low pass and two high passes. At the $j$th level, the lengths of the coefficient vectors for the low-pass and high-pass are equal to the number of nodes for the $j$th and $(j+1)$th levels of the chain respectively.

We construct a four-level coarse-grained chain $\gph_{3\to 0}:=(\gph_3, \gph_{2}, \gph_1, \gph_0)$ with $\gph_3=\gph$, $|V_{2}|=1,000$, $|V_1|=300$ and $|V_0|=100$ by using the same graph partitioning method as above. Algorithm~\ref{algo:haar_basis} is applied to calculate the Haar-like orthonormal basis and the computing environment is same as the previous experiment. Figure~\ref{fig:Minnesota.graph} visualizes the experimental results.

\begin{remark}
In practice, the numerical approximation error of eigendecomposition for the graph Laplacian is dependent on the structure and sparsity of the graph, the clustering for the chain and the size of the graph.
\end{remark}

\begin{figure}[th!]
\begin{minipage}{\textwidth}
\centering
\includegraphics[width=.95\linewidth]{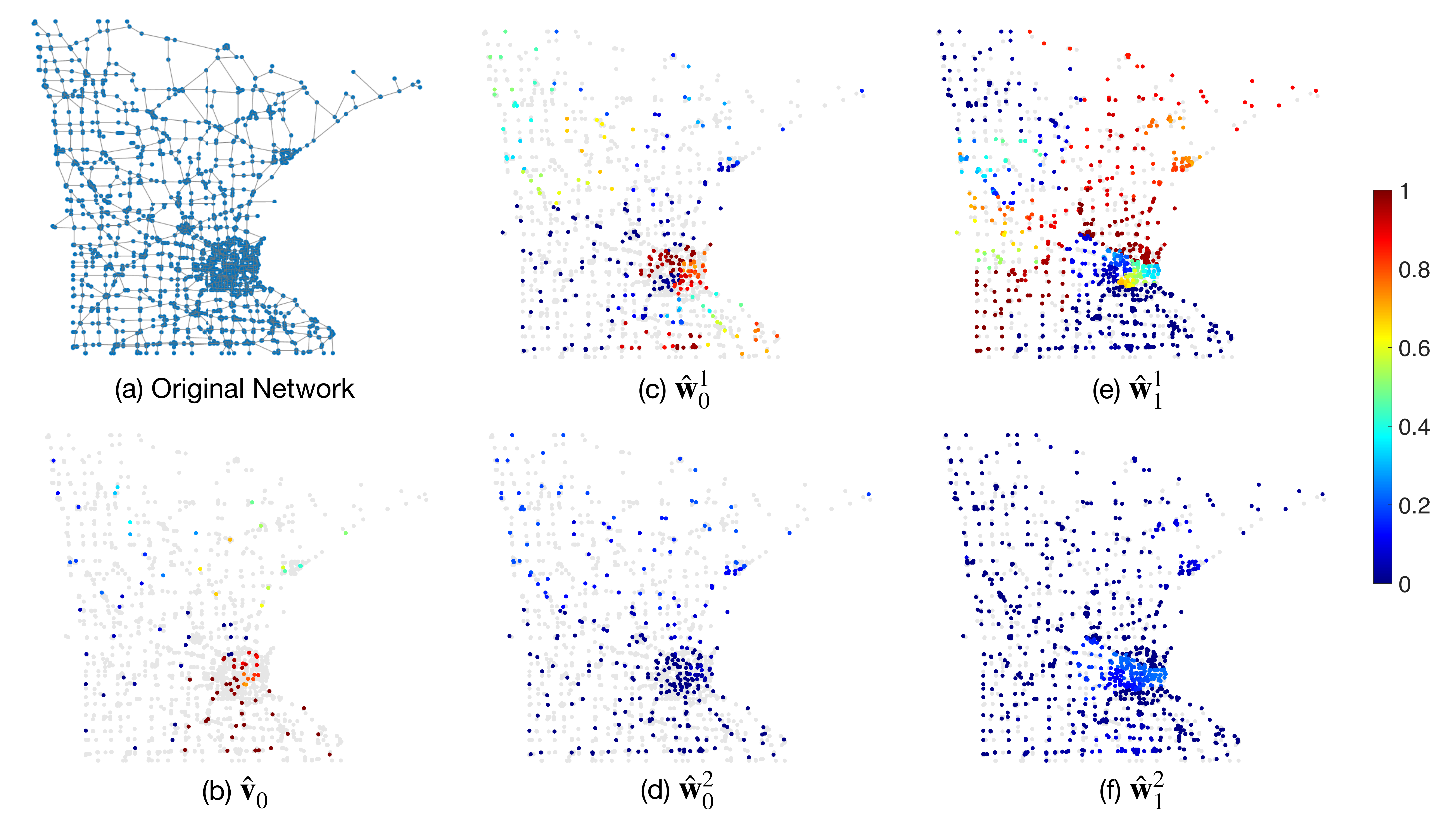}
\caption{
(a) Graph of Minnesota road network $\gph_{3}$. (b) Framelet approximation coefficients $\dfcav[0]$ on $\gph_0$ with $100$ clusters. (c) Framelet detail coefficients $\dfcbv[0]{1}$ on $\gph_{1}$ with $300$ clusters. (d) Framelet detail coefficients $\dfcbv[0]{2}$ on $\gph_{1}$ with $300$ clusters. (e) Framelet detail coefficients $\dfcbv[1]{1}$ on $\gph_{2}$ with $1,000$ clusters. (f) Framelet detail coefficients $\dfcbv[1]{2}$ on $\gph_{2}$ with $1,000$ clusters. For subfigures (b)--(f), the dots in pale grey represent the background (which is the original graph) which outlines the geographical structure of the road network and exhibits the spatial and distributional information of the framelet coefficients (colored dots). Each colored dot corresponds to the first node of the cluster $[p]_{\gph_i}$, for $p\in V_i$ and $i=0, 1, 2$, with different colors depicting the values of the framelet coefficients. Note that all the displayed framelet coefficients are in the spectral domain.}
\label{fig:Minnesota.graph}
\end{minipage}
\end{figure}

The framelet approximation coefficients $\dfcav[0]$ correspond to the low passes of the filter banks, which capture the global information of the input graph. In Figure~\ref{fig:Minnesota.graph}(b), we observe that the elements of $\dfcav[0]$ with more significant values are located over both high-density area (lower part of the road network) and low-density area (upper part of the road network). Thus, the framelet approximation coefficients $\dfcav[0]$ can represent the global structure of the input graph data. On the other hand, the framelet detail coefficients $\{\dfcbv[j]{n}:n=1,\ldots,r_j, j=J_0,\ldots, J-1\}$ are related to the high passes of the filter banks, which capture the information from the local regions of the input graph. From Figures~\ref{fig:Minnesota.graph}(c)--(f), we can see that the framelet detail coefficients with more significant values are mostly spread over the specific subregions. For example, the coefficients are concentrated in the lower left and upper right parts of the road network in Figure~\ref{fig:Minnesota.graph}(e), and the lower right and upper left parts of the network in Figure~\ref{fig:Minnesota.graph}(c). Note that the subregions with more significant framelet detail coefficients in Figures~\ref{fig:Minnesota.graph}(c) and (e) are complementary, which reflects different parts of details the two high-passes captured from the graph signal. The second sets of the framelet detail coefficients $\dfcbv[0]{2}$ and $\dfcbv[1]{2}$ visualized in Figures~\ref{fig:Minnesota.graph}(d) and (f) represent even more detailed information of their first high-passes $\dfcbv[0]{1}$ and $\dfcbv[1]{1}$. Hence, these framelet detail coefficients enable to capture the detail information from all the subregions of the input graph $\gph$.

\subsection{Graph Classification with {\fgt}-based Graph Convolution}
\label{sec:fgconv}
Graph-level classification task relies on the graph representation learning which has a broad range of real-world applications, such as social network analysis \citep{hamilton2017inductive, veli2018graph} and molecule classification \citep{duvenaud2015convolutional, gilmer2017neural}. Graph convolution is one of the key components in a graph convolutional neural network, which has been proved effective and powerful for learning a graph representation \citep{ying2018hierarchical, ma2019graph, wang2019haar}. In this section, we explore the feasibility of using {\fgt} with Haar-like orthonormal basis (Algorithm~\ref{algo:haar_basis}) to define a spectral graph convolution layer based on the classic work \citep{bruna2013spectral} which is the first attempt at implementing CNNs on graphs in the spectral domain with graph Fourier transforms.

In $\Rd[d]$, convolution induced by Fourier transforms or wavelet transforms are well known, see for example, \cite{stein2011fourier,Mallat2009}. For $g,f\in l_2(\gph)$, we define \emph{\textbf{F}ramelet \textbf{G}raph \textbf{Conv}olution} (\textsc{FGConv}) as
\begin{equation}
\label{eq:fgtconv}
    g\star f = \synOp\bigl((\analOp g)\odot (\analOp f)\bigr),
\end{equation}
where $g$ is a trainable filter; $f$ denotes the graph data with one feature on each node; the symbol $\odot$ is the Hadamard product; and the operators $\synOp$ and $\analOp$ are the framelet decomposition (Algorithm~\ref{alg:decomp.multi.level}) and reconstruction (Algorithm~\ref{alg:reconstr.multi.level}) of {\fgt}, respectively. By presuming that the trainable filter $g$ lies in the spectral domain and has an identical shape as $\analOp f$, the convolution~\eqref{eq:fgtconv} can be simplified as
\[
g\star f = \synOp\bigl(g\odot (\analOp f)\bigr).
\]
Here we use $\dfcav[j]$ and $\dfcbv[j]{n}$ in Algorithms~\ref{alg:decomp.multi.level} and \ref{alg:reconstr.multi.level} to simplify computation. Similar graph wavelet convolution was developed by \cite{xu2019graph}, where the wavelets \cite{HaVaGr2011} are undecimated framelets which do not include downsampling and upsampling processes.

Computationally, the \textsc{FGConv} is performed by first decomposing the graph signal $f$ into several sets of framelet detail and approximation coefficients, then taking the Hadamard product between the trainable filter $g$ and the coefficients $\analOp f$ in the spectral domain, and finally reconstructing the output graph data from the processed framelet coefficients $g\odot (\analOp f)$. By applying the weight detaching trick \citep{xu2019graph, LI2020188, zheng2020mathnet}, the \textsc{FGConv} with multiple input features $d$ and reduced parameter complexity reads
\begin{equation}
\label{FGCov}
    \mathcal{F}^{\textnormal{out}} = \sigma\bigl(\synOp(G(\analOp (\mathcal{F}^{\textnormal{in}}\mathcal{W})))\bigr),
\end{equation}
where $\mathcal{F}^{\textnormal{in}}\in\R^{|V|\times d^{\textnormal{in}}}$ and $\mathcal{F}^{\textnormal{out}}\in\R^{|V|\times d^{\textnormal{out}}}$ denote the input and embedded feature matrices of the graph with the indicated shapes; $\mathcal{W}\in\R^{d^{\textnormal{in}}\times d^{\textnormal{out}}}$ is a trainable weight matrix for affine transformation; and $\sigma$ is the activation function (e.g., ReLU). For an example graph $\gph$ with feature matrix $\mathcal{F}^{\textnormal{in}}$ and coarse-grained chain $\gph_{3\to 0}:=(\gph_3, \gph_{2}, \gph_1, \gph_0)$ where $\gph_3=\gph$, we concatenate all the framelet detail and approximation coefficients from $\analOp(\mathcal{F}^{\textnormal{in}}\mathcal{W})$ into a matrix with shape $K\times d^{\textnormal{out}}$ for the ease of computation, where $K=|V_0|+r\sum_{i=1}^3 |V_i|$ with $r$ representing the number of high-pass filters. Hence, the matrix $G$ in \eqref{FGCov} denotes a trainable diagonal matrix with shape $K\times K$ and the parameter complexity of \textsc{FGConv} in \eqref{FGCov} is $\bigO(K+d^{\textnormal{in}}\times d^{\textnormal{out}})$.

\begin{table*}[th!]
\caption{Statistical information of the datasets used for graph classification}
\label{partial_statistics}
\begin{center}
\begin{tabular}{l c c c}
\toprule
\multirow{1}{*}{\textbf{Datasets}} 
& \textbf{PROTEINS} & \textbf{MUTAG} & \textbf{D\&D}\\
\midrule
Max. \#Nodes & 620 & 28 & 5,748\\
Min. \#Nodes & 4 & 10 & 30\\
Avg. \#Nodes & 39.06 & 17.93 & 284.32\\
Avg. \#Edges & 72.82 & 19.79 & 715.66\\
\#Graphs & 1,113 & 188 & 1,178\\
\#Classes & 2 & 2 & 2\\
\bottomrule
\end{tabular}
\end{center}
\end{table*}

\paragraph{Datasets} We evaluate the performance of the proposed \textsc{FGConv} on three graph classification benchmarks. The selected datasets are described as follows. \textbf{D\&D} \citep{dobson2003distinguishing, shervashidze2011weisfeiler} is a graph dataset consists of 1,113 protein structures, each of which is represented by a graph whose nodes are amino acids and there is an edge if two nodes are less than six angstroms apart. The node features of each graph are formed by the binary encoding of some chemical properties. The task of using this dataset is a binary classification problem, and we aim to classify each protein structure into either enzymes or non-enzymes. \textbf{PROTEINS} \citep{dobson2003distinguishing, borgwardt2005protein} is another protein structure dataset with the same task, which is treated as a simplified version of \textbf{D\&D} in terms of graph size. \textbf{MUTAG} \citep{debnath1991structure, kriege2012subgraph} is a mutagen dataset which contains 188 chemical compounds. We use graphs to depict the compounds, where the nodes and edges of each graph correspond to the atoms and covalent bonds of each compound, respectively. The task is to predict whether the compounds in the dataset are mutagenic or not. Important statistical information of these datasets is provided in Table~\ref{partial_statistics}.

\begin{table}[th!]
\caption{Mean test accuracy (in percentage) and standard deviation of \textsc{FGConv-Sum} as compared with existing methods on the benchmark graph classification datasets, over 10 repetitions.}
\label{task1_results}
\begin{center}
\setlength{\tabcolsep}{20pt}
\begin{tabular}{l c c c}
\toprule
\textbf{Methods} & \textbf{PROTEINS} & \textbf{MUTAG} & \textbf{D\&D}\\
\midrule
\textsc{SP} & 75.07$^*$ & 85.79$^*$ & --\\
\textsc{Graphlet} & 71.67$^*$ & 81.58$^*$ & 78.45$^*$\\
\textsc{RW} & 74.22$^*$ & 83.68$^*$ & --\\
\textsc{WL} & 72.92$^*$ & 80.72$^*$ & 77.95$^*$\\
\midrule
\textsc{GIN} & 76.2 & \textbf{89.4} & --\\
\textsc{PatchySan} & 75.00 & \textbf{\textcolor{red}{91.58}} & 76.27\\
\textsc{DGCNN} & 75.54 & 85.83 & 79.37\\
\textsc{DiffPool} & 76.25 & -- & \textbf{80.64}\\
\textsc{SAGPool} & 72.17 & -- & 77.07\\
\textsc{EigenPool} & \textbf{76.6} & -- & 78.6\\
\textsc{g-U-Nets} & \textbf{\textcolor{violet}{77.68}} & -- & \textbf{\textcolor{violet}{82.43}}\\
\midrule
\textsc{FGConv-Sum} & \textbf{\textcolor{red}{ 78.3$\pm$2.26}} & \textbf{\textcolor{violet}{90.8$\pm$2.50}} & \textbf{\textcolor{red}{82.9$\pm$2.55}}\\
\bottomrule
\multicolumn{4}{l}{`$*$' denotes the record retrieved from \cite{niepert2016learning}.}\\
\multicolumn{4}{l}{ `--' means that there is no public record for the method on the dataset.}\\
\multicolumn{4}{l}{! The records without superscription are retrieved from their corresponding}\\
\multicolumn{4}{l}{\hspace{0.3cm}original papers.}\\
\multicolumn{4}{l}{! The decimal place is not modified when transferring the results.}\\
\multicolumn{4}{l}{! The top three scores are highlighted as: \textbf{\textcolor{red}{First}}, \textbf{\textcolor{violet}{Second}}, and \textbf{Third}.}
\end{tabular}
\end{center}
\end{table}

\paragraph{Model architecture} In this experiment, we propose to employ the following network architecture for composing our graph neural network (GNN) \textsc{FGConv-Sum}
\[
\textsc{FGConv}-\textsc{FGConv}-\textsc{SumPool}-\textsc{MLP}.
\]
Specifically, we use two \textsc{FGConv} layers followed by a sum pooling to generate a unified vectorial graph representation which is then sent to \textsc{MLP} for classification. Batch normalization \citep{ioffe2015batch} is employed after each layer of \textsc{MLP}, except for the output layer. We implement a three-layer \textsc{MLP} for \textbf{PROTEINS} and \textbf{MUTAG}, and a two-layer \textsc{MLP} for \textbf{D\&D} in order to achieve a better performance on each dataset. Figure~\ref{fig:FGConv} depicts the architecture of our composed model \textsc{FGConv-Sum}.

\begin{figure}[th!]
\begin{minipage}{\textwidth}
\centering
\includegraphics[width=\linewidth]{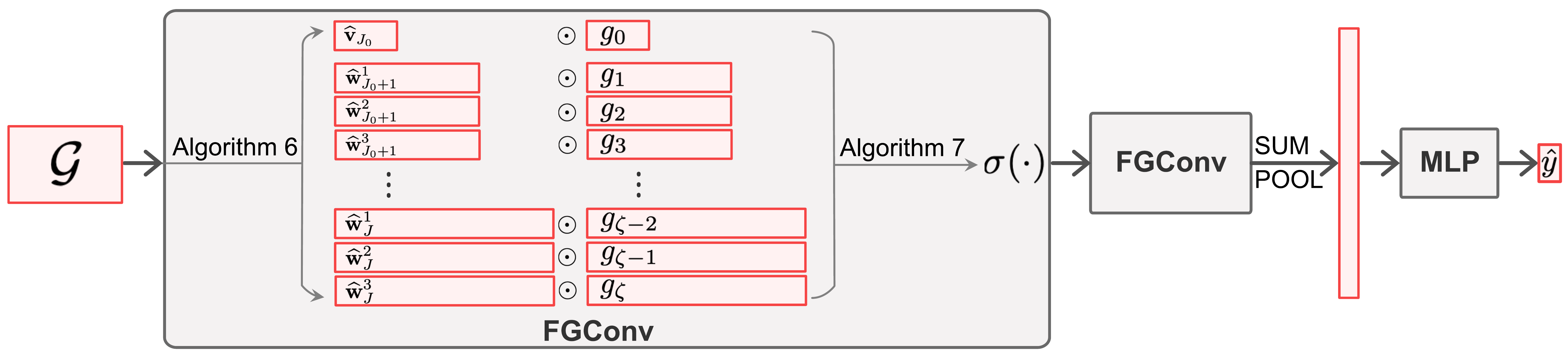}
\caption{Network architecture of \textsc{FGConv-Sum}. Blocks $g_1,\ldots,g_{\zeta}$ are concatenated to form the diagonal of the filter matrix $G$, and $\hat{y}$ denotes the predicted label for the input graph $\mathcal{G}$. All the framelet coefficients are in the spectral domain in order to perform the Hadamard product with the trainable filters $g=[g_0, g_1,\ldots,g_{\zeta}]$.}
\label{fig:FGConv}
\end{minipage}
\end{figure}

\paragraph{Baselines} To evaluate the effectiveness of \textsc{FGConv} on the three benchmark graph classification datasets, we include the following representative GNN methods as our baselines. \textsc{GIN} \citep{xu2018how}, \textsc{PatchySan} \citep{niepert2016learning}, \textsc{DGCNN} \citep{zhang2018end}, \textsc{DiffPool} \citep{ying2018hierarchical}, \textsc{SAGPool} \citep{lee2019self}, \textsc{EigenPool} \citep{ma2019graph}, \textsc{g-U-Nets} \citep{gao2019graph}. Additionally, we also consider several graph kernel methods for comparison, including Shortest-Path kernel (\textsc{SP}) \citep{borgwardt2005shortest}, \textsc{Graphlet} Count kernel \citep{shervashidze2009efficient}, Random Walk kernel (\textsc{RW}) \citep{gartner2003graph} and Weisfeiler-Lehman subtree kernel (\textsc{WL}) \citep{shervashidze2011weisfeiler}.

\begin{table}[th!]
\caption{Grid search space for the hyperparameters.}\vspace{-3mm}
\begin{center}
\begin{tabular}{l c}
\toprule
\textbf{Hyperparameters} & \textbf{Choices}\\
\midrule
Learning Rate & 1$\mathrm{e}$-4, 5$\mathrm{e}$-4, 1$\mathrm{e}$-3, 5$\mathrm{e}$-3, 1$\mathrm{e}$-2\\
Hidden Size & 16, 32, 64, 128\\
Weight Decay (L2) & 1$\mathrm{e}$-4, 5$\mathrm{e}$-4, 1$\mathrm{e}$-3, 5$\mathrm{e}$-3   \\
Batch Size & 32, 64, 128, 256\\
\bottomrule
\end{tabular}
\label{grid_search}
\end{center}
\end{table}

\paragraph{Training scheme} We employ spectral clustering \citep{shi2000normalized, stella2003multiclass} to construct a two-level coarse-grained chain for each graph in the dataset. Spectral clustering has been proved capable of clustering various data patterns and can handle the graph with isolated nodes. The number of parents/clusters in the coarsened level is set to the half of that in its finer level. We split the dataset into training, validation and test sets with portions 80\%, 10\% and 10\% respectively. Since different data splits might have a great impact on the performance of a GNN model  \cite{shchur2018pitfalls}, we repeat the experiment on each dataset $10$ times with random shuffling for the dataset before splitting. We report the mean test accuracy along with the standard deviation of our model \textsc{FGConv-Sum} for each dataset. We use the Adam optimizer \citep{adam_optimizer} with an early stopping strategy suggested in \cite{shchur2018pitfalls} to train our model. Specifically, we stop the training if the validation loss does not improve for continuous 10 epochs with a maximum of 50 epochs. We use a simple grid search to tune the hyperparameters. We show a list of the hyperparameters in the model along with their search spaces in Table~\ref{grid_search}. All the programs used in this section are written in PyTorch and the library PyTorch Geometric \citep{pyg}, and we run the experiments on NVIDIA\textsuperscript{\textregistered} Tesla V100 GPU with 5,120 CUDA cores and 16GB HBM2 mounted on a high performance computing cluster.

\begin{figure}[th!]
\begin{subfigure}{.32\textwidth}
  \centering
  \includegraphics[width=.95\linewidth]{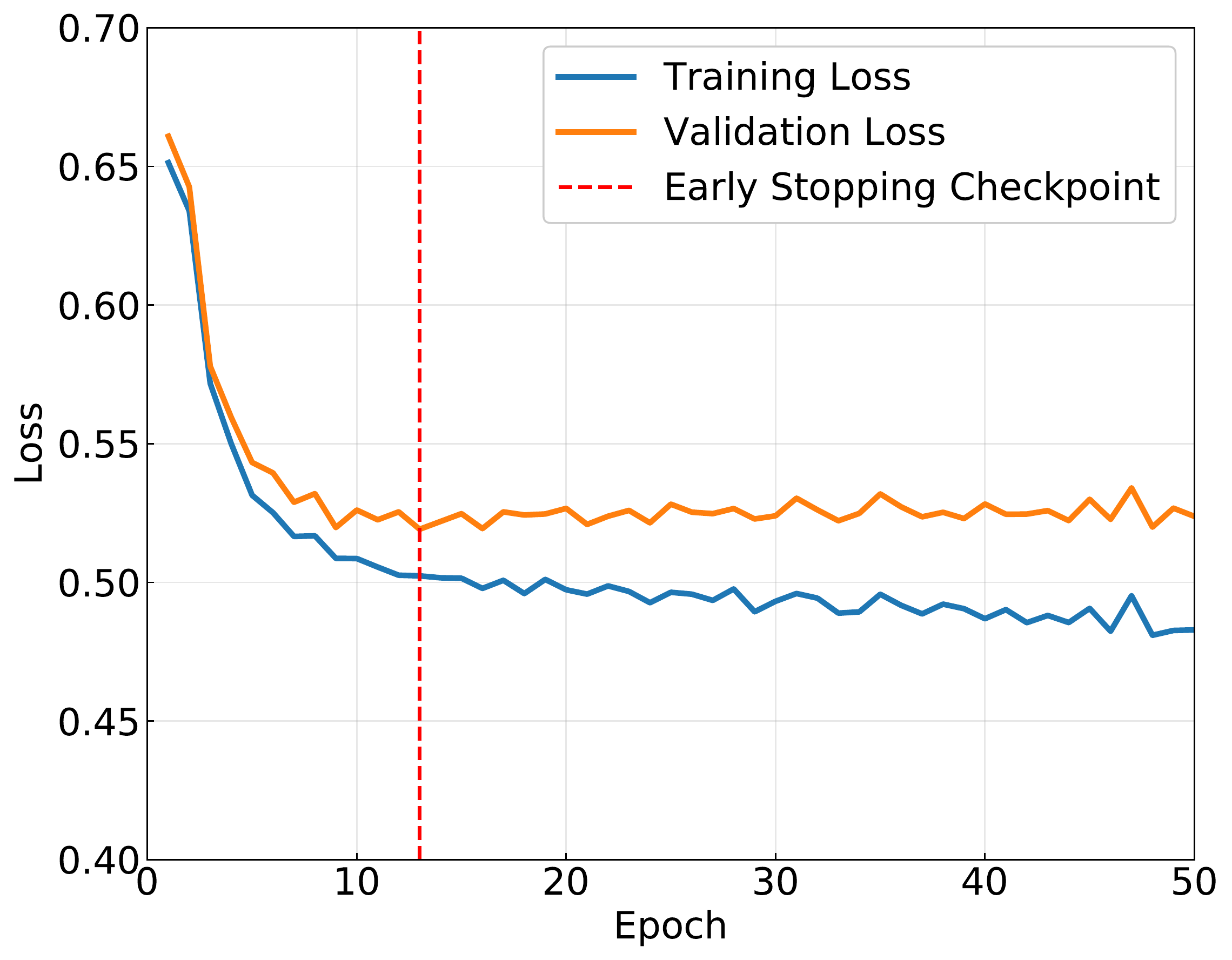}
  \caption{PROTEINS}
\end{subfigure}
\begin{subfigure}{.32\textwidth}
  \centering
  \includegraphics[width=.95\linewidth]{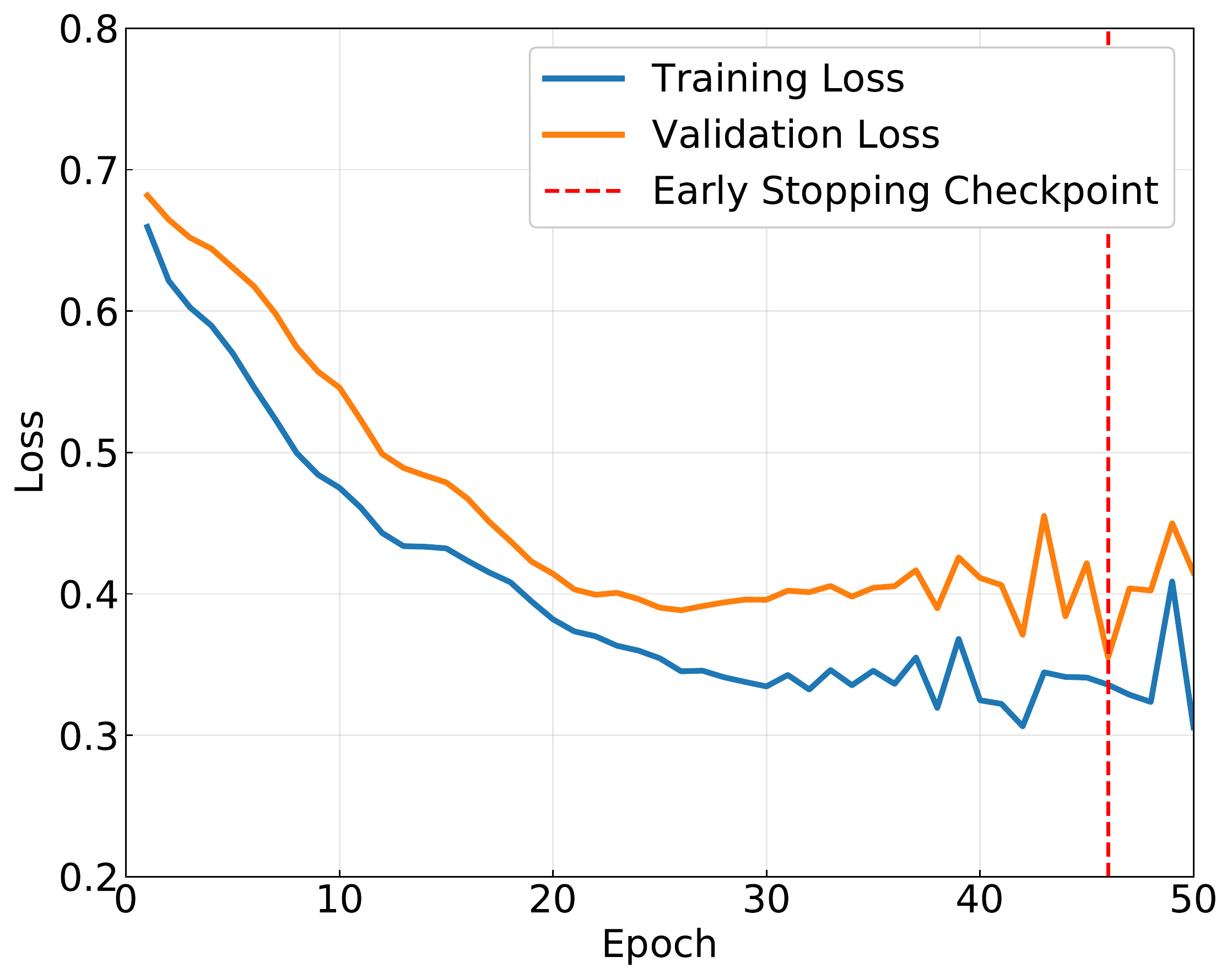}
  \caption{MUTAG}
\end{subfigure}
\begin{subfigure}{.32\textwidth}
  \centering
  \includegraphics[width=.95\linewidth]{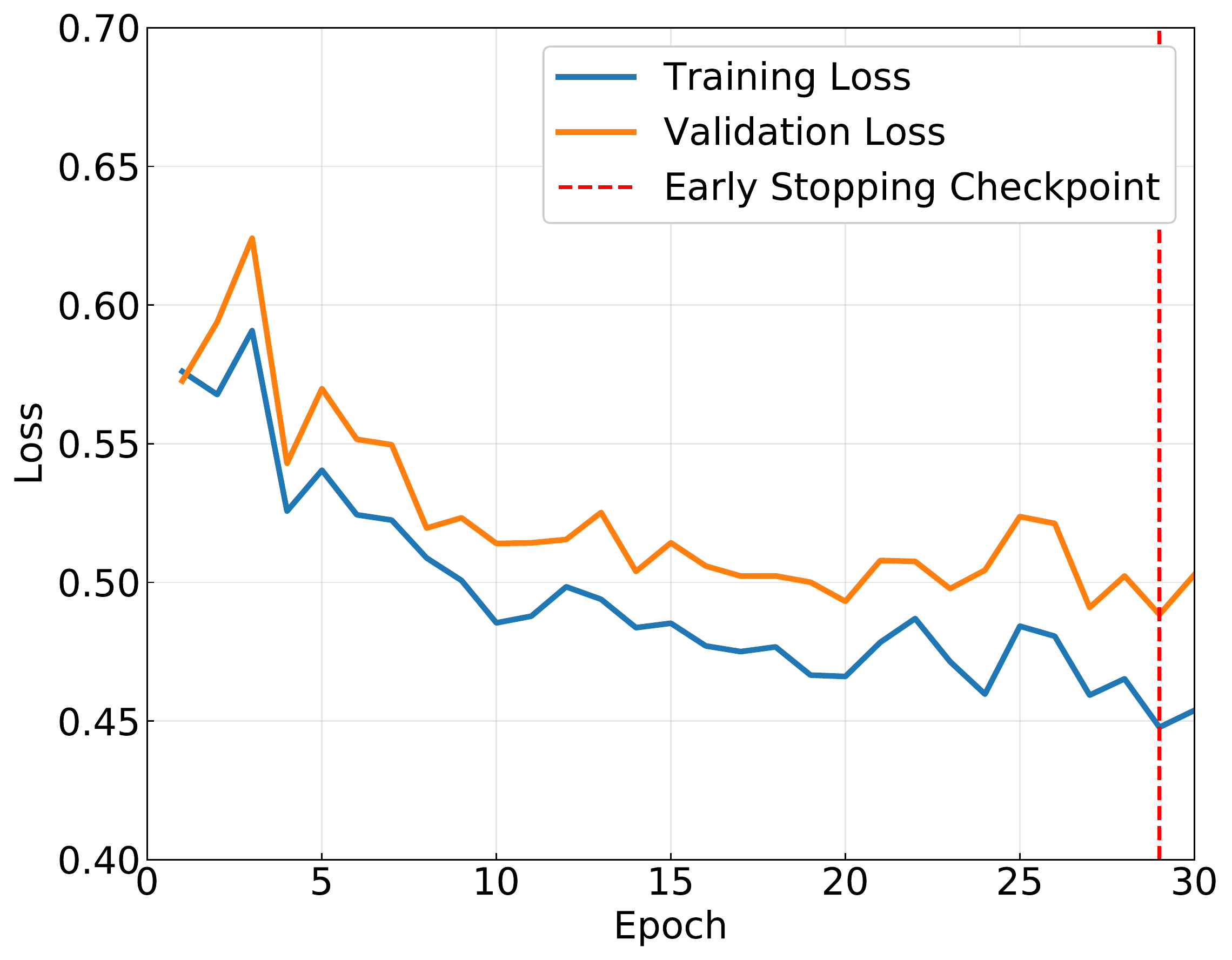}
  \caption{D\&D}
\end{subfigure}
\caption{Plots of losses against epoch from one repetition on the three benchmark graph classification datasets. The early stopping chechpoint corresponds to the epoch where the model is saved for evaluation on the test set, i.e. when the smallest validation loss was achieved.}
\label{exp3:plots}
\end{figure}

\begin{figure}[th!]
\begin{subfigure}{.485\textwidth}
  \centering
  \includegraphics[width=.95\linewidth]{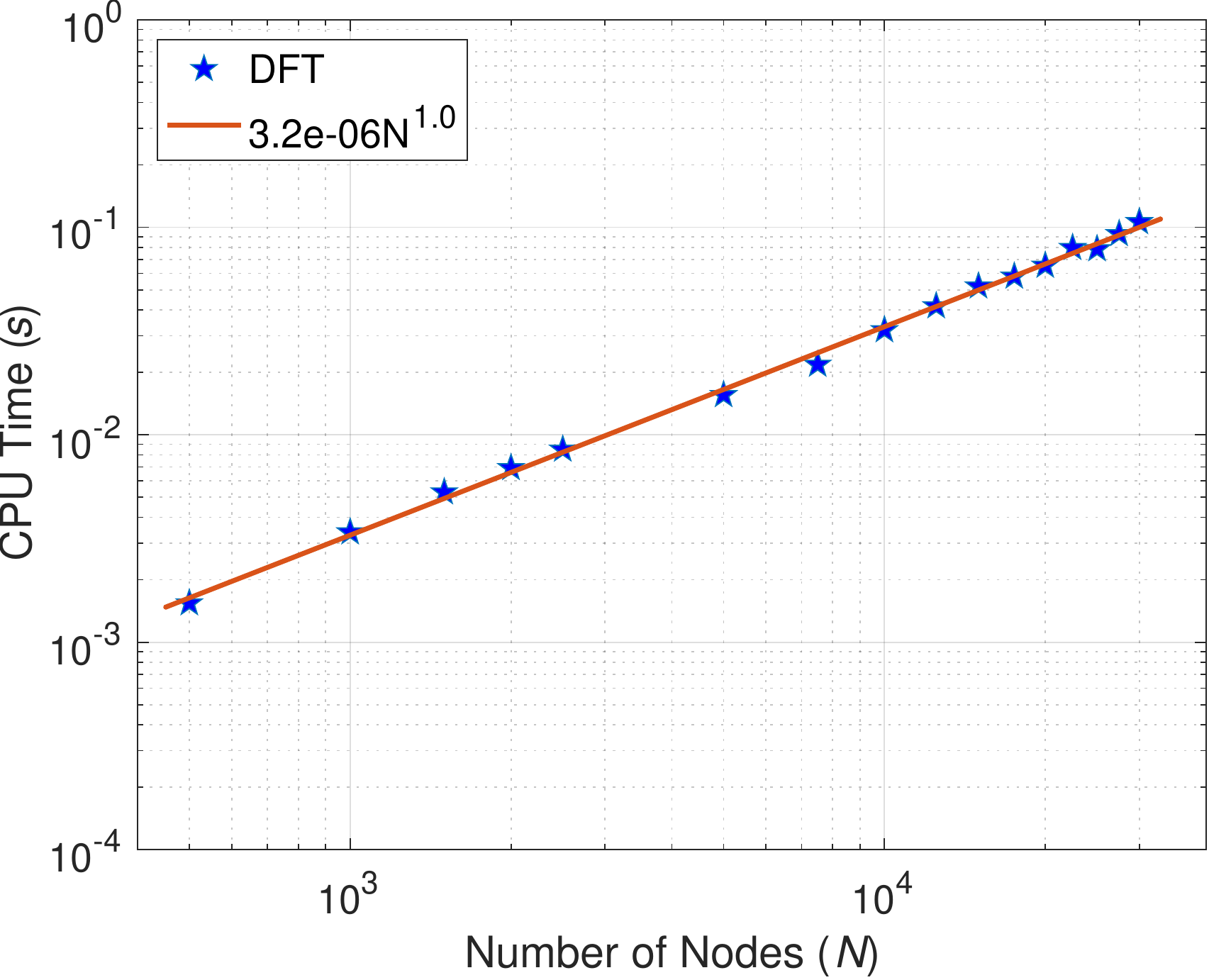} 
  \par\bigskip\bigskip
  \includegraphics[width=.95\linewidth]{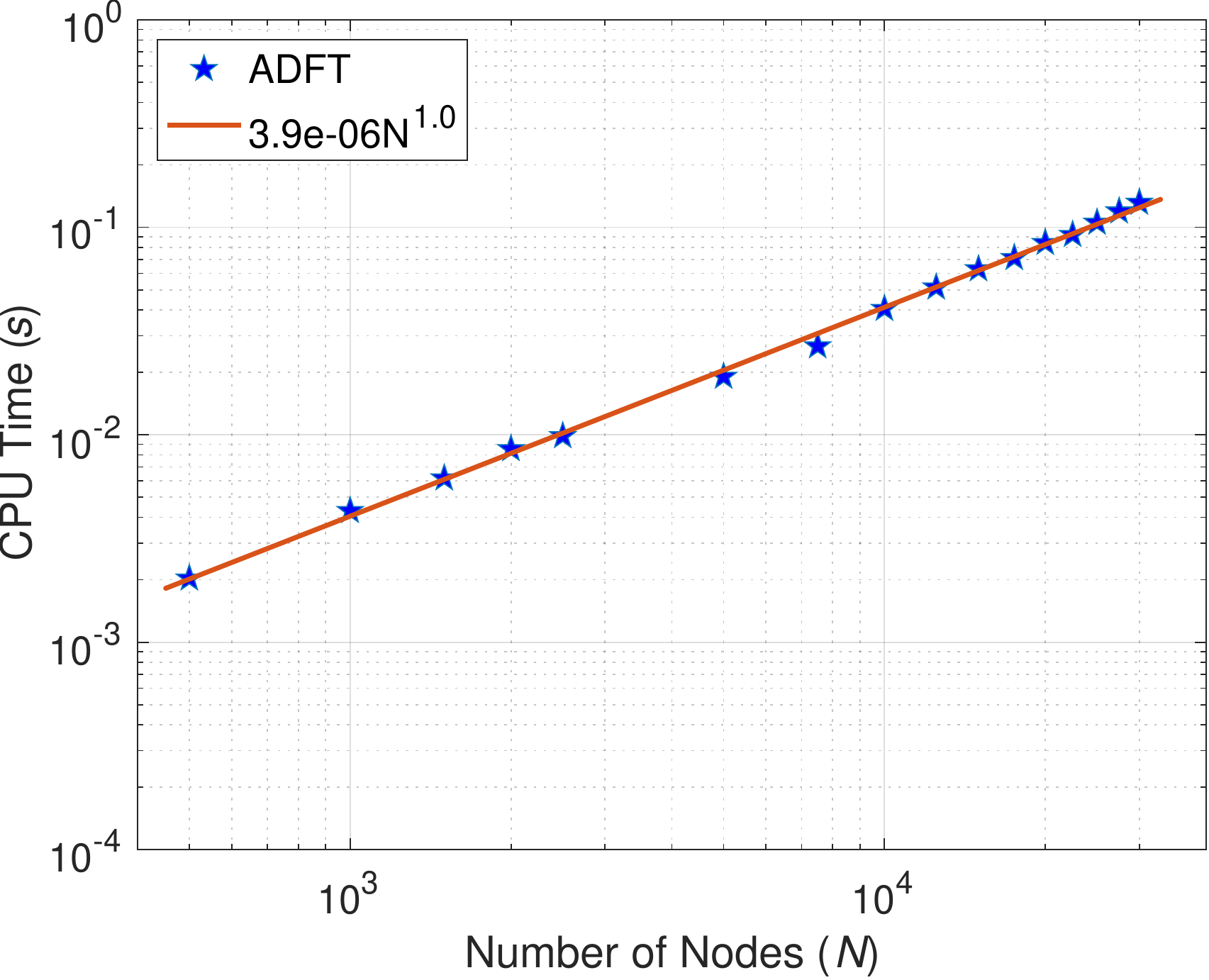}
  \par\bigskip\bigskip
  \includegraphics[width=.95\linewidth]{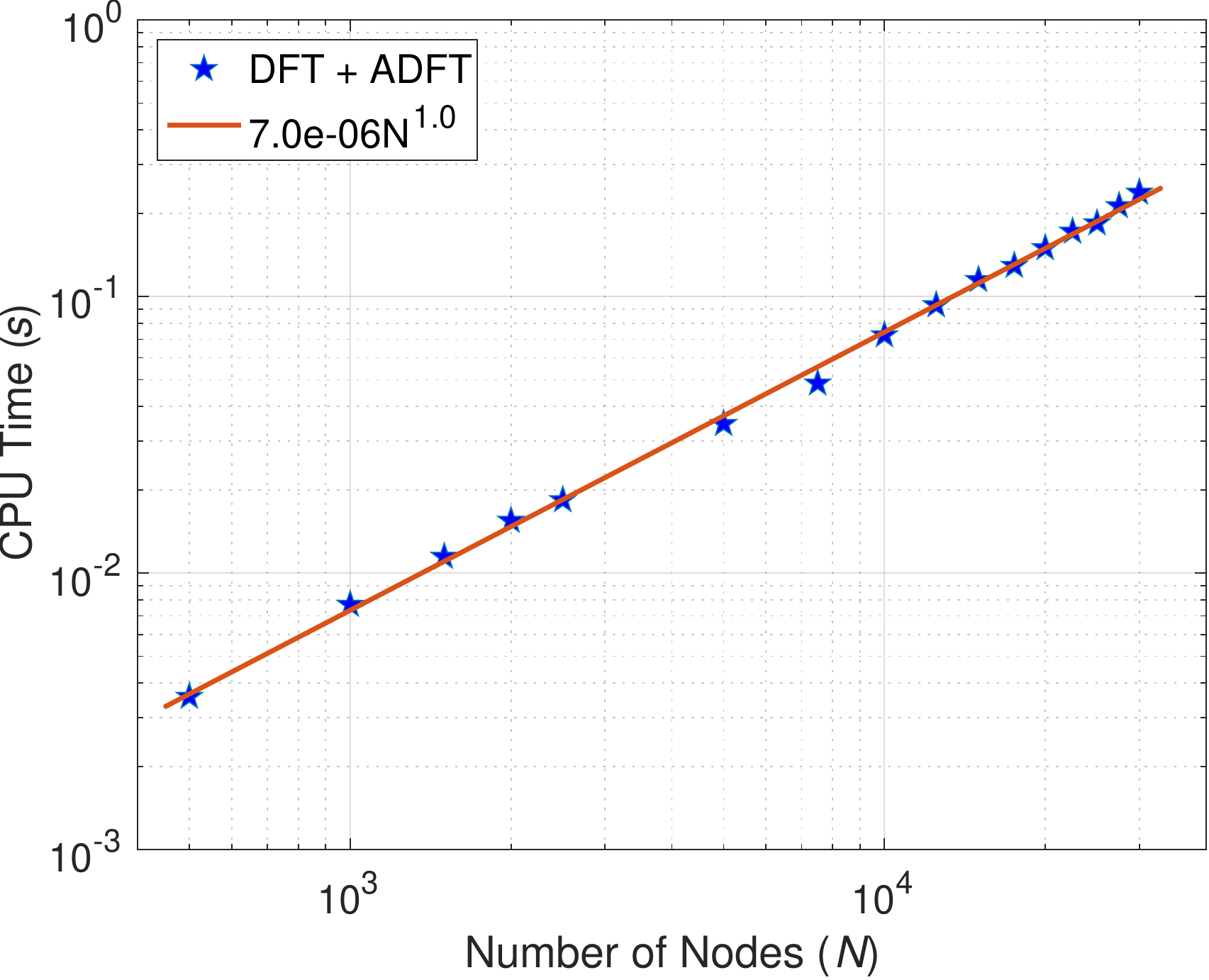}
\end{subfigure}
\hspace*{\fill}
\begin{subfigure}{.485\textwidth}
  \centering
  \includegraphics[width=.95\linewidth]{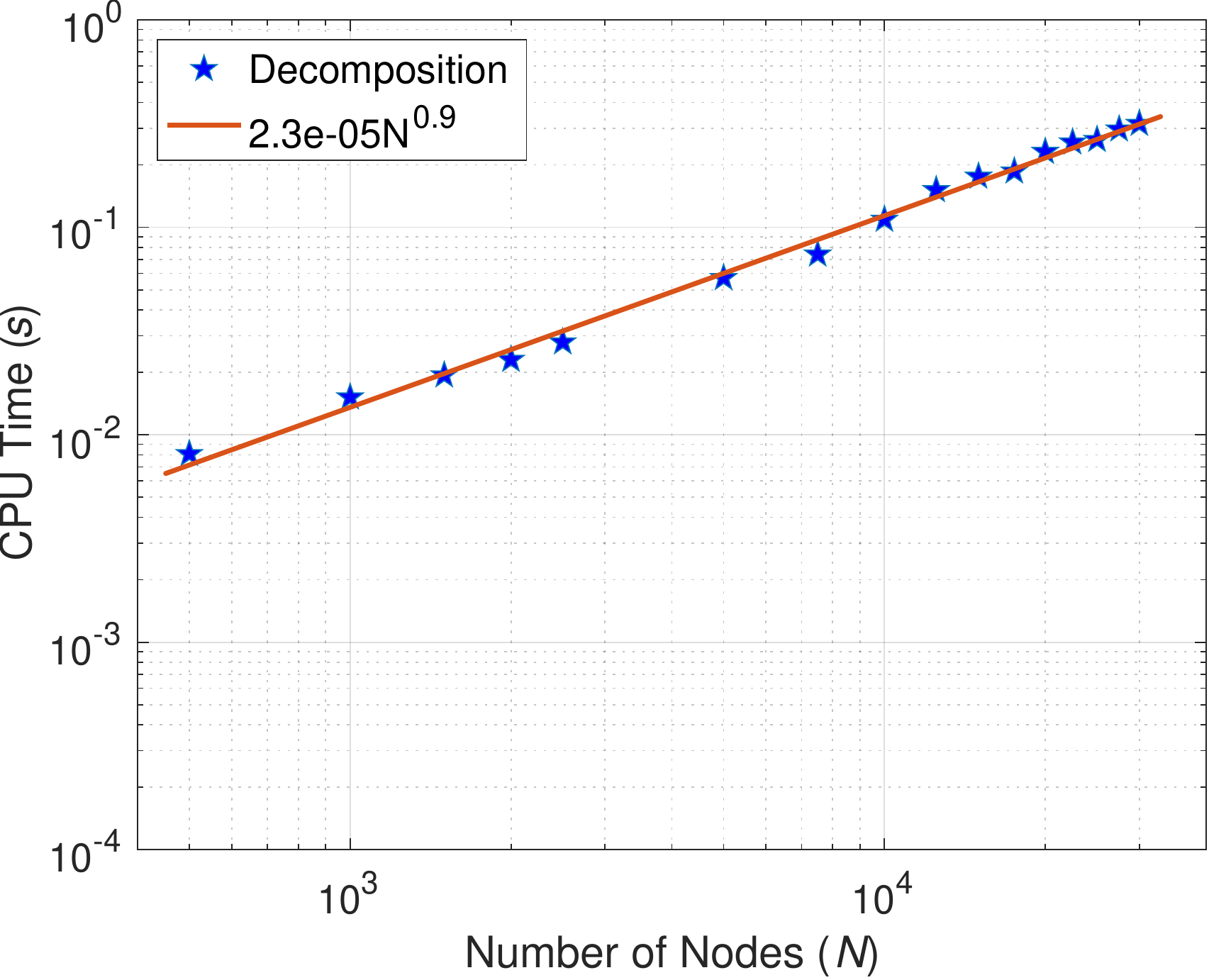}
  \par\bigskip\bigskip
  \includegraphics[width=.95\linewidth]{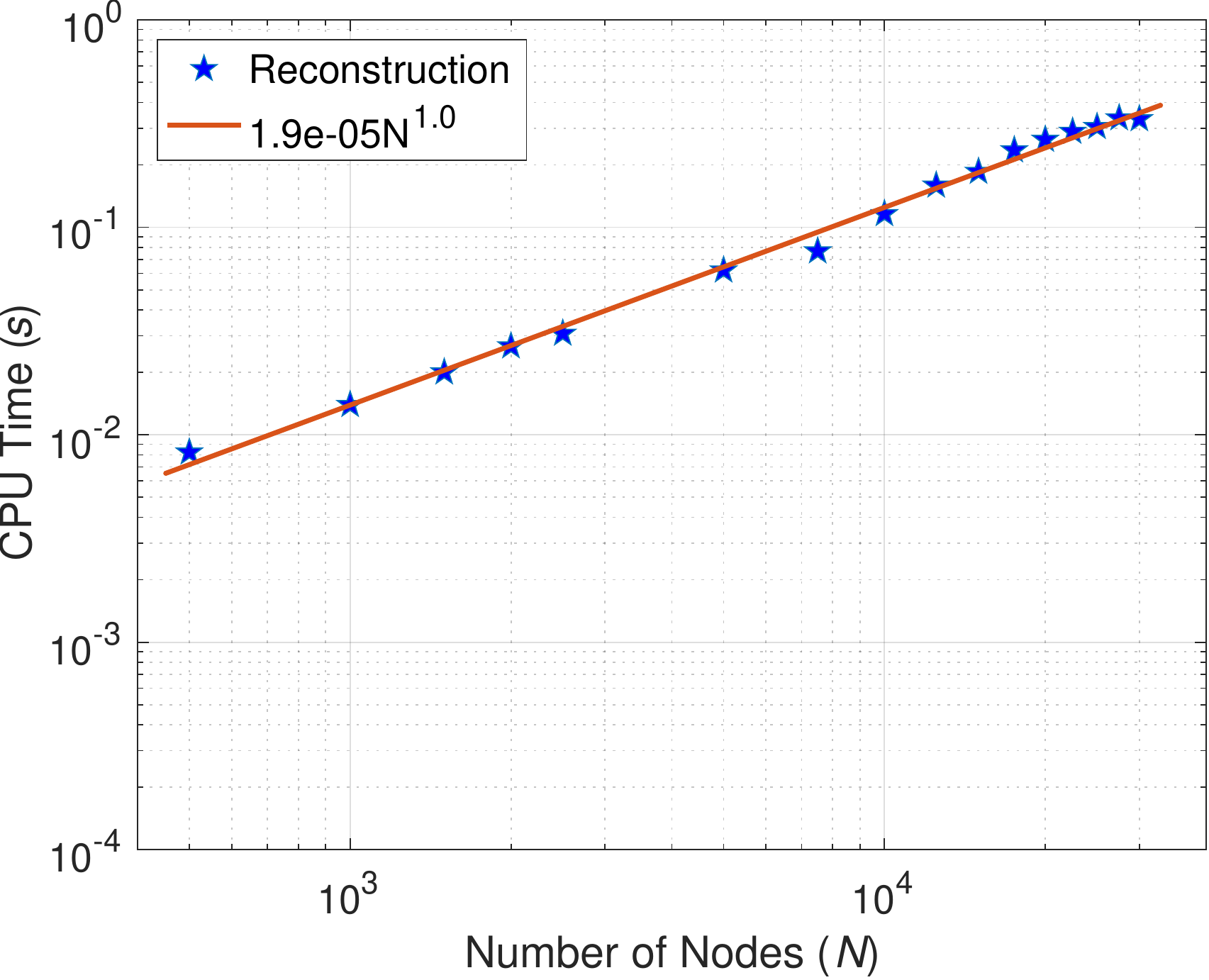} 
  \par\bigskip\bigskip
  \includegraphics[width=.95\linewidth]{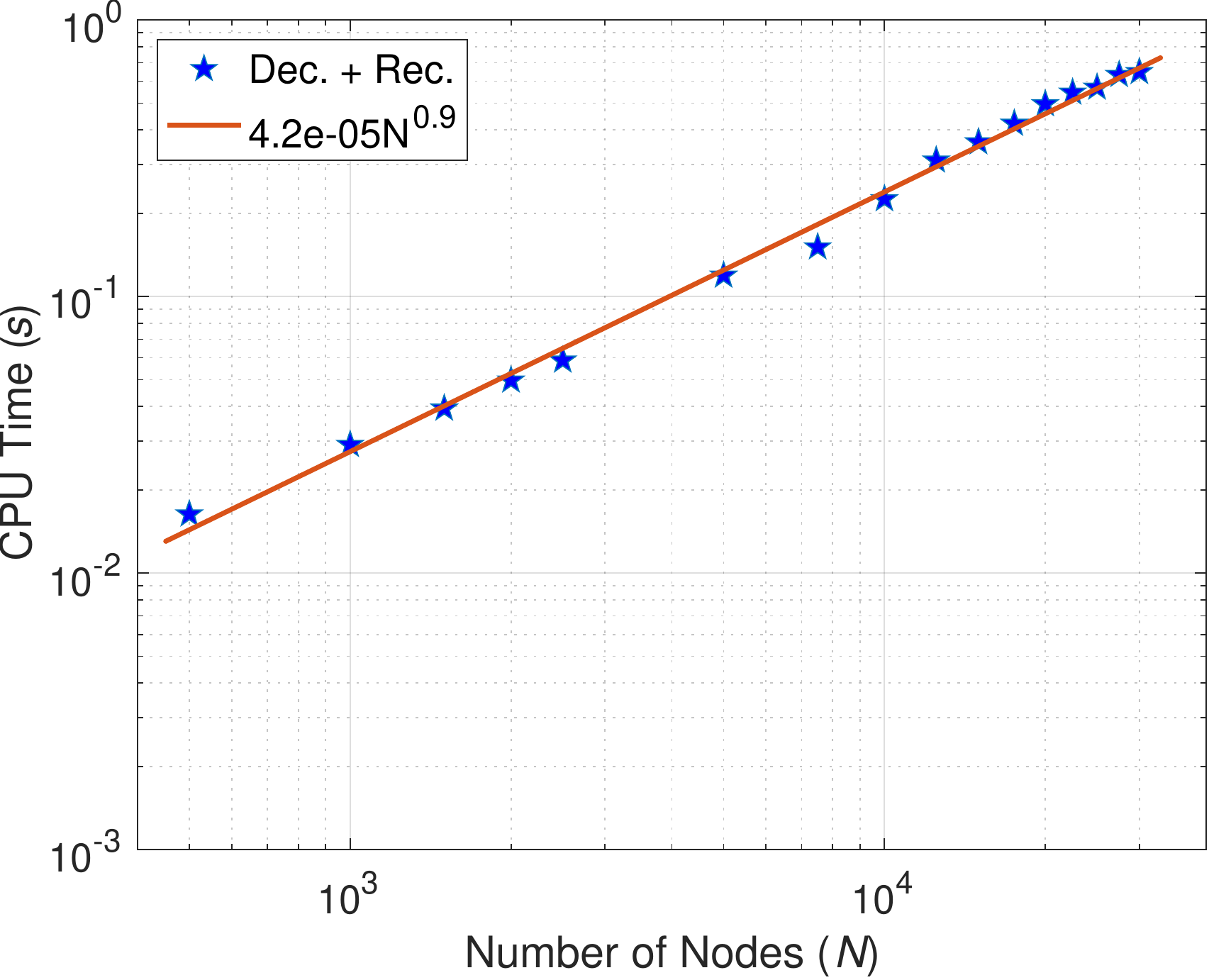} 
\end{subfigure}
\caption{CPU times (in seconds) of DFTs and {\fgt}s elapsed on 16 random graphs with different sizes ranging from $500$ nodes to $30,000$ nodes. The least-squares method is used to estimate the order of computational time (in a power function of the form $y=k_1 N^{k_2}$). We visualize the fitted curves with red lines and report the estimated order in the legends.}
\label{fig:exp1}
\end{figure}

\paragraph{Results} The experimental results are reported in Table~\ref{task1_results}. We can observe that our \textsc{FGConv-Sum} achieves comparable performances with the state-of-the-art baseline methods on all the datasets. Particularly, our model obtains the top mean test accuracy on \textbf{PROTEINS} and \textbf{D\&D}, while the performance of our \textsc{FGConv-Sum} is still ranked top-three among all the baseline methods on \textbf{MUTAG}. We visualize the convergence of training and validation losses of \textsc{FGConv-Sum} for one running on each dataset in Figure~\ref{exp3:plots}. Note that the fluctuations of the loss values near the end of the training are due to the use of moderate learning rate and the absence of learning rate scheduler. However, an obvious convergence demonstrated by each sub-figure further shows evidence of the feasibility and effectiveness of using {\fgt} in a spectral graph convolution.

\section{Discussion}\label{sec:discussion}
We construct a decimated tight framelet system on a graph by filtered spectral expansion with well-designed scaling functions. The scaling function gives a set of the filter bank. A coarse-grained chain for the graph is generated to achieve decimation of the framelet system. The associated filter bank allows fast implementation of the decimated $\gph$-framelet transforms. The linear computational complexity depends on the speed of the discrete Fourier transforms for the orthogonal basis. To this end, the orthogonal basis is generated based on the chain. When the chain has an appropriate structure, the resulting orthogonal basis has fast evaluation. It distinguishes our method and construction from existing framelet or wavelet methods that are undecimated \cite{HaVaGr2011,Dong2017}.

\paragraph{Choice of chain} By Theorem~\ref{thm:DFS}, the choice of chain and its clustering algorithm does not affect the tightness of the decimated framelet system. Nevertheless, the chain structure is highly related to the computation for the decimated $\gph$-transforms. For instance, Proposition~\ref{prop:haar:sparse} provides a sufficient condition of the chain for the sparsity of the Haar global orthonormal basis matrix. Theorem~\ref{thm:flop} and Proposition~\ref{prop:fgt:fft} then guarantee the linear computational complexity of the discrete Fourier transforms and $\gph$-framelet transforms under the Haar basis. On the other hand, the subgraphs of the chain carry the clustering information of the original graph. The chain-based orthonormal basis and the subsequent decimated framelet system then have a clustering feature of the graph. Thus, besides the neighbour information, the chain-based graph Laplacian basis also has the clustering property of the graph embedded in the system. As the Haar basis is solely constructed based on the chain, it mainly reflects the clustering property rather than the neighbour linking. For graph convolution based on our decimated $\gph$-framelet transforms, the Haar version is more suitable for graph-level property prediction tasks (graph classification and graph regression) while the graph Laplacian version could also work for node property prediction tasks.

\paragraph{Number of high passes} The number of high passes does not affect the tightness of the system. However, the representation by two framelet systems with different numbers of high passes contains a distinct extent of details in the framelet domain. Empirically, framelets with more high passes have a better signal-to-noise ratio \citep{WaZh2018}.

\paragraph{Application} There are many potential application of the {\fgt}. For example, it can be used to accelerate the diffusion-based Gaussian process when the inversion of the Gaussian covariance matrix based on graph Laplacian needs efficient evaluation \citep{dunson2020diffusion}. {\fgt} can also be used in graph pooling \citep{wang2019haar} where we can filter out the high pass signal for pooling operation in GNN models. When filtering out only the framelet details while keeping the framelet approximation in the pooling output, the main information of the graph signal is preserved while the graph size is compressed.

%

\paragraph{Acknowledgements}
The authors are grateful to Professors Junbin Gao, Ming Li and Pietro Li\`{o} for their helpful comments.
The last author acknowledges support in part from Research Grants Council of Hong Kong (Project No. CityU 11301419) and City University of Hong Kong (Project No. 7005497). The third author acknowledges the partial support of funding from the European Research Council (ERC) under the European Union's Horizon 2020 research and innovation programme (grant agreement n\textsuperscript{o} 757983), and the support of SJTU Explore X Fund (SD6040004/034) and SJTU Start-up Fund (WH220441902).
This material is based upon work supported by the National Science Foundation under Grant No.~DMS-1439786 while the third author was in residence at the Institute for Computational and Experimental Research in Mathematics in Providence, RI, during the Collaborate@ICERM on ``Geometry of Data and Networks''.

\FloatBarrier
\newpage
\appendix
\section{Table of Notations} 
\label{app:notations}

\begin{longtable}{p{.20\textwidth}  p{.80\textwidth}} 
\multicolumn{1}{l}{\bf Symbol} & \multicolumn{1}{c}{\bf Meaning}\\
\toprule
$\Rd[d]$ & $d$-dimensional real coordinate space\\
$\gph$ & An undirected connected graph with a non-empty finite vertex set $\VG$ and edge set $\EG$\\
$\Lambda_j$ & The index sets of the translation points and centers on the domain $\dom$\\
$|\VG|$ or $N$ & The number of vertices\\
$\uG$ or $\vG$ & An arbitrary vertex of $\VG$\\
$\wG$ & A non-negative weight function that project $\EG\rightarrow\R$\\
$\vol(\gph)$ & The volume of the graph\\
$\mG(\uG,\vG)$ & The transition kernel or a Markov random walk on $\gph$\\
$\distG(\uG,\vG)$ & The distance between two vertices $\uG$ and $\vG$\\
$\gph_c$ & A coarse-grained graph of $\gph$, where $\VG_c$ is a partition of $V$\\
$[\vG]_{\gph_c}$ & An equivalent class (cluster) where a vertex in $\gph_c$ is associated with a vertex $\vG\in\VG$\\
$|[\vG]|$ or $\#[\vG]$ & The number of vertices in the cluster $[\vG]$\\
$\gph_{J\rightarrow J_0}$ & A coarse-grained chain of $\gph$ constituted by a sequence of graphs $\gph_J, \gph_{J-1},\ldots,\gph_{J_0}$\\
$\gph_j$ & The level-$j$ graph of the chain $\gph_{J\to J_0}$. The $\gph_{j-1}$ is the coarse-grained graph of $\gph_j$\\
$\tri$ & A tree of the coarse-grained chain $(\gph_J,\ldots,\gph_{J_0})$ of $\gph$. The $\{V\}$ is the root of $\tri$ at the top level $J_0$, and all vertices of $\gph$ are the leaves at the bottom level $J$.\\
$l_2(\gph)$ & The Hilbert space of vectors $\vf:\VG\rightarrow \C$ on $\gph$\\
$\{\eigfm\}_{\ell=1}^\NV$ & An orthonormal basis for $l_2(\gph)$, usually defined as an eigenvector set with eigenvalues $\eigvm$ of $\gL$\\
$\conj{\vg}$ & The complex conjugate to $\vg\in l_2(\gph)$ \\
$\nrmG{\cdot}_\gph$ & The induced norm, which is defined as the square root of an inner product on the vectors. \\
$\delta_{\ell,\ell'}$ & The Kronecker delta satisfying $\delta_{\ell,\ell'}=1$ if $\ell=\ell'$ and $\delta_{\ell,\ell'}=0$ if $\ell\neq\ell'$\\
$\Fcoem{\vf}$ & The generalized Fourier coefficient of degree $\ell$ for $\vf\in l_2(\gph)$ with respect to $\eigfm$\\
$\vf$ & The input vector on the time domain\\
$\gL$ & The combinatorial or unnormalized graph Laplacian operator\\
$\{\vg_\ell\}_{\ell=1}^M$ & A frame for $l_2(\gph)$ if satisfying (\ref{defn:frame})\\
$L$ & Lebesgue measure\\
$\FT{\gamma}$ & The Fourier transform of a function $\gamma\in L_1(\R)$\\
$\mask$ & A filter (or mask) of a complex-valued sequence $\{\mask_k\}_{k\in\Z}\subseteq \C$\\
$\Psi$ & A set of framelet generators (or scaling functions) $\{\scala; \scalb^{(1)},\ldots,\scalb^{(r)}\}$ in $L_1(\R)$\\ 
$\Psi_j$ & A set of framelet generators (or scaling functions) $\{\scala_j; \scalb_j^{(1)},\ldots,\scalb_j^{(r)}\}$ in $L_2(\R)$ at scale $j$, see \eqref{eq:refinement:nonstationary}\\ 
$\filtbk$ & A filter bank with a set of filters $\{\maska; \maskb[1],\ldots,\maskb[r]\}$\\
$\filtbk_j$ & A filter bank at level $j$ connecting $\Psi_{j}$ and $\Psi_{j-1}$\\
$\scala$ & Scaling function associated with low pass for undecimated framelets\\
$\scalb^{(n)}$ & Scaling function associated with $n$th high pass for undecimated framelets\\
$\scala_j$ & Scaling function associated with low pass at level $j$ for decimated framelets\\
$\scalb_j^{(n)}$ & Scaling function associated with $n$th high pass at level $j$ for decimated framelets\\
$\maska^\star$ & The low-pass filter or refinement mask at decomposition\\
$\maska$ & The low-pass filter or refinement mask at reconstruction\\
$(\maskb)^\star$ & The high-pass filters or framelet masks at decomposition\\
$\maskb$ & The high-pass filters or framelet masks at reconstruction\\
$\xi$ & A signal on the time domain\\
$\AS_J(\Psi)$ & A stationary non-homogeneous affine system of the form (\ref{defn:AS:R})\\
$\varphi_{j,p}$ & Framelets at scale $j$ and at a vertex $\uG\in\VG$ for low-pass signals\\
$\psi_{j,p}^{(n)}$ & The $n$-th framelets at scale $j$ for high-pass signals\\
$\ufrsys[]\left(\Psi,\filtbk;\gph\right)$ or $\ufrsys[J_1]^{J}(\Psi,\filtbk)$ & An undecimated tight framelet system for $l_2(\gph)$\\
$\dfrsys(\{\Psi_j\}_{j=J_1}^J$, $\{\filtbk_j\}_{j=J_1+1}^{J})$ & The decimated framelet system defined as (\ref{defn:DFS})\\
$\wV$ & The associated weight of vertex $\pV$ at level $j$\\
$\QN[j]$ & A set of weights $\{\wV: \pV\in\VG_j\}$ on $\gph_j$\\
$\QU{\ell'}(\QN[j])$ & The weighted sum of the product of $\eigfm$, $\eigfm[\ell']$ by (\ref{eq:sum:wt:u})\\
$c_{[\vG]_{\gph_j}}$ & The number of vertices in $\gph_{j+1}$ which are in the cluster $[\vG]_{\gph_j}$\\
$\spoc(\eigfm)$ & Spoc of a vector $\eigfm$\\
$\supp(\eigfm)$ & Support of a vector $\eigfm$\\
$\chi_j^c$ & The characteristic function for the $j$th vertices on $\gph_c$\\
$\fracoev[J_0]$ & The framelet approximation coefficients\\
$\frbcoev{n}$ & The framelet detail coefficients for $n=1,\ldots, r_j,\; j=J_0,\ldots,J$\\
$\vc$ or $\FT{\fracoev[]}$ & The discrete Fourier coefficients of the vector $\fracoev[]$\\
$\fft[j]^*$ & The discrete Fourier transform (DFT) operator $l_2(\VG_j)\rightarrow l_2(\Omega_j)$ on $\gph_j$\\
$\fft[j]$ & The adjoint discrete Fourier transform (ADFT) operator $l_2(\Lambda_j)\rightarrow l_2(\VG_j)$ on $\gph_j$\\
$\dconv$ & The discrete convolution operator\\
$\downsmp$ & The downsampling operator, defined in (\ref{defn:dwnsmp}), from level $j$ to $j-1$\\
$\upsmp$ & The upsampling operator, defined in (\ref{defn:upsmp}), from level $j-1$ to level $j$\\
$\analOp$ & The multi-level $\gph$-framelet analysis operator\\
$\synOp$ & The multi-level $\gph$-framelet synthesis operator\\
\end{longtable}

\vskip 0.2in
\newpage
\bibliography{FGT.bib}

\begin{thebibliography}{144}
\providecommand{\natexlab}[1]{#1}
\providecommand{\url}[1]{\texttt{#1}}
\expandafter\ifx\csname urlstyle\endcsname\relax
  \providecommand{\doi}[1]{doi: #1}\else
  \providecommand{\doi}{doi: \begingroup \urlstyle{rm}\Url}\fi

\bibitem[Aggarwal and Wang(2010)]{gphClus4}
Charu~C Aggarwal and Haixun Wang.
\newblock A survey of clustering algorithms for graph data.
\newblock In \emph{Managing and Mining Graph Data}, pages 275--301. Springer,
  2010.

\bibitem[Atreas et~al.(2019)Atreas, Karantzas, Papadakis, and
  Stavropoulos]{ATREAS20191}
N.~Atreas, N.~Karantzas, M.~Papadakis, and T.~Stavropoulos.
\newblock On the design of multi-dimensional compactly supported parseval
  framelets with directional characteristics.
\newblock \emph{Linear Algebra and its Applications}, 582:\penalty0 1 -- 36,
  2019.
\newblock URL
  \url{http://www.sciencedirect.com/science/article/pii/S0024379519303155}.

\bibitem[{Belkin} and {Niyogi}(2003)]{BeNi2003}
M.~{Belkin} and P.~{Niyogi}.
\newblock Laplacian eigenmaps for dimensionality reduction and data
  representation.
\newblock \emph{Neural Computation}, 15\penalty0 (6):\penalty0 1373--1396, June
  2003.

\bibitem[Belkin and Niyogi(2008)]{BeNi2008}
Mikhail Belkin and Partha Niyogi.
\newblock Towards a theoretical foundation for laplacian-based manifold
  methods.
\newblock \emph{Journal of Computer and System Sciences}, 74\penalty0
  (8):\penalty0 1289 -- 1308, 2008.
\newblock URL
  \url{http://www.sciencedirect.com/science/article/pii/S0022000007001274}.

\bibitem[Belkin et~al.(2004)Belkin, Matveeva, and Niyogi]{BeMaNi2004}
Mikhail Belkin, Irina Matveeva, and Partha Niyogi.
\newblock Regularization and semi-supervised learning on large graphs.
\newblock In John Shawe-Taylor and Yoram Singer, editors, \emph{Learning
  Theory}, pages 624--638, Berlin, Heidelberg, 2004. Springer Berlin
  Heidelberg.

\bibitem[Berg et~al.(2018)Berg, Kipf, and Welling]{berg2017graph}
Rianne van~den Berg, Thomas~N Kipf, and Max Welling.
\newblock Graph convolutional matrix completion.
\newblock In \emph{KDD}, 2018.

\bibitem[Bertozzi and Flenner(2012)]{BeFl2012}
Andrea~L Bertozzi and Arjuna Flenner.
\newblock Diffuse interface models on graphs for classification of high
  dimensional data.
\newblock \emph{Multiscale Modeling \& Simulation}, 10\penalty0 (3):\penalty0
  1090--1118, 2012.

\bibitem[Bodnar et~al.(2021{\natexlab{a}})Bodnar, Frasca, Otter, Wang, Li{\`o},
  Montufar, and Bronstein]{bodnar2021cwn}
Cristian Bodnar, Fabrizio Frasca, Nina Otter, Yu~Guang Wang, Pietro Li{\`o},
  Guido Montufar, and Michael~M. Bronstein.
\newblock {Weisfeiler and Lehman} go cellular: {CW} networks.
\newblock In \emph{NeurIPS}, 2021{\natexlab{a}}.
\newblock URL \url{https://openreview.net/forum?id=uVPZCMVtsSG}.

\bibitem[Bodnar et~al.(2021{\natexlab{b}})Bodnar, Frasca, Wang, Otter,
  Mont{\'u}far, Lio, and Bronstein]{bodnar2021mpsn}
Cristian Bodnar, Fabrizio Frasca, Yu~Guang Wang, Nina Otter, Guido
  Mont{\'u}far, Pietro Lio, and Michael Bronstein.
\newblock {Weisfeiler and Lehman} go topological: Message passing simplicial
  networks.
\newblock 2021{\natexlab{b}}.

\bibitem[Borgwardt and Kriegel(2005)]{borgwardt2005shortest}
Karsten~M Borgwardt and Hans-Peter Kriegel.
\newblock Shortest-path kernels on graphs.
\newblock In \emph{ICDM}, pages 74--81, 2005.

\bibitem[Borgwardt et~al.(2005)Borgwardt, Ong, Sch{\"o}nauer, Vishwanathan,
  Smola, and Kriegel]{borgwardt2005protein}
Karsten~M Borgwardt, Cheng~Soon Ong, Stefan Sch{\"o}nauer, SVN Vishwanathan,
  Alex~J Smola, and Hans-Peter Kriegel.
\newblock Protein function prediction via graph kernels.
\newblock \emph{Bioinformatics}, 21\penalty0 (suppl\_1):\penalty0 i47--i56,
  2005.

\bibitem[Boykov et~al.(2001)Boykov, Veksler, and Zabih]{BoVeZa2001}
Yuri Boykov, Olga Veksler, and Ramin Zabih.
\newblock Fast approximate energy minimization via graph cuts.
\newblock \emph{IEEE Transactions on Pattern Analysis and Machine
  Intelligence}, 23\penalty0 (11):\penalty0 1222--1239, 2001.

\bibitem[Brauchart et~al.(2015)Brauchart, Dick, Saff, Sloan, Wang, and
  Womersley]{Brauchart_etal2015}
J.~S. Brauchart, J.~Dick, E.~B. Saff, I.~H. Sloan, Y.~G. Wang, and R.~S.
  Womersley.
\newblock Covering of spheres by spherical caps and worst-case error for equal
  weight cubature in {S}obolev spaces.
\newblock \emph{Journal of Mathematical Analysis and Applications},
  431\penalty0 (2):\penalty0 782--811, 2015.

\bibitem[Bronstein et~al.(2017)Bronstein, Bruna, LeCun, Szlam, and
  Vandergheynst]{bronstein2017geometric}
Michael~M Bronstein, Joan Bruna, Yann LeCun, Arthur Szlam, and Pierre
  Vandergheynst.
\newblock Geometric deep learning: going beyond {Euclidean} data.
\newblock \emph{IEEE Signal Processing Magazine}, 34\penalty0 (4):\penalty0
  18--42, 2017.

\bibitem[Bronstein et~al.(2021)Bronstein, Bruna, Cohen, and
  Veli{\v{c}}kovi{\'c}]{bronstein2021geometric}
Michael~M Bronstein, Joan Bruna, Taco Cohen, and Petar Veli{\v{c}}kovi{\'c}.
\newblock Geometric deep learning: Grids, groups, graphs, geodesics, and
  gauges.
\newblock \emph{arXiv preprint arXiv:2104.13478}, 2021.

\bibitem[Bruna and Mallat(2013)]{bruna2013invariant}
Joan Bruna and St{\'e}phane Mallat.
\newblock Invariant scattering convolution networks.
\newblock \emph{IEEE transactions on pattern analysis and machine
  intelligence}, 35\penalty0 (8):\penalty0 1872--1886, 2013.

\bibitem[Bruna et~al.(2014)Bruna, Zaremba, Szlam, and LeCun]{bruna2013spectral}
Joan Bruna, Wojciech Zaremba, Arthur Szlam, and Yann LeCun.
\newblock Spectral networks and locally connected networks on graphs.
\newblock In \emph{ICLR (CBLS)}, 2014.

\bibitem[Cand\`{e}s et~al.(2006)Cand\`{e}s, Demanet, Donoho, and
  Ying]{candes2006fast}
Emmanuel Cand\`{e}s, Laurent Demanet, David Donoho, and Lexing Ying.
\newblock Fast discrete curvelet transforms.
\newblock \emph{Multiscale Modeling \& Simulation}, 5\penalty0 (3):\penalty0
  861--899, 2006.

\bibitem[Cand{\`e}s and Fernandez-Granda(2014)]{candes2014towards}
Emmanuel~J Cand{\`e}s and Carlos Fernandez-Granda.
\newblock Towards a mathematical theory of super-resolution.
\newblock \emph{Communications on Pure and Applied Mathematics}, 67\penalty0
  (6):\penalty0 906--956, 2014.

\bibitem[Chaudhuri and Dasgupta(2010)]{ChDa2010}
Kamalika Chaudhuri and Sanjoy Dasgupta.
\newblock Rates of convergence for the cluster tree.
\newblock In \emph{NIPS}, pages 343--351, 2010.

\bibitem[{Che} and {Zhuang}(2018)]{CheZhuang2018}
Z.~{Che} and X.~{Zhuang}.
\newblock Digital affine shear filter banks with 2-layer structure and their
  applications in image processing.
\newblock \emph{IEEE Transactions on Image Processing}, 27\penalty0
  (8):\penalty0 3931--3941, Aug 2018.

\bibitem[Chen et~al.(2014)Chen, Cheng, and Mallat]{chen2014unsupervised}
Xu~Chen, Xiuyuan Cheng, and St{\'e}phane Mallat.
\newblock Unsupervised deep {Haar} scattering on graphs.
\newblock In Z.~Ghahramani, M.~Welling, C.~Cortes, N.~Lawrence, and K.~Q.
  Weinberger, editors, \emph{NIPS}, volume~27, 2014.

\bibitem[Cheung et~al.(2018)Cheung, Magli, Tanaka, and Ng]{cheung2018graph}
Gene Cheung, Enrico Magli, Yuichi Tanaka, and Michael~K Ng.
\newblock Graph spectral image processing.
\newblock \emph{Proceedings of the IEEE}, 106\penalty0 (5):\penalty0 907--930,
  2018.

\bibitem[Cheung et~al.(2020)Cheung, Shi, Wright, Jiang, Liu, and
  Moura]{cheung2020graph}
Mark Cheung, John Shi, Oren Wright, Lavendar~Y Jiang, Xujin Liu, and
  Jos{\'e}~MF Moura.
\newblock Graph signal processing and deep learning: Convolution, pooling, and
  topology.
\newblock \emph{IEEE Signal Processing Magazine}, 37\penalty0 (6):\penalty0
  139--149, 2020.

\bibitem[Christensen and Goh(2019)]{ChGo2019}
Ole Christensen and Say~Song Goh.
\newblock The unitary extension principle on locally compact abelian groups.
\newblock \emph{Applied and Computational Harmonic Analysis}, 47\penalty0
  (1):\penalty0 1 -- 29, 2019.
\newblock URL
  \url{http://www.sciencedirect.com/science/article/pii/S1063520317300751}.

\bibitem[Chui et~al.(2015)Chui, Filbir, and Mhaskar]{ChFiMh2015}
C.~K. Chui, F.~Filbir, and H.~N. Mhaskar.
\newblock Representation of functions on big data: graphs and trees.
\newblock \emph{Applied and Computational Harmonic Analysis}, 38\penalty0
  (3):\penalty0 489--509, 2015.

\bibitem[Chui(1992)]{Chui1992}
Charles~K. Chui.
\newblock \emph{An introduction to wavelets}, volume~1 of \emph{Wavelet
  Analysis and its Applications}.
\newblock Academic Press, Inc., Boston, MA, 1992.

\bibitem[Chui et~al.(1998)Chui, Shi, and St{\"o}ckler]{chui1998affine}
Charles~K Chui, Xianliang Shi, and Joachim St{\"o}ckler.
\newblock Affine frames, quasi-affine frames, and their duals.
\newblock \emph{Advances in Computational Mathematics}, 8\penalty0
  (1-2):\penalty0 1--17, 1998.

\bibitem[Chui et~al.(2002)Chui, He, and St{\"o}ckler]{ChHeSt2002}
Charles~K. Chui, Wenjie He, and Joachim St{\"o}ckler.
\newblock Compactly supported tight and sibling frames with maximum vanishing
  moments.
\newblock \emph{Applied and Computational Harmonic Analysis}, 13\penalty0
  (3):\penalty0 224--262, 2002.
\newblock URL
  \url{http://dx.doi.org.ezproxy.cityu.edu.hk/10.1016/S1063-5203(02)00510-9}.

\bibitem[Chui et~al.(2018)Chui, Mhaskar, and Zhuang]{ChMhZh2018}
Charles~K Chui, HN~Mhaskar, and Xiaosheng Zhuang.
\newblock Representation of functions on big data associated with directed
  graphs.
\newblock \emph{Applied and Computational Harmonic Analysis}, 44\penalty0
  (1):\penalty0 165--188, 2018.

\bibitem[Chung(1997)]{Chung1997}
Fan R.~K. Chung.
\newblock \emph{Spectral graph theory}, volume~92 of \emph{CBMS Regional
  Conference Series in Mathematics}.
\newblock American Mathematical Society, Providence, RI, 1997.

\bibitem[Coifman and Maggioni(2006)]{CoMa2006}
Ronald~R. Coifman and Mauro Maggioni.
\newblock Diffusion wavelets.
\newblock \emph{Applied and Computational Harmonic Analysis}, 21\penalty0
  (1):\penalty0 53--94, 2006.
\newblock URL
  \url{http://dx.doi.org.ezproxy.cityu.edu.hk/10.1016/j.acha.2006.04.004}.

\bibitem[Coifman et~al.(2005)Coifman, Lafon, Lee, Maggioni, Nadler, Warner, and
  Zucker]{Coifman_etal2005}
Ronald~R Coifman, Stephane Lafon, Ann~B Lee, Mauro Maggioni, Boaz Nadler,
  Frederick Warner, and Steven~W Zucker.
\newblock Geometric diffusions as a tool for harmonic analysis and structure
  definition of data: Diffusion maps.
\newblock \emph{Proceedings of the National Academy of Sciences}, 102\penalty0
  (21):\penalty0 7426--7431, 2005.

\bibitem[Crovella and Kolaczyk(2003)]{CrKo2003}
Mark Crovella and Eric Kolaczyk.
\newblock Graph wavelets for spatial traffic analysis.
\newblock In \emph{Twenty-second Annual Joint Conference of the IEEE Computer
  and Communications Societies}, volume~3, pages 1848--1857, 2003.

\bibitem[Daubechies(1992)]{Daubechies1992}
Ingrid Daubechies.
\newblock \emph{Ten lectures on wavelets}, volume~61 of \emph{CBMS-NSF Regional
  Conference Series in Applied Mathematics}.
\newblock Society for Industrial and Applied Mathematics (SIAM), Philadelphia,
  PA, 1992.

\bibitem[Daubechies et~al.(2003)Daubechies, Han, Ron, and Shen]{DaHaRoSh2003}
Ingrid Daubechies, Bin Han, Amos Ron, and Zuowei Shen.
\newblock Framelets: {MRA-based} constructions of wavelet frames.
\newblock \emph{Applied and Computational Harmonic Analysis}, 14\penalty0
  (1):\penalty0 1 -- 46, 2003.

\bibitem[Debnath et~al.(1991)Debnath, Lopez~de Compadre, Debnath, Shusterman,
  and Hansch]{debnath1991structure}
Asim~Kumar Debnath, Rosa~L Lopez~de Compadre, Gargi Debnath, Alan~J Shusterman,
  and Corwin Hansch.
\newblock Structure-activity relationship of mutagenic aromatic and
  heteroaromatic nitro compounds. correlation with molecular orbital energies
  and hydrophobicity.
\newblock \emph{Journal of Medicinal Chemistry}, 34\penalty0 (2):\penalty0
  786--797, 1991.

\bibitem[Defferrard et~al.(2016)Defferrard, Bresson, and
  Vandergheynst]{DeBrVa2016}
Micha{\"e}l Defferrard, Xavier Bresson, and Pierre Vandergheynst.
\newblock Convolutional neural networks on graphs with fast localized spectral
  filtering.
\newblock In \emph{NIPS}, pages 3844--3852, 2016.

\bibitem[Deng et~al.(2019)Deng, Yang, Xu, and Dragotti]{Deng_2019_ICCV}
Xin Deng, Ren Yang, Mai Xu, and Pier~Luigi Dragotti.
\newblock Wavelet domain style transfer for an effective perception-distortion
  tradeoff in single image super-resolution.
\newblock In \emph{ICCV}, October 2019.

\bibitem[Dobson and Doig(2003)]{dobson2003distinguishing}
Paul~D Dobson and Andrew~J Doig.
\newblock Distinguishing enzyme structures from non-enzymes without alignments.
\newblock \emph{Journal of Molecular Biology}, 330\penalty0 (4):\penalty0
  771--783, 2003.

\bibitem[Dong(2017)]{Dong2017}
Bin Dong.
\newblock Sparse representation on graphs by tight wavelet frames and
  applications.
\newblock \emph{Applied and Computational Harmonic Analysis}, 42\penalty0
  (3):\penalty0 452--479, 2017.

\bibitem[Dongen(2000)]{Dongen2000}
Stijn Dongen.
\newblock A cluster algorithm for graphs.
\newblock Technical report, Amsterdam, 2000.

\bibitem[Dunson et~al.(2020)Dunson, Wu, and Wu]{dunson2020diffusion}
David~B Dunson, Hau-Tieng Wu, and Nan Wu.
\newblock Graph based {Gaussian} processes on restricted domains.
\newblock \emph{arXiv preprint arXiv:2010.07242}, 2020.

\bibitem[Duvenaud et~al.(2015)Duvenaud, Maclaurin, Iparraguirre, Bombarell,
  Hirzel, Aspuru-Guzik, and Adams]{duvenaud2015convolutional}
David~K Duvenaud, Dougal Maclaurin, Jorge Iparraguirre, Rafael Bombarell,
  Timothy Hirzel, Al{\'a}n Aspuru-Guzik, and Ryan~P Adams.
\newblock Convolutional networks on graphs for learning molecular fingerprints.
\newblock In \emph{NIPS}, pages 2224--2232, 2015.

\bibitem[Fawzi et~al.(2018)Fawzi, Moosavi-Dezfooli, Frossard, and
  Soatto]{Fawzi_etal2018}
Alhussein Fawzi, Seyed-Mohsen Moosavi-Dezfooli, Pascal Frossard, and Stefano
  Soatto.
\newblock Empirical study of the topology and geometry of deep networks.
\newblock In \emph{CVPR}, pages 3762--3770, 2018.

\bibitem[Fey and Lenssen(2019)]{pyg}
Matthias Fey and Jan~E. Lenssen.
\newblock Fast graph representation learning with {PyTorch Geometric}.
\newblock \emph{ICLR Workshop on Representation Learning on Graphs and
  Manifolds}, 2019.

\bibitem[Filippone et~al.(2008)Filippone, Camastra, Masulli, and
  Rovetta]{gphClus3}
Maurizio Filippone, Francesco Camastra, Francesco Masulli, and Stefano Rovetta.
\newblock A survey of kernel and spectral methods for clustering.
\newblock \emph{Pattern Recognition}, 41\penalty0 (1):\penalty0 176--190, 2008.

\bibitem[Fourier(1822)]{Fourier1822}
Joseph Fourier.
\newblock \emph{Theorie Analytique de la Chaleur}.
\newblock Firmin Didot, 1822.

\bibitem[Gama et~al.(2019)Gama, Ribeiro, and Bruna]{gama2019stability}
Fernando Gama, Alejandro Ribeiro, and Joan Bruna.
\newblock Stability of graph scattering transforms.
\newblock In \emph{NeurIPS}, pages 8038--8048, 2019.

\bibitem[Gao et~al.(2019)Gao, Wolf, and Hirn]{gao2019geometric}
Feng Gao, Guy Wolf, and Matthew Hirn.
\newblock Geometric scattering for graph data analysis.
\newblock In \emph{ICML}, pages 2122--2131. PMLR, 2019.

\bibitem[Gao and Ji(2019)]{gao2019graph}
Hongyang Gao and Shuiwang Ji.
\newblock Graph {U-Nets}.
\newblock In \emph{ICML}, 2019.

\bibitem[Garcia-Cardona et~al.(2014)Garcia-Cardona, Merkurjev, Bertozzi,
  Flenner, and Percus]{Garcia-Cardona_etal2014}
Cristina Garcia-Cardona, Ekaterina Merkurjev, Andrea~L Bertozzi, Arjuna
  Flenner, and Allon Percus.
\newblock Multiclass data segmentation using diffuse interface methods on
  graphs.
\newblock \emph{IEEE Transactions on Pattern Analysis and Machine
  Intelligence}, 36\penalty0 (8):\penalty0 1600--1613, 2014.

\bibitem[G{\"a}rtner et~al.(2003)G{\"a}rtner, Flach, and
  Wrobel]{gartner2003graph}
Thomas G{\"a}rtner, Peter Flach, and Stefan Wrobel.
\newblock On graph kernels: Hardness results and efficient alternatives.
\newblock In \emph{Learning Theory and Kernel Machines}, pages 129--143.
  Springer, 2003.

\bibitem[Gavish and Coifman(2012)]{GaCo2012}
Matan Gavish and Ronald~R. Coifman.
\newblock Sampling, denoising and compression of matrices by coherent matrix
  organization.
\newblock \emph{Applied and Computational Harmonic Analysis}, 33\penalty0
  (3):\penalty0 354 -- 369, 2012.

\bibitem[Gavish et~al.(2010)Gavish, Nadler, and Coifman]{GaNaCo2010}
Matan Gavish, Boaz Nadler, and Ronald~R Coifman.
\newblock Multiscale wavelets on trees, graphs and high dimensional data:
  Theory and applications to semi supervised learning.
\newblock In \emph{ICML}, pages 367--374, 2010.

\bibitem[Gilmer et~al.(2017)Gilmer, Schoenholz, Riley, Vinyals, and
  Dahl]{gilmer2017neural}
Justin Gilmer, Samuel~S. Schoenholz, Patrick~F. Riley, Oriol Vinyals, and
  George~E. Dahl.
\newblock Neural message passing for quantum chemistry.
\newblock In \emph{NeurIPS}, volume~70, pages 1263--1272, 2017.

\bibitem[Guo et~al.(2004)Guo, Labate, Lim, Weiss, and Wilson]{guo2004wavelets}
Kanghui Guo, Demetrio Labate, Wang-Q Lim, Guido Weiss, and Edward Wilson.
\newblock Wavelets with composite dilations.
\newblock \emph{Electronic Research Announcements of the American Mathematical
  Society}, 10\penalty0 (9):\penalty0 78--87, 2004.

\bibitem[Haar(1910)]{Haar1910}
Alfred Haar.
\newblock Zur theorie der orthogonalen funktionensysteme.
\newblock \emph{Mathematische Annalen}, 69\penalty0 (3):\penalty0 331--371,
  1910.

\bibitem[Hamilton et~al.(2017)Hamilton, Ying, and
  Leskovec]{hamilton2017inductive}
Will Hamilton, Zhitao Ying, and Jure Leskovec.
\newblock Inductive representation learning on large graphs.
\newblock In \emph{NIPS}, pages 1024--1034, 2017.

\bibitem[Hammond et~al.(2011)Hammond, Vandergheynst, and Gribonval]{HaVaGr2011}
David~K. Hammond, Pierre Vandergheynst, and R\'emi Gribonval.
\newblock Wavelets on graphs via spectral graph theory.
\newblock \emph{Applied and Computational Harmonic Analysis}, 30\penalty0
  (2):\penalty0 129--150, 2011.

\bibitem[Han(1997)]{Han1997}
Bin Han.
\newblock On dual wavelet tight frames.
\newblock \emph{Applied and Computational Harmonic Analysis}, 4\penalty0
  (4):\penalty0 380--413, 1997.

\bibitem[Han(2010)]{Han2010}
Bin Han.
\newblock Pairs of frequency-based nonhomogeneous dual wavelet frames in the
  distribution space.
\newblock \emph{Applied and Computational Harmonic Analysis}, 29\penalty0
  (3):\penalty0 330 -- 353, 2010.
\newblock URL
  \url{http://www.sciencedirect.com/science/article/pii/S1063520310000060}.

\bibitem[Han(2012)]{Han2012}
Bin Han.
\newblock Nonhomogeneous wavelet systems in high dimensions.
\newblock \emph{Applied and Computational Harmonic Analysis}, 32\penalty0
  (2):\penalty0 169--196, 2012.

\bibitem[Han(2013{\natexlab{a}})]{Han2013}
Bin Han.
\newblock Matrix splitting with symmetry and symmetric tight framelet filter
  banks with two high-pass filters.
\newblock \emph{Applied and Computational Harmonic Analysis}, 35\penalty0
  (2):\penalty0 200--227, 2013{\natexlab{a}}.

\bibitem[Han(2013{\natexlab{b}})]{han2013properties}
Bin Han.
\newblock Properties of discrete framelet transforms.
\newblock \emph{Mathematical Modelling of Natural Phenomena}, 8\penalty0
  (1):\penalty0 18--47, 2013{\natexlab{b}}.

\bibitem[Han(2017)]{Han2017}
Bin Han.
\newblock \emph{Framelets and Wavelets: Algorithms, Analysis, and
  Applications}.
\newblock Birkhäuser, 2017.
\newblock ISBN 978-3-319-68529-8.

\bibitem[Han and Zhuang(2010)]{HaZh2010}
Bin Han and Xiaosheng Zhuang.
\newblock Matrix extension with symmetry and its application to symmetric
  orthonormal multiwavelets.
\newblock \emph{SIAM Journal on Mathematical Analysis}, 42\penalty0
  (5):\penalty0 2297--2317, 2010.

\bibitem[Han and Zhuang(2015)]{HanZhuang2015}
Bin Han and Xiaosheng Zhuang.
\newblock Smooth affine shear tight frames with {MRA} structure.
\newblock \emph{Applied and Computational Harmonic Analysis}, 39\penalty0
  (2):\penalty0 300 -- 338, 2015.
\newblock URL
  \url{http://www.sciencedirect.com/science/article/pii/S1063520314001213}.

\bibitem[Han et~al.(2016{\natexlab{a}})Han, Zhao, and Zhuang]{HaZhZh2016}
Bin Han, Zhenpeng Zhao, and Xiaosheng Zhuang.
\newblock Directional tensor product complex tight framelets with low
  redundancy.
\newblock \emph{Applied and Computational Harmonic Analysis}, 41\penalty0
  (2):\penalty0 603--637, 2016{\natexlab{a}}.

\bibitem[Han et~al.(2016{\natexlab{b}})Han, Zhao, and
  Zhuang]{han2016directional}
Bin Han, Zhenpeng Zhao, and Xiaosheng Zhuang.
\newblock Directional tensor product complex tight framelets with low
  redundancy.
\newblock \emph{Applied and Computational Harmonic Analysis}, 41\penalty0
  (2):\penalty0 603--637, 2016{\natexlab{b}}.

\bibitem[Han et~al.(2019)Han, Li, and Zhuang]{HaLiZh2019}
Bin Han, Tao Li, and Xiaosheng Zhuang.
\newblock Directional compactly supported box spline tight framelets with
  simple geometric structure.
\newblock \emph{Applied Mathematics Letters}, 91:\penalty0 213 -- 219, 2019.
\newblock URL
  \url{http://www.sciencedirect.com/science/article/pii/S0893965918304221}.

\bibitem[Ioffe and Szegedy(2015)]{ioffe2015batch}
Sergey Ioffe and Christian Szegedy.
\newblock Batch normalization: Accelerating deep network training by reducing
  internal covariate shift.
\newblock In \emph{ICML}, 2015.

\bibitem[{Jiang} et~al.(2020){Jiang}, {Ng}, {Zhao}, and
  {Huang}]{jiang2020framelet}
T.~{Jiang}, M.~K. {Ng}, X.~{Zhao}, and T.~{Huang}.
\newblock Framelet representation of tensor nuclear norm for third-order tensor
  completion.
\newblock \emph{IEEE Transactions on Image Processing}, 29:\penalty0
  7233--7244, 2020.

\bibitem[Karp(1972)]{gphCut1}
Richard~M Karp.
\newblock Reducibility among combinatorial problems.
\newblock In \emph{Complexity of computer computations}, pages 85--103.
  Springer, 1972.

\bibitem[Karypis and Kumar(1999)]{karypis1998fast}
George Karypis and Vipin Kumar.
\newblock A fast and high quality multilevel scheme for partitioning irregular
  graphs.
\newblock \emph{SIAM Journal on Scientific Computing}, 20\penalty0
  (1):\penalty0 359--392, 1999.

\bibitem[Kingma and Ba(2015)]{adam_optimizer}
Diederik~P. Kingma and Jimmy Ba.
\newblock Adam: {A} method for stochastic optimization.
\newblock In Yoshua Bengio and Yann LeCun, editors, \emph{ICLR}, 2015.

\bibitem[Kipf and Welling(2017)]{kipf2016semi}
Thomas~N. Kipf and Max Welling.
\newblock Semi-supervised classification with graph convolutional networks.
\newblock In \emph{ICLR}, 2017.

\bibitem[Kriege and Mutzel(2012)]{kriege2012subgraph}
Nils Kriege and Petra Mutzel.
\newblock Subgraph matching kernels for attributed graphs.
\newblock In \emph{ICML}, 2012.

\bibitem[Lafon and Lee(2006)]{LaLe2006}
Stephane Lafon and Ann~B Lee.
\newblock Diffusion maps and coarse-graining: A unified framework for
  dimensionality reduction, graph partitioning, and data set parameterization.
\newblock \emph{IEEE Transactions on Pattern Analysis and Machine
  Intelligence}, 28\penalty0 (9):\penalty0 1393--1403, 2006.

\bibitem[Le~Gia et~al.(2017)Le~Gia, Sloan, Wang, and Womersley]{le2017needlet}
Quoc~T Le~Gia, Ian~H Sloan, Yu~Guang Wang, and Robert~S Womersley.
\newblock Needlet approximation for isotropic random fields on the sphere.
\newblock \emph{Journal of Approximation Theory}, 216:\penalty0 86--116, 2017.

\bibitem[Lee et~al.(2019)Lee, Lee, and Kang]{lee2019self}
Junhyun Lee, Inyeop Lee, and Jaewoo Kang.
\newblock Self-attention graph pooling.
\newblock In \emph{ICML}, 2019.

\bibitem[Li et~al.(2019)Li, Broadbridge, Olenko, and Wang]{li2019tensor}
Ming Li, Philip Broadbridge, Andriy Olenko, and Yu~Guang Wang.
\newblock Fast tensor needlet transforms for tangent vector fields on the
  sphere.
\newblock \emph{arXiv preprint arXiv:1907.13339}, 2019.

\bibitem[Li et~al.(2020)Li, Ma, Wang, and Zhuang]{LI2020188}
Ming Li, Zheng Ma, Yu~Guang Wang, and Xiaosheng Zhuang.
\newblock Fast {Haar} transforms for graph neural networks.
\newblock \emph{Neural Networks}, 128:\penalty0 188--198, 2020.
\newblock URL
  \url{http://www.sciencedirect.com/science/article/pii/S0893608020301568}.

\bibitem[Li and Zhuang(2019)]{LiZh2019}
Yan-Ran Li and Xiaosheng Zhuang.
\newblock {Parallel magnetic resonance imaging reconstruction algorithm by
  three-dimension directional Haar tight framelet regularization}.
\newblock In Dimitri Van~De Ville, Manos Papadakis, and Yue~M. Lu, editors,
  \emph{Wavelets and Sparsity XVIII}, volume 11138, pages 400 -- 407.
  International Society for Optics and Photonics, SPIE, 2019.
\newblock URL \url{https://doi.org/10.1117/12.2528788}.

\bibitem[Lloyd(1982)]{kmean2}
Stuart Lloyd.
\newblock {Least squares quantization in PCM}.
\newblock \emph{IEEE Transactions on Information Theory}, 28\penalty0
  (2):\penalty0 129--137, 1982.

\bibitem[Ma et~al.(2019)Ma, Wang, Aggarwal, and Tang]{ma2019graph}
Yao Ma, Suhang Wang, Charu~C Aggarwal, and Jiliang Tang.
\newblock Graph convolutional networks with eigenpooling.
\newblock In \emph{KDD}, pages 723--731, 2019.

\bibitem[Ma et~al.(2021)Ma, Liu, Zhao, Liu, Tang, and Shah]{ma2020unified}
Yao Ma, Xiaorui Liu, Tong Zhao, Yozen Liu, Jiliang Tang, and Neil Shah.
\newblock A unified view on graph neural networks as graph signal denoising.
\newblock In \emph{CIKM'21: Proceedings of the 30th ACM International
  Conference on Information \& Knowledge Management}, 2021.

\bibitem[Ma et~al.(2020)Ma, Xuan, Wang, Li, and Li{\`o}]{ma2020path}
Zheng Ma, Junyu Xuan, Yu~Guang Wang, Ming Li, and Pietro Li{\`o}.
\newblock Path integral based convolution and pooling for graph neural
  networks.
\newblock \emph{NeurIPS}, 33, 2020.

\bibitem[MacQueen et~al.(1967)]{kmean1}
James MacQueen et~al.
\newblock Some methods for classification and analysis of multivariate
  observations.
\newblock In \emph{Proceedings of the Fifth Berkeley Symposium on Mathematical
  Statistics and Probability}, volume~1, pages 281--297. Oakland, CA, USA,
  1967.

\bibitem[Maggioni and Mhaskar(2008)]{MaMh2008}
Mauro Maggioni and Hrushikesh~H. Mhaskar.
\newblock Diffusion polynomial frames on metric measure spaces.
\newblock \emph{Applied and Computational Harmonic Analysis}, 24\penalty0
  (3):\penalty0 329 -- 353, 2008.

\bibitem[Mallat(2009)]{Mallat2009}
St{\'e}phane Mallat.
\newblock \emph{A wavelet tour of signal processing. The sparse way, With
  contributions from Gabriel Peyr{\'e}}.
\newblock Elsevier/Academic Press, Amsterdam, {T}hird edition, 2009.

\bibitem[Mallat(2012)]{mallat2012group}
St{\'e}phane Mallat.
\newblock Group invariant scattering.
\newblock \emph{Communications on Pure and Applied Mathematics}, 65\penalty0
  (10):\penalty0 1331--1398, 2012.

\bibitem[Mallat(2016)]{mallat2016understanding}
St{\'e}phane Mallat.
\newblock Understanding deep convolutional networks.
\newblock \emph{Philosophical Transactions of the Royal Society A:
  Mathematical, Physical and Engineering Sciences}, 374\penalty0
  (2065):\penalty0 20150203, 2016.

\bibitem[Mallat(1989)]{mallat1989theory}
St{\'e}phane~G Mallat.
\newblock A theory for multiresolution signal decomposition: the wavelet
  representation.
\newblock \emph{IEEE Transactions on Pattern Analysis and Machine
  Intelligence}, 11\penalty0 (7):\penalty0 674--693, 1989.

\bibitem[Merriman et~al.(1992)Merriman, Bence, and Osher]{MeBeOs1992}
Barry Merriman, James~Kenyard Bence, and Stanley Osher.
\newblock Diffusion generated motion by mean curvature.
\newblock In J.E. Taylor, editor, \emph{Computational Crystal Growers
  Workshop}, pages 73--83. American Mathematical Society, Providence, Rhode
  Island, 1992.

\bibitem[Meyer(1990)]{Meyer1990}
Yves Meyer.
\newblock \emph{Ondelettes et op\'erateurs. {I}}.
\newblock Actualit\'es Math\'ematiques. [Current Mathematical Topics]. Hermann,
  Paris, 1990.

\bibitem[Mhaskar(2010)]{Mhaskar2010}
H.N. Mhaskar.
\newblock Eignets for function approximation on manifolds.
\newblock \emph{Applied and Computational Harmonic Analysis}, 29\penalty0
  (1):\penalty0 63 -- 87, 2010.
\newblock URL
  \url{http://www.sciencedirect.com/science/article/pii/S106352030900089X}.

\bibitem[Mhaskar and Prestin(2004)]{fasttour}
HN~Mhaskar and J~Prestin.
\newblock Polynomial frames: a fast tour.
\newblock In \emph{Approximation Theory XI: Gatlinburg}, pages 101--132, 2004.

\bibitem[Monti et~al.(2017)Monti, Bronstein, and Bresson]{monti2017geometric}
Federico Monti, Michael Bronstein, and Xavier Bresson.
\newblock Geometric matrix completion with recurrent multi-graph neural
  networks.
\newblock In \emph{NIPS}, pages 3697--3707, 2017.

\bibitem[Nascimento and De~Carvalho(2011)]{gphClus2}
Maria~CV Nascimento and Andre~CPLF De~Carvalho.
\newblock Spectral methods for graph clustering--a survey.
\newblock \emph{European Journal of Operational Research}, 211\penalty0
  (2):\penalty0 221--231, 2011.

\bibitem[Niepert et~al.(2016)Niepert, Ahmed, and Kutzkov]{niepert2016learning}
Mathias Niepert, Mohamed Ahmed, and Konstantin Kutzkov.
\newblock Learning convolutional neural networks for graphs.
\newblock In \emph{ICML}, pages 2014--2023, 2016.

\bibitem[Perraudin et~al.(2019)Perraudin, Defferrard, Kacprzak, and
  Sgier]{perraudin2019deepsphere}
Nathana{\"e}l Perraudin, Micha{\"e}l Defferrard, Tomasz Kacprzak, and Raphael
  Sgier.
\newblock Deepsphere: Efficient spherical convolutional neural network with
  healpix sampling for cosmological applications.
\newblock \emph{Astronomy and Computing}, 27:\penalty0 130--146, 2019.

\bibitem[Ron and Shen(1997{\natexlab{a}})]{RoSh1997}
Amos Ron and Zuowei Shen.
\newblock Affine systems in {$L_2(\R^d)$}: The analysis of the analysis
  operator.
\newblock \emph{Journal of Functional Analysis}, 148\penalty0 (2):\penalty0 408
  -- 447, 1997{\natexlab{a}}.

\bibitem[Ron and Shen(1997{\natexlab{b}})]{RoSh1997b}
Amos Ron and Zuowei Shen.
\newblock Affine systems in {$L_2(\R^d)$} {II}: Dual systems.
\newblock \emph{Journal of Fourier Analysis and applications}, 3\penalty0
  (5):\penalty0 617--637, 1997{\natexlab{b}}.

\bibitem[Rossi and Ahmed(2015)]{nr}
Ryan~A. Rossi and Nesreen~K. Ahmed.
\newblock The network data repository with interactive graph analytics and
  visualization.
\newblock In \emph{AAAI}, 2015.
\newblock URL \url{http://networkrepository.com}.

\bibitem[Roweis and Saul(2000)]{roweis2000nonlinear}
Sam~T Roweis and Lawrence~K Saul.
\newblock Nonlinear dimensionality reduction by locally linear embedding.
\newblock \emph{Science}, 290\penalty0 (5500):\penalty0 2323--2326, 2000.

\bibitem[{Rui Xu} and {Wunsch}(2005)]{gphClus5}
{Rui Xu} and D.~{Wunsch}.
\newblock Survey of clustering algorithms.
\newblock \emph{IEEE Transactions on Neural Networks}, 16\penalty0
  (3):\penalty0 645--678, May 2005.

\bibitem[Sandryhaila and Moura(2013)]{sandryhaila2013discrete}
Aliaksei Sandryhaila and Jos{\'e}~MF Moura.
\newblock Discrete signal processing on graphs.
\newblock \emph{IEEE transactions on signal processing}, 61\penalty0
  (7):\penalty0 1644--1656, 2013.

\bibitem[Sandryhaila and Moura(2014)]{sandryhaila2014discrete}
Aliaksei Sandryhaila and Jose~MF Moura.
\newblock Discrete signal processing on graphs: Frequency analysis.
\newblock \emph{IEEE Transactions on Signal Processing}, 62\penalty0
  (12):\penalty0 3042--3054, 2014.

\bibitem[Schaeffer(2007)]{gphClus1}
Satu~Elisa Schaeffer.
\newblock Graph clustering.
\newblock \emph{Computer Science Review}, 1\penalty0 (1):\penalty0 27--64,
  2007.

\bibitem[Schiebinger et~al.(2017)Schiebinger, Robeva, and
  Recht]{Schiebinger2017superresolution}
Geoffrey Schiebinger, Elina Robeva, and Benjamin Recht.
\newblock {Superresolution without separation}.
\newblock \emph{Information and Inference: A Journal of the IMA}, 7\penalty0
  (1):\penalty0 1--30, 05 2017.
\newblock URL \url{https://doi.org/10.1093/imaiai/iax006}.

\bibitem[Shchur et~al.(2018)Shchur, Mumme, Bojchevski, and
  G{\"u}nnemann]{shchur2018pitfalls}
Oleksandr Shchur, Maximilian Mumme, Aleksandar Bojchevski, and Stephan
  G{\"u}nnemann.
\newblock Pitfalls of graph neural network evaluation.
\newblock \emph{NeurIPS Workshop on Relational Representation Learning}, 2018.

\bibitem[Shervashidze et~al.(2009)Shervashidze, Vishwanathan, Petri, Mehlhorn,
  and Borgwardt]{shervashidze2009efficient}
Nino Shervashidze, SVN Vishwanathan, Tobias Petri, Kurt Mehlhorn, and Karsten
  Borgwardt.
\newblock Efficient graphlet kernels for large graph comparison.
\newblock In \emph{AISTATS}, pages 488--495, 2009.

\bibitem[Shervashidze et~al.(2011)Shervashidze, Schweitzer, Van~Leeuwen,
  Mehlhorn, and Borgwardt]{shervashidze2011weisfeiler}
Nino Shervashidze, Pascal Schweitzer, Erik~Jan Van~Leeuwen, Kurt Mehlhorn, and
  Karsten~M Borgwardt.
\newblock Weisfeiler-{L}ehman graph kernels.
\newblock \emph{Journal of Machine Learning Research}, 12\penalty0
  (77):\penalty0 2539--2561, 2011.

\bibitem[Shi and Malik(2000)]{shi2000normalized}
Jianbo Shi and Jitendra Malik.
\newblock Normalized cuts and image segmentation.
\newblock \emph{IEEE Transactions on Pattern Analysis and Machine
  Intelligence}, 22\penalty0 (8):\penalty0 888--905, 2000.

\bibitem[Shlomi et~al.(2020)Shlomi, Battaglia, et~al.]{shlomi2020graph}
Jonathan Shlomi, Peter Battaglia, et~al.
\newblock Graph neural networks in particle physics.
\newblock \emph{Machine Learning: Science and Technology}, 2020.

\bibitem[Shuman et~al.(2013)Shuman, Narang, Frossard, Ortega, and
  Vandergheynst]{shuman2013emerging}
David~I Shuman, Sunil~K Narang, Pascal Frossard, Antonio Ortega, and Pierre
  Vandergheynst.
\newblock The emerging field of signal processing on graphs: Extending
  high-dimensional data analysis to networks and other irregular domains.
\newblock \emph{IEEE Signal Processing Magazine}, 30\penalty0 (3):\penalty0
  83--98, 2013.

\bibitem[Shuman et~al.(2015)Shuman, Wiesmeyr, Holighaus, and
  Vandergheynst]{shuman2015spectrum}
David~I Shuman, Christoph Wiesmeyr, Nicki Holighaus, and Pierre Vandergheynst.
\newblock Spectrum-adapted tight graph wavelet and vertex-frequency frames.
\newblock \emph{IEEE Transactions on Signal Processing}, 63\penalty0
  (16):\penalty0 4223--4235, 2015.

\bibitem[Singer(2006)]{Singer2006}
Amit Singer.
\newblock From graph to manifold laplacian: The convergence rate.
\newblock \emph{Applied and Computational Harmonic Analysis}, 21\penalty0
  (1):\penalty0 128 -- 134, 2006.
\newblock Special Issue: Diffusion Maps and Wavelets.

\bibitem[Stein(1970)]{St1970}
Elias~M. Stein.
\newblock \emph{Singular Integrals and Differentiability Properties of
  Functions}.
\newblock Princeton University Press, 1970.

\bibitem[Stein(1993)]{St1993}
Elias~M. Stein.
\newblock \emph{Harmonic Analysis: Real-Variable Methods, Orthogonality, and
  Oscillatory Integrals}.
\newblock Princeton University Press, 1993.

\bibitem[Stein and Shakarchi(2011)]{stein2011fourier}
Elias~M Stein and Rami Shakarchi.
\newblock \emph{Fourier analysis: an introduction}, volume~1.
\newblock Princeton University Press, 2011.

\bibitem[Stein and Weiss(1971)]{StWe1971}
Elias~M. Stein and Guido Weiss.
\newblock \emph{Introduction to Fourier analysis on Euclidean spaces}.
\newblock Princeton University Press, 1971.

\bibitem[Stella and Shi(2003)]{stella2003multiclass}
X~Yu Stella and Jianbo Shi.
\newblock Multiclass spectral clustering.
\newblock In \emph{ICCV}, pages 313--319 vol.1, 2003.

\bibitem[Tenenbaum et~al.(2000)Tenenbaum, De~Silva, and
  Langford]{tenenbaum2000global}
Joshua~B Tenenbaum, Vin De~Silva, and John~C Langford.
\newblock A global geometric framework for nonlinear dimensionality reduction.
\newblock \emph{Science}, 290\penalty0 (5500):\penalty0 2319--2323, 2000.

\bibitem[Veličković et~al.(2018)Veličković, Cucurull, Casanova, Romero,
  Liò, and Bengio]{veli2018graph}
Petar Veličković, Guillem Cucurull, Arantxa Casanova, Adriana Romero, Pietro
  Liò, and Yoshua Bengio.
\newblock Graph attention networks.
\newblock In \emph{ICLR}, 2018.
\newblock URL \url{https://openreview.net/forum?id=rJXMpikCZ}.

\bibitem[Wang and Zhu(2018)]{wang2018analysis}
Yu~Guang Wang and Houying Zhu.
\newblock Analysis of framelet transforms on a simplex.
\newblock In \emph{Contemporary Computational Mathematics-A Celebration of the
  80th Birthday of Ian Sloan}, pages 1175--1189. Springer, 2018.

\bibitem[Wang and Zhuang(2018)]{WaZh2018}
Yu~Guang Wang and Xiaosheng Zhuang.
\newblock Tight framelets and fast framelet filter bank transforms on
  manifolds.
\newblock \emph{Applied and Computational Harmonic Analysis}, 2018.

\bibitem[Wang and Zhuang(2019)]{wang2019tight}
Yu~Guang Wang and Xiaosheng Zhuang.
\newblock Tight framelets on graphs for multiscale data analysis.
\newblock In \emph{Wavelets and Sparsity XVIII}, volume 11138, page 111380B.
  International Society for Optics and Photonics, 2019.

\bibitem[Wang et~al.(2017)Wang, Le~Gia, Sloan, and Womersley]{wang2017fully}
Yu~Guang Wang, Quoc~T Le~Gia, Ian~H Sloan, and Robert~S Womersley.
\newblock Fully discrete needlet approximation on the sphere.
\newblock \emph{Applied and Computational Harmonic Analysis}, 43\penalty0
  (2):\penalty0 292--316, 2017.

\bibitem[Wang et~al.(2020)Wang, Li, Ma, Montufar, Zhuang, and
  Fan]{wang2019haar}
Yu~Guang Wang, Ming Li, Zheng Ma, Guido Montufar, Xiaosheng Zhuang, and Yanan
  Fan.
\newblock Haar graph pooling.
\newblock In \emph{ICML}, 2020.

\bibitem[Wu et~al.(2020)Wu, Pan, Chen, Long, Zhang, and
  Philip]{wu2020comprehensive}
Zonghan Wu, Shirui Pan, Fengwen Chen, Guodong Long, Chengqi Zhang, and S~Yu
  Philip.
\newblock A comprehensive survey on graph neural networks.
\newblock \emph{IEEE Transactions on Neural Networks and Learning Systems},
  2020.

\bibitem[Xu et~al.(2019{\natexlab{a}})Xu, Shen, Cao, Qiu, and
  Cheng]{xu2019graph}
Bingbing Xu, Huawei Shen, Qi~Cao, Yunqi Qiu, and Xueqi Cheng.
\newblock Graph wavelet neural network.
\newblock In \emph{ICLR}, 2019{\natexlab{a}}.

\bibitem[Xu and Tian(2015)]{gphClus6}
Dongkuan Xu and Yingjie Tian.
\newblock A comprehensive survey of clustering algorithms.
\newblock \emph{Annals of Data Science}, 2\penalty0 (2):\penalty0 165--193,
  2015.

\bibitem[Xu et~al.(2019{\natexlab{b}})Xu, Hu, Leskovec, and Jegelka]{xu2018how}
Keyulu Xu, Weihua Hu, Jure Leskovec, and Stefanie Jegelka.
\newblock How powerful are graph neural networks?
\newblock In \emph{ICLR}, 2019{\natexlab{b}}.
\newblock URL \url{https://openreview.net/forum?id=ryGs6iA5Km}.

\bibitem[Ying et~al.(2018)Ying, You, Morris, Ren, Hamilton, and
  Leskovec]{ying2018hierarchical}
Zhitao Ying, Jiaxuan You, Christopher Morris, Xiang Ren, Will Hamilton, and
  Jure Leskovec.
\newblock Hierarchical graph representation learning with differentiable
  pooling.
\newblock In \emph{NeurIPS}, pages 4800--4810, 2018.

\bibitem[Zhang et~al.(2018)Zhang, Cui, Neumann, and Chen]{zhang2018end}
Muhan Zhang, Zhicheng Cui, Marion Neumann, and Yixin Chen.
\newblock An end-to-end deep learning architecture for graph classification.
\newblock In \emph{AAAI}, 2018.

\bibitem[Zhang and Zha(2004)]{zhang2004principal}
Zhenyue Zhang and Hongyuan Zha.
\newblock Principal manifolds and nonlinear dimensionality reduction via
  tangent space alignment.
\newblock \emph{SIAM Journal on Scientific Computing}, 26\penalty0
  (1):\penalty0 313--338, 2004.

\bibitem[Zheng et~al.(2020)Zheng, Zhou, Li, Wang, and Gao]{zheng2020mathnet}
Xuebin Zheng, Bingxin Zhou, Ming Li, Yu~Guang Wang, and Junbin Gao.
\newblock {MathNet: Haar-like} wavelet multiresolution-analysis for graph
  representation and learning.
\newblock \emph{arXiv preprint arXiv:2007.11202}, 2020.

\bibitem[Zheng et~al.(2021)Zheng, Zhou, Gao, Wang, Li\`{o}, Li, and
  Mont{\'u}far]{zheng2021framelets}
Xuebin Zheng, Bingxin Zhou, Junbin Gao, Yu~Guang Wang, Pietro Li\`{o}, Ming Li,
  and Guido Mont{\'u}far.
\newblock How framelets enhance graph neural networks.
\newblock \emph{ICML}, 2021.

\bibitem[Zhou et~al.(2021{\natexlab{a}})Zhou, Li, Zheng, Wang, and
  Gao]{zhou2021graph}
Bingxin Zhou, Ruikun Li, Xuebin Zheng, Yu~Guang Wang, and Junbin Gao.
\newblock Graph denoising with framelet regularizer.
\newblock \emph{arXiv preprint arXiv:2111.03264}, 2021{\natexlab{a}}.

\bibitem[Zhou et~al.(2021{\natexlab{b}})Zhou, Liu, Liu, Huang, Li{\`o}, and
  Wang]{zhou2021spectral}
Bingxin Zhou, Xinliang Liu, Yuehua Liu, Yunying Huang, Pietro Li{\`o}, and
  YuGuang Wang.
\newblock Spectral transform forms scalable transformer.
\newblock \emph{arXiv preprint arXiv:2111.07602}, 2021{\natexlab{b}}.

\bibitem[Zhuang(2016)]{Zhuang2016}
Xiaosheng Zhuang.
\newblock Digital affine shear transforms: fast realization and applications in
  image/video processing.
\newblock \emph{SIAM Journal on Imaging Sciences}, 9\penalty0 (3):\penalty0
  1437--1466, 2016.

\bibitem[Zou and Lerman(2020)]{zou2020graph}
Dongmian Zou and Gilad Lerman.
\newblock Graph convolutional neural networks via scattering.
\newblock \emph{Applied and Computational Harmonic Analysis}, 49\penalty0
  (3):\penalty0 1046--1074, 2020.

\end{thebibliography}

\end{document}